\pgfplotsset{compat=1.18}
\newcommand{\defeq}{=\vcentcolon}
\newcommand{\eqdef}{\vcentcolon=}
\newcommand{\mbb}[1]{\mathbb{#1}}
\newcommand{\mf}[1]{\mathfrak{#1}}
\newcommand{\mcA}{\mathcal{A}}
\newcommand{\mcB}{\mathcal{B}}
\newcommand{\mcC}{\mathcal{C}}
\newcommand{\mcE}{\mathcal{E}}
\newcommand{\mcL}{\mathcal{L}}
\newcommand{\mcN}{\mathcal{N}}
\newcommand{\mcV}{\mathcal{V}}
\newcommand{\lb}{\left(}
\newcommand{\rb}{\right)}
\newcommand{\mbbP}{\mathbb{P}}
\newcommand{\mbbE}{\mathbb{E}}
\newcommand{\mbbR}{\mathbb{R}}
\newcommand{\mbbC}{\mathbb{C}}
\newcommand{\mbbN}{\mathbb{N}}
\newcommand{\mix}{\textnormal{mix}}
\newcommand{\rank}{\textnormal{rank}}
\newcommand{\diag}{\textnormal{diag}}
\newcommand{\bkt}[1]{\left[#1\right]}
\def\eqcom#1{\overset{\textnormal{(\ref{#1})}}}
\DeclareMathOperator*{\argmax}{arg\,max}
\DeclareMathOperator*{\argmin}{arg\,min}
\DeclarePairedDelimiter\ceil{\lceil}{\rceil}
\DeclarePairedDelimiter\floor{\lfloor}{\rfloor}
\newlist{todolist}{itemize}{1}
\setlist[todolist]{label=$\square$}
\newcommand{\Cpp}{C\nolinebreak\hspace{-.05em}\raisebox{.4ex}{\tiny\bf +}\nolinebreak\hspace{-.10em}\raisebox{.4ex}{\tiny\bf +}}
\def\Cpp{{C\nolinebreak[4]\hspace{-.05em}\raisebox{.4ex}{\tiny\bf ++}}}
\newcommand{\OrderP}[1]{ O_{\mbbP}\left(#1\right)}
\newcommand{\orderP}[1]{ o_{\mbbP}\left(#1\right)}
\newcommand{\Vk}{\mathcal{V}}
\newcommand{\Vkhat}{\hat{\mathcal{V}}}
\newcommand{\Ktilde}{\Hat{K}^{\textnormal{spec}}}
\newcommand{\Khat}{\Hat{K}}
\newcommand{\KCI}[1]{\Hat{K}^{\textnormal{CI}}_t}
\crefname{assumption}{Assumption}{Assumptions}
\newtheorem{theorem}{Theorem}[section]
\newtheorem{corollary}[theorem]{Corollary}
\newtheorem{lemma}[theorem]{Lemma}
\newtheorem{proposition}[theorem]{Proposition}
\newtheorem{definition}[theorem]{Definition}
\newtheorem{assumption}{Assumption}
\newtheorem{example}[theorem]{Example}
\theoremstyle{definition}
\newacronym{AA}{AA}{Applied Analysis}
\newacronym{AMI}{AMI}{Adjusted Mutual Information}
\newacronym{BMC}{BMC}{Block Markov Chain}
\newacronym[\glslongpluralkey={Block Markov Decision Processes}]{BMDP}{BMDP}{Block Markov Decision Process}
\newacronym{CASA}{CASA}{Centre for Analysis, Scientific Computing and Applications}
\newacronym{CAIC}{CAIC}{Consistent Akaike Information Criterion}
\newacronym{CPSRL}{CPSRL}{Clustered Posterior Sampling for Reinforcement Learning}
\newacronym{CQT}{CQT}{Coherence \& Quantum Technology}
\newacronym{CWI}{CWI}{National Research Institute for Mathematics and Computer Science}
\newacronym{CUDA}{CUDA}{Compute Unified Device Architecture}
\newacronym{DNA}{DNA}{Deoxyribonucleic acid}
\newacronym{DIAM}{DIAM}{Applied Mathematics}
\newacronym{ERRG}{ERRG}{Erd\"{o}s--R\'{e}nyi Random Graph}
\newacronym{EEMCS}{EEMCS}{Electrical Engineering, Mathematics \&{} Computer Science}
\newacronym{EURANDOM}{EURANDOM}{European Institute for Statistics, Probability, Stochastic Operations Research and their Applications}
\newacronym{EAISI}{EAISI}{Eindhoven Artificial Intelligence Systems Institute}
\newacronym{GPU}{GPU}{Graphics Processing Unit}
\newacronym{HMM}{HMM}{Hidden Markov Model}
\newacronym{INSY}{INSY}{Intelligent Systems}
\newacronym{LSBM}{LSBM}{Labeled Stochastic Block Model}
\newacronym{MAB}{MAB}{Multi-Armed Bandit}
\newacronym{MC}{MC}{Markov Chain}
\newacronym{MCS}{M\&{}CS}{Mathematics and Computer Science}
\newacronym[\glslongpluralkey={Markov Decision Processes}]{MDP}{MDP}{Markov Decision Process}
\newacronym{MI}{MI}{Mutual Information}
\newacronym{NAS}{NAS}{Network Architectures and Services}
\newacronym{PSRL}{PSRL}{Posterior Sampling for Reinforcement Learning}
\newacronym{PROB}{PROB}{Probability}
\newacronym[\glslongpluralkey={Partially Observable Markov Decision Processes}]{POMDP}{POMDP}{Partially Observable Markov Decision Process}
\newacronym{PyPI}{PyPI}{Python Package Index}
\newacronym{RL}{RL}{Reinforcement Learning}
\newacronym{SBM}{SBM}{Stochastic Block Model}
\newacronym{SPOR}{SPOR}{Statistics, Probability, and Operations Research}
\newacronym{SOR}{SOR}{Stochastic Operations Research}
\newacronym{ST}{ST}{Software Technology}
\newacronym{STO}{STO}{Stochastics}
\newacronym{STAT}{STAT}{Statistics}
\newacronym{SVD}{SVD}{Singular Value Decomposition}
\newacronym{TS}{TS}{Thompson Sampling}
\newacronym{TUDelft}{TU Delft}{Delft University of Technology}
\newacronym{TUe}{TU/e}{Eindhoven University of Technology}
\newacronym{UCB}{UCB}{Upper Confidence Bound}
\newacronym{UTQ}{UTQ}{University Teaching Qualification}
\newacronym{QCE}{Q\&{}CE}{Quantum \& Computer Engineering}
\newacronym{QED}{QED}{Quality--and--Efficiency--Driven}
\newacronym{WiCoS}{WiCoS}{Wireless Communication and Sensing}
\newacronym[longplural={Spatialized Random Graphs}]{SRG}{SRG}{Spatialized Random Graph}
\newacronym[longplural={Random Sequential Adsorption}]{RSA}{RSA}{Random Sequential Adsorption}
\let\oldparagraph\paragraph
\renewcommand{\paragraph}[1]{\oldparagraph{#1.}}
\title{Estimating the number of clusters of a Block Markov Chain}
\author{Thomas van Vuren\thanks{E-mail address: \texttt{t.p.a.v.vuren@tue.nl}; Corresponding author}}
\author{Thomas Cronk}
\author{Jaron Sanders}
\affil{Eindhoven University of Technology\\ Department of Mathematics \& Computer Science\\ The Netherlands}
\begin{document}

\maketitle

\begin{abstract}
    Clustering algorithms frequently require the number of clusters to be chosen in advance, but it is usually not clear how to do this.
To tackle this challenge when clustering within sequential data, we present a method for estimating the number of clusters when the data is a trajectory of a \glsentrylong{BMC}.
\glsentrylongpl{BMC} are \glsentrylongpl{MC} that exhibit a block structure in their transition matrix.
The method considers a matrix that counts the number of transitions between different states within the trajectory, and transforms this into a spectral embedding whose dimension is set via singular value thresholding.
The number of clusters is subsequently estimated via density-based clustering of this spectral embedding, an approach inspired by literature on the \glsentrylong{SBM}.
By leveraging and augmenting recent results on the spectral concentration of random matrices with Markovian dependence, we show that the method is asymptotically consistent---in spite of the dependencies between the count matrix's entries, and even when the count matrix is sparse.
We also present a numerical evaluation of our method, and compare it to alternatives.

\end{abstract}

\noindent
\textbf{Keywords:} Block Markov Chains, clustering, spectral norms, asymptotic analysis, community detection.\\
\noindent
\textbf{MSC2020:} 62H30, 60J10, 60B20, 60J20.

\glsreset{BMC}
\glsreset{SBM}

\section{Introduction}

Clustering algorithms are an essential tool for data analysis, machine learning, and network science.
They improve data interpretability by grouping data points that are similar in some manner, and they can improve performance on downstream tasks by reducing dimensionality and decreasing variance.
Reflecting on their broad scope of application, there is also a wide range of algorithms and methods that seek to discover different notions of clusters in data; see for example \cite{ezugwu2022comprehensive}.
However, a common feature of many of these algorithms is that they require the user to specify the number of clusters \textit{a priori}.
Ideally, one instead needs a method that can reliably select this number automatically, and preferably based on the data that is available.

We address this challenge in the context of clustering in \glspl{BMC} \cite{sanders2023singular}.
A \gls{BMC} is a \gls{MC} with $n$ states that admits a decomposition into $K \ll n$ clusters, and whose transition probabilities are constant on these clusters.
In other words, the \gls{MC} exhibits a block structure in its transition matrix.
The goal of this paper is to infer the number of clusters $K$ from a trajectory $X_0,X_1,\ldots,X_{\ell}$ of such \gls{MC}; see \Cref{fig:Are-there-two-or-three-clusters}.

\begin{figure}[hbtp]
    \centering
    \includegraphics[width=0.99\linewidth]{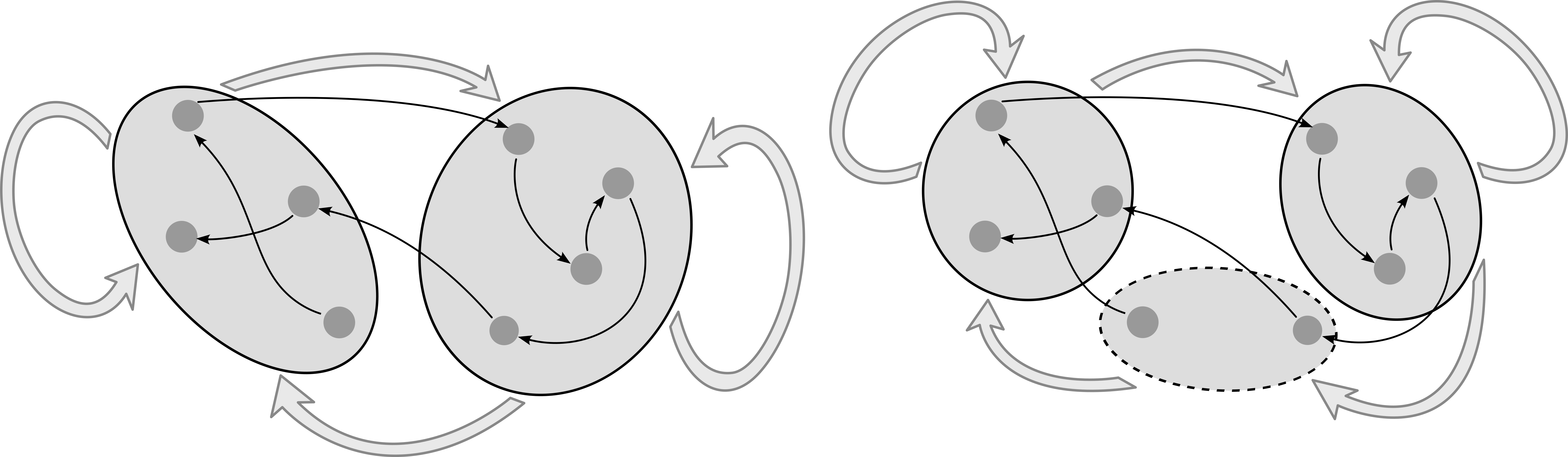}
    \caption{
        Consider the trajectory $X_0, \ldots, X_{\ell_n}$ depicted using thin black arrows, of some \gls{BMC}.
        Are there $K=2$ or perhaps $K=3$ clusters, as depicted on the left or right?
        Here, the dark gray circles represent states, the larger light gray circles represent clusters, and the thick arrows represent the (hidden) low-dimensional transition probabilities between clusters.
    }
    \label{fig:Are-there-two-or-three-clusters}
\end{figure}

\glspl{BMC} provide a model for sequential data in which there is a hidden, lower-dimensional structure.
Uncovering this hidden, lower-dimensional structure can aid with downstream tasks such as natural language processing and sequential decision taking \cite{VanWerde2022,azizzadenesheli2016reinforcement,Jedra:2022}.
\glspl{BMC} also offer a formal notion of clustering for sequential data, enabling the adoption of techniques from statistical inference.
This is analogous to the role that the \gls{SBM} plays for networks \cite{lee2019review}.

\subsection{Main results}

The problem of recovering the clusters from an observation of a trajectory $X_0, X_1,\ldots, X_{\ell_n}$ of large \glspl{BMC} with few observations (when $n \to \infty$ and $\ell_n = O(n \ln{n})$) was first addressed in \cite{ClusterBMC2017}.
There, the authors derive information--theoretic lower bounds on the error rate of any adaptive clustering algorithm, providing conditions on the path lengths $\ell_n$ and the \gls{BMC} parameters necessary for accurate cluster recovery.
It is shown in \cite{ClusterBMC2017} that a two-step clustering algorithm can asymptotically recover the clusters as the number of states $n \rightarrow \infty$, provided these conditions are satisfied.
The algorithm of \cite{ClusterBMC2017}, however, requires prior knowledge of $K$.

In this paper, we present an extension of the spectral clustering algorithm from \cite{ClusterBMC2017} that does automatically detect the number of clusters (see \Cref{alg:kpre,alg:kpost,alg:kmeans} in \Cref{sec:algorithm-description}).
Concretely, we use singular value thresholding to set the dimension of a spectral embedding of a matrix representation of the trajectory, before performing a density-based clustering.
The number of clusters is determined by counting the number of sufficiently dense regions in this embedding.
This procedure simultaneously obtains a partial clustering that can optionally be completed using a K-means clustering step.
We prove rigorously that for sufficiently large $n$, this algorithm is able to correctly identify the number of clusters whenever the conditions that allow cluster recovery are met \cite{ClusterBMC2017}.
In other words, the estimator is asymptotically consistent (see \Cref{prop:limit_kpre,thm:Khat=K,thm:kmeanserror} in \Cref{sec:algorithm-description}).
The method is inspired by \cite{NIPS2016_a8849b05}, which studies cluster recovery in the \gls{LSBM}.

Finally, we present an extensive numerical experimental section on the algorithm (see \Cref{sec:numerical-experiments}).
We investigate how the algorithm's performance is affected by its settings and different characteristics of a \gls{BMC}.
Our results showcase the algorithm's consistency, yet also highlight the considerable difficulty of the estimation problem.
We also demonstrate the limitations of singular value thresholding as a method for estimating the number of clusters numerically, and we show that the density-based clustering approach remedies these shortcomings.
We find that this requires an appropriate choice of embedding dimension, and we consider how this choice depends on the path length and the structure of the \gls{BMC}.
Stepping away from true \glspl{BMC}, we also demonstrate that the algorithm is reasonably robust against violations of the model's assumptions.
We lastly compare this algorithm to several alternative methods, including ones not specifically designed for clustering in trajectories of \glspl{BMC}.
Testing in a range of different scenarios, we identify two competitive alternative algorithms that present opportunities for future research.

\subsection{Overview of related literature}

\paragraph{Clustering in \texorpdfstring{\glspl{BMC}}{BMCs}}
As far as we know, the problem of detecting the number of clusters in a \gls{BMC} specifically has not been studied before.
However, under the assumption that the number of clusters of a \gls{BMC} is known, cluster recovery in \glspl{BMC} has previously been studied in \cite{ClusterBMC2017,duan2019state,zhang2019spectral,zhu2022learning}.
The methods in these papers all rely on spectral properties of the count matrix
\begin{equation}
    \hat{N} \allowbreak \eqdef \allowbreak (\hat{N}_{x,y})_{x,y\in [n]}
    \quad
    \textnormal{where}
    \quad
    \Hat{N}_{x,y}
    \eqdef
    \sum_{t=0}^{\ell_n-1}
    \mathbbm{1}\left[X_t=x, X_{t+1}=y\right]
    \quad \textnormal{for }
    x,y \in [n]
    ,
    \label{eqn:Transition-count-matrix-def}
\end{equation}
which counts the number of transitions between different states of the \gls{MC}.

Of particular interest to us are bounds on the spectral norm of the noise matrix $\hat{N}-\mbbE[\hat{N}]$, which are used both to perform singular value thresholding and to control the properties of the subsequent density-based clustering used in our algorithm.
Analyzing the spectrum of $\hat{N}-\mbbE[\hat{N}]$ is generally challenging because this matrix has dependent entries.
However, since the mixing time of the \gls{BMC} is sufficiently small, these dependencies are weak enough that tight asymptotic bounds are possible \cite{Bounds2023, sanders2023singular, vanwerde2023matrix}.

In \cite{Bounds2023}, asymptotically tight bounds on $\lVert\hat{N}-\mbbE[\hat{N}]\rVert$ were obtained under certain conditions on the cluster sizes and the transition probabilities between clusters, and they hold as soon as $\ell_n = \omega(n)$.
In \cite{sanders2023singular}, the limiting distribution of the bulk of the spectrum of the matrix $\hat{N} - \mbbE[\hat{N}]$ was derived.
Their results were written down for the dense regime $\ell_n = \Theta(n^2)$, which was later improved to $\ell_n = \omega(n)$ in \cite{vanwerde2023matrix}.
The latter also gives the exact limiting value of $\sqrt{n/\ell_n}\lVert\hat{N}-\mbbE[\hat{N}]\rVert$ when $\ell_n=\omega(n(\ln n)^4)$.
Note however, that these results depend on the unknown parameters of the \gls{BMC}, and can therefore not directly be used in our algorithm's design.

Especially \cite{Bounds2023} is relevant to the present paper.
We were originally hoping to use \cite[Theorem 3]{Bounds2023} to establish \Cref{thm:Khat=K}, but establishing \Cref{thm:Khat=K} required weaker restrictions on the cluster transition probabilities than \cite[Theorem 3]{Bounds2023} allowed.
We therefore decided to augment \cite[Theorem 3]{Bounds2023}'s proof and establish \Cref{thm:spectral_norm_bound} as a replacement for \cite[Theorem 3]{Bounds2023}.
\Cref{sec:proofskpre} details the necessary proof modifications.

\paragraph{Clustering in nonsynthetic, sequential data}
Reference \cite{VanWerde2022} studied the applicability of the \gls{BMC} model in real-world, nonsynthetic data.
Their results indicate that \glspl{BMC} can produce meaningful insights across a diverse collection of sequential datasets.
Reference \cite{VanWerde2022} also released an open-source Python library called \textsc{BMCToolkit}, which contains \Cpp{} implementations of the clustering algorithms described in \cite{ClusterBMC2017}.

We have expanded \textsc{BMCToolkit} in several ways, and our improvements are available on the \gls{PyPI} as of v0.10.0.
Primarily, we have incorporated \Cref{alg:kpre,alg:kpost} to detect the number of clusters in a \gls{BMC}.
The underlying \Cpp{} library also gained functionalities to e.g.\ estimate the number of singular values above given thresholds, and compute inertias of matrices.
Overall code quality also improved.
Secondarily, we have made performance improvements to the existing \Cpp{} backend.
Noteworthy is that we have accelerated parts of the library using NVidia's \gls{CUDA}.
In other words, parts of the code can now benefit from massive parallelization on certain \glspl{GPU}.
Finally, for experimental purposes, we have expanded the libraries' capabilities to generate trajectories of different \glspl{MC}.

\paragraph{Estimating the number of communities in \texorpdfstring{\glspl{SBM}}{SBMs}}
The canonical \gls{SBM} is a random graph model in which vertices $x\in \mcV$ of a graph are partitioned into $K$ communities.
Edges $(x,y)$ between vertices $x$ and $y$ are drawn randomly and independently with a probability $p_{\sigma(x),\sigma(y)}$ that depends only on the cluster memberships $\sigma(x)$ and $\sigma(y)$ of $x$ and $y$, respectively.
Given a realization of a \gls{SBM}, the community detection task is qualitatively similar to that of cluster detection from the trajectory of a \glspl{BMC}, and there exists a vast literature studying community detection in \glspl{SBM}.
We refer to e.g.\ \cite{lee2019review,JMLR:v18:16-480} for recent reviews.

Our method and proof techniques draw upon ideas developed for different \glspl{SBM}.
However, the techniques available within the literature are not directly applicable due to the model differences.
Consider, for example, the directed, concatenated nature of the trajectory and the dependencies between entries of $\hat{N}$---these are not features of typical \glspl{SBM}.
Specifically, we adapt the approach of \cite{NIPS2016_a8849b05}, whose algorithm served as the original inspiration for this work.

In \cite{NIPS2016_a8849b05}, the authors present a spectral clustering algorithm for a variant of the \gls{SBM} with labeled edges that simultaneously recovers the communities and determines the number of communities.
This is achieved through a density-based clustering of a spectral embedding of the graph, where the number of communities is determined by counting the number of sufficiently dense regions in the embedding.
Adapting this strategy to \glspl{BMC} requires accounting for the directed, concatenated, and dependent nature of the trajectory, for which we adapt proof techniques from \cite{ClusterBMC2017,Bounds2023}.

When it comes to other work on estimating the number of communities in \glspl{SBM}, we may  split the literature into research on assortative \glspl{SBM} and nonassortative \glspl{SBM}.
In the assortative case, where nodes are preferentially connected to nodes inside their own cluster, another common approach is to (approximately) maximize a criterion called the \emph{modularity} \cite{PhysRevE.69.026113,newman2006modularity,Blondel_2008}.
The modularity of a clustering is a measure for the degree of within--cluster connectedness, and can be used as a clustering heuristic without assuming that the data is generated by the \gls{SBM}.
In the nonassortative case, a variety of statistical model selection approaches based on log-likelihoods and information criteria have been explored \cite{daudin:hal-01197587,RePEc:eee:csdana:v:60:y:2013:i:c:p:12-31,come2015model,newman2016estimating,passino2020bayesian,yang2020simultaneous}.
Related, \cite{NIPS2012_d6ef5f7f,dubois2013stochastic,pmlr-v51-li16d} identify the number of communities for various \gls{SBM} extensions by optimizing the average log-likelihood on held-out validation data.

Finally, there also exist spectral methods with which the number of communities are estimated by counting outliers in the spectrum of a matrix associated to the graph \cite{krzakala2013spectral,newman2013spectral,chatterjee2015matrix}, analogous to the singular value thresholding step of our method.
We specifically highlight \cite{BudelGabriel2020}, which compares several spectral and modularity-based methods for detecting the number of communities in assortative \glspl{SBM}; and from which we adapt one such spectral method to serve as a benchmark in \Cref{sec:numerical-experiments}.

\paragraph{\texorpdfstring{\glspl{HMM}}{Hidden Markov Models (HMMs)}}
\glspl{HMM} feature a hidden \gls{MC} $C_{t \geq 0}$ that generates an observed process $X_{t \geq 0}$.
Specifically, the value of the observed process $X_t$ depends only on the current state $C_t$ of the hidden \gls{MC}.
Early works on this topic include \cite{baum1966statistical,petrie1969probabilistic}, and we refer to \cite{zucchini2009hidden} for a review.

A \gls{BMC} is a special case of a \gls{HMM}.
In \glspl{BMC}, the observed process is a finite lifted \gls{MC}, and the hidden process can be considered the lifted \gls{MC} projected on the clusters.
Moreover, the block structure of the transition matrix of $X_t$ implies that observations are sampled uniformly from states belonging to the current cluster.
On the contrary, the observed process of a \gls{HMM} can take values in an arbitrary state space, and $\mbbP[X_t = \cdot \mid C_t=j]$ need neither be uniform nor have supports that are disjoint for different $j \in [K]$.

Estimating the number of hidden states in \glspl{HMM} is typically treated as a model selection problem.
Common approaches are penalized likelihood methods and information criteria; see \cite{buckby2023finding} for a recent review.
Alternatively, cross-validation methods are suitable when data is not too sparse \cite{celeux2008selecting}.
The case of real-valued observations poses particular theoretical challenges relating to unbounded likelihood functions, meaning that consistency results are scarce; see \cite{chen2024determine} for a recent overview of such results as well as a consistency proof using marginal likelihoods.

Most relevant to our setting are finite-alphabet \glspl{HMM}, where the observed process takes values in a finite set, of which \glspl{BMC} are a special case.
There are several consistency results for maximum likelihood-based estimators; see e.g.\ the review \cite{ephraim2002hidden}.
Parameter estimation for these \glspl{HMM} when the number of hidden states is known has also been studied using spectral methods in \cite{hsu2012spectral,siddiqi2010reduced}.

\section{Preliminaries}
\label{sec:preliminaries}

The goal of this paper is to detect the number of clusters from a single trajectory $X_0,X_1,\ldots,X_{\ell_n}$ of a \gls{BMC}.
To achieve this goal, we now first define a \gls{BMC} and establish notation.

\glsreset{BMC}
\subsection{\texorpdfstring{\glspl{BMC}}{Block Markov Chains (BMCs)}}

A \gls{BMC} is a finite-state \gls{MC}, denoted $\{X_t\}_{t \geq 0}$, taking values in $[n]$, whose transition matrix $P \allowbreak = \allowbreak(P_{x,y})_{x,y\in[n]}$ exhibits a block structure.
In particular, there exists a map $\sigma:[n]\rightarrow [K]$ with $K\leq n$, and a stochastic matrix $p = (p_{i,j})_{i,j \in [K]}$ satisfying $\sum_{j \in [K]}p_{i,j}=1$ for all $i\in [K]$, such that
\begin{equation}
    \label{eq:transitionMatrix}
    P_{x,y}=\frac{p_{\sigma(x),\sigma(y)}}{|\mcV_{\sigma(y)}|}\quad\text{for all}\quad x,y\in [n].
\end{equation}
Here, $\mcV_k\eqdef \sigma^{-1}(k)$ denotes the subset of states belonging to cluster $k\in [K]$.
One checks that indeed $\sum_{y\in[n]}P_{x,y}=1$.
We will refer to $p$ as the \textit{cluster transition matrix}.
Note that a state $x\in [n]$ can only belong to one of the $K$ clusters, i.e., $\Vk_k\cap \Vk_l=\emptyset$ for $k \neq l$ and $\sigma^{-1}(k)\neq \emptyset$ for $k\in [K]$.

We will specifically be interested in the limit $n \rightarrow \infty$, and therefore add a subscript $n$ to $\ell_n$ to emphasize that the path length is allowed to depend on $n$.
To characterize the behavior of the clustering $\{\mcV_k\}_{k=1}^K$ as $n \rightarrow \infty$, we assume that there exists a fixed $\alpha=(\alpha_k)_{k\in [K]}$ with $\sum_{k=1}^K\alpha_k=1$ such that $\lim_{n \rightarrow \infty}|\mcV_k|/n=\alpha_k$ for $k\in [K]$.

The theoretical results presented in this paper rely on two assumptions on the limiting cluster ratios $\alpha$ and the cluster transition matrix $p$, respectively.

\begin{assumption}
    \label{asm:assumption1}
    The limiting cluster fractions $\alpha$ are strictly positive, i.e., $\min_{k\in [K]}\alpha_k>0$.
\end{assumption}

\begin{assumption}
    \label{asm:assumption2}
    The \gls{MC} induced by $p$ is irreducible, and admits a stationary distribution $\pi$, i.e., which satisfies $\pi^\intercal p = \pi^\intercal$.

\end{assumption}

\Cref{asm:assumption1} ensures that cluster sizes are balanced, i.e., that $|\mcV_k|=\Theta(n)$ for all $k\in [K]$.
Moreover, it follows from \cref{asm:assumption2} that $\pi$ is unique and satisfies $\pi_k>0$ for $k\in [K]$.
It also follows from \eqref{eq:transitionMatrix} that $\Pi=(\Pi_x)_{x\in[n]}$ with $\Pi_x\eqdef \pi_{\sigma(x)}/|\mcV_{\sigma(x)}|$ satisfies $\Pi^\intercal P=\Pi^\intercal$, i.e., it is the unique stationary distribution of $P$.
As an immediate consequence, $\Pi_x=\Theta(1/n)$ for all $x\in[n]$.

\subsection{Generating \texorpdfstring{\glspl{BMC}}{BMCs} at finite \texorpdfstring{$n$}{n}}

For a given pair $(\alpha,p)$ satisfying \Cref{asm:assumption1,asm:assumption2}, we can construct a \gls{BMC} for any finite $n$ such that $\min_{k\in[K]}n\alpha_k\geq 1$ by assigning $|\mcV_k|=\floor{n\alpha_k}$ states to cluster $k=2,\ldots,K$, and placing the remaining $|\mcV_1|=n-\sum_{k=2}^K|\mcV_k|$ states in cluster $1$.
Notice that $|\mcV_1|-\floor{n\alpha_1}\leq K-1$.

\subsection{Approximate cluster assignment}

The algorithms that we consider all take as input a trajectory $X_0, X_1, \ldots, X_{\ell_n}$ and then output an estimate $\hat{K}$ of the number of clusters.
Sometimes, the algorithms also return an approximate cluster assignment $\hat{\sigma}: [n] \rightarrow [\hat{K}]$.
In those cases, we let
\begin{equation}
    \{
        \hat{\mcV}_k
    \}_{k=1}^{\hat{K}}
    =
    \bigl\{
        \{
            i \in [n]
            :
            \hat{\sigma}(i) = k
        \}
    \bigr\}_{k=1}^{\hat{K}}
\end{equation}
refer to the clusters implied by $\hat{\sigma}$, and
$
    \hat{\alpha}_k
    =
    | \hat{\mathcal{V}}_k | / n
$
to the cluster size fractions implied by $\hat{\sigma}$.

\subsection{Asymptotic notation}

Given a sequence of real-valued random variables $\{X_n\}_{n=1}^{\infty}$ and a deterministic sequence $\{a_n\}_{n=1}^{\infty}$, we denote $X_n=O_{\mbbP}(a_n)$ if and only if $\forall_{\varepsilon>0}\exists_{\delta_{\varepsilon},N_{\varepsilon}}:\mbbP[|X_n/a_n|\geq \delta_{\varepsilon}]\leq\varepsilon \forall_{n>N_{\varepsilon}}$, and $X_n=o_{\mbbP}(a_n)$ if and only if $\mbbP[|X_n/a_n|\geq \delta]\rightarrow 0\forall_{\delta>0}$.
Similarly, we denote $X_n=\Omega_{\mbbP}(a_n)$ if and only if $\forall_{\varepsilon>0}\exists_{\delta_{\varepsilon},N_{\varepsilon}}:\mbbP[|X_n/a_n|\leq \delta_{\varepsilon}]\leq\varepsilon \forall_{n>N_{\varepsilon}}$, and $X_n=o_{\mbbP}(a_n)$ if and only if $\mbbP[|X_n/a_n|\leq \delta]\rightarrow 0\forall_{\delta>0}$.
We denote $X_n=\Theta_{\mbbP}(a_n)$ if and only if $X_n=O_{\mbbP}(a_n)$ and $X_n=\Omega_{\mbbP}(a_n)$.

\section{The algorithm}
\label{sec:algorithm-description}

Inspired by the two-stage procedure in \cite{NIPS2016_a8849b05} which can estimate the number of clusters in a \gls{LSBM}, we propose a similar procedure to estimate the number of clusters of a \gls{BMC}.
Contrary to the entries of adjacency matrices generated by most \glspl{SBM}, however, the entries of the count matrix $\hat{N}$ are dependent.

Proof modifications are therefore required.
Fortunately, asymptotically tight bounds on the spectral norm of $\hat{N}$ recently established in \cite{Bounds2023} now allow us to adapt the proof techniques of \cite{NIPS2016_a8849b05} and prove consistency of the two-stage procedure for \glspl{BMC} we describe next.

\subsection{Step 1: Singular value thresholding on a trimmed count matrix}

The block structure of the transition matrix induces an approximate low-rank structure in the count matrix $\hat{N}$ that we exploit to obtain an initial estimate for the number of clusters.
In order to guarantee good spectral concentration, we first need to trim the count matrix $\hat{N}$; \cite{Bounds2023} contains proof details on this.
Specifically, we set to zero all rows and columns corresponding to states in the set $\Gamma$ that contains the $\floor{n\exp(-\ell_n/n)}$ most visited states:
\begin{equation}
    \label{eqn:trimmed_matrix_def}
    \hat{N}_{\Gamma} \allowbreak \eqdef \allowbreak (\hat{N}_{\Gamma,x,y})_{x,y\in [n]}
    \quad
    \textnormal{where}
    \quad
    \hat{N}_{\Gamma,x,y}\eqdef \hat{N}_{x,y}\mathbbm{1}\{(x,y)\in\Gamma^\mathrm{c}\times\Gamma^\mathrm{c}\} \quad \text{for } x,y\in[n].
\end{equation}
We then count the number of singular values $\sigma_i(\hat{N}_{\Gamma})$ of $\hat{N}_{\Gamma}$ that exceed a threshold $\gamma_n$, leading to the following estimator:
\begin{equation}
    \label{eqn:kpredef}
    \Ktilde\eqdef |\{i\in [n]:\sigma_i(\hat{N}_{\Gamma})\geq \gamma_n\}|.
\end{equation}
\Cref{alg:kpost} can compute \eqref{eqn:kpredef} efficiently by exploiting Sylvester's law and $LDL^\intercal$-decompositions.
We explain this in \Cref{sec:On-singular-value-thresholding-via-Sylvesters-law}.

\begin{algorithm}[H]
    \caption{Pseudocode for singular value thresholding on a trimmed count matrix}
    \label{alg:kpre}
    \KwData{A trajectory $X_0,\dotsc,X_{\ell_n}$, and singular value threshold $\gamma_n$}
    \KwResult{An estimate number of clusters $\Ktilde$}
    Build transition count matrix $\hat{N}_{x,y}\gets\sum_{t=0}^{\ell_n-1}\mathbbm{1}\left\{X_t=x, X_{t+1}=y\right\}$ for $x,y\in[n]$;\label{ln:build_Nhat}\\
    Initialize $\Gamma \gets \emptyset$;\\
    \For{$i = 1, \ldots, \lfloor n \exp{(-\ell_n/n)} \rfloor$}{
        Update $\Gamma \gets \Gamma \cup \arg \max \bigl\{ y \in \mathcal{V} \backslash \Gamma : \sum_{x \in \mathcal{V}} \hat{N}_{x,y} \bigr\}$;\\
    }
    Calculate trimmed matrix $\hat{N}_\Gamma \gets \{ \hat{N}_{x,y}\mathbbm{1}\{(x,y)\in\Gamma^{\mathrm{c}}\times\Gamma^{\mathrm{c}}\}\}_{x,y\in[n]}$;\label{ln:trim_Nhat}\\
    Calculate the $LDL^\intercal$-decomposition of $\hat{N}_\Gamma^\intercal \hat{N}_\Gamma - \gamma_n^2 I_{n\times n}$;\label{ln:compute_ldl_decomposition}\\
    Perform singular value thresholding $\Ktilde\gets |\{D_{x,x}>0:x\in [n]\}|$;\label{ln:perform_sing_val_thresholding}\\
    \Return $\Ktilde$
\end{algorithm}

\subsubsection{Consistency result}

We prove the following in \Cref{sec:proofskpre}:

\begin{proposition}
    \label{prop:limit_kpre}

    Presume \cref{asm:assumption1,asm:assumption2}, and that $\ell_n = \omega(n)$.
    If
    $
        \omega(\sqrt{\ell_n/n})
        =
        \gamma_n
        =
        o(\ell_n/n)
        ,
    $
    then the output of \cref{alg:kpre}, $\Ktilde$, equals $\rank(p)$ with high probability as $n \rightarrow \infty$.
\end{proposition}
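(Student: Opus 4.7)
The plan is to decompose $\hat{N}_\Gamma = M + E$ with signal $M \eqdef \mbbE[\hat{N}_\Gamma]$ and noise $E \eqdef \hat{N}_\Gamma - \mbbE[\hat{N}_\Gamma]$, to show that $M$ has exactly $\rank(p)$ nonzero singular values all of order $\ell_n/n$, to show that $\lVert E\rVert = \OrderP{\sqrt{\ell_n/n}}$, and then to invoke Weyl's inequality together with the placement $\omega(\sqrt{\ell_n/n}) = \gamma_n = \order{\ell_n/n}$ of the threshold to cleanly separate signal from noise.

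For the signal, I would use that, up to an $\Order{1}$ transient contribution from the initial distribution (the \gls{MC} mixes fast because $p$ is irreducible on $K=O(1)$ states by \cref{asm:assumption2}), the block structure~\eqref{eq:transitionMatrix} yields $\mbbE[\hat{N}] \approx \ell_n \diag(\Pi) P = \ell_n Z D Z^\intercal$, where $Z \in \{0,1\}^{n \times K}$ is the cluster membership matrix ($Z_{x,k} = \mathbbm{1}\{\sigma(x)=k\}$) and $D \in \mbbR^{K\times K}$ has entries $D_{i,j} = \pi_i p_{i,j}/(|\mcV_i|\,|\mcV_j|)$. Since $\pi_i > 0$ and $|\mcV_i| > 0$, $\rank(D) = \rank(p)$. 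By \cref{asm:assumption1}, $Z$ has singular values $\sqrt{|\mcV_k|} = \Theta(\sqrt{n})$, and a short calculation (for instance, via the thin SVD of $Z$) then gives $\sigma_i(\ell_n Z D Z^\intercal) = \Theta(\ell_n/n)$ for $i \leq \rank(p)$. Under $\ell_n = \omega(n)$, the trimmed set satisfies $|\Gamma| = \lfloor n\exp(-\ell_n/n)\rfloor = \order{n}$, so trimming removes an asymptotically negligible fraction of states from each cluster; one still has $\Theta(n)$ states per cluster after trimming, and hence $\sigma_{\rank(p)}(\mbbE[\hat{N}_\Gamma]) = \Theta(\ell_n/n)$ while $\sigma_{\rank(p)+1}(\mbbE[\hat{N}_\Gamma]) = 0$.

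For the noise, I would invoke \Cref{thm:spectral_norm_bound}, i.e.\ the augmentation of \cite[Theorem 3]{Bounds2023} flagged in the introduction, to obtain $\lVert E\rVert = \OrderP{\sqrt{\ell_n/n}}$. Weyl's inequality $|\sigma_i(\hat{N}_\Gamma) - \sigma_i(M)| \leq \lVert E\rVert$ then gives, for $i \leq \rank(p)$, $\sigma_i(\hat{N}_\Gamma) \geq \Theta(\ell_n/n) - \OrderP{\sqrt{\ell_n/n}} \geq \gamma_n$ with high probability (using $\gamma_n = \order{\ell_n/n}$); and for $i > \rank(p)$, $\sigma_i(\hat{N}_\Gamma) \leq \OrderP{\sqrt{\ell_n/n}} < \gamma_n$ with high probability (using $\gamma_n = \omega(\sqrt{\ell_n/n})$). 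Together these bounds force the estimator~\eqref{eqn:kpredef} to satisfy $\Ktilde = \rank(p)$ with probability tending to one.

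The main obstacle is establishing \Cref{thm:spectral_norm_bound} itself: \cite[Theorem 3]{Bounds2023} as originally stated requires hypotheses on the cluster transition probabilities that are stronger than what is assumed here, and so the proof there must be re-opened—tracking the matrix-Bernstein/Freedman-style treatment of Markov-dependent sums on the trimmed index set $\Gamma^\mathrm{c}\times\Gamma^\mathrm{c}$—and the bound re-derived under the weaker hypotheses needed for \cref{prop:limit_kpre}. A subsidiary, more routine difficulty is making the approximation $\mbbE[\hat{N}]\approx \ell_n\diag(\Pi)P$ rigorous by controlling the burn-in from the non-stationary start of the trajectory, and verifying that removing $\order{n}$ rows and columns preserves the rank and the magnitude of the smallest nonzero singular value of the signal up to constants.
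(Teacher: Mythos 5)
Your proposal is correct and follows essentially the same route as the paper: factor the expected matrix as a cluster-membership matrix times $\diag(\pi)p$ times its transpose to get $\rank(p)$ singular values of order $\ell_n/n$ (the paper's \Cref{lem:singular_values_N}), bound the noise in spectral norm by $\OrderP{\sqrt{\ell_n/n}}$ via the generalized \Cref{thm:spectral_norm_bound}, and conclude with Weyl's inequality and the placement of $\gamma_n$ (the paper's \Cref{cor:singvalscalingrankp}). The only minor difference is that the paper sidesteps the awkwardness of the data-dependent trimming set by comparing $\hat{N}_\Gamma$ directly to the \emph{untrimmed} expectation $N$ (which \Cref{thm:spectral_norm_bound}(b) handles), rather than to $\mbbE[\hat{N}_\Gamma]$.
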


Proving \Cref{prop:limit_kpre} requires some work.
An application of the existing \cite[Corollary 4]{Bounds2023} would imply quickly that the estimator of \Cref{alg:kpre} is asymptotically consistent when $\rank(p)=K$ (see \Cref{sec:proof_of_limit_kpre_assuming_cor_singvalscalingrankp}), but the case $\rank(p) < K$ would then not be covered.
Analyzing the case that $\rank(p) < K$ is necessary to prove the implication of \Cref{prop:limit_kpre}: specifically, that \Cref{alg:kpre}'s output estimates $K$ asymptotically consistently \emph{if only if} $\rank(p)=K$.
Note also that \cite[Corollary 4]{Bounds2023} would require the stronger assumption that $p_{i,j}>0$ for all $i,j\in[K]$, which we would prefer not to make.

Therefore, to prove \Cref{prop:limit_kpre}, we opted to generalize the results of \cite{Bounds2023} in \Cref{sec:proof_of_spectral_norm_bound,sec:proof_of_cor_singvalscalingrankp} so that they hold under the weaker \cref{asm:assumption1,asm:assumption2}.
Specifically, we prove that the singular values of the trimmed count matrix $\hat{N}_{\Gamma}$ satisfy
\begin{equation}
    \label{eqn:singvalscaling}
    \sigma_i(\hat{N}_{\Gamma})=
    \begin{cases}
        \Theta_{\mathbb{P}}\lb\frac{\ell_n}{n}\rb   & i\leq \rank(p), \\
        O_{\mathbb{P}}\lb\sqrt{\frac{\ell_n}{n}}\rb & i> \rank(p)
    \end{cases}
\end{equation}
whenever $\ell_n = \omega(n)$, and \cref{asm:assumption1,asm:assumption2} hold.
The proof of \Cref{prop:limit_kpre} given \eqref{eqn:singvalscaling} is straightforward, and is presented in \Cref{sec:proof_of_limit_kpre_assuming_cor_singvalscalingrankp}.

\subsubsection{Limitations}

Observe that \Cref{prop:limit_kpre} has now established that \Cref{alg:kpre} estimates $K$ consistently if and only if $p$ is full-rank.
However, \glspl{BMC} may exhibit meaningful clusters even when $p$ is rank-deficient.
Similar observations were also made in the \gls{SBM}, see for example \cite{10.1214/15-AOS1370}.
We illustrate this for \glspl{BMC} in \Cref{ex:dot_product_model}:

\begin{example}[Dot-product model \cite{10.5555/1777879.1777890,athreya2018statistical}]
    \label{ex:dot_product_model}

    Presume \Cref{asm:assumption1}.
    For each $i\in 1,\ldots,K$, let $v_i\in \mbbR_{\geq 0}^d$ with $d<K$ and define $p_{i,j}= \langle v_i,v_j\rangle/\sum_{j=1}^K\langle v_i,v_j\rangle$.
    Then, $\rank(p) = \textnormal{dim span}(v_1, \allowbreak \ldots, \allowbreak v_K) \leq d < K$.
    For instance, let $K=3$, $d=2$, $v_1=a\cdot(1,0)^\intercal$, $v_2=b\cdot (1,1)^\intercal$, and $v_3=a\cdot(0,1)^\intercal$ with $a,b>0$.
    The corresponding transition matrix and stationary distribution on the clusters are given by
    \begin{equation}
        p
        =
        \frac{1}{2(a+b)}
        \begin{psmallmatrix}
            2 a & 2 b & 0 \\
            a & 2 b & a \\
            0 & 2 b & 2 a \\
        \end{psmallmatrix}
        ,
        \qquad
        \pi
        =
        \frac{1}{2(a+b)}
        \begin{psmallmatrix}
            a \\  2b \\  a \\
        \end{psmallmatrix}
        .
    \end{equation}
    Note that the first and third cluster are indeed distinguishable: states belonging to the first cluster will never transition to states belonging to the third and \emph{vice versa}.
\end{example}

Consequently, a different algorithm is needed to obtain an estimator that can be asymptotically consistent whenever $\rank(p) < K$.
We describe such algorithm in \Cref{sec:Step-2}, remedying this shortcoming of \Cref{alg:kpre}.

\subsection{Step 2: Density-based clustering on a spectral embedding}
\label{sec:Step-2}

\Cref{alg:kpost} first passes to a $2r$-dimensional spectral embedding of the observations
\begin{align}
    \hat{X}
    \eqdef
    [
        \sigma_1 U_{\cdot,1}
        ,
        \ldots
        ,
        \sigma_r U_{\cdot,r}
        ,
        \sigma_1 V_{\cdot,1}
        ,
        \ldots
        ,
        \sigma_r V_{\cdot,r}
    ]
    ,
    \label{eqn:spectral_embedding_def}
\end{align}
which is constructed out of $\hat{N}_{\Gamma}$'s \gls{SVD} $\hat{N}_{\Gamma}\defeq U \Sigma V^\intercal$ with $\Sigma=\diag(\sigma_1,\ldots,\sigma_n)$ and $\sigma_1\geq \sigma_2\geq\ldots\geq\sigma_n$ the singular values of $\hat{N}_\Gamma$.
Similar embeddings have been used for directed graphs with community structure \cite{sussman2012consistent}.
Loosely speaking, because $\hat{N}_{\Gamma}$ is not symmetric, both the left- and right-singular vectors are necessary to capture information about transitions out of and into a given state $x\in[n]$, respectively.
We then define the neighbourhoods of a state $x\in[n]$ by
\begin{equation}
    \label{eqn:nbrhooddef}
    \mcN_x\eqdef \left\{ y\in[n] \big| \lVert \hat{X}_{x,\cdot}-\hat{X}_{y,\cdot}\rVert_2\leq h_n \right\}.
\end{equation}
Here, $h_n$ is a threshold controlling the neighborhood radius.
By counting the number of neighborhoods larger than a threshold $\rho_n$, taking care to avoid double counting, we can obtain a refined estimate $\Khat$ of $K$ by choosing $r = \hat{K}^{\mathrm{spec}}$.

\begin{algorithm}[H]
    \caption{Pseudocode for density-based estimate}\label{alg:kpost}
    \KwData{A trimmed count matrix $\hat{N}_\Gamma$, embedding dimension $2r$, and thresholds $h_n, \rho_n$}
    \KwResult{An estimate number of clusters $\Khat$, partial clustering $\{\hat{V}_k\}_{k=1,\ldots,\Khat}$}
    Calculate \gls{SVD} $U\Sigma V^\intercal$ of $\hat{N}_{\Gamma}$;\\
    Construct spectral embedding $\hat{X}\gets [\sigma_1U_{\cdot,1},\ldots,\sigma_rU_{\cdot,r},\sigma_1V_{\cdot,1},\ldots,\sigma_rV_{\cdot,r}]$;\\
    \For{$x\in [n]$}{
        \label{ln:compute_neighborhoods}Construct neighborhood $\mathcal{N}_x\gets\{y\in [n]|\|\hat{X}_{x,\cdot}-\hat{X}_{y,\cdot}\|_2\leq h_n
            \};$
    }
    Initialize $\hat{\mcV}_0\gets[n],\quad k\gets0$;\label{ln:init_max_neighborhood}\\
    \While{$|\hat{\mcV}_k|\geq \rho_n$}{\label{ln:find_largest_neighborhoods_1}
        Update $k\gets k+1$;\label{ln:find_largest_neighborhoods_2}\\
        Find center with largest neighborhood $z_k^*\gets\argmax_{x\in [n]}\{\mathcal{N}_x\backslash\bigcup^{k-1}_{l=1}\Vkhat_l\}$;\label{ln:find_largest_neighborhoods_3}\\
        Store neighborhood $\Vkhat_k\gets\mathcal{N}_{z_k^*}\backslash\bigcup^{k-1}_{l=1}\Vkhat_l$;\label{ln:find_largest_neighborhoods_4}\\
    }\label{ln:find_largest_neighborhoods_5}
    Set $\Khat\gets k-1$;\label{ln:set_final_value_of_k_hat}\\
    \Return $\Khat, \{z_k^*\}_{k=1,\ldots,\Khat}, \{\Vkhat_k\}_{k=1,\dotsc,\Khat}$
\end{algorithm}

\subsubsection{Consistency result}

In \cite{ClusterBMC2017}, the authors identify an information theoretic quantity,
\begin{equation}
    I(\alpha, p)
    \eqdef
    \min_{i,j \in [K]}
    \sum_{k=1}^K
    \frac{1}{\alpha_i}
    \Bigl(
        \pi_ip_{i,k}
        \ln{ \frac{p_{i,k}}{p_{j,k}} }
        +
        \pi_kp_{k,i}
        \ln{ \frac{p_{k,i}\alpha_j}{p_{k,j}\alpha_i}}
    \Bigr)
    +
    \Bigl(
        \frac{\pi_j}{\alpha_j}
        -
        \frac{\pi_i}{\alpha_i}
    \Bigr)
    ,
    \label{eqn:I(a,p)}
\end{equation}
the precise value of which is shown to affect cluster recoverability in the limit $n \rightarrow \infty$.
Under the condition $I(\alpha,p)>0$, we prove the following in \Cref{sec:proofskpost}:

\begin{theorem}
    \label{thm:Khat=K}

    Presume \cref{asm:assumption1,asm:assumption2}, $I(\alpha, p) > 0$, and $\ell_n = \omega(n)$.
    Upon choosing settings $\gamma_n$, $h_n$, and $\rho_n$ that satisfy
    \begin{equation}
        \label{eqn:parameter_conditions}
        \omega\lb \sqrt{\ell_n/n}\rb=\gamma_n =o(\ell_n/n),\quad \omega(\ell_n/n)=nh^2_n=o((\ell_n/n)^2),\quad \omega((\ell_n/n)/h_n^2)=\rho_n=o(n),
    \end{equation}
    then the output $\hat{K}$ of \cref{alg:kpost} initialized with embedding dimension $r=\Ktilde$ obtained from \cref{alg:kpre} satisfies $\Khat=K$ with high probability as $n \rightarrow \infty$.
\end{theorem}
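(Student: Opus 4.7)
Condition throughout on the high-probability event $r := \Ktilde = \rank(p) =: r^*$ furnished by \Cref{prop:limit_kpre}. Let $M^*$ denote the trimmed population analogue of $\hat{N}$, namely the matrix $\ell_n\, \diag(\Pi) P$ with all rows and columns indexed by $\Gamma$ zeroed out, and let $X^* \in \mbbR^{n \times 2r^*}$ be the analogue of $\hat{X}$ built from $M^*$'s top $r^*$ singular triples via \eqref{eqn:spectral_embedding_def}. By the block structure \eqref{eq:transitionMatrix}, rows of $M^*$ indexed by states in a common cluster (and outside $\Gamma$) are identical, and likewise for columns; hence $X^*_{x,\cdot} = X^*_{y,\cdot}$ whenever $\sigma(x) = \sigma(y)$ and $x, y \notin \Gamma$.

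The first main step is a quantitative population separation. For $x \in \mcV_i$, $y \in \mcV_j$ with $i \neq j$ and $x, y \notin \Gamma$, a direct computation using $|\mcV_k| = \Theta(n)$ and $\Pi_z = \Theta(1/n)$ yields
\begin{equation*}
    \lVert X^*_{x,\cdot} - X^*_{y,\cdot}\rVert_2^2 = \lVert M^*_{x,\cdot} - M^*_{y,\cdot}\rVert_2^2 + \lVert M^*_{\cdot,x} - M^*_{\cdot,y}\rVert_2^2 = \Theta(\ell_n^2/n^3) \cdot C(i,j),
\end{equation*}
where $C(i,j) \geq 0$ depends only on $(\alpha, p)$. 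I would then show that $I(\alpha, p) > 0$ from \eqref{eqn:I(a,p)} forces $\min_{i \neq j} C(i,j) > 0$: if $C(i,j) = 0$, both outgoing and incoming distributions at clusters $i$ and $j$ must agree (up to the $\alpha_i/\alpha_j$ rescaling), making every summand of \eqref{eqn:I(a,p)} vanish.

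The second main step is spectral concentration. By \Cref{thm:spectral_norm_bound}, $\lVert \hat{N}_\Gamma - M^*\rVert = \OrderP{\sqrt{\ell_n/n}}$, while \eqref{eqn:singvalscaling} gives the $r^*$-th singular value of $M^*$ as $\Theta(\ell_n/n)$ and the $(r^*+1)$-th as zero. A Wedin $\sin\Theta$ argument combined with Eckart--Young then produces an orthogonal $O$ with $\lVert \hat{X} - X^* O\rVert_F^2 = \OrderP{r^* \ell_n / n}$, handling the $U$- and $V$-blocks of $\hat{X}$ symmetrically. Call $x \notin \Gamma$ \emph{good} if $\lVert \hat{X}_{x,\cdot} - (X^* O)_{x,\cdot}\rVert_2 \leq h_n/3$. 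Markov's inequality bounds the number of non-good states (outside $\Gamma$) by $\OrderP{r^* \ell_n/(n h_n^2)} = \orderP{\rho_n}$, thanks to $\rho_n = \omega((\ell_n/n)/h_n^2)$.

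The conclusion is then obtained by tracing \Cref{alg:kpost}. The upper bound $h_n = o(\ell_n/n^{3/2})$ (from $n h_n^2 = o((\ell_n/n)^2)$) sits strictly below the population cluster separation derived above, so good states in different clusters stay out of each other's $h_n$-neighborhoods while good states in the same cluster stay inside. Hence for any good $x \in \mcV_k \setminus \Gamma$, $|\mcN_x| \geq \alpha_k n - \orderP{\rho_n} - |\Gamma| \geq \rho_n$ eventually, since $\rho_n = o(n)$ and $|\Gamma| = n e^{-\ell_n/n} = \orderP{\rho_n}$ under the given scalings. The while-loop thus peels off exactly one cluster per iteration for $K$ iterations; after that the unassigned set is contained in (bad states) $\cup\, \Gamma$, has size $\orderP{\rho_n}$, and the loop halts with $\hat{K} = K$. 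The main technical subtleties I anticipate are (a) propagating the Wedin bound from singular \emph{subspaces} to the \emph{scaled} embedding $\hat{X}$ in both blocks simultaneously, and (b) converting the information-theoretic condition $I(\alpha, p) > 0$ into the concrete geometric separation $\min_{i \neq j} C(i,j) > 0$.
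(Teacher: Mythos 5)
Your outline is correct and lands on the same rates as the paper, but the key concentration step is organized differently, and the place you flag as "subtlety (a)" is exactly where the two routes diverge. The paper does \emph{not} use a Wedin $\sin\Theta$ argument or an alignment matrix $O$. Instead it proves (\Cref{lem:equivalent_spectral_embedding}) the exact identity $\lVert \hat{X}_{x,\cdot}-\hat{X}_{y,\cdot}\rVert_2 = \lVert \hat{R}^0_{x,\cdot}-\hat{R}^0_{y,\cdot}\rVert_2$, where $\hat{R}^0=[\hat{R},\hat{R}^\intercal]$ is built from the rank-$r$ truncation of $\hat{N}_\Gamma$; this follows from orthonormality of the singular vectors and lets one replace the embedding by the low-rank approximation itself. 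Concentration is then a one-line rank argument (\Cref{lem:rank_Ktilde_frobenius_bound}): $\hat{R}-N$ has rank at most $\Ktilde+\rank(p)$, so $\lVert\hat{R}^0-N^0\rVert_{\mathrm{F}}\leq\sqrt{2(\Ktilde+K)}(\lVert\hat{R}-\hat{N}_\Gamma\rVert+\lVert\hat{N}_\Gamma-N\rVert)$, and $\lVert\hat{R}-\hat{N}_\Gamma\rVert=\sigma_{\Ktilde+1}(\hat{N}_\Gamma)\leq\min\{\gamma_n,\lVert\hat{N}_\Gamma-N\rVert\}$ on the event $\Ktilde=\rank(p)$. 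This buys you the same $O_{\mbbP}(\ell_n/n)$ bound on the squared Frobenius error without ever comparing subspaces. Your Wedin route is workable in principle but has a genuine wrinkle you should not gloss over: when the top $r^*$ singular values of $N$ are not mutually separated, the orthogonal alignment $O$ is only defined on the subspace and need not commute with $\Sigma^*$, so the object $X^*O$ is not simply the population embedding rotated, and passing from $\lVert U-U^*O\rVert_{\mathrm{F}}$ to $\lVert\hat{X}-X^*O\rVert_{\mathrm{F}}$ requires extra work. The standard fix is precisely to compare the low-rank approximations $U\Sigma V^\intercal$ and $U^*\Sigma^*V^{*\intercal}$ directly --- which is what the paper's identity accomplishes cleanly. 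Two smaller points of rigor: your separation step should invoke the uniform-in-$n$ constant from \cite[Lemma 4]{ClusterBMC2017} (the paper's \Cref{lem:separability_property}) rather than only the qualitative implication $C(i,j)=0\Rightarrow I=0$; and your "peels off exactly one cluster per iteration" claim needs the additional observation that a selected center $z_k^*$ may itself be a bad state, so one must argue (as in properties (1)--(4) of the paper's \Cref{lem:cluster_centers_characterization}, via the triangle inequality through $z_k^*$) that any neighborhood of size $\geq\rho_n$ intersects exactly one core, and then run the induction of \eqref{eq:coreconstruction}.
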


As more and more transitions are observed, the rows of $\hat{X}$ start to concentrate to their respective expectations.
As points in $\mathbb{R}^{2r}$, one can show that these are constant on each cluster, and well-separated in $L^2$-distance between different clusters provided that $I(\alpha,p)>0$ (see \Cref{lem:equivalent_spectral_embedding} and \Cref{lem:separability_property}).
Rows of $\hat{X}$ that are close to each other are thus likely to belong to the same cluster and \emph{vice versa}.

The proof then relies on a careful analysis of how the rows $\hat{X}_{x,\cdot}$ concentrate in $\mbbR^{2r}$.
In particular, the radius threshold $h_n$ should be chosen large enough to ensure that a state's neighborhood captures most of its cluster, but small enough to avoid overlap with other clusters (cf.\ \eqref{eqn:parameter_conditions}).
Similarly, the neighborhood size threshold $\rho_n$ should be chosen large enough to ensure that the number of outliers that remain after $K$ iterations is less than $\rho_n$, yet small enough to ensure that we detect the smallest cluster (cf.\ \eqref{eqn:parameter_conditions}).
\Cref{sec:equivalence_of_norms,sec:concentration_of_low_rank_approximation} provide results relating the concentration of $\hat{X}$ to the concentration of $\hat{N}_{\Gamma}$ in spectral norm, which we have already analyzed in \Cref{sec:proofskpre}.
These concentration properties are then used in \Cref{sec:density-based_clustering_recovers_the_true_number_of_clusters} to derive the appropriate asymptotic scalings of the thresholds $h_n$ and $\rho_n$ that ensure consistency of the estimator $\Khat$.

\subsection{(Optional) Step 3: Completion via K-means clustering}

While executing \cref{alg:kpost}, one simultaneously obtains a partial clustering that can readily be completed using an optional K-means clustering step.
Although clustering is not the main focus of this paper, we describe the completion step for completeness next.

Observe that \cref{alg:kpost} calculates for $k=1, \ldots, \Khat$ a neighborhood center $z_k^*\in[n]$ and a neighborhood $\hat{\mcV}_k\subset\mcV$.
These neighborhoods form an incomplete cover of the state space $[n]$ and can be viewed as a partial clustering.
Points not in $\cup_{k\in [\Khat]}\hat{\mcV}_k$ can either be treated as outliers, or assigned to the cluster whose center is closest.
The latter is done in \cref{alg:kmeans}:

\begin{algorithm}[H]
    \caption{Pseudocode for $K$-means completion}
    \label{alg:kmeans}
    \KwData{Embedding $\hat{X}$, cluster centers $\{z_k^*\}_{k=1,\ldots\Khat}$,\\
    partial clustering $\{\hat{\mcV}_{k}\}_{k=1,\ldots,\Khat}$}
    \KwResult{Approximate clustering $\{\hat{\mcV}_{k}\}_{k= 1,\ldots,\Khat}$}
    \For{$x\in[n]\backslash\bigcup^{\Khat}_{l=1}\Vkhat_l$}{
        Find closest center $k_*\gets\argmin_{k\in{1,\dotsc,\Khat}}\{\|\Hat{X}_{x,\cdot}-\Hat{X}_{z_k^*,\cdot}\|_2\}$;\\
        Assign to cluster $\Vkhat_{k_*}\gets\Vkhat_{k_*}\cup\{x\}$;
    }
    \Return $\{\Vkhat_k\}_{k=1,\ldots,\Khat}$
\end{algorithm}

The combination of \Cref{alg:kpre,alg:kpost,alg:kmeans} effectively results in an augmented version of the spectral clustering algorithm described in \cite{ClusterBMC2017}---one that does not rely on prior knowledge of $K$.

\subsubsection{Fraction of misclassified states}
Since \Cref{thm:Khat=K} ensures that \Cref{alg:kpost} recovers the true number of clusters with high probability, we can characterize the performance of \Cref{alg:kmeans} in terms of its set of misclassified states, which we define as follows:
\begin{equation}
    \mcE
    \eqdef
    \bigcup_{k=1}^{K} \hat{\mcV}_{\gamma^{\mathrm{opt}}(k)}\setminus\mcV_k
    \quad
    \textnormal{where}
    \quad
    \gamma^{\mathrm{opt}}\in \argmin_{\gamma\in\mathrm{Perm}(K)}\biggl|\bigcup_{k=1}^{K} \hat{\mcV}_{\gamma^{\mathrm{opt}}(k)}\setminus\mcV_k\biggr|
    .
    \label{eqn:misclassified_states}
\end{equation}
Relying on the analysis of \cite{ClusterBMC2017}, the following can be established for \Cref{alg:kmeans}:

\begin{theorem}[{Adapted from \cite[Theorem 2]{ClusterBMC2017}}]
    \label{thm:kmeanserror}
    Presume \cref{asm:assumption1,asm:assumption2}, $I(\alpha, p) > 0$, and $\ell_n = \omega(n)$.
    Then the proportion of misclassified states \eqref{eqn:misclassified_states} after \cref{alg:kmeans} when initialized with the output of \cref{alg:kpre,alg:kpost} using parameters satisfying \eqref{eqn:parameter_conditions} satisfies:
    \begin{equation}
        \frac{|\mathcal{E}|}{n}
        =
        \OrderP{\frac{n}{\ell_n}}=\orderP{1}.
        \label{eqn:Misclassification-error}
    \end{equation}
\end{theorem}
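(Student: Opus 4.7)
The plan is to combine \Cref{thm:Khat=K} with an adaptation of \cite[Theorem 2]{ClusterBMC2017}. I would first condition on the high-probability event $\{\Khat = K\}$ provided by \Cref{thm:Khat=K}, on which \Cref{alg:kmeans} reduces to assigning every remaining state to the nearest of exactly $K$ neighborhood centers in the spectral embedding $\hat{X}$. This lets us pick a permutation $\gamma^{\mathrm{opt}} \in \mathrm{Perm}(K)$ aligning $\{\Vkhat_k\}_{k=1}^K$ with $\{\mcV_k\}_{k=1}^K$, and reduces the goal to bounding the number of misclassified states induced by that permutation.

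Second, I would import two ingredients from the machinery used to prove \Cref{thm:Khat=K}: (i) \Cref{lem:equivalent_spectral_embedding} and \Cref{lem:separability_property}, which together provide an idealized embedding $\bar{X}$ whose rows take precisely $K$ distinct values in $\mbbR^{2r}$---one per true cluster---with minimum pairwise separation of order $\Omega(\sqrt{\ell_n^2/n^3})$ when $I(\alpha,p)>0$; and (ii) a Frobenius-level concentration bound
\begin{equation*}
    \lVert \hat{X} - \bar{X} Q \rVert_{\mathrm{F}}^2 = \OrderP{\ell_n/n}
\end{equation*}
for a suitable orthogonal rotation $Q$. The latter follows from \Cref{thm:spectral_norm_bound} combined with Weyl's inequality and a Davis--Kahan style argument for the singular subspaces of the asymmetric matrix $\hat{N}_\Gamma$.

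Third, I would fix $\gamma^{\mathrm{opt}}$ by matching each center $z_k^*$ produced by \Cref{alg:kpost} to the idealized cluster position closest to $\hat{X}_{z_k^*,\cdot}$. Every state $x \in \mcV_{\gamma^{\mathrm{opt}}(k)}$ whose embedded row lies within half the minimum separation of the correct idealized position is then assigned to $\Vkhat_k$ by \Cref{alg:kmeans}. A Chebyshev-type argument applied row-wise to the Frobenius bound yields
\begin{equation*}
    |\mcE|
    \leq
    \frac{4\,\lVert \hat{X} - \bar{X}Q \rVert_{\mathrm{F}}^2}{\lb \text{minimum separation} \rb^2}
    =
    \OrderP{\frac{\ell_n/n}{\ell_n^2/n^3}}
    =
    \OrderP{n^2/\ell_n},
\end{equation*}
which gives $|\mcE|/n = \OrderP{n/\ell_n} = \orderP{1}$ since $\ell_n = \omega(n)$.

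The main obstacle lies in the Davis--Kahan step: $\hat{N}_\Gamma$ is asymmetric, its entries exhibit Markovian dependence, and the embedding dimension is chosen as $2\,\rank(p)$ rather than $2K$. One must verify that the left- and right-singular subspaces of $\mbbE[\hat{N}]$ associated with its nonzero singular values already encode the full cluster structure, so that the $2\,\rank(p)$-dimensional embedding still distinguishes all $K$ true clusters (as is already evident in \Cref{ex:dot_product_model}). Once this is established, the remainder of the argument is a direct adaptation of \cite[Theorem 2]{ClusterBMC2017}.
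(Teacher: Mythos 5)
Your proposal is correct in its overall skeleton and ends with exactly the same counting argument as the paper: misclassified states have embedded rows at distance $\Omega_{\mbbP}(\ell_n/n^{3/2})$ from their target, the total squared Frobenius deviation is $O_{\mbbP}(\ell_n/n)$, and dividing the one by the other gives $|\mcE| = O_{\mbbP}(n^2/\ell_n)$. Where you diverge is in how the Frobenius concentration is obtained, and here your route is both harder and unnecessary. The paper never performs a Davis--Kahan/Wedin analysis of the singular subspaces of $\hat{N}_\Gamma$, and it never needs an alignment rotation $Q$. Instead it works with the rank-$\Ktilde$ approximation $\hat{R}^0=[\hat{R},\hat{R}^\intercal]$ in the ambient coordinates and compares it directly to $N^0$: since $\hat{R}-N$ has rank at most $\Ktilde+\rank(p)$, the elementary inequality $\lVert A\rVert_{\mathrm{F}}\leq\sqrt{\rank(A)}\,\lVert A\rVert$ together with \Cref{thm:spectral_norm_bound} immediately yields $\lVert\hat{R}^0-N^0\rVert_{\mathrm{F}}=O_{\mbbP}(\sqrt{\ell_n/n})$ (\Cref{lem:rank_Ktilde_frobenius_bound}); and \Cref{lem:equivalent_spectral_embedding} guarantees that row distances of $\hat{X}$ coincide with row distances of $\hat{R}^0$, so the algorithm's behaviour can be analyzed entirely on $\hat{R}^0$ versus $N^0$ with no rotation ambiguity. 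In other words, the step you flag as ``the main obstacle'' (subspace perturbation for an asymmetric matrix with Markovian dependence) is self-inflicted and is precisely what the paper's construction is designed to avoid. Two further points: first, the identification of which states are misclassified rests on the center characterization \eqref{eqn:center_characterization} of \Cref{lem:cluster_centers_characterization} (the centers $z_k^*$ lie within $bh_n=o(\ell_n/n^{3/2})$ of the cluster means $\bar{N}^0_{\gamma(k)}$), which you gesture at but should state explicitly, since your ``half the minimum separation'' argument needs it; second, your worry about the embedding dimension $2\rank(p)<2K$ is resolved automatically in the paper's formulation because the separability statement (\Cref{lem:separability_property}) is about rows of $N^0$ itself, which has rank $\rank(p)$, so no additional verification that the low-dimensional embedding distinguishes all $K$ clusters is required.
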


Contrary to \cite[Theorem 2]{ClusterBMC2017}, \Cref{thm:kmeanserror} does not assume prior knowledge of $K$.
Also, compared to \cite[Theorem 2]{ClusterBMC2017}, the error rate in \eqref{eqn:Misclassification-error} is tighter by factor $\ln{(\ell_n/n)}$.
This is achieved by leveraging \Cref{thm:spectral_norm_bound} (or, with slightly more restrictions, \cite[Theorem 3]{Bounds2023}), rather than the analogous \cite[Proposition 7]{ClusterBMC2017}.

\subsection{Discussion}

\subsubsection{Thresholding gaps between singular values}

Observe that \eqref{eqn:singvalscaling} implies that also the gaps between successive singular values typically exhibit the same scaling as the singular values themselves.
This suggests that replacing $\sigma_i$ by $\delta_i\eqdef \sigma_{i}-\sigma_{i-1}$ in \eqref{eqn:kpredef} also leads to a consistent estimator.
However, because there is an abundance of singular values near the spectral edge \cite[Proposition D.10]{vanwerde2023matrix}, the gaps for $i>\rank(p)$ are typically actually much smaller than $\sqrt{\ell_n/n}$, i.e., these are typically $o_{\mbbP}(\sqrt{\ell_n/n})$.
Consequently, the separation in scale between the first $K$ gaps and the gaps in the bulk is much greater than that between the first $K$ singular values and those in the bulk.
To exploit this observation in full one requires the precise asymptotic scaling of the gaps to choose an appropriate threshold that is $o(\sqrt{\ell_n/n})$, but we leave this for future work.

\subsubsection{On the conditions of \texorpdfstring{\Cref{thm:Khat=K}}{Theorem 3.3}}
\label{sec:Discussion-on-the-necessity-of-Ialphap-positive-and-elln-omega-n}

Under mild conditions on the clustering algorithms involved, \cite{ClusterBMC2017} showed that no clustering algorithm can achieve \textit{asymptotically accurate recovery}, i.e., $\mathbb{E}[|\mcE|]=o(n)$, unless $I(\alpha, p) > 0$ and $\ell_n = \omega(n)$.
Moreover, under the same conditions, \cite{ClusterBMC2017} showed that no clustering algorithm can achieve \textit{asymptotically exact recovery}, i.e., $\mathbb{E}[|\mcE|]=o(1)$, unless $I(\alpha, p) > 0$ and $\ell_n-n\ln n/I(\alpha, p) = \omega(1)$.
Related, \cite{ClusterBMC2017} also showed that larger values of $I(\alpha, p)$ positively affect the performance of an asymptotically optimal two-stage clustering algorithm that they simultaneously proposed.
Thus, intuitively, smaller values of $I(\alpha, p)$ correspond to more difficult clustering problems.

\Cref{thm:Khat=K} demonstrates that it is possible to estimate $K$ whenever asymptotically accurate recovery is possible ($I(\alpha, p) > 0$ and $\ell_n = \omega(n)$).
This is a consequence of \Cref{asm:assumption1}.
Even if exact recovery is not possible, the assumption that clusters contain a positive fraction of the states allows our algorithm to detect the number of clusters without being able to correctly assign all states to these clusters.
The condition $I(\alpha,p)>0$ ensures that states in different clusters have asymptotically distinct transition probabilities, thus avoiding issues surrounding identifiability; see \cite[Lemma 1]{ClusterBMC2017}.
Investigating in further detail for which asymptotic scalings of $\ell_n$ it would still be possible to recover the number of clusters accurately with decent probability, or to what extent detectability of the number of clusters is affected by the precise value of $I(\alpha, p)$, remains for future work.

\subsubsection{Relations to other density-based clustering algorithms}

\Cref{alg:kpost} is similar to other density-based clustering algorithms such as DBSCAN \cite{ester1996density} and variants thereof.
Indeed, similarly to DBSCAN, Step 2 is also restricted to states whose $\rho_n$-nearest neighbour is at most $h_n$ distance away.
Our particular prescription in \Cref{alg:kpost} is chosen to facilitate the proof of \cref{thm:Khat=K} as it allows us to build on related work \cite{ClusterBMC2017,NIPS2016_a8849b05}, enabling our original goal of deriving settings that ensure asymptotic consistency in \glspl{BMC}.
In practice, other density-based clustering algorithms that include additional (heuristic) optimizations might perform better in practice, and we include a direct numerical comparison of \Cref{alg:kpost} to HDBSCAN in \Cref{sec:results_alternative_methods}.

\subsubsection{Randomization can improve computational complexity}

Lastly, the computational complexity of \Cref{alg:kpost} can be improved by randomly sampling $\ceil{\ln n}$ states and computing neighborhoods only for these states.
It can be shown using similar techniques as in \cite{NIPS2016_a8849b05} that \Cref{thm:Khat=K} still holds when \Cref{alg:kpost} is modified in this way.
We do not pursue this here both for the sake of brevity, and because numerical experiments showed that the nonasymptotic performance of the algorithm suffers in sparse regimes.

\section{Numerical experiments}
\label{sec:numerical-experiments}

In this section, we investigate various properties of our method numerically.
For a more detailed description of the key performance indices that we are typically interested in, see \Cref{sec:Numerical-key-performance-indices}.

Recall that \Cref{alg:kpre,alg:kpost} require specification of the different thresholds $h_n$, $\rho_n$, and $\gamma_n$.
In our experiments, we parameterize these thresholds as follows: given some $a \in (0,1)$, $b \in (0,a)$, and $c \in (0,1)$, we set
\begin{gather}
    \label{eqn:paramter_parameterization}
    nh_n^2
    =
    (\ell_n/n)^{1+a}
    ,
    \quad
    \textnormal{``radius threshold''},
    \\
    \rho_n
    = n/(\ell_n/n)^{a-b}
    ,
    \quad
    \textnormal{``neighborhood size threshold'', and}
    \\
    \gamma_n
    =
    ( \ell_n/n )^{c}
    ,
    \quad
    \textnormal{``singular-value threshold''}
    .
\end{gather}
These choices satisfy \eqref{eqn:parameter_conditions} whenever $\ell_n = \omega(n)$.\footnote{%
Verification of the conditions on $\gamma_n$ and $h_n$ is immediate.
The lower bound on $\rho_n$ is verified by checking that whenever $\ell_n/n=\omega(1)$, the following holds for $a\in (0,1)$, $b\in (0,a)$:
$
    (\ell_n/n)/h_n^2  = \ell_n/(\ell_n/n)^{1+a} = n/(\ell_n/n)^{a} = o(\rho_n)
$.
}

Based on the numerical experiment in \Cref{sec:Relative_accuracy_vs_parameter_settings} that looks for suitable exponents, we decided to present the experiments in this section typically with exponents $a = 9/10, b = 1/10, c = 3/4$.
Whenever we deviate from this choice, we mention this explicitly.

\subsection{Illustration of the method}

We start with a few direct implementations, so as to establish proof--of--concept in synthetic environments.

\subsubsection{Proof--of--concept on synthetic data: From easy to hard scenarios}
\label{sec:Proof-of-concept-on-synthetic-data}

In this experiment, we apply our algorithm to four \glspl{BMC} that have just a few clusters.
Each \gls{BMC} has different parameters $(\alpha_a, p_a)$, $\cdots$, $(\alpha_d, p_d)$, and we have chosen these parameters such that the information quantity $I(\alpha, p)$ becomes progressively smaller.
Plots containing the estimated number of clusters $\hat{K}$ as a function of system size $n$ are shown in \Cref{fig:Proof_of_concept__From_easy_to_hard}(a)--(d).

\input{images/TikZ__Proof_of_concept_plots_from_easy_to_hard.tex}

\paragraph{More difficult problems (smaller $\ell_n$, $I(\alpha,p)$) require larger $n$}
From plots (a) to (d) in \Cref{fig:Proof_of_concept__From_easy_to_hard}, the clustering problem becomes progressively more difficult (recall \Cref{sec:Discussion-on-the-necessity-of-Ialphap-positive-and-elln-omega-n}).
Observe firstly that the minimum system size from which onward our method estimates the number of clusters correctly increases as the clustering problem becomes more difficult.
Observe secondly that when the trajectory length $\ell_n = n ( \ln{n} )^\beta$ is longer ($\beta = 2.00$ versus $\beta = 1.75$ or $1.50$), the method performs better.
Both observations are natural because effectively, the implications of laws of large numbers kick in earlier.

\paragraph{Sparse regimes involve numerically difficult asymptotics}
Finally, let us note that we felt that we were encountering the difficult numerical nature of the asymptotics of the underlying mathematical problem throughout most  experiments.
That is, while it may be theoretically true that the algorithm should work for all $\ell_n = \omega(n)$ for large enough $n$, the necessary $n$ is prohibitively large from a practical point of view.
In fact, the necessary $n$ is exponentially large.
The path lengths in these experiments were actually still quite short: for example, in configuration $(\beta = 2.00, n = 1000)$, the average state is visited only approximately $47$ times; see also \Cref{fig:observation_matrix_visualization}.

\subsubsection{Illustration for low rank example}

Next, we demonstrate the necessity of \Cref{alg:kpost} when dealing with a \gls{BMC} in which $p$ is rank deficient.

\begin{figure}[hbtp]
    \centering
    \begin{subfigure}{0.495\linewidth}
\begin{tikzpicture}[scale=0.9,]
    \pgfplotsset{compat=1.18}
    \usepgfplotslibrary{fillbetween}
    \begin{axis}[
        name = third plot,
        width=240pt,height=99pt,
        tick pos=both,
        xlabel={Relative accuracy},
        xmin=-1, xmax=1,
        xtick style={color=black},
        y grid style={darkgray176},
        ymin=0, ymax=0.55,
        ytick style={color=black}
        ]

        \path [draw=black, semithick, dashed]
        (axis cs:0,0)
        --(axis cs:0,1);

        \path [draw=black, semithick, dashed]
        (axis cs:-0.5,0)
        --(axis cs:-0.5,1);

        \node[anchor=north east] at (axis cs: 0.95,.5225) {$\ell_n = n (\ln n)^{2.0}$};
        \addplot[ybar, semithick, draw=blue, fill = blue!30!white, draw opacity = 0.7, fill opacity=0.4] coordinates{
            (-1,0)  (-0.9, 0.94)	(-0.8, 0.038)	(-0.7, 0.018)	(-0.5, 0.004)
        };
        \addplot[ybar, semithick, draw=red, fill = red!30!white, draw opacity = 0.7, fill opacity=0.4] coordinates{
            (-0.8, 0.002)	(-0.7, 0.006)	(-0.6, 0.014)	(-0.5, 0.008)	(-0.4, 0.008)	(-0.3, 0.01)	(-0.2, 0.018)	(-0.1, 0.018)	(0.0, 0.018)	(0.1, 0.014)	(0.2, 0.018)	(0.3, 0.02)	(0.4, 0.03)	(0.5, 0.024)	(0.6, 0.034)	(0.7, 0.028)	(0.8, 0.044)	(0.9, 0.056)	(1.0, 0.038)	(1.1, 0.038)	(1.2, 0.04)	(1.3, 0.058)	(1.4, 0.048)	(1.5, 0.056)	(1.6, 0.034)	(1.7, 0.032)	(1.8, 0.034)	(1.9, 0.046)	(2.0, 0.042)	(2.1, 0.034)	(2.2, 0.03)	(2.3, 0.032)	(2.4, 0.012)	(2.5, 0.014)	(2.6, 0.008)	(2.7, 0.012)	(2.8, 0.008)	(2.9, 0.004)	(3.0, 0.004)	(3.1, 0.004)	(3.5, 0.002)
        };

    \end{axis}

    \begin{axis}[
        name = second plot,
        width=240pt,height=99pt,
        yshift=64pt,
        tick pos=both,
        xmin=-1, xmax=1,
        xtick style={color=black},
        xticklabel=\empty,
        y grid style={darkgray176},
        ylabel={Rel. occurence},
        ymin=0, ymax=0.55,
        ytick style={color=black}
        ]

        \path [draw=black, semithick, dashed]
        (axis cs:0,0)
        --(axis cs:0,1);
        \path [draw=black, semithick, dashed]
        (axis cs:-0.5,0)
        --(axis cs:-0.5,1);
        \node[anchor=north east] at (axis cs: 0.95,.5225) {$\ell_n = n (\ln n)^{3.5}$};
        \addplot[ybar, semithick, draw = blue, fill = blue!30!white, draw opacity = 0.7, fill opacity = 0.4] coordinates{
            (-0.6, 0.002)	(-0.5, 0.008)	(-0.4, 0.026)	(-0.3, 0.114)	(-0.2, 0.244)	(-0.1, 0.29)	(0.0, 0.174)	(0.1, 0.04)	(0.2, 0.026)	(0.3, 0.034)	(0.4, 0.02)	(0.5, 0.008)	(0.6, 0.006)	(0.7, 0.006)	(0.8, 0.002)
        };
        \addplot[ybar, semithick, draw = red, fill = red!30!white, draw opacity = 0.7, fill opacity=0.4] coordinates{
            (-0.8, 0.008)	(-0.7, 0.144)	(-0.6, 0.368)	(-0.5, 0.284)	(-0.4, 0.104)	(-0.3, 0.072)	(-0.2, 0.018)	(-0.1, 0.002)
        };
    \end{axis}

    \begin{axis}[
    name = first plot,
    width=240pt,height=99pt,
    yshift=128pt,
    tick pos=both,
    xmin=-1, xmax=1,
    xtick style={color=black},
    xticklabel=\empty,
    y grid style={darkgray176},
    ymin=0, ymax=0.55,
    ytick style={color=black},
    legend style={at={(0.975, 0.05)}, anchor=south east},
    area legend
    ]

    \path [draw=black, semithick, dashed]
    (axis cs:0,0)
    --(axis cs:0,1);
    \path [draw=black, semithick, dashed]
        (axis cs:-0.5,0)
        --(axis cs:-0.5,1);
    \node[anchor=north east] at (axis cs: 0.95,.5225) {$\ell_n = n (\ln n)^{5.0}$};
        \addplot[ybar, semithick, draw=blue, fill = blue!30!white, draw opacity = 0.7, fill opacity=0.4] coordinates{
            (-0.4, 0.004)	(-0.3, 0.002)	(-0.2, 0.066)	(-0.1, 0.26)	(0.0, 0.488)	(0.1, 0.102)	(0.2, 0.062)	(0.3, 0.008)	(0.4, 0.006)	(0.5, 0.002)
        };
        \addlegendentry{Alg. 2}
        \addplot[ybar, semithick, draw=red, fill = red!30!white, draw opacity = 0.7, fill opacity=0.4] coordinates{
            (-0.7, 0.044)	(-0.6, 0.43)	(-0.5, 0.526)
        };
        \addlegendentry{Alg. 1}
    \end{axis}

\end{tikzpicture}
        \caption{}
        \label{fig:Low_rank_example_relative_accuracy}
    \end{subfigure}
    \begin{subfigure}{0.495\linewidth}
        \input{images/TikZ__Low_rank_example_singular_values.tex}
        \caption{}
        \label{fig:Low_rank_example_singular_values}
    \end{subfigure}
\caption{
    (a) Histograms of the relative accuracy of \cref{alg:kpre,alg:kpost} for \glspl{BMC} as in \cref{ex:dot_product_model} with $K=10$, $d=5$, and $v_i\in\mbbR^d$ and $\alpha$ sampled as described in \Cref{sec:low_rank_BMCs}.
    Here, $n=1000$, and the path length $\ell_n=n(\ln n)^{\beta}$ is varied with $\beta= 5.0, 3.5, 2.0$ from top to bottom.
    Each histogram is the result of $500$ independent repetitions.
    (b) Histograms of the empirical singular value distribution of $\hat{N} / \gamma_n$ where $\gamma_n = (\ell_n/n)^{3/4}$ for some random \gls{BMC} from \Cref{fig:Low_rank_example_relative_accuracy} while varying the path length as in (a).
    The red dots indicate the location of the five largest singular values.
    The solid red curve represents the theoretic prediction for the limiting distribution from \cite[Proposition 7.6]{vanwerde2023matrix}.
    The dashed vertical line indicates the location of the threshold $\gamma_n$ which remains fixed due to the rescaling.
}
\label{fig:Low_rank_example}
\end{figure}

\paragraph{Singular value thresholding detects the rank, not the number of clusters}
\Cref{fig:Low_rank_example_relative_accuracy} shows histograms of the empirical distribution of $\Ktilde$ from \Cref{alg:kpre} and $\Khat$ from \Cref{alg:kpost} when randomly sampling \glspl{BMC} from the dot-product model in \cref{ex:dot_product_model}.
\cref{sec:low_rank_BMCs} contains details on the sampling procedure, and in particular, \Cref{tab:ensemble_characterizations_low_rank} indicates that \glspl{BMC} sampled in this way typically have small values of $I(\alpha, p)$.
We are thus dealing with relatively difficult clustering problems.
Observe now from \Cref{fig:Low_rank_example_relative_accuracy} that as $\beta = 2.0, 3.5, 5.0$ increases from bottom to top, and thus the associated path length $\ell_n = n(\ln n)^\beta$ increases, the two estimators start to concentrate around $\rank(p) = 5$ and $K = 10$, respectively, slowly.

\Cref{fig:Low_rank_example_singular_values} also shows how the distribution of singular values of $\hat{N}$ changes as the ratio $\ell_n/n$ grows.
It shows the empirical distribution of singular values for a single realization of $\hat{N} / \gamma_n$, where $\gamma_n = (\ell_n/n)^{3/4}$, and at the three path lengths considered in \Cref{fig:Low_rank_example_relative_accuracy}.
Observe that with the short path length $(\beta=2)$, only the largest singular value is separated well from the bulk.
Moreover, the edge of the bulk exceeds the threshold value.
On the contrary, at longer path lengths $(\beta=3.5,5)$, the location of the spectral edge decreases relative to the threshold.
Up to $\rank(p) = 5$ singular values are seen to emerge from the bulk.

\subsection{Sensitivity of the algorithm to the embedding dimension}
\label{sec:dominance_of_singular_value_thresholding}

\Cref{alg:kpost} allows specifying the embedding dimension $2r$, and we will next investigate its effect.
Note that \Cref{thm:Khat=K} shows that \Cref{alg:kpost} is consistent when setting $r = \Ktilde$ obtained from \Cref{alg:kpre}, but here we uncouple from this choice.
\Cref{fig:Relative_accuracy_vs_rank} shows histograms of the empirical distribution of $\Khat$ when we run \cref{alg:kpost} using $r \in \{ 5, 10,15 \}$ on randomly sampled \glspl{BMC} with $K = 10$.

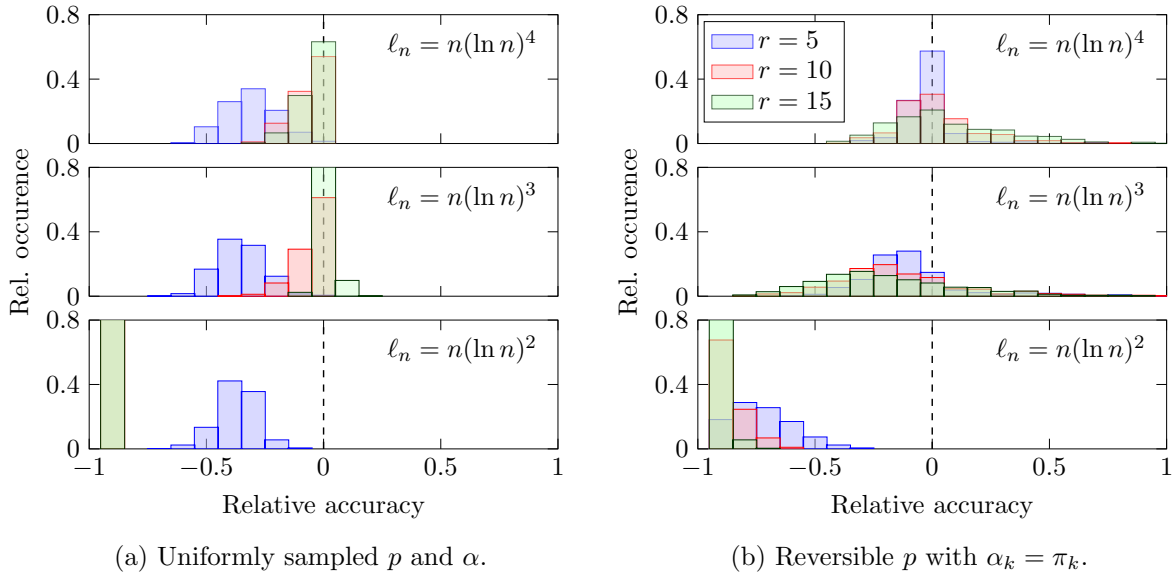
\begin{figure}[htb]
    \centering
    \begin{subfigure}{0.495\linewidth}
\begin{tikzpicture}[scale=0.9,]
    \begin{axis}[
        name = third plot,
        width=240pt,height=99pt,
        tick pos=both,
        xlabel={Relative accuracy},
        xmin=-1, xmax=1,
        xtick style={color=black},
        y grid style={darkgray176}, 
        ymin=0, ymax=0.8,
        ytick distance=0.4,
        ytick style={color=black}
        ]
        
        \path [draw=black, semithick, dashed]
        (axis cs:0,0)
        --(axis cs:0,1);

        \node[anchor=north east] at (axis cs: 0.95,.76) {$\ell_n = n (\ln n)^2$};
        \addplot[ybar, draw=blue, fill = blue!30!white, fill opacity=0.5] coordinates{
            (-0.7, 0.002)	(-0.6, 0.024)	(-0.5, 0.134)	(-0.4, 0.422)	(-0.3, 0.356)	(-0.2, 0.056)	(-0.1, 0.006)	
        };
        \addplot[ybar, draw=red, fill = red!30!white, fill opacity=0.5] coordinates{
            (-0.9, 1.0)	
        };
        \addplot[ybar, draw=green!20!black, fill = green!30!white, fill opacity=0.5] coordinates{
            (-0.9, 1.0)	
        };  
           
    \end{axis}

    \begin{axis}[
        name = second plot,
        width=240pt,height=99pt,
        yshift=64pt,
        tick pos=both,
        xmin=-1, xmax=1,
        xtick style={color=black},
        xticklabel=\empty,
        y grid style={darkgray176},
        ylabel={Rel. occurence},
        ymin=0, ymax=0.8,
        ytick distance=0.4,
        ytick style={color=black}
        ]

        \path [draw=black, semithick, dashed]
        (axis cs:0,0)
        --(axis cs:0,1);
        \node[anchor=north east] at (axis cs: 0.95,.76) {$\ell_n = n (\ln n)^3$};
        \addplot[ybar, draw=blue, fill = blue!30!white, fill opacity=0.5] coordinates{
            (-0.7, 0.002)	(-0.6, 0.016)	(-0.5, 0.168)	(-0.4, 0.354)	(-0.3, 0.316)	(-0.2, 0.124)	(-0.1, 0.014)	(0.0, 0.006)	
        };
        \addplot[ybar, draw=red, fill = red!30!white, fill opacity=0.5] coordinates{
            (-0.4, 0.002)	(-0.3, 0.012)	(-0.2, 0.082)	(-0.1, 0.292)	(0.0, 0.612)	
        };
        \addplot[ybar, draw=green!20!black, fill = green!30!white, fill opacity=0.3] coordinates{
            (-0.1, 0.024)	(0.0, 0.876)	(0.1, 0.098)	(0.2, 0.002)	
        };
    \end{axis}

    \begin{axis}[
    name = first plot,
    width=240pt,height=99pt,
    yshift=128pt,
    tick pos=both,
    xmin=-1, xmax=1,
    xtick style={color=black},
    xticklabel=\empty,
    y grid style={darkgray176},
    ymin=0, ymax=0.8,
    ytick distance=0.4,
    ytick style={color=black},
    legend style={at={(0.025, 0.95)}, anchor=north west},
    area legend
    ]
    
    \path [draw=black, semithick, dashed]
    (axis cs:0,0)
    --(axis cs:0,1);

    \node[anchor=north east] at (axis cs: 0.95,.76) {$\ell_n = n (\ln n)^4$};
        \addplot[ybar, draw=blue, fill = blue!30!white, draw opacity = 0.7, fill opacity=0.4] coordinates{
            (-0.6, 0.006)	(-0.5, 0.104)	(-0.4, 0.26)	(-0.3, 0.34)	(-0.2, 0.206)	(-0.1, 0.07)	(0.0, 0.014)	
        };
        \addplot[ybar, draw=red, fill = red!30!white, draw opacity = 0.7, fill opacity=0.4] coordinates{
            (-0.3, 0.01)	(-0.2, 0.126)	(-0.1, 0.324)	(0.0, 0.54)	
        };
        \addplot[ybar, draw=green!20!black, fill = green!30!white, draw opacity = 0.7, fill opacity=0.4] coordinates{
            (-0.3, 0.004)	(-0.2, 0.066)	(-0.1, 0.298)	(0.0, 0.632)	
        };
    \end{axis}
    
\end{tikzpicture}
    
        \caption{Uniformly sampled $p$ and $\alpha$.}\label{fig:Relative_accuracy_vs_rank_uniform}
    \end{subfigure}
    \begin{subfigure}{0.495\linewidth}
\begin{tikzpicture}[scale=0.9,]
    \begin{axis}[
        name = third plot,
        width=240pt,height=99pt,
        tick pos=both,
        xlabel={Relative accuracy},
        xmin=-1, xmax=1,
        xtick style={color=black},
        y grid style={darkgray176}, 
        ymin=0, ymax=0.8,
        ytick distance=0.4,
        ytick style={color=black}
        ]
        
        \path [draw=black, semithick, dashed]
        (axis cs:0,0)
        --(axis cs:0,1);

        \node[anchor=north east] at (axis cs: 0.95,.76) {$\ell_n = n (\ln n)^2$};
        \addplot[ybar, draw=blue, fill = blue!30!white, fill opacity=0.5] coordinates{
            (-0.9, 0.182)	(-0.8, 0.288)	(-0.7, 0.256)	(-0.6, 0.17)	(-0.5, 0.074)	(-0.4, 0.024)	(-0.3, 0.006)	
        };
        \addplot[ybar, draw=red, fill = red!30!white, fill opacity=0.5] coordinates{
            (-0.9, 0.676)	(-0.8, 0.246)	(-0.7, 0.068)	(-0.6, 0.01)	
        };
        \addplot[ybar, draw=green!20!black, fill = green!30!white, fill opacity=0.5] coordinates{
            (-0.9, 0.94)	(-0.8, 0.056)	(-0.7, 0.004)	
        };  
           
    \end{axis}

    \begin{axis}[
        name = second plot,
        width=240pt,height=99pt,
        yshift=64pt,
        tick pos=both,
        xmin=-1, xmax=1,
        xtick style={color=black},
        xticklabel=\empty,
        y grid style={darkgray176},
        ylabel={Rel. occurence},
        ymin=0, ymax=0.8,
        ytick distance=0.4,
        ytick style={color=black}
        ]

        \path [draw=black, semithick, dashed]
        (axis cs:0,0)
        --(axis cs:0,1);
        \node[anchor=north east] at (axis cs: 0.95,.76) {$\ell_n = n (\ln n)^3$};
        \addplot[ybar, draw=blue, fill = blue!30!white, fill opacity=0.5] coordinates{
            (-0.6, 0.002)	(-0.5, 0.012)	(-0.4, 0.054)	(-0.3, 0.104)	(-0.2, 0.256)	(-0.1, 0.28)	(0.0, 0.148)	(0.1, 0.038)	(0.2, 0.022)	(0.3, 0.012)	(0.4, 0.02)	(0.5, 0.018)	(0.6, 0.012)	(0.7, 0.004)	(0.8, 0.008)	(0.9, 0.004)	(1.0, 0.002)	(1.1, 0.004)	
        };
        \addplot[ybar, draw=red, fill = red!30!white, fill opacity=0.5] coordinates{
            (-0.8, 0.004)	(-0.7, 0.014)	(-0.6, 0.022)	(-0.5, 0.056)	(-0.4, 0.094)	(-0.3, 0.172)	(-0.2, 0.196)	(-0.1, 0.138)	(0.0, 0.116)	(0.1, 0.046)	(0.2, 0.044)	(0.3, 0.014)	(0.4, 0.03)	(0.5, 0.016)	(0.6, 0.012)	(0.7, 0.006)	(0.8, 0.004)	(0.9, 0.006)	(1.0, 0.004)	(1.1, 0.002)	(1.2, 0.002)	(1.3, 0.002)	
        };
        \addplot[ybar, draw=green!20!black, fill = green!30!white, fill opacity=0.5] coordinates{
            (-0.8, 0.008)	(-0.7, 0.028)	(-0.6, 0.06)	(-0.5, 0.092)	(-0.4, 0.136)	(-0.3, 0.154)	(-0.2, 0.128)	(-0.1, 0.102)	(0.0, 0.082)	(0.1, 0.056)	(0.2, 0.054)	(0.3, 0.03)	(0.4, 0.032)	(0.5, 0.012)	(0.6, 0.006)	(0.7, 0.006)	(0.8, 0.006)	(0.9, 0.004)	(1.1, 0.004)	
        };
    \end{axis}

    \begin{axis}[
    name = first plot,
    width=240pt,height=99pt,
    yshift=128pt,
    tick pos=both,
    xmin=-1, xmax=1,
    xtick style={color=black},
    xticklabel=\empty,
    y grid style={darkgray176},
    ymin=0, ymax=0.8,
    ytick distance=0.4,
    ytick style={color=black},
    legend style={at={(0.0125, 0.95)}, anchor=north west},
    legend cell align={left},
    area legend
    ]
    
    \path [draw=black, semithick, dashed]
    (axis cs:0,0)
    --(axis cs:0,1);

    \node[anchor=north east] at (axis cs: 0.95,.76) {$\ell_n = n (\ln n)^4$};
        \addplot[ybar, draw=blue, fill = blue!30!white, draw opacity = 0.7, fill opacity=0.4] coordinates{
            (-0.3, 0.018)	(-0.2, 0.036)	(-0.1, 0.268)	(0.0, 0.574)	(0.1, 0.062)	(0.2, 0.012)	(0.3, 0.008)	(0.4, 0.01)	(0.5, 0.002)	(0.6, 0.004)	(0.7, 0.006)	
        };
        \addlegendentry{$r=5$}
        \addplot[ybar, draw=red, fill = red!30!white, draw opacity = 0.7, fill opacity=0.4] coordinates{
            (-0.4, 0.002)	(-0.3, 0.036)	(-0.2, 0.066)	(-0.1, 0.266)	(0.0, 0.306)	(0.1, 0.154)	(0.2, 0.062)	(0.3, 0.056)	(0.4, 0.018)	(0.5, 0.018)	(0.6, 0.006)	(0.7, 0.004)	(0.8, 0.006)	
        };
        \addlegendentry{$r=10$}
        \addplot[ybar, draw=green!20!black, fill = green!30!white, draw opacity = 0.7, fill opacity=0.4] coordinates{
            (-0.4, 0.014)	(-0.3, 0.052)	(-0.2, 0.128)	(-0.1, 0.166)	(0.0, 0.208)	(0.1, 0.12)	(0.2, 0.088)	(0.3, 0.084)	(0.4, 0.046)	(0.5, 0.044)	(0.6, 0.026)	(0.7, 0.01)	(0.9, 0.008)	(1.0, 0.002)	(1.1, 0.004)
        };
        \addlegendentry{$r=15$}
    \end{axis}
    
\end{tikzpicture}
    
        \caption{Reversible $p$ with $\alpha_k=\pi_k$.}\label{fig:Relative_accuracy_vs_rank_reversible}
    \end{subfigure}
    \caption{%
        Histograms of relative accuracy when $r \in \{ 5, 10, 15 \}$ and $\ell_n = n (\ln n)^{\beta}$ with $\beta \in \{ 2, 3, 4 \}$, for uniformly sampled \glspl{BMC} and reversible \glspl{BMC}; see \Cref{sec:uniform_BMCs,sec:reversible_BMCs}.
        Here, $n=1000$, $K=10$, and each histogram is the result of $500$ independent replications.
    }
    \label{fig:Relative_accuracy_vs_rank}
\end{figure}

\paragraph{Choosing \texorpdfstring{$r$}{r} too large deteriorates performance}
Observe from \Cref{fig:Relative_accuracy_vs_rank_uniform}, in which both $p$ and $\alpha$ are sampled uniformly from the simplex, that performance deteriorates when $r$ exceeds the dimensionality of the signal $\rank(N)=10$.
Setting $r$ too large risks adding noninformative singular vectors to the embedding $\hat{X}$, which perturbs the density-based clustering done by \Cref{alg:kpost} by decreasing the average density of the clusters in $\mbbR^{2r}$; see also \Cref{sec:cluster-density-vs-rank}.
In sparse regimes $(\beta=2)$, the result is that few clusters achieve sufficiently high density to be detected by \Cref{alg:kpost}, and consequently then, $\Khat<K$.
In dense regimes $(\beta=4)$, $\hat{K}$ sometimes overestimates $K$ instead.
In this case, adding noise to $\hat{X}$ disrupts the true clusters, but smaller areas of high density remain, which \Cref{alg:kpost} detects as distinct clusters.

\paragraph{Choosing \texorpdfstring{$r$}{r} too small can be beneficial, even for long path lengths}
\Cref{fig:Relative_accuracy_vs_rank_uniform} also shows that choosing $r$ smaller than the dimensionality of the signal can be beneficial.
In sparse regimes $(\beta=2)$, the signal-to-noise ratio of additional features might not be sufficient for these to have a positive impact on performance, even if these features are informative.
Such a phenomenon is well-known in the classification literature (see e.g., \cite[Section 3]{jain2000statistical}), where it is understood in terms of a bias--variance tradeoff.

Perhaps more surprisingly, this behavior persists in relatively dense regimes $(\beta=4)$.
Here, \Cref{alg:kpost} often recovers the correct number of clusters even when $r=5$.
We suspect that this is because we use both left- and right-singular vectors, which effectively doubles the embedding dimension.
We also suspect that a uniformly sampled \gls{BMC} exhibits sufficient variation between clusters to distinguish these using fewer than $K$ singular vectors.
Indeed, in the extreme case, clustering could still be performed based solely on the visitation rate of each state, provided that $\Pi_x$ varies strongly accross different clusters and there are sufficient observations.

\paragraph{Choosing \texorpdfstring{$r$}{r} too small is not always beneficial}
To confirm our suspicions, we repeat the experiment from \Cref{fig:Relative_accuracy_vs_rank_uniform} using randomly sampled reversible \glspl{BMC} while setting $\alpha_k = \pi_k$ for $k\in [K]$.
For such reversible \glspl{BMC}, the left- and right-singular vectors of $N$ coincide as it is symmetric; and $\Pi_x = 1/n$ for $x \in [n]$, so that clusters cannot be detected based on their visitation rates alone.
The results of \Cref{fig:Relative_accuracy_vs_rank_reversible} show that while $r=5$ still yields the best performance in sparse regimes ($\beta=2$), it is outperformed by the other choices of $r$ when the path length is increased ($\beta=3,4$).

\subsection{Sparse data and the effect of mixing times}
\label{sec:sparse_example}

To demonstrate that \Cref{alg:kpost} recovers $K$ for sufficiently large $n$ even in the sparse regime $\omega(n)=\ell_n=o(n\ln n)$, we consider the one-parameter family of \glspl{BMC} with $K=2$, $p=(p_0, 1-p_0; 1-p_0, p_0)$, and $\alpha=(1/2, 1/2)$, and examine the performance for path lengths $\ell_n=n (\ln n)^{\beta}$ with exponent $\beta\leq 1$.
Note that the clusters are indistinguishable ($I(\alpha,p)=0$) when $p_0=1/2$, and that the \gls{MC} is not irreducible when $p_0=1$.
We therefore vary $p_0\in (1/2, 1)$ without loss of generality.
We also set $\gamma_n=(\ell_n/n)^c$ with $c=0.95$, which is necessary to obtain nontrivial results at reasonable values of $n$.

\Cref{fig:Sparse_example} shows scatter plots of pairs $(\beta,p_0)$ for which the correct number of clusters is detected at least half of the time.
To disentangle the results of \Cref{alg:kpre,alg:kpost}, we consider three different methods for estimating $K$: (a) \Cref{alg:kpost} with $r$ set using \Cref{alg:kpre}, (b) \Cref{alg:kpost} with $r$ fixed at 2, and (c) \Cref{alg:kpre}, corresponding to \Cref{fig:Sparse_example_a,fig:Sparse_example_b,fig:Sparse_example_c}, respectively.
For each method $i\in\{\mathrm{a},\mathrm{b},\mathrm{c}\}$, we denote the set of pairs $(\beta,p_0)$ for which the correct number of clusters is detected at least half of the time by $\hat{\mcC}^{(i)}_n\subset (0,1)\times(1/2,1)$, and denote its restriction to fixed $\beta\in(0,1)$ by $\hat{\mcC}^{(i)}_n|_{\beta}$.

\input{images/TikZ__Sparse_example.tex}

\paragraph{Numerically also, \Cref{alg:kpost} appears asymptotically consistent}
Observe from \Cref{fig:Sparse_example_a} that the sets $\hat{\mcC}_n^{(\mathrm{a})}$ increase in size with increasing $n$, reflecting the asymptotic consistency of \Cref{alg:kpost}.
Similar trends are observed in \Cref{fig:Sparse_example_b,fig:Sparse_example_c}.

\paragraph{Here, \Cref{alg:kpost} performs no better than \Cref{alg:kpre}}
\Cref{fig:Sparse_example_c} shows that the set $\hat{\mcC}_n^{(\mathrm{a})}$ is a subset of $\hat{\mcC}_n^{(\mathrm{c})}$, indicating that \Cref{alg:kpost} offers no advantage in this particular scenario.
This may be explained as follows.
Note from \Cref{fig:Sparse_example_c} that \Cref{alg:kpre} returns values greater than 2 when $\beta$ is small, and recall from \cref{sec:dominance_of_singular_value_thresholding} that if \Cref{alg:kpost} runs with an embedding dimension that is too large, it performs poorly when observations are sparse.
Finally, note from \Cref{fig:Sparse_example_c} that \Cref{alg:kpre} outputs values less than 2 when $p_0$ is small, and recall that \Cref{alg:kpost} can underperform when $r$ is set too small if $p$ is reversible and $\Pi_x$ is constant (which is the case here).

\paragraph{Shorter path lengths require larger values of $I(\alpha,p)$}
Closer inspection of \Cref{fig:Sparse_example_a,fig:Sparse_example_b} shows that for each $n$, the sets $\hat{\mcC}_n^{(\mathrm{a})}|_{\beta}$ and $\hat{\mcC}_n^{(\mathrm{b})}|_{\beta}$ shrink in size as $\beta$ decreases.
This is expected because sparser observations (smaller $\beta$) demand easier clustering problems (larger $p_0$) for our algorithm to succeed; note that here, $I(\alpha, p)=4(p_0-1/2)\ln p_0/(1-p_0)$ is increasing for $p_0\in(1/2,1)$.
This trend is more pronounced for (b), where $r$ is fixed, than for (a).

\paragraph{Singular value thresholding causes deviations from the trend}
Observe from \Cref{fig:Sparse_example_c} that $\hat{\mcC}_n^{(\mathrm{c})}|_{\beta}$ shows the opposite trend, growing in size as $\beta$ decreases before vanishing abruptly.
This is further investigated in \Cref{sec:singular-values-in-sparse-regimes}, where we plot the first 3 singular values of $\hat{N}_{\Gamma}$.
We find that at the finite values of $n$ considered here, the singular values deviate considerably from the asymptotic scaling predicted by \eqref{eqn:singvalscaling}.
Because the performance of \Cref{alg:kpost} is limited by that of \Cref{alg:kpre} when we use the latter to set $r$, this also leads to deviations from the expected trend that is found when we fix $r=2$.

\paragraph{Divergent mixing times perturb the spectral estimator}
We finally highlight a subtle yet interesting effect.
Specifically, observe in the upper left corner of \Cref{fig:Sparse_example_a,fig:Sparse_example_c}, but not \Cref{fig:Sparse_example_b}, that the estimators fail at least half of the time when $\beta$ is small and $p_0$ is large.
We suspect that here, the dependencies within the trajectory affect the spectrum of $\hat{N}_{\Gamma}$ strongly.
Indeed, the mixing time of the \gls{BMC}, here given by
$
    t_{\mix}
    \eqdef
    \inf\{t \geq 1:\max_{x\in[n]}d_{TV}(\Pi,P^t_{x,\cdot})<1/4\}
    =
    \ceil{\ln(2)/\ln(1/(2p_0-1))}
$,
diverges as $p_0\rightarrow 1$.

\subsection{Relations between relative performance and different characteristics of \texorpdfstring{\glspl{BMC}}{BMCs}}

\subsubsection{Relative performance vs information quantity}

In this experiment, we investigate whether the relative accuracy of our method is correlated with $I(\alpha, p)$.
\Cref{fig:Relative_accuracy_vs_information_quantity_uniform} shows plots of different quality measures as a function of $I(\alpha, p)$, for \glspl{BMC} sampled uniformly at random.

\begin{figure}[htb]
    \centering
    \begin{subfigure}{0.495\linewidth}
        \begin{tikzpicture}[scale = .9]
    \begin{axis}[
        xmin = 0.2, xmax = 1.2,
        ymin = -1, ymax = 1,
        width = \linewidth,
        height = 0.74 * \linewidth,
        ytick distance=0.5,
        xlabel = {$I(\alpha, p)$}, ylabel = {Relative accuracy},
        legend cell align={left},
        ]
        
        \addplot[forget plot] graphics [xmin=0.2,xmax=1.2,ymin=-.94,ymax=1.025] {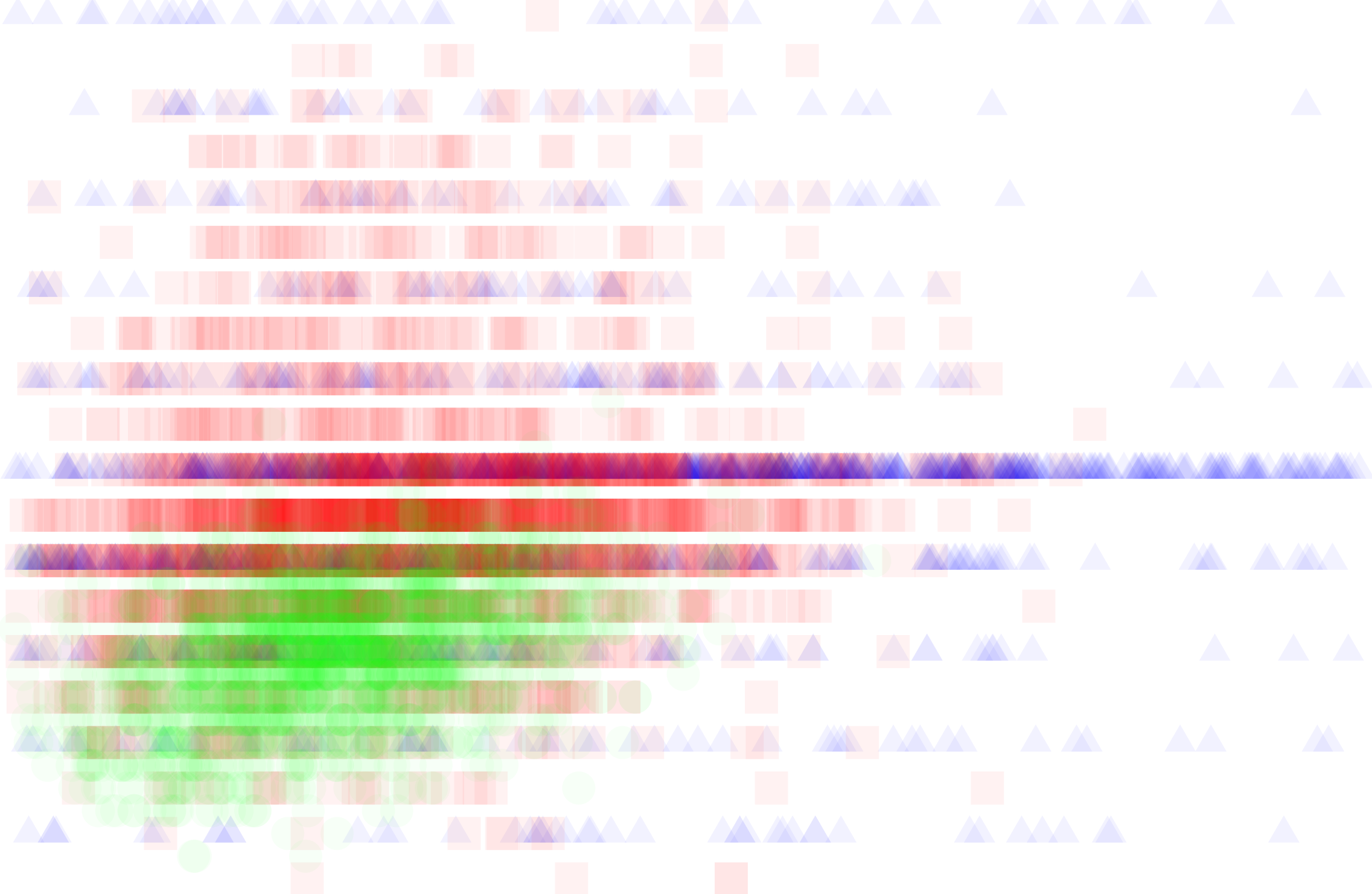};
        
        \addplot[smooth, mark=triangle*, mark options={scale=1}, error bars/.cd, y dir=both, y explicit] plot coordinates {
            (0.28727579167508666, 0.11330472103004308) += (0.28727579167508666, 0.08610255574655477) -= (0.28727579167508666, 0.08610255574655477)
            (0.4136644049796612, 0.1392265193370166) += (0.4136644049796612, 0.06294253816831756) -= (0.4136644049796612, 0.06294253816831756)
            (0.5400530182842358, 0.022777777777777775) += (0.5400530182842358, 0.04973648914567375) -= (0.5400530182842358, 0.04973648914567375)
            (0.6664416315888104, 0.12105263157894741) += (0.6664416315888104, 0.055321751024918195) -= (0.6664416315888104, 0.055321751024918195)
            (0.7928302448933849, 0.08395061728395065) += (0.7928302448933849, 0.06515255141581736) -= (0.7928302448933849, 0.06515255141581736)
            (0.9192188581979595, 0.005161290322580636) += (0.9192188581979595, 0.061632509601157986) -= (0.9192188581979595, 0.061632509601157986)
            (1.045607471502534, 0.10129870129870133) += (1.045607471502534, 0.10957249074825011) -= (1.045607471502534, 0.10957249074825011)
            (1.1719960848071087, 0.07272727272727272) += (1.1719960848071087, 0.13324282427665837) -= (1.1719960848071087, 0.13324282427665837)
        };
        \addlegendentry{$K=5$, blue}

        \addplot[smooth, mark=square*, mark options={scale=1}, error bars/.cd, y dir=both, y explicit] plot coordinates {
            (0.305447184958787, -0.1676646706586825) += (0.305447184958787, 0.03914688473117906) -= (0.305447184958787, 0.03914688473117906)
            (0.3640294273584843, -0.1197718631178708) += (0.3640294273584843, 0.03640071664334112) -= (0.3640294273584843, 0.03640071664334112)
            (0.42261166975818154, -0.04729729729729726) += (0.42261166975818154, 0.03508699629695592) -= (0.42261166975818154, 0.03508699629695592)
            (0.4811939121578788, -0.07808641975308639) += (0.4811939121578788, 0.031278534177487036) -= (0.4811939121578788, 0.031278534177487036)
            (0.5397761545575761, -0.06300366300366303) += (0.5397761545575761, 0.03661984477353615) -= (0.5397761545575761, 0.03661984477353615)
            (0.5983583969572733, -0.09647577092511009) += (0.5983583969572733, 0.036507301131517886) -= (0.5983583969572733, 0.036507301131517886)
            (0.6569406393569707, -0.04967741935483871) += (0.6569406393569707, 0.036926217763575256) -= (0.6569406393569707, 0.036926217763575256)
            (0.7155228817566679, -0.050526315789473676) += (0.7155228817566679, 0.05724091567380778) -= (0.7155228817566679, 0.05724091567380778)
        };
        \addlegendentry{$K=10$, red}

        \addplot[smooth, mark=*, mark options={scale=1}, error bars/.cd, y dir=both, y explicit] plot coordinates {
            (0.3036289703028093, -0.48694029850746257) += (0.3036289703028093, 0.02539623864543812) -= (0.3036289703028093, 0.02539623864543812)
            (0.3442016910973566, -0.4573394495412845) += (0.3442016910973566, 0.019111071489278143) -= (0.3442016910973566, 0.019111071489278143)
            (0.3847744118919039, -0.4117132867132865) += (0.3847744118919039, 0.01689583627099578) -= (0.3847744118919039, 0.01689583627099578)
            (0.42534713268645125, -0.39709302325581375) += (0.42534713268645125, 0.013806833646369742) -= (0.42534713268645125, 0.013806833646369742)
            (0.46591985348099857, -0.39334470989761117) += (0.46591985348099857, 0.014400943074975425) -= (0.46591985348099857, 0.014400943074975425)
            (0.506492574275546, -0.34220532319391644) += (0.506492574275546, 0.01715933931837463) -= (0.506492574275546, 0.01715933931837463)
            (0.5470652950700932, -0.33749999999999997) += (0.5470652950700932, 0.020790173241673614) -= (0.5470652950700932, 0.020790173241673614)
            (0.5876380158646406, -0.300952380952381) += (0.5876380158646406, 0.0246577040774353) -= (0.5876380158646406, 0.0246577040774353)
            };
        \addlegendentry{$K=20$, green}

        \addplot[dashed, domain = 0.2:1.4, samples = 2] {0.0};

    \end{axis}
\end{tikzpicture}
        \caption{}\label{fig:Relative_accuracy_vs_information_quantity_uniform_scatter}
    \end{subfigure}
    \begin{subfigure}{0.495\linewidth}
        \begin{tikzpicture}[scale = .9]
    \begin{axis}[
        xmin = 0.2, xmax = 1.2,
        ymin = 0.5, ymax = 1.025,
        width = \linewidth,
        height = 0.74 * \linewidth,
        xlabel = {$I(\alpha, p)$}, ylabel = {AMI},
        ]
        
        \addplot[forget plot] graphics [xmin=0.2,xmax=1.2,ymin=0.5,ymax=1.01] {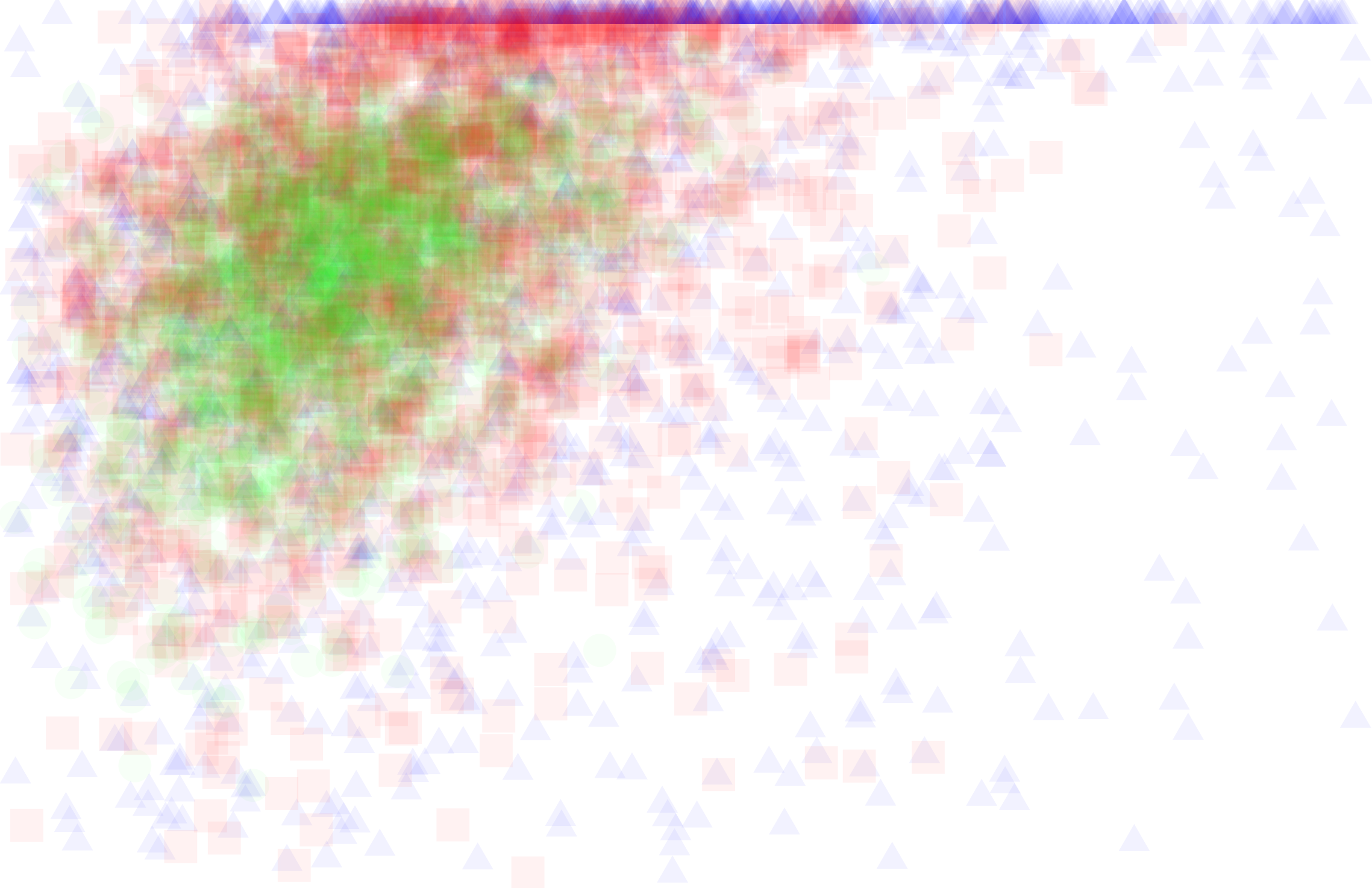};
        
        \addplot[smooth, mark=triangle*, mark options={scale=1}, x, error bars/.cd, y dir=both, y explicit] plot coordinates {
            (0.2908918426122678, 0.7517894165353962) += (0.2908918426122678, 0.024063973630038423) -= (0.2908918426122678, 0.024063973630038423)
            (0.38170394119146306, 0.7983790319007933) += (0.38170394119146306, 0.02742085964484935) -= (0.38170394119146306, 0.02742085964484935)
            (0.4573352858426776, 0.8366889407904772) += (0.4573352858426776, 0.02246016555691746) -= (0.4573352858426776, 0.02246016555691746)
            (0.5313821352593849, 0.8392211394614907) += (0.5313821352593849, 0.02651886429549034) -= (0.5313821352593849, 0.02651886429549034)
            (0.6100672916377536, 0.87150061296833) += (0.6100672916377536, 0.026046269461098028) -= (0.6100672916377536, 0.026046269461098028)
            (0.6935651017314375, 0.8772235768773897) += (0.6935651017314375, 0.02552727941312928) -= (0.6935651017314375, 0.02552727941312928)
            (0.795875707939719, 0.857817311137545) += (0.795875707939719, 0.03027141395084488) -= (0.795875707939719, 0.03027141395084488)
            (0.9587206407702156, 0.8726554921270026) += (0.9587206407702156, 0.02965188296992154) -= (0.9587206407702156, 0.02965188296992154)
        };
        \addplot[smooth, mark=square*, mark options={scale=1}, error bars/.cd, y dir=both, y explicit] plot coordinates {
            (0.3090298685891905, 0.8146113407593975) += (0.3090298685891905, 0.013630163109637891) -= (0.3090298685891905, 0.013630163109637891)
            (0.3704973569328202, 0.830517094521941) += (0.3704973569328202, 0.013606308419210743) -= (0.3704973569328202, 0.013606308419210743)
            (0.4166753042536836, 0.8432010450228018) += (0.4166753042536836, 0.013574642697917763) -= (0.4166753042536836, 0.013574642697917763)
            (0.457448449623995, 0.8543259148097532) += (0.457448449623995, 0.012610965515843193) -= (0.457448449623995, 0.012610965515843193)
            (0.4945123314069585, 0.8692694910418501) += (0.4945123314069585, 0.013919766435617787) -= (0.4945123314069585, 0.013919766435617787)
            (0.5352226544754635, 0.8858641816462646) += (0.5352226544754635, 0.014286862832427144) -= (0.5352226544754635, 0.014286862832427144)
            (0.5818022333729282, 0.8814983999956431) += (0.5818022333729282, 0.01309456186240625) -= (0.5818022333729282, 0.01309456186240625)
            (0.6477512235615687, 0.8989628234047604) += (0.6477512235615687, 0.015175832415003454) -= (0.6477512235615687, 0.015175832415003454)
        };
        \addplot[smooth, mark=*, mark options={scale=1}, error bars/.cd, y dir=both, y explicit] plot coordinates {
            (0.3187736800418465, 0.8018726401116827) += (0.3187736800418465, 0.009383851902165672) -= (0.3187736800418465, 0.009383851902165672)
            (0.36184480572665423, 0.8169000143939025) += (0.36184480572665423, 0.009664353161665288) -= (0.36184480572665423, 0.009664353161665288)
            (0.39187096887461814, 0.8229968111744902) += (0.39187096887461814, 0.009147977878055786) -= (0.39187096887461814, 0.009147977878055786)
            (0.41890054620456807, 0.8415961802683846) += (0.41890054620456807, 0.008474517047641354) -= (0.41890054620456807, 0.008474517047641354)
            (0.4439310659051068, 0.8393564035009619) += (0.4439310659051068, 0.00849795779823968) -= (0.4439310659051068, 0.00849795779823968)
            (0.4705751273374412, 0.8508798192615301) += (0.4705751273374412, 0.008670258571180799) -= (0.4705751273374412, 0.008670258571180799)
            (0.5009045902821977, 0.8591474303920298) += (0.5009045902821977, 0.008370090563388437) -= (0.5009045902821977, 0.008370090563388437)
            (0.5441010790037029, 0.8724968266884788) += (0.5441010790037029, 0.007815961480975084) -= (0.5441010790037029, 0.007815961480975084)
            };
        \addplot[dashed, domain = 0.0:1.2, samples = 2] {1.0};
        \coordinate (insetPosition) at (axis cs:0.65, 0.505);
    \end{axis}
    \fill [white] (insetPosition) rectangle ++(3cm,1.9cm); 
    \begin{axis}[
            at={(insetPosition)},
            reverse legend,
            xmin = 0.2, xmax = 1.0,
            ymin = 0.85, ymax = 1.05,
            ytick distance=0.1,
            xticklabel=\empty,
            label style={font=\small},
            ylabel = {$\textnormal{AMI}_{\hat{K}}/\textnormal{AMI}_{K}$},
            y label style={at={(axis description cs:-0.17,0.5)},anchor=south},
            width = 4.5cm, height = 3.5cm
        ]

        \addplot[forget plot] graphics [xmin=0.2,xmax=1.0,ymin=.85,ymax=1.05] {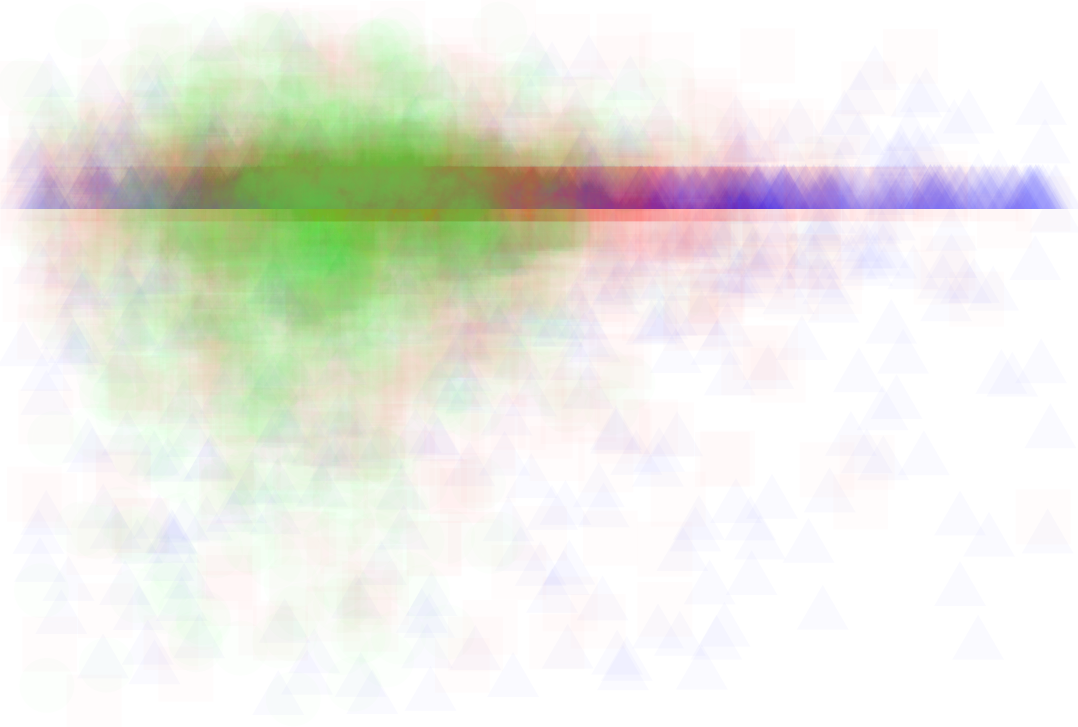};
        
        \addplot[smooth, mark=triangle*, mark options={scale=1}, error bars/.cd, y dir=both, y explicit] plot coordinates {
            (0.2952992558784813, 0.9633499923064263) += (0.2952992558784813, 0.019990830401052914) -= (0.2952992558784813, 0.019990830401052914)
            (0.38257492077133826, 0.9839770810156736) += (0.38257492077133826, 0.030548257524063614) -= (0.38257492077133826, 0.030548257524063614)
            (0.45868532160870645, 0.9794258698390675) += (0.45868532160870645, 0.018131899879869493) -= (0.45868532160870645, 0.018131899879869493)
            (0.5334039990249458, 0.9708018655874602) += (0.5334039990249458, 0.02154512330123887) -= (0.5334039990249458, 0.02154512330123887)
            (0.6060441386037293, 0.9818360020518264) += (0.6060441386037293, 0.014656769345658245) -= (0.6060441386037293, 0.014656769345658245)
            (0.6916961757829225, 0.9945595748771299) += (0.6916961757829225, 0.028794459277006666) -= (0.6916961757829225, 0.028794459277006666)
            (0.7936819388900826, 0.9982372777124425) += (0.7936819388900826, 0.02872389802877455) -= (0.7936819388900826, 0.02872389802877455)
            (0.9386900684355579, 0.9500564693591009) += (0.9386900684355579, 0.037228394680792726) -= (0.9386900684355579, 0.037228394680792726)
            };
        \addplot[smooth, mark=square*, mark options={scale=1}, error bars/.cd, y dir=both, y explicit] plot coordinates {
            (0.30962952734652555, 0.9845892762779427) += (0.30962952734652555, 0.011251656684678672) -= (0.30962952734652555, 0.011251656684678672)
            (0.3740434933892527, 0.9865637893823984) += (0.3740434933892527, 0.006370035172493659) -= (0.3740434933892527, 0.006370035172493659)
            (0.4183034249077825, 0.9902980855671457) += (0.4183034249077825, 0.004199317266752652) -= (0.4183034249077825, 0.004199317266752652)
            (0.4569751558752452, 0.9861698353878955) += (0.4569751558752452, 0.005826959481681366) -= (0.4569751558752452, 0.005826959481681366)
            (0.49412541402503585, 0.9932252872242001) += (0.49412541402503585, 0.005039266069856323) -= (0.49412541402503585, 0.005039266069856323)
            (0.5319086452929576, 0.9885712156782853) += (0.5319086452929576, 0.00513116705751647) -= (0.5319086452929576, 0.00513116705751647)
            (0.5773354223519136, 0.9875053156131992) += (0.5773354223519136, 0.005472870330655755) -= (0.5773354223519136, 0.005472870330655755)
            (0.6429484720742813, 0.9915186794879712) += (0.6429484720742813, 0.0041142323739086425) -= (0.6429484720742813, 0.0041142323739086425)
            };
        \addplot[smooth, mark=*, mark options={scale=1}, error bars/.cd, y dir=both, y explicit] plot coordinates {
            (0.31391068477543504, 0.9752066928574413) += (0.31391068477543504, 0.00654332419141197) -= (0.31391068477543504, 0.00654332419141197)
            (0.3602668692126042, 0.9827525890787693) += (0.3602668692126042, 0.004718189038269056) -= (0.3602668692126042, 0.004718189038269056)
            (0.39374179905253603, 0.9848555594017615) += (0.39374179905253603, 0.005422062616952656) -= (0.39374179905253603, 0.005422062616952656)
            (0.4199930303768238, 0.9857356506201602) += (0.4199930303768238, 0.004646844156767007) -= (0.4199930303768238, 0.004646844156767007)
            (0.44529149142794416, 0.9826016119548013) += (0.44529149142794416, 0.0048780538783057825) -= (0.44529149142794416, 0.0048780538783057825)
            (0.472819506269701, 0.9847264388940455) += (0.472819506269701, 0.005356390451123959) -= (0.472819506269701, 0.005356390451123959)
            (0.5042057976545918, 0.9953614108538854) += (0.5042057976545918, 0.003912475056640676) -= (0.5042057976545918, 0.003912475056640676)
            (0.5449687122733665, 0.9898670589502971) += (0.5449687122733665, 0.003452356853445727) -= (0.5449687122733665, 0.003452356853445727)
            };
        \addplot[dashed, domain = 0.2:1.4, samples = 2] {1.0};
    \end{axis}
\end{tikzpicture}
        \caption{}\label{fig:Mutual_information_vs_information_quantity_uniform_scatter}
    \end{subfigure}
    \begin{subfigure}{\linewidth}
        \begin{tikzpicture}[scale=0.8,]
    \begin{axis}[
        name = third plot,
        width=200pt,height=99pt,
        tick pos=both,
        xlabel={Relative accuracy},
        xmin=-1, xmax=1,
        bar width=3.33333334ex,
        xtick style={color=black},
        y grid style={darkgray176}, 
        ylabel={Rel. occurence},
        ymin=0, ymax=0.65,
        ytick style={color=black},
        legend style={at={(0.9875, 0.95)}, anchor=north east},
        area legend
        ]
        
        \path [draw=black, semithick, dashed]
        (axis cs:0,0)
        --(axis cs:0,1);

        \node[anchor=north west] at (axis cs: -0.975,.61) {$K=5$};
        \addplot[ybar, semithick, draw=blue, fill = blue!30!white, draw opacity = 0.7, fill opacity=0.4] coordinates{
            (-0.8, 0.02252252252252252)	(-0.6, 0.04054054054054054)	(-0.4, 0.08558558558558559)	(-0.2, 0.2912912912912913)	(0.0, 0.2897897897897898)	(0.2, 0.06306306306306306)	(0.4, 0.024024024024024024)	(0.6, 0.03003003003003003)	(0.8, 0.028528528528528527)	(1.0, 0.02702702702702703)	(1.2, 0.021021021021021023)	(1.4, 0.02252252252252252)	(1.6, 0.015015015015015015)	(1.8, 0.010510510510510511)	(2.0, 0.0075075075075075074)	(2.2, 0.006006006006006006)	(2.4, 0.006006006006006006)	(2.6, 0.003003003003003003)	(2.8, 0.0)	(3.0, 0.003003003003003003)	(3.2, 0.003003003003003003)	
        };
        \addplot[ybar, semithick, draw=red, fill = red!30!white, draw opacity = 0.7, fill opacity=0.4] coordinates{
            (-0.8, 0.021021021021021023)	(-0.6, 0.025525525525525526)	(-0.4, 0.057057057057057055)	(-0.2, 0.15915915915915915)	(0.0, 0.5405405405405406)	(0.2, 0.057057057057057055)	(0.4, 0.03453453453453453)	(0.6, 0.021021021021021023)	(0.8, 0.016516516516516516)	(1.0, 0.013513513513513514)	(1.2, 0.006006006006006006)	(1.4, 0.013513513513513514)	(1.6, 0.012012012012012012)	(1.8, 0.010510510510510511)	(2.0, 0.006006006006006006)	(2.2, 0.0)	(2.4, 0.0)	(2.6, 0.003003003003003003)	(2.8, 0.003003003003003003)	(3.0, 0.0)	(3.2, 0.0)	
        };
        \addplot[ybar, semithick, draw=green!20!black, fill = green!30!white, draw opacity = 0.7, fill opacity=0.4] coordinates{
            (-0.8, 0.04661654135338346)	(-0.6, 0.03308270676691729)	(-0.4, 0.03458646616541353)	(-0.2, 0.10977443609022557)	(0.0, 0.6135338345864662)	(0.2, 0.0481203007518797)	(0.4, 0.01804511278195489)	(0.6, 0.019548872180451128)	(0.8, 0.009022556390977444)	(1.0, 0.01804511278195489)	(1.2, 0.013533834586466165)	(1.4, 0.012030075187969926)	(1.6, 0.0030075187969924814)	(1.8, 0.007518796992481203)	(2.0, 0.0030075187969924814)	(2.2, 0.0015037593984962407)	(2.4, 0.0030075187969924814)	(2.6, 0.004511278195488722)	(2.8, 0.0)	(3.0, 0.0)	(3.2, 0.0015037593984962407)	
        };
    \end{axis}

    \begin{axis}[
        name = second plot,
        width=200pt,height=99pt,
        xshift=182pt,
        tick pos=both,
        bar width=1.66666667ex,
        xlabel={Relative accuracy},
        xmin=-1, xmax=1,
        y grid style={darkgray176},
        ymin=0, ymax=0.4,
        ytick style={color=black}
        ]

        \path [draw=black, semithick, dashed]
        (axis cs:0,0)
        --(axis cs:0,1);
        \node[anchor=north west] at (axis cs: -0.975,.375) {$K=10$};
        \addplot[ybar, semithick, draw=blue, fill = blue!30!white, draw opacity = 0.7, fill opacity=0.4] coordinates{
            (-0.9, 0.003003003003003003)	(-0.8, 0.003003003003003003)	(-0.7, 0.015015015015015015)	(-0.6, 0.03753753753753754)	(-0.5, 0.052552552552552555)	(-0.4, 0.0990990990990991)	(-0.3, 0.13363363363363365)	(-0.2, 0.1921921921921922)	(-0.1, 0.18468468468468469)	(0.0, 0.08408408408408409)	(0.1, 0.04504504504504504)	(0.2, 0.03753753753753754)	(0.3, 0.03903903903903904)	(0.4, 0.015015015015015015)	(0.5, 0.024024024024024024)	(0.6, 0.012012012012012012)	(0.7, 0.013513513513513514)	(0.8, 0.0075075075075075074)	(0.9, 0.0015015015015015015)	(1.0, 0.0)	(1.1, 0.0)	(1.3, 0.0)	
        };
        \addplot[ybar, semithick, draw=red, fill = red!30!white, draw opacity = 0.7, fill opacity=0.4] coordinates{
            (-0.9, 0.0)	(-0.8, 0.0015015015015015015)	(-0.7, 0.013513513513513514)	(-0.6, 0.013513513513513514)	(-0.5, 0.03303303303303303)	(-0.4, 0.04504504504504504)	(-0.3, 0.1021021021021021)	(-0.2, 0.16366366366366367)	(-0.1, 0.25675675675675674)	(0.0, 0.15315315315315314)	(0.1, 0.046546546546546545)	(0.2, 0.03903903903903904)	(0.3, 0.02702702702702703)	(0.4, 0.028528528528528527)	(0.5, 0.01951951951951952)	(0.6, 0.025525525525525526)	(0.7, 0.018018018018018018)	(0.8, 0.006006006006006006)	(0.9, 0.006006006006006006)	(1.0, 0.0)	(1.1, 0.0)	(1.3, 0.0015015015015015015)	
        };
        \addplot[ybar, semithick, draw=green!20!black, fill = green!30!white, draw opacity = 0.7, fill opacity=0.4] coordinates{
            (-0.9, 0.0045045045045045045)	(-0.8, 0.006006006006006006)	(-0.7, 0.0045045045045045045)	(-0.6, 0.013513513513513514)	(-0.5, 0.03303303303303303)	(-0.4, 0.03903903903903904)	(-0.3, 0.06456456456456457)	(-0.2, 0.1906906906906907)	(-0.1, 0.23423423423423423)	(0.0, 0.24174174174174173)	(0.1, 0.03303303303303303)	(0.2, 0.03153153153153153)	(0.3, 0.025525525525525526)	(0.4, 0.018018018018018018)	(0.5, 0.021021021021021023)	(0.6, 0.012012012012012012)	(0.7, 0.0075075075075075074)	(0.8, 0.012012012012012012)	(0.9, 0.003003003003003003)	(1.0, 0.003003003003003003)	(1.1, 0.0015015015015015015)	(1.3, 0.0)	
        };
    \end{axis}

    \begin{axis}[
    name = first plot,
    width=200pt,height=99pt,
    xshift=364pt,
    tick pos=both,
    xmin=-1, xmax=1,
    y grid style={darkgray176},
    xlabel={Relative accuracy},
    ymin=0, ymax=0.25,
    bar width=0.833333333ex,
    ytick style={color=black},
    legend style={at={(0.9875, 0.95)}, anchor=north east},
    area legend
    ]
    
    \path [draw=black, semithick, dashed]
    (axis cs:0,0)
    --(axis cs:0,1);

    \node[anchor=north west] at (axis cs: -0.975,.234) {$K=20$};
        \addplot[ybar, semithick, draw=blue, fill = blue!30!white, draw opacity = 0.7, fill opacity=0.4] coordinates{
            (-0.85, 0.003003003003003003)	(-0.8, 0.0)	(-0.75, 0.02702702702702703)	(-0.7, 0.046546546546546545)	(-0.65, 0.06606606606606606)	(-0.6, 0.08558558558558559)	(-0.55, 0.11561561561561562)	(-0.5, 0.12162162162162163)	(-0.45, 0.12612612612612611)	(-0.4, 0.12012012012012012)	(-0.35, 0.1036036036036036)	(-0.3, 0.07657657657657657)	(-0.25, 0.046546546546546545)	(-0.2, 0.028528528528528527)	(-0.15, 0.021021021021021023)	(-0.1, 0.0045045045045045045)	(-0.05, 0.0045045045045045045)	(0.0, 0.0015015015015015015)	(0.05, 0.0)	(0.1, 0.0015015015015015015)	(0.15, 0.0)	
        };
        \addlegendentry{$Q_{1/3}(I)$}
        \addplot[ybar, semithick, draw=red, fill = red!30!white, draw opacity = 0.7, fill opacity=0.4] coordinates{
            (-0.85, 0.0015015015015015015)	(-0.8, 0.003003003003003003)	(-0.75, 0.003003003003003003)	(-0.7, 0.009009009009009009)	(-0.65, 0.024024024024024024)	(-0.6, 0.046546546546546545)	(-0.55, 0.06606606606606606)	(-0.5, 0.08408408408408409)	(-0.45, 0.13213213213213212)	(-0.4, 0.1966966966966967)	(-0.35, 0.14264264264264265)	(-0.3, 0.11861861861861862)	(-0.25, 0.08108108108108109)	(-0.2, 0.05555555555555555)	(-0.15, 0.02252252252252252)	(-0.1, 0.006006006006006006)	(-0.05, 0.0015015015015015015)	(0.0, 0.006006006006006006)	(0.05, 0.0)	(0.1, 0.0)	(0.15, 0.0)	
        };
        \addlegendentry{$Q_{2/3}(I)$}
        \addplot[ybar, semithick, draw=green!20!black, fill = green!30!white, draw opacity = 0.7, fill opacity=0.4] coordinates{
            (-0.85, 0.0)	(-0.8, 0.0)	(-0.75, 0.0015015015015015015)	(-0.7, 0.009009009009009009)	(-0.65, 0.010510510510510511)	(-0.6, 0.03153153153153153)	(-0.55, 0.036036036036036036)	(-0.5, 0.05855855855855856)	(-0.45, 0.09759759759759759)	(-0.4, 0.12312312312312312)	(-0.35, 0.14264264264264265)	(-0.3, 0.14864864864864866)	(-0.25, 0.11411411411411411)	(-0.2, 0.0990990990990991)	(-0.15, 0.06006006006006006)	(-0.1, 0.04054054054054054)	(-0.05, 0.015015015015015015)	(0.0, 0.009009009009009009)	(0.05, 0.0015015015015015015)	(0.1, 0.0)	(0.15, 0.0015015015015015015)	
        };
        \addlegendentry{$Q_{3/3}(I)$}
    \end{axis}
\end{tikzpicture}
        \caption{}\label{fig:Relative_accuracy_vs_information_quantity_uniform_histogram}
    \end{subfigure}
    \caption{
    Figures (a) and (b) show scatter plots of different quality measures for \Cref{alg:kpost} as a function of $I(\alpha,p)$, where we fixed $n=1000$ and $\ell_n/n=(\ln n)^3$ and sampled $2000$ \glspl{BMC} uniformly at random for each value of $K \in \{5,10,20\}$; see \Cref{sec:uniform_BMCs}.
    The curves summarize the data for each $K$ by plotting binned averages for different ranges of $I(\alpha, p)$.
    The inset in (b) shows the ratio of the \gls{AMI} with the \gls{AMI} when the true number of clusters is revealed.
    Figure (c) shows histograms of the relative accuracy, with each histogram collecting all data points with $I(\alpha, p)\in Q_{i/3}(I)$ where $Q_{i/3}(I)$ is the $i$'th 3-quantile.
    }\label{fig:Relative_accuracy_vs_information_quantity_uniform}
\end{figure}

\paragraph{Positive correlation of relative accuracy with $I(\alpha,p)$}
Observe in \Cref{fig:Relative_accuracy_vs_information_quantity_uniform_scatter} that both the relative accuracy and the information quantity decrease as $K$ increases.
Moreover, the relative accuracy shows a clear positive correlation when $K=20$, but this is less pronounced at smaller values ($K=5,10$).
Still, while the average relative accuracy does not depend on $I(\alpha, p)$ in this case, the histograms in \Cref{fig:Relative_accuracy_vs_information_quantity_uniform_histogram} show that the estimator $\Khat$ concentrates more tightly around $K$ for larger $I(\alpha, p)$.

\paragraph{Improved performance is not due to spurious clusters}
Recall that our method also produces a partial clustering that can be completed using \cref{alg:kmeans}.
To assess the quality of these clusters, we calculated the \gls{AMI} score described in \Cref{sec:Adjusted-Mutual-Information} and plotted the result as a function of the information quantity in \Cref{fig:Mutual_information_vs_information_quantity_uniform_scatter}.
We observe a distinct positive correlation between the \gls{AMI} score and the information quantity.
Together with the results in \Cref{fig:Relative_accuracy_vs_information_quantity_uniform_scatter}, this supports the interpretation that the improved accuracy of $\Khat$ at large $I(\alpha, p)$ is due to a genuine improvement of the clustering rather than the detection of spurious clusters.

\paragraph{Revealing $K$ does not significantly improve the clustering}
The inset in \Cref{fig:Mutual_information_vs_information_quantity_uniform_scatter} further shows the ratio of the \gls{AMI} score of our algorithm to the \gls{AMI} score when $K$ is revealed.
For the latter, we set $r=\Ktilde$ in \cref{alg:kpost} to isolate the performance of the density-based clustering, and iterate lines \Crefrange{ln:find_largest_neighborhoods_1}{ln:find_largest_neighborhoods_5} until precisely $K$ clusters are identified.
We observe that the ratio of \gls{AMI} scores is then consistently close to 1.

\subsubsection{Relative performance vs cluster sizes}

In this experiment, we consider instead how the relative accuracy correlates with variability in the cluster sizes.
To measure cluster size variability, we use the normalized entropy of a clustering, as defined in \Cref{sec:Entropy-of-a-partition}.

\begin{figure}[htbp]
    \centering
    \begin{subfigure}{0.495\linewidth}
        \begin{tikzpicture}[scale = .9]
    \begin{axis}[
        xmin = 0.6, xmax = 1.0,
        ymin = -1, ymax = 1,
        width = \linewidth,
        height = 0.74 * \linewidth,
        ytick distance=0.5,
        xlabel = {$\bar{\textnormal{H}}(\{\mathcal{V}_k\}_{k=1}^K)$}, ylabel = {Relative accuracy},
        legend cell align={left},
        ]

        \addplot[forget plot] graphics [xmin=0.6,xmax=1.0,ymin=-.94,ymax=1.025] {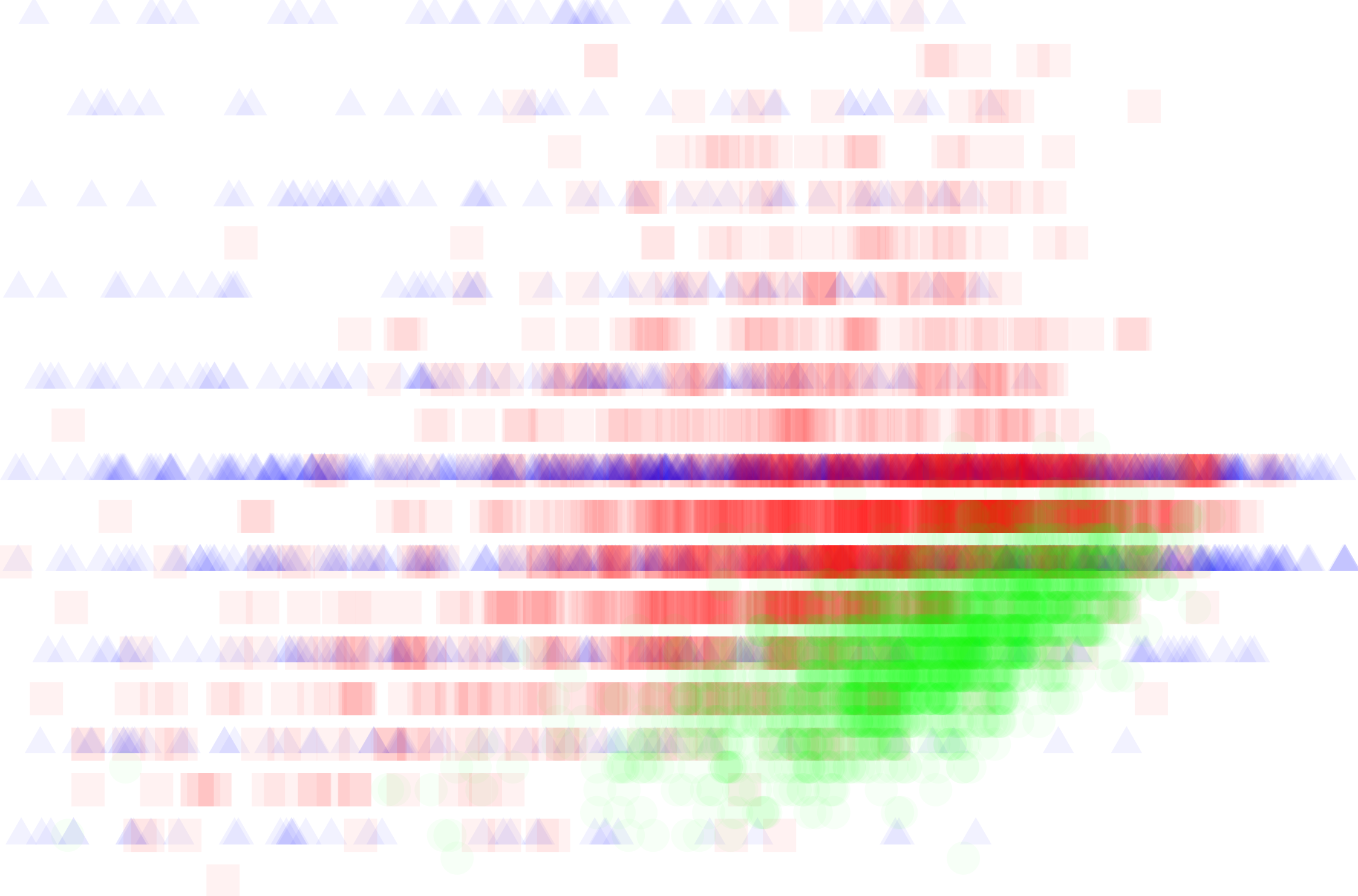};

        \addplot[smooth, mark=triangle*, mark options={scale=1}, x, error bars/.cd, y dir=both, y explicit] plot coordinates {
            (0.6400674749641049, 0.1417840375586854) += (0.6400674749641049, 0.10361744486298355) -= (0.6400674749641049, 0.10361744486298355)
            (0.7117880065262115, 0.3018867924528302) += (0.7117880065262115, 0.11660763074839915) -= (0.7117880065262115, 0.11660763074839915)
            (0.7590058028836215, 0.20281690140845055) += (0.7590058028836215, 0.09177843902511952) -= (0.7590058028836215, 0.09177843902511952)
            (0.7957669016831388, 0.11132075471698119) += (0.7957669016831388, 0.07046630154958361) -= (0.7957669016831388, 0.07046630154958361)
            (0.8270418354567272, 0.1679245283018868) += (0.8270418354567272, 0.0880303821037049) -= (0.8270418354567272, 0.0880303821037049)
            (0.8553579128152524, 0.1549295774647889) += (0.8553579128152524, 0.08277973994876119) -= (0.8553579128152524, 0.08277973994876119)
            (0.882352409262593, 0.014150943396226415) += (0.882352409262593, 0.03679291279573003) -= (0.882352409262593, 0.03679291279573003)
            (0.9133179960946263, -0.08262910798122058) += (0.9133179960946263, 0.015577362579161681) -= (0.9133179960946263, 0.015577362579161681)
        };
        \addlegendentry{$K=5$, blue}
        \addplot[smooth, mark=square*, mark options={scale=1}, error bars/.cd, y dir=both, y explicit] plot coordinates {
            (0.7502214274481256, -0.2098591549295774) += (0.7502214274481256, 0.04097605209871901) -= (0.7502214274481256, 0.04097605209871901)
            (0.7957111189963813, -0.14764150943396231) += (0.7957111189963813, 0.039103089022423834) -= (0.7957111189963813, 0.039103089022423834)
            (0.8189758634663178, -0.09999999999999994) += (0.8189758634663178, 0.041561532374923896) -= (0.8189758634663178, 0.041561532374923896)
            (0.8371395602183827, -0.09245283018867931) += (0.8371395602183827, 0.03864033329967393) -= (0.8371395602183827, 0.03864033329967393)
            (0.8524005955059732, -0.09715639810426542) += (0.8524005955059732, 0.03607029691478222) -= (0.8524005955059732, 0.03607029691478222)
            (0.8677811524889507, -0.06650943396226414) += (0.8677811524889507, 0.032175998393413674) -= (0.8677811524889507, 0.032175998393413674)
            (0.8840853633093189, 0.012264150943396201) += (0.8840853633093189, 0.03668888170804409) -= (0.8840853633093189, 0.03668888170804409)
            (0.9028361202229702, -0.017840375586854473) += (0.9028361202229702, 0.030591513547690113) -= (0.9028361202229702, 0.030591513547690113)
        };
        \addlegendentry{$K=10$, red}
        \addplot[smooth, mark=*, mark options={scale=1}, error bars/.cd, y dir=both, y explicit] plot coordinates {
            (0.8234632763863274, -0.5112676056338028) += (0.8234632763863274, 0.017148503155062635) -= (0.8234632763863274, 0.017148503155062635)
            (0.8456182269096604, -0.4632075471698113) += (0.8456182269096604, 0.015561266623739007) -= (0.8456182269096604, 0.015561266623739007)
            (0.8584289436739254, -0.44504716981132053) += (0.8584289436739254, 0.01575039051141651) -= (0.8584289436739254, 0.01575039051141651)
            (0.8690339069374075, -0.40213270142180146) += (0.8690339069374075, 0.014274841515711692) -= (0.8690339069374075, 0.014274841515711692)
            (0.8786371799369683, -0.39764150943396226) += (0.8786371799369683, 0.01622229942547881) -= (0.8786371799369683, 0.01622229942547881)
            (0.8878124414005903, -0.346244131455399) += (0.8878124414005903, 0.014555251470689801) -= (0.8878124414005903, 0.014555251470689801)
            (0.8975751534302852, -0.3195754716981131) += (0.8975751534302852, 0.014709189475493262) -= (0.8975751534302852, 0.014709189475493262)
            (0.9096696087424017, -0.2767605633802815) += (0.9096696087424017, 0.01346893677994666) -= (0.9096696087424017, 0.01346893677994666)
            };
            \addlegendentry{$K=20$, green}
        \addplot[dashed, domain = 0.0:1.2, samples = 2] {0.0};
    \end{axis}
\end{tikzpicture}
        \caption{}\label{fig:Relative_accuracy_vs_cluster_entropy_uniform_scatter}
    \end{subfigure}
    \begin{subfigure}{0.495\linewidth}
        \begin{tikzpicture}[scale = .9]
    \begin{axis}[
        xmin = 0.6, xmax = 1.0,
        ymin = 0.4, ymax = 1.025,
        width = \linewidth,
        height = 0.74 * \linewidth,
        xlabel = {$\bar{\textnormal{H}}(\{\mathcal{V}\}_{k=1}^K)$}, ylabel = {AMI},
        ]

        \addplot graphics [xmin=0.6,xmax=1.0,ymin=0.4,ymax=1.01] {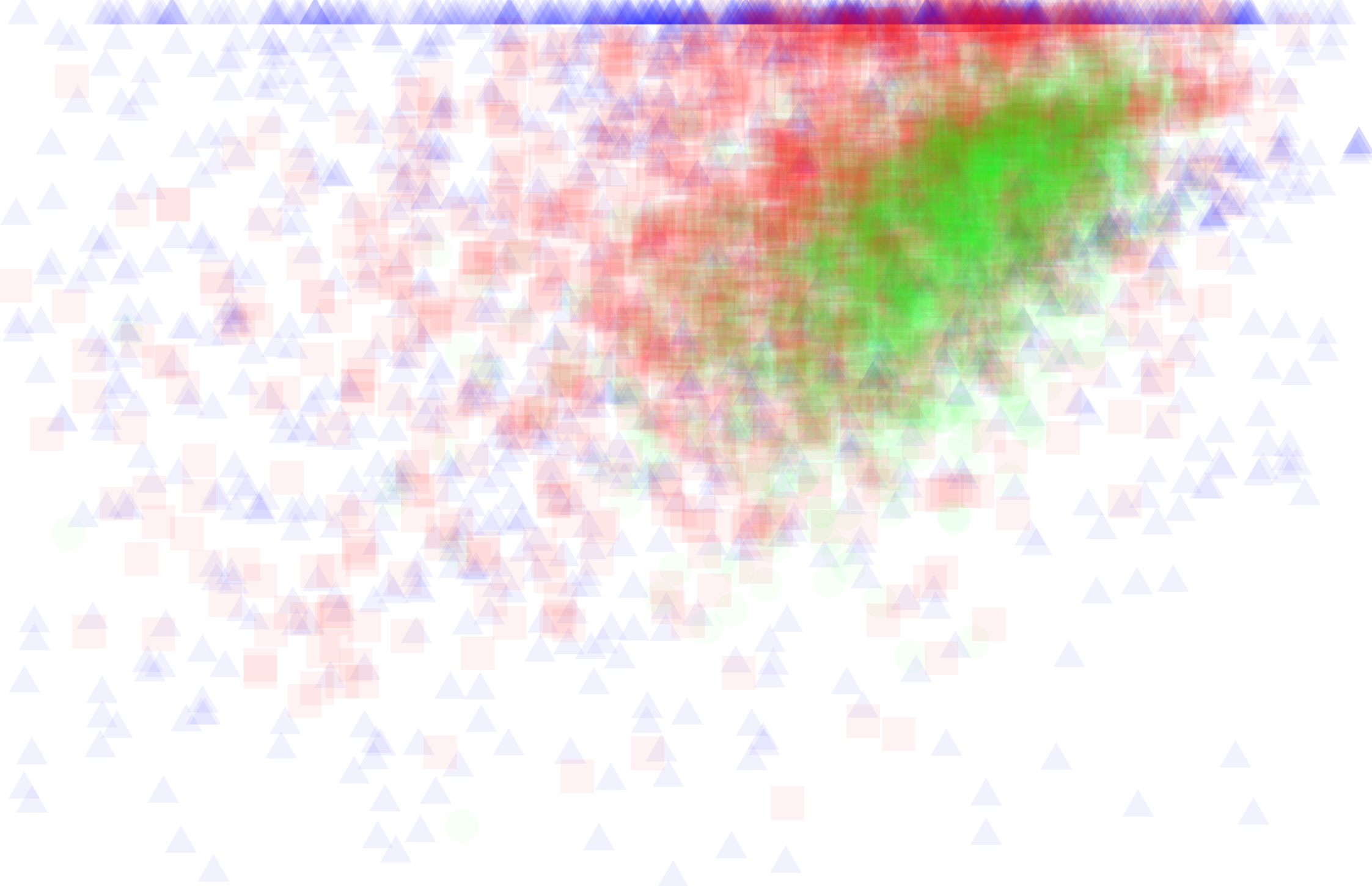};

        \addplot[smooth, mark=triangle*, mark options={scale=0.5}, x, error bars/.cd, y dir=both, y explicit] plot coordinates {
            (0.6400674749641049, 0.7021633618627334) += (0.6400674749641049, 0.0381945833953324) -= (0.6400674749641049, 0.0381945833953324)
            (0.7117880065262115, 0.7893877355867199) += (0.7117880065262115, 0.02808244488496058) -= (0.7117880065262115, 0.02808244488496058)
            (0.7590058028836215, 0.8068454510516784) += (0.7590058028836215, 0.03041186562421922) -= (0.7590058028836215, 0.03041186562421922)
            (0.7957669016831388, 0.8742078831666344) += (0.7957669016831388, 0.02320204491159619) -= (0.7957669016831388, 0.02320204491159619)
            (0.8270418354567272, 0.8724400496624782) += (0.8270418354567272, 0.021377833212493296) -= (0.8270418354567272, 0.021377833212493296)
            (0.8553579128152524, 0.8906761231739411) += (0.8553579128152524, 0.020842546658809993) -= (0.8553579128152524, 0.020842546658809993)
            (0.882352409262593, 0.9079367515341465) += (0.882352409262593, 0.019749904805021302) -= (0.882352409262593, 0.019749904805021302)
            (0.9133179960946263, 0.9115157908099351) += (0.9133179960946263, 0.013336315399328787) -= (0.9133179960946263, 0.013336315399328787)
        };
        \addplot[smooth, mark=square*, mark options={scale=0.5}, error bars/.cd, y dir=both, y explicit] plot coordinates {
            (0.7502214274481256, 0.8094923016276235) += (0.7502214274481256, 0.014452252758005324) -= (0.7502214274481256, 0.014452252758005324)
            (0.7957111189963813, 0.8290496629595043) += (0.7957111189963813, 0.012241655686651079) -= (0.7957111189963813, 0.012241655686651079)
            (0.8189758634663178, 0.8432757374627678) += (0.8189758634663178, 0.012939486172192859) -= (0.8189758634663178, 0.012939486172192859)
            (0.8371395602183827, 0.868322941457026) += (0.8371395602183827, 0.011131480700057556) -= (0.8371395602183827, 0.011131480700057556)
            (0.8524005955059732, 0.8684739325184063) += (0.8524005955059732, 0.011645186431599624) -= (0.8524005955059732, 0.011645186431599624)
            (0.8677811524889507, 0.8825745962823422) += (0.8677811524889507, 0.011704032363432561) -= (0.8677811524889507, 0.011704032363432561)
            (0.8840853633093189, 0.907044143064203) += (0.8840853633093189, 0.009967785049157515) -= (0.8840853633093189, 0.009967785049157515)
            (0.9028361202229702, 0.9124120663021577) += (0.9028361202229702, 0.00829838915922154) -= (0.9028361202229702, 0.00829838915922154)
        };
        \addplot[smooth, mark=*, mark options={scale=0.5}, error bars/.cd, y dir=both, y explicit] plot coordinates {
            (0.8234632763863274, 0.7900747428189087) += (0.8234632763863274, 0.008912482590544415) -= (0.8234632763863274, 0.008912482590544415)
            (0.8456182269096604, 0.8147307556761145) += (0.8456182269096604, 0.008511276616907976) -= (0.8456182269096604, 0.008511276616907976)
            (0.8584289436739254, 0.8221858033647977) += (0.8584289436739254, 0.008211242246381223) -= (0.8584289436739254, 0.008211242246381223)
            (0.8690339069374075, 0.8428488829988163) += (0.8690339069374075, 0.008200421504442763) -= (0.8690339069374075, 0.008200421504442763)
            (0.8786371799369683, 0.8518456434234873) += (0.8786371799369683, 0.008462484702019227) -= (0.8786371799369683, 0.008462484702019227)
            (0.8878124414005903, 0.8700919201169913) += (0.8878124414005903, 0.006691926080187446) -= (0.8878124414005903, 0.006691926080187446)
            (0.8975751534302852, 0.875914148485148) += (0.8975751534302852, 0.007500175919741692) -= (0.8975751534302852, 0.007500175919741692)
            (0.9096696087424017, 0.8946250382484069) += (0.9096696087424017, 0.006046013010765493) -= (0.9096696087424017, 0.006046013010765493)
            };
        \addplot[dashed, domain = 0.0:1.2, samples = 2] {1.0};
        \coordinate (insetPosition) at (axis cs:0.775, 0.405);
    \end{axis}
    \fill [white] (insetPosition) rectangle ++(3cm,1.9cm); 
    \begin{axis}[
            at={(insetPosition)},
            reverse legend,
            xmin = 0.6, xmax = 1.0,
            ymin = 0.85, ymax = 1.05,
            ytick distance=0.1,
            xticklabel=\empty,
            label style={font=\small},
            ylabel = {$\textnormal{AMI}_{\hat{K}}/\textnormal{AMI}_{K}$},
            y label style={at={(axis description cs:-0.17,0.5)},anchor=south},
            width = 4.5cm, height = 3.5cm
        ]

        \addplot[forget plot] graphics [xmin=0.6,xmax=1.0,ymin=.85,ymax=1.05] {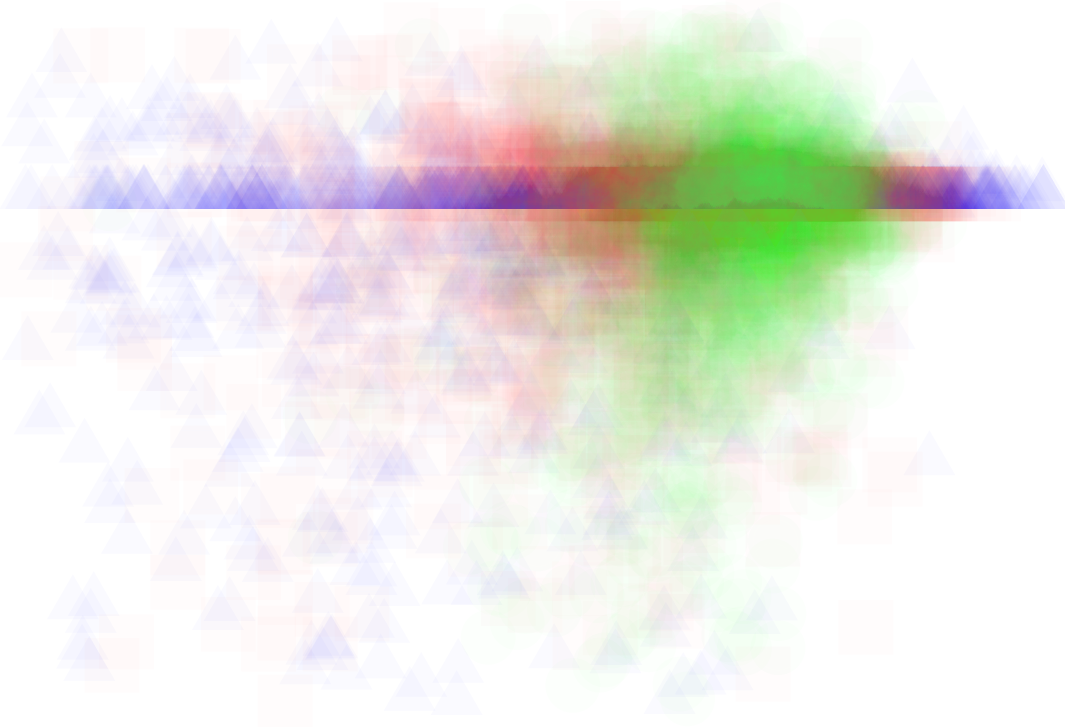};
        
        \addplot[smooth, mark=triangle*, mark options={scale=1}, error bars/.cd, y dir=both, y explicit] plot coordinates {
            (0.6400674749641049, 0.9068060031947139) += (0.6400674749641049, 0.04261230386797106) -= (0.6400674749641049, 0.04261230386797106)
            (0.7117880065262115, 0.9802557407175888) += (0.7117880065262115, 0.028850718085707328) -= (0.7117880065262115, 0.028850718085707328)
            (0.7590058028836215, 0.9607184666870329) += (0.7590058028836215, 0.030621911025388526) -= (0.7590058028836215, 0.030621911025388526)
            (0.7957669016831388, 0.9789001919255927) += (0.7957669016831388, 0.015975224177333113) -= (0.7957669016831388, 0.015975224177333113)
            (0.8270418354567272, 0.9808896914380428) += (0.8270418354567272, 0.014124161426044909) -= (0.8270418354567272, 0.014124161426044909)
            (0.8553579128152524, 0.9747223692424868) += (0.8553579128152524, 0.015069395032567674) -= (0.8553579128152524, 0.015069395032567674)
            (0.882352409262593, 0.9926855354169138) += (0.882352409262593, 0.010443205549906546) -= (0.882352409262593, 0.010443205549906546)
            (0.9133179960946263, 1.0001429914445588) += (0.9133179960946263, 0.0011467585362025063) -= (0.9133179960946263, 0.0011467585362025063)
            };
        \addplot[smooth, mark=square*, mark options={scale=1}, error bars/.cd, y dir=both, y explicit] plot coordinates {
            (0.7502214274481256, 0.9830217330414311) += (0.7502214274481256, 0.006762299491272788) -= (0.7502214274481256, 0.006762299491272788)
            (0.7957111189963813, 0.9898886818580835) += (0.7957111189963813, 0.004457793750815515) -= (0.7957111189963813, 0.004457793750815515)
            (0.8189758634663178, 0.9873354226883911) += (0.8189758634663178, 0.00624211730316855) -= (0.8189758634663178, 0.00624211730316855)
            (0.8371395602183827, 0.991855511065827) += (0.8371395602183827, 0.003863796727151663) -= (0.8371395602183827, 0.003863796727151663)
            (0.8524005955059732, 0.9894026330141887) += (0.8524005955059732, 0.004670568383099614) -= (0.8524005955059732, 0.004670568383099614)
            (0.8677811524889507, 0.9919701107918123) += (0.8677811524889507, 0.004954475596177547) -= (0.8677811524889507, 0.004954475596177547)
            (0.8840853633093189, 0.9959952938295905) += (0.8840853633093189, 0.0032051181049933897) -= (0.8840853633093189, 0.0032051181049933897)
            (0.9028361202229702, 0.9946362962807266) += (0.9028361202229702, 0.0026915935755122515) -= (0.9028361202229702, 0.0026915935755122515)
            };
        \addplot[smooth, mark=*, mark options={scale=1}, error bars/.cd, y dir=both, y explicit] plot coordinates {
            (0.8234632763863274, 0.9710306119171883) += (0.8234632763863274, 0.006606542483739048) -= (0.8234632763863274, 0.006606542483739048)
            (0.8456182269096604, 0.9798952960186604) += (0.8456182269096604, 0.005245731690509437) -= (0.8456182269096604, 0.005245731690509437)
            (0.8584289436739254, 0.9836543235938221) += (0.8584289436739254, 0.005236370202884394) -= (0.8584289436739254, 0.005236370202884394)
            (0.8690339069374075, 0.9876951685793142) += (0.8690339069374075, 0.004586161738918391) -= (0.8690339069374075, 0.004586161738918391)
            (0.8786371799369683, 0.9836910031896329) += (0.8786371799369683, 0.005686027919522127) -= (0.8786371799369683, 0.005686027919522127)
            (0.8878124414005903, 0.9929593002134676) += (0.8878124414005903, 0.0032677515235261145) -= (0.8878124414005903, 0.0032677515235261145)
            (0.8975751534302852, 0.9928020067553573) += (0.8975751534302852, 0.003254384752958206) -= (0.8975751534302852, 0.003254384752958206)
            (0.9096696087424017, 0.9945828583853746) += (0.9096696087424017, 0.0029572599138279047) -= (0.9096696087424017, 0.0029572599138279047)
            };
        \addplot[dashed, domain = 0.2:1.0, samples = 2] {1.0};
    \end{axis}
\end{tikzpicture}
        \caption{}\label{fig:Mutual_information_vs_cluster_entropy_uniform_scatter}
    \end{subfigure}
    \begin{subfigure}{\linewidth}
        \begin{tikzpicture}[scale=0.8,]
    \begin{axis}[
        name = third plot,
        width=200pt,height=99pt,
        tick pos=both,
        xlabel={Relative accuracy},
        xmin=-1, xmax=1,
        bar width=3.33333334ex,
        xtick style={color=black},
        y grid style={darkgray176}, 
        ylabel={Rel. occurence},
        ymin=0, ymax=0.65,
        ytick style={color=black},
        legend style={at={(0.9875, 0.95)}, anchor=north east},
        area legend
        ]
        
        \path [draw=black, semithick, dashed]
        (axis cs:0,0)
        --(axis cs:0,1);

        \node[anchor=north west] at (axis cs: -0.975,.61) {$K=5$};
        \addplot[ybar, semithick, draw=blue, fill = blue!30!white, draw opacity = 0.7, fill opacity=0.4] coordinates{
            (-0.8, 0.07207207207207207)	(-0.6, 0.06456456456456457)	(-0.4, 0.0915915915915916)	(-0.2, 0.14714714714714713)	(0.0, 0.26876876876876876)	(0.2, 0.07507507507507508)	(0.4, 0.03753753753753754)	(0.6, 0.04504504504504504)	(0.8, 0.02702702702702703)	(1.0, 0.03153153153153153)	(1.2, 0.02702702702702703)	(1.4, 0.02702702702702703)	(1.6, 0.010510510510510511)	(1.8, 0.016516516516516516)	(2.0, 0.018018018018018018)	(2.2, 0.010510510510510511)	(2.4, 0.006006006006006006)	(2.6, 0.0045045045045045045)	(2.8, 0.009009009009009009)	(3.0, 0.003003003003003003)	(3.2, 0.003003003003003003)	(3.4, 0.0015015015015015015)	(3.6, 0.0015015015015015015)	(3.8, 0.0015015015015015015)	
        };
        \addplot[ybar, semithick, draw=red, fill = red!30!white, draw opacity = 0.7, fill opacity=0.4] coordinates{
            (-0.8, 0.009009009009009009)	(-0.6, 0.015015015015015015)	(-0.4, 0.04954954954954955)	(-0.2, 0.14564564564564564)	(0.0, 0.5435435435435435)	(0.2, 0.07057057057057058)	(0.4, 0.036036036036036036)	(0.6, 0.024024024024024024)	(0.8, 0.016516516516516516)	(1.0, 0.02252252252252252)	(1.2, 0.012012012012012012)	(1.4, 0.010510510510510511)	(1.6, 0.003003003003003003)	(1.8, 0.016516516516516516)	(2.0, 0.006006006006006006)	(2.2, 0.0015015015015015015)	(2.4, 0.0045045045045045045)	(2.6, 0.0015015015015015015)	(2.8, 0.003003003003003003)	(3.0, 0.0015015015015015015)	(3.2, 0.0015015015015015015)	(3.4, 0.003003003003003003)	(3.6, 0.0)	(3.8, 0.003003003003003003)	
        };
        \addplot[ybar, semithick, draw=green!20!black, fill = green!30!white, draw opacity = 0.7, fill opacity=0.4] coordinates{
            (-0.8, 0.0045045045045045045)	(-0.6, 0.0075075075075075074)	(-0.4, 0.04054054054054054)	(-0.2, 0.2912912912912913)	(0.0, 0.6201201201201201)	(0.2, 0.009009009009009009)	(0.4, 0.006006006006006006)	(0.6, 0.006006006006006006)	(0.8, 0.0045045045045045045)	(1.0, 0.003003003003003003)	(1.2, 0.0015015015015015015)	(1.4, 0.0015015015015015015)	(1.6, 0.0015015015015015015)	(1.8, 0.0015015015015015015)	(2.0, 0.0)	(2.2, 0.0)	(2.4, 0.0)	(2.6, 0.0)	(2.8, 0.0)	(3.0, 0.0)	(3.2, 0.0015015015015015015)	(3.4, 0.0)	(3.6, 0.0)	(3.8, 0.0)	
        };
    \end{axis}

    \begin{axis}[
        name = second plot,
        width=200pt,height=99pt,
        xshift=182pt,
        tick pos=both,
        bar width=1.66666667ex,
        xlabel={Relative accuracy},
        xmin=-1, xmax=1,
        y grid style={darkgray176},
        ymin=0, ymax=0.4,
        ytick style={color=black}
        ]

        \path [draw=black, semithick, dashed]
        (axis cs:0,0)
        --(axis cs:0,1);
        \node[anchor=north west] at (axis cs: -0.975,.375) {$K=10$};
        \addplot[ybar, semithick, draw=blue, fill = blue!30!white, draw opacity = 0.7, fill opacity=0.4] coordinates{
            (-0.9, 0.0015015015015015015)	(-0.8, 0.015015015015015015)	(-0.7, 0.03303303303303303)	(-0.6, 0.05405405405405406)	(-0.5, 0.10810810810810811)	(-0.4, 0.12312312312312312)	(-0.3, 0.14114114114114115)	(-0.2, 0.14264264264264265)	(-0.1, 0.12762762762762764)	(0.0, 0.07357357357357357)	(0.1, 0.042042042042042045)	(0.2, 0.04504504504504504)	(0.3, 0.03303303303303303)	(0.4, 0.016516516516516516)	(0.5, 0.010510510510510511)	(0.6, 0.012012012012012012)	(0.7, 0.013513513513513514)	(0.8, 0.0045045045045045045)	(0.9, 0.003003003003003003)	(1.0, 0.0)	(1.1, 0.0)	(1.2, 0.0)	(1.4, 0.0)	
        };
        \addplot[ybar, semithick, draw=red, fill = red!30!white, draw opacity = 0.7, fill opacity=0.4] coordinates{
            (-0.9, 0.0)	(-0.8, 0.0015037593984962407)	(-0.7, 0.0)	(-0.6, 0.012030075187969926)	(-0.5, 0.03759398496240601)	(-0.4, 0.06015037593984962)	(-0.3, 0.14887218045112782)	(-0.2, 0.20902255639097744)	(-0.1, 0.19699248120300752)	(0.0, 0.1368421052631579)	(0.1, 0.04661654135338346)	(0.2, 0.042105263157894736)	(0.3, 0.02556390977443609)	(0.4, 0.02857142857142857)	(0.5, 0.01804511278195489)	(0.6, 0.015037593984962405)	(0.7, 0.010526315789473684)	(0.8, 0.004511278195488722)	(0.9, 0.0)	(1.0, 0.0030075187969924814)	(1.1, 0.0030075187969924814)	(1.2, 0.0)	(1.4, 0.0)	
        };
        \addplot[ybar, semithick, draw=green!20!black, fill = green!30!white, draw opacity = 0.7, fill opacity=0.4] coordinates{
            (-0.9, 0.0)	(-0.8, 0.0)	(-0.7, 0.0)	(-0.6, 0.0)	(-0.5, 0.0030075187969924814)	(-0.4, 0.007518796992481203)	(-0.3, 0.0406015037593985)	(-0.2, 0.1699248120300752)	(-0.1, 0.362406015037594)	(0.0, 0.26766917293233083)	(0.1, 0.02706766917293233)	(0.2, 0.031578947368421054)	(0.3, 0.02406015037593985)	(0.4, 0.012030075187969926)	(0.5, 0.010526315789473684)	(0.6, 0.012030075187969926)	(0.7, 0.007518796992481203)	(0.8, 0.009022556390977444)	(0.9, 0.009022556390977444)	(1.0, 0.0)	(1.1, 0.0030075187969924814)	(1.2, 0.0015037593984962407)	(1.4, 0.0015037593984962407)
        };
    \end{axis}

    \begin{axis}[
    name = first plot,
    width=200pt,height=99pt,
    xshift=364pt,
    tick pos=both,
    xmin=-1, xmax=1,
    y grid style={darkgray176},
    xlabel={Relative accuracy},
    ymin=0, ymax=0.25,
    bar width=0.833333333ex,
    ytick style={color=black},
    legend style={at={(0.9875, 0.95)}, anchor=north east},
    area legend
    ]
    
    \path [draw=black, semithick, dashed]
    (axis cs:0,0)
    --(axis cs:0,1);

    \node[anchor=north west] at (axis cs: -0.975,.234) {$K=20$};
        \addplot[ybar, semithick, draw=blue, fill = blue!30!white, draw opacity = 0.7, fill opacity=0.4] coordinates{
            (-0.85, 0.0015015015015015015)	(-0.8, 0.010510510510510511)	(-0.75, 0.021021021021021023)	(-0.7, 0.036036036036036036)	(-0.65, 0.08108108108108109)	(-0.6, 0.11261261261261261)	(-0.55, 0.14264264264264265)	(-0.5, 0.16666666666666666)	(-0.45, 0.13363363363363365)	(-0.4, 0.11561561561561562)	(-0.35, 0.08558558558558559)	(-0.3, 0.03903903903903904)	(-0.25, 0.03753753753753754)	(-0.2, 0.0075075075075075074)	(-0.15, 0.006006006006006006)	(-0.1, 0.0015015015015015015)	(-0.05, 0.0015015015015015015)	(0.0, 0.0)	(0.05, 0.0)	
        };
        \addlegendentry{$Q_{1/3}(\bar{\textnormal{H}})$}
        \addplot[ybar, semithick, draw=red, fill = red!30!white, draw opacity = 0.7, fill opacity=0.4] coordinates{
            (-0.85, 0.0015015015015015015)	(-0.8, 0.0)	(-0.75, 0.003003003003003003)	(-0.7, 0.0015015015015015015)	(-0.65, 0.013513513513513514)	(-0.6, 0.028528528528528527)	(-0.55, 0.057057057057057055)	(-0.5, 0.11561561561561562)	(-0.45, 0.16666666666666666)	(-0.4, 0.1966966966966967)	(-0.35, 0.17117117117117117)	(-0.3, 0.0960960960960961)	(-0.25, 0.07057057057057058)	(-0.2, 0.03903903903903904)	(-0.15, 0.018018018018018018)	(-0.1, 0.015015015015015015)	(-0.05, 0.003003003003003003)	(0.0, 0.0015015015015015015)	(0.05, 0.0015015015015015015)	
        };
        \addlegendentry{$Q_{2/3}(\bar{\textnormal{H}})$}
        \addplot[ybar, semithick, draw=green!20!black, fill = green!30!white, draw opacity = 0.7, fill opacity=0.4] coordinates{
            (-0.85, 0.0)	(-0.8, 0.0)	(-0.75, 0.0)	(-0.7, 0.0)	(-0.65, 0.0)	(-0.6, 0.0015015015015015015)	(-0.55, 0.010510510510510511)	(-0.5, 0.02702702702702703)	(-0.45, 0.057057057057057055)	(-0.4, 0.08408408408408409)	(-0.35, 0.15165165165165165)	(-0.3, 0.16516516516516516)	(-0.25, 0.1921921921921922)	(-0.2, 0.14864864864864866)	(-0.15, 0.09009009009009009)	(-0.1, 0.04354354354354354)	(-0.05, 0.021021021021021023)	(0.0, 0.0045045045045045045)	(0.05, 0.003003003003003003)	
        };
        \addlegendentry{$Q_{3/3}(\bar{\textnormal{H}})$}
    \end{axis}
\end{tikzpicture}
        \caption{}\label{fig:Relative_accuracy_vs_cluster_entropy_uniform_histogram}
    \end{subfigure}
    \caption{
    Same data as \Cref{fig:Relative_accuracy_vs_information_quantity_uniform}, but the results are now plotted as a function of the normalized cluster entropy $\bar{\text{H}}(\{\mcV_k\}_{k=1}^K)$, rather than the information quantity $I(\alpha, p)$.
    }\label{fig:Relative_accuracy_vs_cluster_entropy_uniform}
\end{figure}

\paragraph{Positive correlation of relative accuracy with $\bar{\text{H}}$}
\Cref{fig:Relative_accuracy_vs_cluster_entropy_uniform} shows the results in the same format as \Cref{fig:Relative_accuracy_vs_information_quantity_uniform}.
We observe similar trends, indicating that the clustering entropy is also predictive of our algorithm's performance.
Specifically, our algorithm performs better when cluster sizes are more uniform.
We have also checked (not shown here) that the information quantity and clustering entropy are not significantly correlated, and that we are therefore detecting two distinct factors that affect \Cref{alg:kpost}'s performance.

\subsection{Estimating the number of clusters in noisy environments}
\label{sec:noisy_environments}

Up to here, we have only considered \Cref{alg:kpre,alg:kpost}'s performance when the trajectory was generated by a pure \gls{BMC}.
In this next experiment, we investigate the robustness of the method in a noisy, synthetic environment.

The idea is to generate trajectories of \emph{perturbed \glspl{BMC}}.
Specifically, we will generate trajectories of \glspl{MC} with transition matrices of the form
\begin{equation}
    \label{eqn:noisy_transition_matrix}
    P
    =
    (1-\varepsilon)
    P_{\textnormal{\gls{BMC} with } K \textnormal{ clusters}}
    +
    \varepsilon
    P_{\textnormal{Uniform \gls{MC} of size } n}
    .
\end{equation}
In this experiment, the cluster transition matrix $p$ and cluster fractions $\alpha$ of the \gls{BMC} component will be sampled uniformly at random from the simplex; see \Cref{sec:uniform_BMCs}.

\Cref{fig:Relative-accuracy-vs-perturbation-probability} displays the relative accuracy attained for random uniform \glspl{BMC} with $K$ clusters, that are perturbed by uniformly distributed random \gls{MC} with probability $\varepsilon \in [0,1]$.

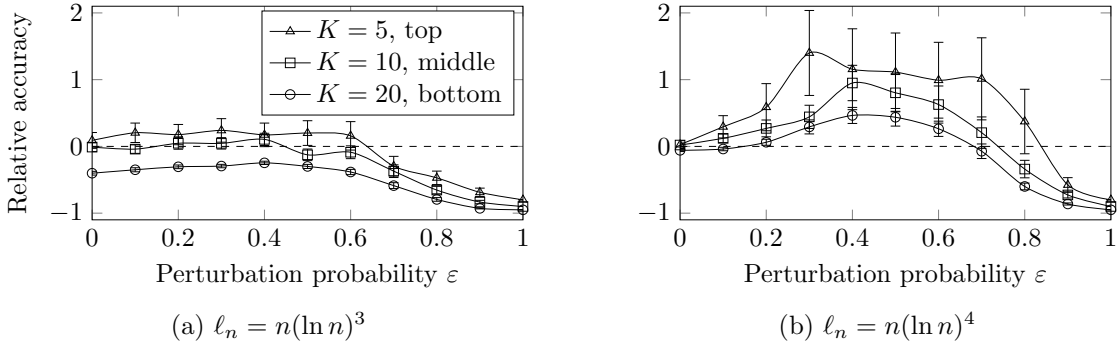
\begin{figure}[hbtp]
    \centering
    \begin{subfigure}{0.495\linewidth}
        \centering
        \begin{tikzpicture}[scale = 0.9]
            \begin{axis}[
                reverse legend,
                width = \linewidth,
                height = 0.6 * \linewidth,
                xmin = 0, xmax = 1.0,
                ymin = -1.1, ymax = 2.1,
                xlabel = {Perturbation probability $\varepsilon$},
                ylabel = {Relative accuracy},
                legend cell align={left},
            ]
            \addplot[dashed, domain = 0.0:1.0, samples = 2] {0.0};

            \addplot[smooth, mark=o, error bars/.cd, y dir=both, y explicit] plot coordinates {
                (0, -0.401563) += (0, 0.028048) -= (0, 0.028048)
                (0.1, -0.35) += (0.1, 0.0302828) -= (0.1, 0.0302828)
                (0.2, -0.305729) += (0.2, 0.0275008) -= (0.2, 0.0275008)
                (0.3, -0.294271) += (0.3, 0.0256581) -= (0.3, 0.0256581)
                (0.4, -0.246354) += (0.4, 0.0247221) -= (0.4, 0.0247221)
                (0.5, -0.3) += (0.5, 0.0307269) -= (0.5, 0.0307269)
                (0.6, -0.383333) += (0.6, 0.0392103) -= (0.6, 0.0392103)
                (0.7, -0.5875) += (0.7, 0.0441444) -= (0.7, 0.0441444)
                (0.8, -0.794271) += (0.8, 0.0273488) -= (0.8, 0.0273488)
                (0.9, -0.928125) += (0.9, 0.00814271) -= (0.9, 0.00814271)
                (1, -0.95) += (1, 3.33136e-16) -= (1, 3.33136e-16)
            };
            \addlegendentry{$K = 20$, bottom}

            \addplot[smooth, mark=square, error bars/.cd, y dir=both, y explicit] plot coordinates {
                (0, -0.0104167) += (0, 0.0676824) -= (0, 0.0676824)
                (0.1, -0.040625) += (0.1, 0.0636907) -= (0.1, 0.0636907)
                (0.2, 0.046875) += (0.2, 0.0893193) -= (0.2, 0.0893193)
                (0.3, 0.0458333) += (0.3, 0.0856) -= (0.3, 0.0856)
                (0.4, 0.105208) += (0.4, 0.0916419) -= (0.4, 0.0916419)
                (0.5, -0.126042) += (0.5, 0.0868246) -= (0.5, 0.0868246)
                (0.6, -0.0875) += (0.6, 0.0959909) -= (0.6, 0.0959909)
                (0.7, -0.370833) += (0.7, 0.0952352) -= (0.7, 0.0952352)
                (0.8, -0.654167) += (0.8, 0.0559071) -= (0.8, 0.0559071)
                (0.9, -0.833333) += (0.9, 0.0254459) -= (0.9, 0.0254459)
                (1, -0.9) += (1, 1.33255e-16) -= (1, 1.33255e-16)
            };
            \addlegendentry{$K = 10$, middle}

            \addplot[smooth, mark=triangle, error bars/.cd, y dir=both, y explicit] plot coordinates {
                (0, 0.0895833) += (0, 0.118469) -= (0, 0.118469)
                (0.1, 0.204167) += (0.1, 0.145516) -= (0.1, 0.145516)
                (0.2, 0.177083) += (0.2, 0.151785) -= (0.2, 0.151785)
                (0.3, 0.239583) += (0.3, 0.176164) -= (0.3, 0.176164)
                (0.4, 0.172917) += (0.4, 0.176636) -= (0.4, 0.176636)
                (0.5, 0.2) += (0.5, 0.184971) -= (0.5, 0.184971)
                (0.6, 0.158333) += (0.6, 0.213032) -= (0.6, 0.213032)
                (0.7, -0.295833) += (0.7, 0.146771) -= (0.7, 0.146771)
                (0.8, -0.470833) += (0.8, 0.100847) -= (0.8, 0.100847)
                (0.9, -0.6875) += (0.9, 0.0624129) -= (0.9, 0.0624129)
                (1, -0.8) += (1, 3.10927e-16) -= (1, 3.10927e-16)
            };
            \addlegendentry{$K = 5$, top}
            
            \end{axis}
            \end{tikzpicture}
            \caption{$\ell_n = n (\ln{n})^3$}\label{fig:Relative-accuracy-vs-perturbation-probability-sparse}
    \end{subfigure}
    \begin{subfigure}{0.495\linewidth}
        \centering
        \begin{tikzpicture}[scale = 0.9]
            \begin{axis}[
                reverse legend,
                xmin = 0, xmax = 1.0,
                ymin = -1.1, ymax = 2.1,
                width = \linewidth,
                height = 0.6 * \linewidth,
                xlabel = {Perturbation probability $\varepsilon$},
            ]
            \addplot[dashed, domain = 0.0:1.0, samples = 2] {0.0};

            \addplot[smooth, mark=o, error bars/.cd, y dir=both, y explicit] plot coordinates {
                (0, -0.0588542) += (0, 0.0180889) -= (0, 0.0180889)
                (0.1, -0.0390625) += (0.1, 0.0258738) -= (0.1, 0.0258738)
                (0.2, 0.0619792) += (0.2, 0.0451883) -= (0.2, 0.0451883)
                (0.3, 0.292188) += (0.3, 0.105055) -= (0.3, 0.105055)
                (0.4, 0.464062) += (0.4, 0.120153) -= (0.4, 0.120153)
                (0.5, 0.436458) += (0.5, 0.132011) -= (0.5, 0.132011)
                (0.6, 0.264063) += (0.6, 0.114148) -= (0.6, 0.114148)
                (0.7, -0.0723958) += (0.7, 0.109147) -= (0.7, 0.109147)
                (0.8, -0.6) += (0.8, 0.0501976) -= (0.8, 0.0501976)
                (0.9, -0.8625) += (0.9, 0.0155489) -= (0.9, 0.0155489)
                (1, -0.95) += (1, 3.33136e-16) -= (1, 3.33136e-16)
            };

            \addplot[smooth, mark=square, error bars/.cd, y dir=both, y explicit] plot coordinates {
                (0, 0.0208333) += (0, 0.0295748) -= (0, 0.0295748)
                (0.1, 0.120833) += (0.1, 0.0637783) -= (0.1, 0.0637783)
                (0.2, 0.269792) += (0.2, 0.125828) -= (0.2, 0.125828)
                (0.3, 0.442708) += (0.3, 0.173907) -= (0.3, 0.173907)
                (0.4, 0.95) += (0.4, 0.264929) -= (0.4, 0.264929)
                (0.5, 0.803125) += (0.5, 0.28492) -= (0.5, 0.28492)
                (0.6, 0.627083) += (0.6, 0.278244) -= (0.6, 0.278244)
                (0.7, 0.20625) += (0.7, 0.235489) -= (0.7, 0.235489)
                (0.8, -0.339583) += (0.8, 0.131128) -= (0.8, 0.131128)
                (0.9, -0.727083) += (0.9, 0.0454223) -= (0.9, 0.0454223)
                (1, -0.9) += (1, 1.33255e-16) -= (1, 1.33255e-16)
            };

            \addplot[smooth, mark=triangle, error bars/.cd, y dir=both, y explicit] plot coordinates {
                (0, 0.0270833) += (0, 0.0584986) -= (0, 0.0584986)
                (0.1, 0.29375) += (0.1, 0.166863) -= (0.1, 0.166863)
                (0.2, 0.5875) += (0.2, 0.353052) -= (0.2, 0.353052)
                (0.3, 1.4) += (0.3, 0.636476) -= (0.3, 0.636476)
                (0.4, 1.15625) += (0.4, 0.605772) -= (0.4, 0.605772)
                (0.5, 1.1125) += (0.5, 0.586575) -= (0.5, 0.586575)
                (0.6, 0.991667) += (0.6, 0.567831) -= (0.6, 0.567831)
                (0.7, 1.0125) += (0.7, 0.614696) -= (0.7, 0.614696)
                (0.8, 0.372917) += (0.8, 0.483237) -= (0.8, 0.483237)
                (0.9, -0.577083) += (0.9, 0.10909) -= (0.9, 0.10909)
                (1, -0.8) += (1, 3.10927e-16) -= (1, 3.10927e-16)
            };
            \end{axis}
            \end{tikzpicture}
            \caption{$\ell_n = n (\ln{n})^4$}\label{fig:Relative-accuracy-vs-perturbation-probability-dense}
    \end{subfigure}

    \caption{%
    These plots display the relative accuracies $(\hat{K} - K)/K$ attained versus the perturbation probability $\varepsilon$.
    Here, $n = 1000$, and each sample mean is the result of $96$ independent replications.
    }
    \label{fig:Relative-accuracy-vs-perturbation-probability}
\end{figure}

\paragraph{Robustness to external noise depends on the path length}
Observe firstly in \Cref{fig:Relative-accuracy-vs-perturbation-probability-sparse} that for $\ell_n/n = (\ln n)^3$, the performance of the method appears to be insensitive to the perturbation probability when $\varepsilon \lesssim 0.4$.
It obviously makes sense that the performance decreases as $\varepsilon \to 1$, because the dominant behavior is then of an $n$-dimensional \gls{MC}.
However, it is surprising that even for relatively large perturbation probabilities ($\varepsilon \lesssim 0.4$), the number of clusters $K$ could still be detected.
After all, the simulated \gls{MC} here is of rank $n$ with overwhelming probability and \emph{not} rank $K$ whenever $\varepsilon > 0$.

This is further illustrated in \Cref{fig:Relative-accuracy-vs-perturbation-probability-dense}, where we repeat the experiment with $\ell_n/n = (\ln n)^4$ instead.
In this case, the trajectory is relatively dense, and our method anticipates lower background noise levels.
Consequently, the method is more sensitive to the noise level $\varepsilon$, which begins to affect the results almost immediately.

\subsection{Comparison to alternative methods}
\label{sec:results_alternative_methods}

Next, we compare our method (ALG2) to five off--the--shelves alternatives.
While these methods are not tailored to \glspl{BMC} and consequently do not come with applicable performance guarantees, it will still be interesting to see how performant they are.
We refer to \cref{sec:alternative_methods} for details on each alternative method; and we refer to \Cref{sec:Generating_BMCs_with_random_parameters} for details on our test scenarios.

\subsubsection{Brief overview of the alternative methods studied}

We investigate the following alternative methods:

\begin{enumerate}
    \item[HDBS]
    This method uses HDBSCAN to cluster the points of the embedding $\hat{X}$ \cite{8215642}.
    The minimum cluster size is set to the neighborhood size threshold used by \cref{alg:kpost}; recall \eqref{eqn:paramter_parameterization}.

    \item[MEGH]
    This method uses the maximum eigengap heuristic to estimate $K$.

    \item[LLSC]
    This method maximizes a cross-validated log-likelihood.
    It uses half of the observations to estimate a model \gls{BMC} $(\hat{p}^{(K')},\hat{\sigma}^{(K')})$ for $K'=1,2,\ldots$, and uses the other half as held-out validation data to compute the log-likelihood \cite{smyth2000model}.
    Specifically, this method estimates $\hat{\sigma}^{(K')}$ using the spectral clustering algorithm described in \cite{ClusterBMC2017}.

    \item[LLCI]
    This method is a refined version of the previous.
    Specifically, it further refines $\hat{\sigma}^{(K')}$ via the iterative, likelihood-based improvement algorithm described in \cite{ClusterBMC2017}.

    \item[CAIC]
    This method uses all observations to estimate $\hat{p}^{(K')}$ and $\hat{\sigma}^{(K')}$ for $K'=1,2,\ldots$ using the improvement algorithm described in \cite{ClusterBMC2017}.
    Subsequently, for each $K'$, the \gls{CAIC} \cite{bozdogan1987model} is calculated and the minimizing $K'$ is returned.
\end{enumerate}

\subsubsection{Brief overview of the different test scenarios used}

We test all algorithms in the following scenarios:

\begin{enumerate}
    \item[Test 1]
    We sample ``assortative'' \glspl{BMC} with $p_{i,i}=p_0=0.8$ fixed for all $i\in [K]$, and off-diagonal elements $\{p_{i,j}\}_{i\neq j}$ and cluster fractions $\alpha$ sampled uniformly at random from the simplex.
    Details are in \Cref{sec:assortative_BMCs}.

    \item[Test 2]
    We sample \glspl{BMC} with $p$ and $\alpha$ sampled uniformly at random from the simplex.
    Details can be found in \Cref{sec:uniform_BMCs}.

    \item[Test 3]
    We sample from perturbed \glspl{BMC} as in \eqref{eqn:noisy_transition_matrix} with perturbation strength $\epsilon=0.2$.
    For the \gls{BMC} component we fix the cluster fractions $\alpha_k=1/K$ and sample the cluster transition matrix $p$ uniformly at random from the simplex.

    \item[Test 4]
    We sample \glspl{BMC} with fixed cluster sizes $\alpha_k=1/K$ and cluster transition matrices of rank $\lceil K/2\rceil$ according to \Cref{ex:dot_product_model}.
    For details, see \Cref{sec:low_rank_BMCs}.
\end{enumerate}

Essentially, these tests range from easy to more difficult.
In our experiment, we will also varying the path length and the number of clusters.

\subsubsection{Results}

The results are summarized in \cref{tab:Performance-of-the-different-methods}.

\begin{table}[htpb]
    \centering
    \setlength{\tabcolsep}{6pt}
    {
        \small
        \begin{tabular}{c@{\hspace{7pt}}c@{\hspace{7pt}}c|c@{\hspace{7pt}}cc@{\hspace{7pt}}cc@{\hspace{7pt}}c|c@{\hspace{7pt}}cc@{\hspace{7pt}}cc@{\hspace{7pt}}c}
            \toprule
            $l_n$     & $K$ & {Test} & \multicolumn{2}{c}{ALG2} & \multicolumn{2}{c}{HDBS}          & \multicolumn{2}{c|}{MEGH}      & \multicolumn{2}{c}{LLSC} & \multicolumn{2}{c}{LLCI} & \multicolumn{2}{c}{CAIC}\\
            &     &        & $\mu_{\hat{K}}$ & $\sigma_{\hat{K}}$ & $\mu_{\hat{K}}$ & $\sigma_{\hat{K}}$ & $\mu_{\hat{K}}$ & $\sigma_{\hat{K}}$ & $\mu_{\hat{K}}$ & $\sigma_{\hat{K}}$ & $\mu_{\hat{K}}$ & $\sigma_{\hat{K}}$ & $\mu_{\hat{K}}$ & $\sigma_{\hat{K}}$\\
            \midrule

            \multirow{4}{*}{$n\ln(n)^2$} & \multirow{4}{*}{$3$}
              & 1 & $ \emph{3.0} $ & $0.0 $ & $ \emph{3.0} $ & $0.0 $ & $ 3.9 $ & $ 0.3 $ & $ 3.7 $ & $ 2.0 $ & $ \emph{3.0} $ & $0.0 $ & $ 9.4 $ & $ 1.4 $ \\
            & & 2 & $ 1.3 $ & $ 1.3 $ & $ 1.0 $ & $ 1.3 $ & $ 2.3 $ & $ 1.1 $ & $ 4.9 $ & $ 3.1 $ & $ \emph{3.5} $ & $1.4 $ & $ 9.0 $ & $ 1.4 $ \\
            & & 3 & $ 2.3 $ & $ 0.8 $ & $ 2.4 $ & $ 0.6 $ & $ 2.4 $ & $ 1.1 $ & $ 3.7 $ & $ 2.2 $ & $ \emph{3.5} $ & $1.6 $ & $ 8.8 $ & $ 1.6 $ \\
            & & 4 & $ 1.2 $ & $ 1.1 $ & $ 1.0 $ & $ 1.1 $ & $ 2.5 $ & $ 1.4 $ & $ 3.5 $ & $ 2.0 $ & $ \emph{2.9} $ & $0.9 $ & $ 9.1 $ & $ 1.3 $ \\ [0.5em]

            \multirow{4}{*}{$n\ln(n)^2$} & \multirow{4}{*}{$6$}
              & 1 & $ \emph{6.0} $ & $ 0.3 $ & $ \emph{6.0} $ & $0.2 $ & $ 7.0 $ & $ 1.2 $ & $ 9.8 $ & $ 5.1 $ & $ 6.8 $ & $ 2.2 $ & $ 8.3 $ & $ 1.1 $ \\
            & & 2 & $ 1.1 $ & $ 1.6 $ & $ 0.8 $ & $ 1.5 $ & $ 2.3 $ & $ 0.6 $ & $ 10.9 $ & $ 4.2 $ & $ \emph{7.4} $ & $2.2 $ & $ 9.7 $ & $ 1.7 $ \\
            & & 3 & $ 3.8 $ & $ 1.1 $ & $ 2.8 $ & $ 1.0 $ & $ 2.5 $ & $ 1.0 $ & $ \emph{7.2} $ & $3.2 $ & $ 8.0 $ & $ 2.7 $ & $ 8.7 $ & $ 1.6 $ \\
            & & 4 & $ 0.9 $ & $ 1.2 $ & $ 0.5 $ & $ 1.0 $ & $ 2.5 $ & $ 0.7 $ & $ 7.1 $ & $ 3.9 $ & $ \emph{6.6} $ & $2.2 $ & $ 9.8 $ & $ 1.7 $ \\ [0.5em]

            \multirow{4}{*}{$n\ln(n)^2$} & \multirow{4}{*}{$10$}
              & 1 & $ \emph{10.0} $ & $ 0.2 $ & $ \emph{10.0} $ & $0.1 $ & $ 11.0 $ & $ 0.6 $ & $ 23.0 $ & $ 3.0 $ & $ 17.5 $ & $ 3.3 $ & $ 10.6 $ & $ 0.6 $ \\
            & & 2 & $ 0.5 $ & $ 0.9 $ & $ 0.1 $ & $ 0.5 $ & $ 2.3 $ & $ 0.6 $ & $ 15.3 $ & $ 3.9 $ & $ \emph{11.9} $ & $3.9 $ & $ 11.9 $ & $ 1.9 $ \\
            & & 3 & $ 4.9 $ & $ 1.4 $ & $ 2.0 $ & $ 1.2 $ & $ 2.6 $ & $ 1.1 $ & $ \emph{9.9} $ & $4.3 $ & $ 13.4 $ & $ 3.1 $ & $ 11.7 $ & $ 2.3 $ \\
            & & 4 & $ 0.8 $ & $ 0.9 $ & $ 0.4 $ & $ 0.8 $ & $ 2.8 $ & $ 1.1 $ & $ \emph{10.1} $ & $4.9 $ & $ 11.5 $ & $ 3.8 $ & $ 12.2 $ & $ 2.0 $ \\ [0.5em]
            \midrule

            \multicolumn{3}{l|}{Avg. $|\mu_{\hat{K}}-K|/K$} & $ 0.5$ & & $ 0.5$ & & $ 0.4$ & & $ 0.4$ & & $ \mathbf{0.2} $ & & $ 0.9$ & \\ [0.1em]
            \multicolumn{3}{l|}{Avg. $\sigma_{\Khat}/K$} & & $ 0.18 $ & & $ 0.16 $ & & $ 0.19 $ & & $ 0.62 $ & & $ 0.36 $ & & $ 0.44 $ \\ [0.1em]

            \midrule
            \multirow{4}{*}{$n^2$} & \multirow{4}{*}{$3$}
              & 1 & $ \emph{3.0} $ & $0.0 $ & $ \emph{3.0} $ & $ 0.1 $ & $ 2.9 $ & $ 0.3 $ & $ \emph{3.0} $ & $0.0 $ & $ \emph{3.0} $ & $0.0 $ & $ 9.0 $ & $ 1.4 $ \\
            & & 2 & $ 3.6 $ & $ 3.9 $ & $ 7^* $ & $ 19 $ & $ 2.1 $ & $ 0.3 $ & $ 3.2 $ & $ 0.8 $ & $ \emph{3.1} $ & $0.7 $ & $ 8.7 $ & $ 1.4 $ \\
            & & 3 & $ 2.6 $ & $ 0.6 $ & $ 9^* $ & $ 24 $ & $ 2.3 $ & $ 1.3 $ & $ 3.1 $ & $ 0.4 $ & $ \emph{3.0} $ & $0.1 $ & $ 8.4 $ & $ 1.6 $ \\
            & & 4 & $ 3.4 $ & $ 3.7 $ & $ 31^* $ & $ 46 $ & $ 2.5 $ & $ 3.3 $ & $ 3.4 $ & $ 1.8 $ & $ \emph{3.0} $ & $0.3 $ & $ 9.1 $ & $ 1.3 $ \\ [0.5em]

            \multirow{4}{*}{$n^2$} & \multirow{4}{*}{$6$}
              & 1 & $ \emph{6.0} $ & $0.1 $ & $ \emph{6.0} $ & $0.1 $ & $ 4.9 $ & $ 1.0 $ & $ 6.0 $ & $ 0.1 $ & $ 6.0 $ & $ 0.3 $ & $ 8.3 $ & $ 0.8 $ \\
            & & 2 & $ 7.3 $ & $ 3.8 $ & $ \emph{5.9} $ & $0.3 $ & $ 2.2 $ & $ 0.5 $ & $ 9.4 $ & $ 3.4 $ & $ 6.4 $ & $ 1.4 $ & $ 9.1 $ & $ 1.5 $ \\
            & & 3 & $ 5.2 $ & $ 0.9 $ & $ \emph{6.0} $ & $ 0.1 $ & $ 2.4 $ & $ 0.6 $ & $ 6.0 $ & $ 0.1 $ & $ \emph{6.0} $ & $0.0 $ & $ 7.6 $ & $ 0.8 $ \\
            & & 4 & $ \emph{6.6} $ & $ 3.6 $ & $ \emph{6.6} $ & $ 8.6 $ & $ 2.2 $ & $ 0.7 $ & $ 9.3 $ & $ 3.7 $ & $ 6.7 $ & $ 1.5 $ & $ 9.2 $ & $ 1.6 $ \\ [0.5em]

            \multirow{4}{*}{$n^2$} & \multirow{4}{*}{$10$}
              & 1 & $ \emph{10.0} $ & $0.0 $ & $ \emph{10.0} $ & $ 0.1 $ & $ 3.2 $ & $ 0.7 $ & $ 13.0 $ & $ 1.0 $ & $ 11.4 $ & $ 1.3 $ & $ 11.2 $ & $ 0.7 $ \\
            & & 2 & $ 12.0 $ & $ 4.1 $ & $ \emph{9.8} $ & $0.5 $ & $ 2.4 $ & $ 0.7 $ & $ 17.2 $ & $ 2.6 $ & $ 12.7 $ & $ 4.0 $ & $ 12.7 $ & $ 2.6 $ \\
            & & 3 & $ 9.0 $ & $ 0.9 $ & $ \emph{10.0} $ & $0.0 $ & $ 2.6 $ & $ 0.7 $ & $ 13.5 $ & $ 1.6 $ & $ 10.1 $ & $ 0.3 $ & $ 10.2 $ & $ 0.4 $ \\
            & & 4 & $ 11.2 $ & $ 4.2 $ & $ \emph{9.3} $ & $0.9 $ & $ 2.3 $ & $ 0.6 $ & $ 16.2 $ & $ 3.1 $ & $ 13.3 $ & $ 3.8 $ & $ 13.2 $ & $ 2.8 $ \\ [0.5em]
            \midrule
            \multicolumn{3}{l|}{Avg. $|\mu_{\hat{K}}-K|/K$} & $ \mathbf{0.1} $ & & $ 1.1$ & & $ 0.5$ & & $ 0.3$ & & $ \mathbf{0.1} $ & & $ 0.8$ & \\
            \multicolumn{3}{l|}{Avg. $\sigma_{\Khat}/K$} & & $ 0.41 $ & & $ 2.61 $ & & $ 0.22 $ & & $ 0.26 $ & & $ 0.16 $ & & $ 0.28 $ \\ [0.1em]
            \midrule
            \vspace{0pt}
        \end{tabular}
    }
    \caption{%
        Sample means and standard deviations of the estimated number of clusters for the different methods described in the main text with $n=1000$.
        The aggregate scores show the average relative error and average normalized standard deviation for each method across all scenarios at a given path length.
        For each combination of method, test, $\ell_n$ and $K$ we estimate a $95\%$-confidence interval for the sample mean $\mu_{\Khat}$ as if taking a fixed number of samples, and sample until the margin of error is at most $0.15$ with a minimum of $250$ samples.
        We report the number of samples and the margin of error in \Cref{tab:confidence-bounds-for-different-methods}.
        *Due to the large variance of the estimator in these scenarios, these sample means have wider estimated confidence intervals.
    }
    \label{tab:Performance-of-the-different-methods}
\end{table}

\paragraph{Crossvalidated log-likelihood with cluster improvement performs best}
At the shorter path length $\ell_n=n\ln(n)^2$, LLCI performs best on average.
The other methods only recover $K$ reasonably well in the relatively easy Test 1.

At the longer path length $\ell_n=n^2$, methods ALG2, HDBS and LLSC also perform well.
Focusing on ALG2, we see that it now matches the average performance of LLCI, but with more variance.
By comparison, the MEGH and CAIC methods essentially fail to detect $K$ in all but a few cases.

The solid performance of the log-likelihood methods, especially when using the improvement step of \cite{ClusterBMC2017}, presents an interesting avenue for future research.
However, from a practical point of view, we should note that LLCI is computationally more involved than ALG2.
LLCI estimates clusters for multiple candidates $K'$, and the improvement algorithm can take some time when $n$ is large.

\paragraph{Alternative density-based clustering methods can improve numerical performance}
At the shorter path length $\ell_n=n\ln(n)^2$, we find that HDBS outperforms ALG2 for $K = 6, 10$ but not $K = 3$.
Some general advantages of HDBS over other density-based clustering methods are that it can handle clusters of irregular shapes, and varying densities.
This makes the method flexible when perturbations or imbalanced visitation rates distort the embedding $\hat{X}$.

At the longer path length $\ell_n=n^2$, we see at the same time excellent performance of HDBS in many individual tests but also very poor performance in a few individual tests.
This is reflected in the decrease of average relative accuracy, and increase of average variance, at the bottom of the table.
On the contrary, ALG2 performed better on average in these tests cases due to HDBS's few outliers.

\section*{Acknowledgements}

This work is part of \emph{Valuable AI}, a research collaboration between the Eindhoven University of Technology and the Koninklijke KPN N.V.
Parts of this research have been funded by the \gls{EAISI}'s IMPULS program.

\bibliographystyle{unsrt}
\bibliography{refs}

\begin{thebibliography}{10}

\bibitem{ezugwu2022comprehensive}
Absalom~E. Ezugwu, Abiodun~M. Ikotun, Olaide~O. Oyelade, Laith Abualigah,
  Jeffery~O. Agushaka, Christopher~I. Eke, and Andronicus~A. Akinyelu.
\newblock A comprehensive survey of clustering algorithms: State-of-the-art
  machine learning applications, taxonomy, challenges, and future research
  prospects.
\newblock {\em Engineering Applications of Artificial Intelligence},
  110:104743, 2022.

\bibitem{sanders2023singular}
Jaron Sanders and Alexander Van~Werde.
\newblock Singular value distribution of dense random matrices with block
  {M}arkovian dependence.
\newblock {\em Stochastic Processes and their Applications}, 158:453--504,
  2023.

\bibitem{VanWerde2022}
Alexander Van~Werde, Albert Senen-Cerda, Gianluca Kosmella, and Jaron Sanders.
\newblock Detection and evaluation of clusters within sequential data.
\newblock {\em arXiv preprint arXiv:2210.01679}, October 2022.

\bibitem{azizzadenesheli2016reinforcement}
Kamyar Azizzadenesheli, Alessandro Lazaric, and Animashree Anandkumar.
\newblock Reinforcement learning in {R}ich-{O}bservation {MDP}s using spectral
  methods.
\newblock {\em arXiv preprint arXiv:1611.03907}, 2016.

\bibitem{Jedra:2022}
Yassir Jedra, Junghyun Lee, Alexandre Proutiere, and Se-Young Yun.
\newblock Nearly optimal latent state decoding in block {MDP}s.
\newblock In {\em International Conference on Artificial Intelligence and
  Statistics}, pages 2805--2904. PMLR, 2023.

\bibitem{lee2019review}
Clement Lee and Darren~J. Wilkinson.
\newblock A review of {S}tochastic {B}lock {M}odels and extensions for graph
  clustering.
\newblock {\em Applied Network Science}, 4(1):1--50, 2019.

\bibitem{ClusterBMC2017}
Jaron Sanders, Alexandre Prouti{\`e}re, and Se-Young Yun.
\newblock Clustering in block {M}arkov chains.
\newblock {\em The Annals of Statistics}, 48(6):3488--3512, 2020.

\bibitem{NIPS2016_a8849b05}
Se-Young Yun and Alexandre Prouti{\`e}re.
\newblock Optimal cluster recovery in the {L}abeled {S}tochastic {B}lock
  {M}odel.
\newblock In {\em Advances in Neural Information Processing Systems},
  volume~29. Curran Associates, Inc., 2016.

\bibitem{duan2019state}
Yaqi Duan, Tracy Ke, and Mengdi Wang.
\newblock State aggregation learning from {M}arkov transition data.
\newblock {\em Advances in Neural Information Processing Systems}, 32, 2019.

\bibitem{zhang2019spectral}
Anru Zhang and Mengdi Wang.
\newblock Spectral state compression of {M}arkov processes.
\newblock {\em IEEE transactions on information theory}, 66(5):3202--3231,
  2019.

\bibitem{zhu2022learning}
Ziwei Zhu, Xudong Li, Mengdi Wang, and Anru Zhang.
\newblock Learning {M}arkov models via low-rank optimization.
\newblock {\em Operations Research}, 70(4):2384--2398, 2022.

\bibitem{Bounds2023}
Jaron Sanders and Albert Senen-Cerda.
\newblock Spectral norm bounds for {B}lock {M}arkov {C}hain random matrices.
\newblock {\em Stochastic Processes and their Applications}, 158:134--169,
  April 2023.

\bibitem{vanwerde2023matrix}
Alexander Van~Werde and Jaron Sanders.
\newblock Matrix concentration inequalities with dependent summands and sharp
  leading-order terms.
\newblock {\em arXiv preprint arXiv:2307.11632}, 2023.

\bibitem{JMLR:v18:16-480}
Emmanuel Abbe.
\newblock Community detection and {S}tochastic {B}lock {M}odels: Recent
  developments.
\newblock {\em Journal of Machine Learning Research}, 18(177):1--86, 2018.

\bibitem{PhysRevE.69.026113}
Mark E.~J. Newman and Michelle Girvan.
\newblock Finding and evaluating community structure in networks.
\newblock {\em Phys. Rev. E}, 69:026113, February 2004.

\bibitem{newman2006modularity}
Mark E.~J. Newman.
\newblock Modularity and community structure in networks.
\newblock {\em Proceedings of the national academy of sciences},
  103(23):8577--8582, 2006.

\bibitem{Blondel_2008}
Vincent~D. Blondel, Jean-Loup Guillaume, Renaud Lambiotte, and Etienne
  Lefebvre.
\newblock Fast unfolding of communities in large networks.
\newblock {\em Journal of Statistical Mechanics: Theory and Experiment},
  2008(10):P10008, October 2008.

\bibitem{daudin:hal-01197587}
Jean-Jacques Daudin, Franck Picard, and Stephane Robin.
\newblock {A mixture model for random graphs}.
\newblock {\em {Statistics \& Computing}}, 18(2):173--183, 2008.

\bibitem{RePEc:eee:csdana:v:60:y:2013:i:c:p:12-31}
Aaron~F. McDaid, Thomas~Brendan Murphy, Nial Friel, and Neil~J. Hurley.
\newblock Improved bayesian inference for the {S}tochastic {B}lock {M}odel with
  application to large networks.
\newblock {\em Computational Statistics \& Data Analysis}, 60(C):12--31, 2013.

\bibitem{come2015model}
Etienne C{\^o}me and Pierre Latouche.
\newblock Model selection and clustering in {S}tochastic {B}lock {M}odels based
  on the exact integrated complete data likelihood.
\newblock {\em Statistical Modelling}, 15(6):564--589, 2015.

\bibitem{newman2016estimating}
Mark E.~J. Newman and Gesine Reinert.
\newblock Estimating the number of communities in a network.
\newblock {\em Physical Review Letters}, 117(7):078301, 2016.

\bibitem{passino2020bayesian}
Francesco~Sanna Passino and Nicholas~A. Heard.
\newblock Bayesian estimation of the latent dimension and communities in
  stochastic blockmodels.
\newblock {\em Statistics and Computing}, 30(5):1291--1307, 2020.

\bibitem{yang2020simultaneous}
Congyuan Yang, Carey~E. Priebe, Youngser Park, and David~J. Marchette.
\newblock Simultaneous dimensionality and complexity model selection for
  spectral graph clustering.
\newblock {\em Journal of Computational and Graphical Statistics},
  30(2):422--441, 2020.

\bibitem{NIPS2012_d6ef5f7f}
Prem~K. Gopalan, Sean Gerrish, Michael Freedman, David Blei, and David Mimno.
\newblock Scalable inference of overlapping communities.
\newblock In {\em Advances in Neural Information Processing Systems},
  volume~25. Curran Associates, Inc., 2012.

\bibitem{dubois2013stochastic}
Christopher DuBois, Carter Butts, and Padhraic Smyth.
\newblock {S}tochastic {B}lockmodeling of relational event dynamics.
\newblock In {\em Artificial intelligence and statistics}, pages 238--246.
  PMLR, 2013.

\bibitem{pmlr-v51-li16d}
Wenzhe Li, Sungjin Ahn, and Max Welling.
\newblock Scalable {MCMC} for mixed membership {S}tochastic {B}lockmodels.
\newblock In {\em Proceedings of the 19th International Conference on
  Artificial Intelligence and Statistics}, volume~51 of {\em Proceedings of
  Machine Learning Research}, pages 723--731, Cadiz, Spain, May 2016. PMLR.

\bibitem{krzakala2013spectral}
Florent Krzakala, Cristopher Moore, Elchanan Mossel, Joe Neeman, Allan Sly,
  Lenka Zdeborov{\'a}, and Pan Zhang.
\newblock Spectral redemption in clustering sparse networks.
\newblock {\em Proceedings of the National Academy of Sciences},
  110(52):20935--20940, 2013.

\bibitem{newman2013spectral}
Mark E.~J. Newman.
\newblock Spectral community detection in sparse networks.
\newblock {\em arXiv preprint arXiv:1308.6494}, 2013.

\bibitem{chatterjee2015matrix}
Sourav Chatterjee.
\newblock {Matrix estimation by Universal Singular Value Thresholding}.
\newblock {\em The Annals of Statistics}, 43(1):177 -- 214, 2015.

\bibitem{BudelGabriel2020}
Gabriel Budel and Piet Van~Mieghem.
\newblock {Detecting the number of clusters in a network}.
\newblock {\em Journal of Complex Networks}, 8(6), December 2020.

\bibitem{baum1966statistical}
Leonard~E Baum and Ted Petrie.
\newblock Statistical inference for probabilistic functions of finite state
  {M}arkov chains.
\newblock {\em The annals of mathematical statistics}, 37(6):1554--1563, 1966.

\bibitem{petrie1969probabilistic}
Ted Petrie.
\newblock Probabilistic functions of finite state {M}arkov chains.
\newblock {\em The Annals of Mathematical Statistics}, 40(1):97--115, 1969.

\bibitem{zucchini2009hidden}
Walter Zucchini and Iain~L. MacDonald.
\newblock {\em Hidden {M}arkov models for time series: an introduction using
  {R}}.
\newblock Chapman and Hall/CRC, 2009.

\bibitem{buckby2023finding}
Jodie Buckby, Ting Wang, David Fletcher, Jiancang Zhuang, Akiko Takeo, and
  Kazushige Obara.
\newblock Finding the number of latent states in hidden {M}arkov models using
  information criteria.
\newblock {\em Environmental and Ecological Statistics}, 30(4):797--825, 2023.

\bibitem{celeux2008selecting}
Gilles Celeux and Jean-Baptiste Durand.
\newblock Selecting hidden {M}arkov model state number with cross-validated
  likelihood.
\newblock {\em Computational Statistics}, 23:541--564, 2008.

\bibitem{chen2024determine}
Yang Chen, Cheng-Der Fuh, and Chu-Lan~Michael Kao.
\newblock Determine the number of states in hidden {M}arkov models via marginal
  likelihood.
\newblock {\em arXiv preprint arXiv:2405.12343}, 2024.

\bibitem{ephraim2002hidden}
Yariv Ephraim and Neri Merhav.
\newblock Hidden {M}arkov processes.
\newblock {\em IEEE Transactions on information theory}, 48(6):1518--1569,
  2002.

\bibitem{hsu2012spectral}
Daniel Hsu, Sham~M. Kakade, and Tong Zhang.
\newblock A spectral algorithm for learning hidden {M}arkov models.
\newblock {\em Journal of Computer and System Sciences}, 78(5):1460--1480,
  2012.

\bibitem{siddiqi2010reduced}
Sajid Siddiqi, Byron Boots, and Geoffrey Gordon.
\newblock Reduced-rank hidden {M}arkov models.
\newblock In {\em Proceedings of the Thirteenth International Conference on
  Artificial Intelligence and Statistics}, pages 741--748. JMLR Workshop and
  Conference Proceedings, 2010.

\bibitem{10.1214/15-AOS1370}
Jing Lei.
\newblock {A goodness--of--fit test for {S}tochastic {B}lock {M}odels}.
\newblock {\em The Annals of Statistics}, 44(1):401 -- 424, 2016.

\bibitem{10.5555/1777879.1777890}
Stephen~J. Young and Edward~R. Scheinerman.
\newblock Random dot product graph models for social networks.
\newblock In {\em Proceedings of the 5th International Conference on Algorithms
  and Models for the Web-Graph}, WAW'07, page 138–149, Berlin, Heidelberg,
  2007. Springer-Verlag.

\bibitem{athreya2018statistical}
Avanti Athreya, Donniell~E. Fishkind, Minh Tang, Carey~E. Priebe, Youngser
  Park, Joshua~T. Vogelstein, Keith Levin, Vince Lyzinski, Yichen Qin, and
  Daniel~L. Sussman.
\newblock Statistical inference on random dot product graphs: a survey.
\newblock {\em Journal of Machine Learning Research}, 18(226):1--92, 2018.

\bibitem{sussman2012consistent}
Daniel~L. Sussman, Minh Tang, Donniell~E. Fishkind, and Carey~E. Priebe.
\newblock A consistent adjacency spectral embedding for {S}tochastic {B}lock
  {M}odel graphs.
\newblock {\em Journal of the American Statistical Association},
  107(499):1119--1128, 2012.

\bibitem{ester1996density}
Martin Ester, Hans-Peter Kriegel, J\"{o}rg Sander, and Xiaowei Xu.
\newblock A density-based algorithm for discovering clusters in large spatial
  databases with noise.
\newblock In {\em Proceedings of the Second International Conference on
  Knowledge Discovery and Data Mining}, KDD'96, page 226–231. AAAI Press,
  1996.

\bibitem{jain2000statistical}
Anil~K. Jain, Robert P.~W. Duin, and Jianchang Mao.
\newblock Statistical pattern recognition: A review.
\newblock {\em IEEE Transactions on pattern analysis and machine intelligence},
  22(1):4--37, 2000.

\bibitem{8215642}
Leland McInnes and John Healy.
\newblock Accelerated hierarchical density based clustering.
\newblock In {\em 2017 IEEE International Conference on Data Mining Workshops
  (ICDMW)}, pages 33--42, 2017.

\bibitem{smyth2000model}
Padhraic Smyth.
\newblock Model selection for probabilistic clustering using cross-validated
  likelihood.
\newblock {\em Statistics and computing}, 10(1):63--72, 2000.

\bibitem{bozdogan1987model}
Hamparsum Bozdogan.
\newblock Model selection and {A}kaike's {I}nformation {C}riterion ({AIC}): The
  general theory and its analytical extensions.
\newblock {\em Psychometrika}, 52(3):345--370, 1987.

\bibitem{wang1997some}
Bo-Ying Wang and Bo-Yan Xi.
\newblock Some inequalities for singular values of matrix products.
\newblock {\em Linear algebra and its applications}, 264:109--115, 1997.

\bibitem{vinh2009information}
Nguyen~Xuan Vinh, Julien Epps, and James Bailey.
\newblock Information theoretic measures for clusterings comparison: is a
  correction for chance necessary?
\newblock In {\em Proceedings of the 26th annual international conference on
  machine learning}, pages 1073--1080, 2009.

\bibitem{bordenave2008spectrum}
Charles Bordenave, Pietro Caputo, and Djalil Chafai.
\newblock Spectrum of large random reversible {M}arkov chains: two examples.
\newblock {\em arXiv preprint arXiv:0811.1097}, 2008.

\bibitem{scikit-learn}
Fabian Pedregosa, Ga{\"e}l Varoquaux, Alexandre Gramfort, Vincent Michel,
  Bertrand Thirion, Olivier Grisel, Mathieu Blondel, Peter Prettenhofer, Ron
  Weiss, Vincent Dubourg, Jake Vanderplas, Alexandre Passos, David Cournapeau,
  Matthieu Brucher, Matthieu Perrot, and {\'E}douard Duchesnay.
\newblock Scikit-learn: Machine learning in python.
\newblock {\em the Journal of machine Learning research}, 12:2825--2830, 2011.

\end{thebibliography}

\appendix
\section{Remaining proofs}
\label{sec:proof}

The proof of \cref{thm:Khat=K} adapts that of \cite[Theorem 2]{ClusterBMC2017} to account for the unknown value of $K$.
Our adaptations draw upon the proof techniques of \cite{NIPS2016_a8849b05}, which presents a spectral clustering algorithm that estimates the number of clusters in \glspl{SBM}.
However, different from both of these works, we first generalize the tight spectral norm bounds of \cite{Bounds2023}, and then use this to establish an explicit convergence result for \Cref{alg:kpre}.
At the same time, we improve the bound on the clustering error of \Cref{alg:kmeans} by an additional log-factor compared to \cite[Theorem 2]{ClusterBMC2017}.

\subsection{Analyzing \texorpdfstring{\Cref{alg:kpre}}{Algorithm 1}}
\label{sec:proofskpre}

Let $N$ denote the matrix with elements
\begin{equation}
    N_{x,y}\eqdef \mbbE[\hat{N}_{x,y}]=\ell_n\Pi_xP_{x,y}\quad \textnormal{for }x,y\in[n].
\end{equation}
We first generalize the results of \cite{Bounds2023} so that they hold under the weaker \cref{asm:assumption1,asm:assumption2}.
Compared to \cite{Bounds2023} we allow the cluster transition matrix $p$ to have zero entries, and to be rank-deficient.
Specifically, we will confirm the following:

\begin{theorem}
    [{Adapted from \cite[Theorem 3]{Bounds2023}}]
    \label{thm:spectral_norm_bound}

    Presume \cref{asm:assumption1,asm:assumption2}.
    The following holds:
    \begin{enumerate}
        \item[(a)] If $\ell_n=\Omega(n \ln n)$, then
              \begin{equation}
                  \sigma_1(\hat{N}-N)=O_{\mbbP}(\sqrt{\ell_n/n}).
              \end{equation}
        \item[(b)] If $\ell_n = \omega(n)$ and $\Gamma^{\mathrm{c}}$ is a set of size $\floor{ne^{-\ell_n/n}}$ containing the states with the highest number of visits, then
              \begin{equation}
                  \sigma_1(\hat{N}_{\Gamma}-N)=O_{\mbbP}(\sqrt{\ell_n/n}).
              \end{equation}
    \end{enumerate}
\end{theorem}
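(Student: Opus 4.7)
The plan is to follow the proof architecture of \cite[Theorem 3]{Bounds2023}, surgically replacing those steps that invoked the assumption $p_{i,j}>0$ for all $i,j\in[K]$. The first move is a structural reduction: whenever $p_{i,j}=0$, equation~\eqref{eq:transitionMatrix} forces $P_{x,y}=0$ for every $x\in\mcV_i$, $y\in\mcV_j$, so $\hat{N}_{x,y}$ and $N_{x,y}$ both vanish identically on such pairs. Thus $\hat{N}-N$ is supported on the ``active set'' $\{(x,y):p_{\sigma(x),\sigma(y)}>0\}$, and it suffices to control its spectral norm there. This is important because, under \cref{asm:assumption2}, $\pi_k>0$ for all $k$, which gives $\Pi_x=\Theta(1/n)$ uniformly, but does \emph{not} give a uniform lower bound on $p_{i,j}$.

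For part (a), I would reprise the high-probability matrix concentration step of \cite{Bounds2023}: decompose the trajectory into blocks of length a constant multiple of the mixing time $t_{\mix}$ (finite by \cref{asm:assumption2}), write $\hat{N}-N$ as a sum of nearly independent, mean-zero block contributions, and apply a matrix-Bernstein-style tail inequality for Markov-dependent sums. The two inputs this inequality needs---a bound on the operator norm of a single block and a matrix variance proxy---scale through $\max_x \sum_y N_{x,y}=\Theta(\ell_n/n)$ and $\mbbE[(\hat{N}-N)(\hat{N}-N)^\intercal]$, both of which are controllable using only $\min_k\pi_k>0$ and $\min_k\alpha_k>0$ once one restricts to the active support.

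For part (b), the sparse regime $\ell_n=\omega(n)$, the plan is to mirror the trimming-plus-trace-method argument of \cite{Bounds2023}. Removing the $\floor{ne^{-\ell_n/n}}$ most visited states kills the heavy-tail contribution coming from anomalously visited vertices; the required tail estimates on the visit counts follow from $\Pi_x=\Theta(1/n)$ combined with standard Bernstein-type bounds for occupation times of Markov chains, and require no pointwise lower bound on $p$. On the trimmed, active support one then applies a trace moment estimate of $\mbbE[\operatorname{tr}((\hat{N}_\Gamma-N)^{2q})]$ for a growing $q=q(n)$ to extract the $O(\sqrt{\ell_n/n})$ spectral bound via Markov's inequality.

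The main obstacle is essentially bookkeeping inside this trace moment estimate: each closed walk contributing to $\operatorname{tr}((\hat{N}_\Gamma-N)^{2q})$ must stay on the active support, and every appearance of a factor involving $p_{i,j}^{-1}$ (or a uniform lower bound $p_{\min}$) in the original argument must be replaced by a quantity depending only on $\min_k\pi_k$ and $\min_k\alpha_k$. I expect the combinatorial reorganization to preserve the $\sqrt{\ell_n/n}$ scaling because forbidden transitions simply drop out of the enumeration rather than having to be bounded by a small positive constant. Verifying that the variance and higher-moment proxies in both (a) and (b) continue to scale as in \cite{Bounds2023} under the weaker \cref{asm:assumption1,asm:assumption2} is where the bulk of the technical work lies, and is what the remainder of \Cref{sec:proof_of_spectral_norm_bound} will need to carry out in detail.
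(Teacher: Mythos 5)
Your high-level diagnosis is right---\cref{asm:assumption1,asm:assumption2} give $\Pi_x=\Theta(1/n)$ and the upper bounds $P_{x,y}\leq \mf{p}_2/n$, $N_{x,y}\leq\mf{n}_2\ell_n/n^2$, but no lower bound on the $p_{i,j}$, and the job is to purge every use of such a lower bound from the argument of \cite{Bounds2023}. But your concrete plan diverges from what is needed in two ways that matter. First, the matrix-Bernstein route you propose for part (a) cannot deliver the stated rate: with variance proxy $\Theta(\ell_n/n)$ it yields $\sigma_1(\hat N-N)=O_{\mbbP}(\sqrt{(\ell_n/n)\ln n})$, off by a $\sqrt{\ln n}$ factor from the claimed $O_{\mbbP}(\sqrt{\ell_n/n})$. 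The log-free bound is precisely why \cite{Bounds2023}, and hence this paper, use a Feige--Ofek / Friedman--Kahn--Szemer\'edi-type argument built on a \emph{discrepancy property} of $\hat N$ over all pairs of subsets $(\mcA,\mcB)$ together with bounded row and column sums, not a concentration-of-sums inequality; likewise part (b) is not a trace-moment computation. The paper's actual reduction is that the lower bounds from \cite{Bounds2023}'s Assumption 1 enter only through its Proposition 9 (the discrepancy property), so only that proposition needs to be reproved under the weaker assumptions.

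Second, your proposal misses where the difficulty actually sits once $p$ may have zero entries. Restricting to the ``active support'' does not suffice: the problematic case is not $\mu(\mcA,\mcB)=0$ (which is dispatched trivially, since then $e(\mcA,\mcB)=0$ almost surely and discrepancy condition (i) holds), but $\mu(\mcA,\mcB)>0$ with $\mu(\mcA,\mcB)=o(\ell_n|\mcA||\mcB|/n^2)$, e.g.\ when $\mcA\times\mcB$ overlaps a positive block only slightly. The Chernoff bound of \cite{Bounds2023} relies on the now-unavailable lower bound on $\mu(\mcA,\mcB)$; the paper's fix is a refined moment-generating-function estimate (\cref{lem:concentration_e_AB}) exploiting $P_{x,y}/\Pi_y\leq\mf{p}_3$, which yields a tail of the form $\exp\bigl(-\tfrac{1}{4}\mu(\mcA,\mcB)\,k\ln k\bigr)$ scaling with $\mu(\mcA,\mcB)$ itself rather than with $\ell_n|\mcA||\mcB|/n^2$, after which the crossover threshold $t^*(\mcA,\mcB)$ in the discrepancy proof must be redefined in terms of $\mu(\mcA,\mcB)$. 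Your remark that ``forbidden transitions simply drop out of the enumeration'' does not address this, so as written the proposal leaves the central technical step unproved.
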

In addition, we will prove the following corollary:
\begin{corollary}
    [{Adapted from \cite[Corollary 4]{Bounds2023}}]
    \label{cor:singvalscalingrankp}
    Presume \cref{asm:assumption1,asm:assumption2}.
    Let $\hat{N}_{\Gamma}$ be as in \cref{alg:kpre}, where $\Gamma\subset [n]$ is the set of states obtained after removing the $\floor{n\exp(-\ell_n/n)}$ most visited states.
    If $\ell_n = \omega(n)$, then
    \begin{equation}
        \label{eqn:singvalscalingrankp}
        \sigma_i(\hat{N}_\Gamma)=
        \begin{cases}
            \Theta_{\mathbb{P}}\lb\frac{\ell_n}{n}\rb   & i\leq \rank(p), \\
            O_{\mathbb{P}}\lb\sqrt{\frac{\ell_n}{n}}\rb & i> \rank(p).    \\
        \end{cases}
    \end{equation}
\end{corollary}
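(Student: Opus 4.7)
The plan is to deduce the corollary from \Cref{thm:spectral_norm_bound}(b) by a straightforward application of Weyl's inequality
$$
    |\sigma_i(\hat{N}_\Gamma) - \sigma_i(N)|
    \leq
    \sigma_1(\hat{N}_\Gamma - N)
    =
    O_{\mathbb{P}}\bigl(\sqrt{\ell_n/n}\bigr).
$$
Thus the task reduces to a purely deterministic statement about the singular values of the mean matrix $N$: namely, that $\sigma_i(N) = \Theta(\ell_n/n)$ for $i\leq \rank(p)$ and $\sigma_i(N) = 0$ otherwise. Combined with the observation that $\sqrt{\ell_n/n} = o(\ell_n/n)$ whenever $\ell_n = \omega(n)$, the two cases of \eqref{eqn:singvalscalingrankp} then follow immediately.

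\paragraph{Factoring the mean matrix}
To analyze the singular values of $N$, I would exploit its block structure. Writing out $N_{x,y} = \ell_n \Pi_x P_{x,y} = \ell_n \pi_{\sigma(x)} p_{\sigma(x),\sigma(y)}/(|\mcV_{\sigma(x)}||\mcV_{\sigma(y)}|)$ and introducing the cluster-indicator matrix $E \in \mbbR^{n \times K}$ with $E_{x,k} = \mathbbm{1}[\sigma(x)=k]$ together with $D = \diag(|\mcV_1|,\ldots,|\mcV_K|)$, one obtains the factorization
$$
    N
    =
    F \tilde{M} F^\intercal,
    \qquad
    F \eqdef E D^{-1/2},
    \qquad
    \tilde{M} \eqdef \ell_n D^{-1/2} \diag(\pi) \, p \, D^{-1/2}.
$$
The columns of $F$ are indicators of clusters normalized by $\sqrt{|\mcV_k|}$, so $F^\intercal F = I_K$. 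Consequently the nonzero singular values of $N$ coincide with those of $\tilde{M}$, and in particular $\rank(N) = \rank(\tilde{M}) = \rank(p)$, the last equality holding because $D$ and $\diag(\pi)$ are invertible diagonal matrices with strictly positive entries by \cref{asm:assumption1,asm:assumption2}.

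\paragraph{Scaling of the nonzero singular values}
Under \cref{asm:assumption1}, we have $|\mcV_k|/n \to \alpha_k > 0$, so $D/n \to \diag(\alpha)$. Therefore the rescaled matrix $(n/\ell_n)\tilde{M} = (n/D)^{1/2}\diag(\pi) p (n/D)^{1/2}$ converges entrywise to the fixed matrix
$$
    \tilde{M}^\star
    \eqdef
    \diag(\alpha)^{-1/2} \diag(\pi) \, p \, \diag(\alpha)^{-1/2},
$$
which has the same rank as $p$ (again since the flanking diagonal matrices are positive). Hence $\sigma_i(\tilde{M}^\star)$ is a fixed positive constant for $i \leq \rank(p)$ and zero for $i > \rank(p)$. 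By continuity of singular values under entrywise convergence of fixed-dimensional matrices, $\sigma_i(\tilde{M}) = \Theta(\ell_n/n)$ for $i \leq \rank(p)$. Combining this with the factorization above yields the required scaling of $\sigma_i(N)$, and Weyl's inequality then completes the proof.

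\paragraph{Main obstacle}
The only nontrivial point is the lower bound $\sigma_{\rank(p)}(N) = \Omega(\ell_n/n)$, i.e., establishing that the nonzero singular values of $N$ do not collapse. The factorization argument handles this cleanly provided we invoke \cref{asm:assumption1} to bound $|\mcV_k|$ away from both $0$ and $n$, and \cref{asm:assumption2} to ensure $\pi_k > 0$ for all $k$. This is precisely where weakening the hypotheses of \cite[Corollary 4]{Bounds2023} (which assumed $p_{i,j} > 0$ throughout) becomes delicate: we must permit $p$ to be rank-deficient yet still control its smallest nonzero singular value. The factorization makes this control transparent, since $\sigma_{\rank(p)}(p) > 0$ by definition of the rank and is unaffected by the surrounding invertible diagonal multiplications.
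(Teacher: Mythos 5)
Your proposal is correct and follows essentially the same route as the paper: \Cref{thm:spectral_norm_bound}(b) plus Weyl's inequality reduces everything to the deterministic claim that $\sigma_i(N)=\Theta(\ell_n/n)$ for $i\leq\rank(p)$ and $\sigma_i(N)=0$ otherwise, which is exactly the paper's \Cref{lem:singular_values_N}, and the final passage from $\sigma_i(N)=\Theta(\ell_n/n)$ plus an $O_{\mbbP}(\sqrt{\ell_n/n})$ perturbation to $\sigma_i(\hat{N}_\Gamma)=\Theta_{\mbbP}(\ell_n/n)$ is the same routine step the paper isolates as \Cref{lem:asymptoticrelation}. The only cosmetic difference is in how the singular values of $N$ are controlled: you normalize the cluster-indicator factor to have orthonormal columns so that the nonzero singular values of $N$ coincide exactly with those of the $K\times K$ core matrix and then pass to the limit, whereas the paper keeps the unnormalized factor $\tilde{C}$ and sandwiches $\sigma_i(N)$ via $\sigma_i(A)\sigma_{\min}(B)\leq\sigma_i(AB)\leq\sigma_i(A)\sigma_1(B)$ — both arguments are valid and rest on the same use of \Cref{asm:assumption1,asm:assumption2}.
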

Observe that the statement of \cref{thm:spectral_norm_bound} is identical to that of \cite[Theorem 3]{Bounds2023}, up to \cref{asm:assumption1,asm:assumption2}.
Similarly, \cref{cor:singvalscalingrankp} is analogous to \cite[Corollary 4]{Bounds2023} up to the stronger implication that $\sigma_i(\hat{N}_{\Gamma})=\Theta_{\mbbP}(\ell_n/n)$ for $i\leq \rank(p)$ rather than $i\leq K$, and \cref{asm:assumption1,asm:assumption2}.

Given \Cref{cor:singvalscalingrankp}, the proof of \Cref{prop:limit_kpre} is now straightforward.

\subsubsection{Proof of \texorpdfstring{\Cref{prop:limit_kpre}}{Proposition 3.1} assuming \texorpdfstring{\Cref{cor:singvalscalingrankp}}{Corollary A.2}}
\label{sec:proof_of_limit_kpre_assuming_cor_singvalscalingrankp}

We introduce the shorthand notation $r_p= \rank(p)\leq K$.
Observe that the assumptions of \cref{prop:limit_kpre} ensure that the consequences of  \cref{cor:singvalscalingrankp} hold.

We first show that $\Ktilde\leq r_p$ with high probability as $n \rightarrow \infty$.
By definition \eqref{eqn:kpredef}, $\Ktilde\leq r_p$ if and only if $\sigma_{r_p+1}(\hat{N}_{\Gamma})<\gamma_n$ so that
\begin{equation}
    \label{eqn:Kspec_upper_bound}
    \mbbP\bkt{\Ktilde\leq r_p} = \mbbP\bkt{\sigma_{r_p+1}(\hat{N}_{\Gamma})<\gamma_n} = 1- \mbbP\bkt{\sigma_{r_p+1}(\hat{N}_{\Gamma})\geq \gamma_n}.
\end{equation}
It therefore suffices to check that $\sigma_{r_p+1}(\hat{N}_{\Gamma})=o_{\mbbP}(\gamma_n)$, in which case the right-hand side of \eqref{eqn:Kspec_upper_bound} converges to $1$ as $n \rightarrow \infty$.
This is the case because \cref{cor:singvalscalingrankp} implies that
\begin{equation}
    \label{eqn:singvalscaling1}
    \sigma_{r_p+1}(\hat{N}_\Gamma)=O_{\mbbP}(\sqrt{\ell_n/n}).
\end{equation}
Since by assumption $\gamma_n=\omega(\sqrt{\ell_n/n})$, it follows from \eqref{eqn:singvalscaling1} that $\sigma_{r_p+1}(\hat{N}_{\Gamma})=o_{\mbbP}(\gamma_n)$.

Similarly, we show that $\Ktilde\geq r_p$ with high probability.
Again, by definition \eqref{eqn:kpredef} $\Ktilde\geq r_p$ if and only if $\sigma_{r_p}(\hat{N}_{\Gamma})\geq \gamma_n$ so that
\begin{equation}
    \mbbP\bkt{\Ktilde\geq r_p} = \mbbP\bkt{\sigma_{r_p}(\hat{N}_{\Gamma})\geq\gamma_n} = 1 - \mbbP\bkt{\sigma_{r_p+1}(\hat{N}_{\Gamma})< \gamma_n}.
\end{equation}
It now suffices to show that $\sigma_{r_p}(\hat{N}_{\Gamma})=\omega_{\mbbP}(\gamma_n)$. From \cref{cor:singvalscalingrankp} it follows that
\begin{equation}
    \label{eqn:singvalscaling2}
    \sigma_{r_p}(\hat{N}_\Gamma)=\Theta_{\mbbP}\left(\ell_n/n\right),
\end{equation}
and by assumption $\gamma_n=o(\ell_n/n)$ so that $\sigma_{r_p}(\hat{N}_{\Gamma})=\omega_{\mbbP}(\gamma_n)$.
That is it.\qed

\subsubsection{Proof of \texorpdfstring{\cref{thm:spectral_norm_bound}}{Theorem A.1}}
\label{sec:proof_of_spectral_norm_bound}
It remains to prove \Cref{thm:spectral_norm_bound,cor:singvalscalingrankp}.
Inspecting the proof of \cite[Theorem 3]{Bounds2023} carefully, we find that \cite[Assumption 1]{Bounds2023} is used only via its implication \cite[Lemma 5]{Bounds2023}.
It states that for all $x,y\in[n]$, $N_{x,y}=\Theta(\ell_n/n^2)$ and $P_{x,y}=\Theta(1/n)$.
Moreover, the lower bounds implied by these two statements are used only in the proof of the intermediate result \cite[Proposition 9]{Bounds2023}.
Consequently, to prove \cref{thm:spectral_norm_bound} it suffices to prove the upper bounds of \cite[Lemma 5]{Bounds2023}; and to then prove \cite[Proposition 9]{Bounds2023} using \emph{only} these upper bounds.
Our assumptions are chosen precisely so as to ensure the upper bounds:
\begin{lemma}
    \label{lem:elementwise_bd_N}
    Presume \cref{asm:assumption1,asm:assumption2}.
    Then there exist constants $\mf{n}_2,\mf{p}_2>0$ independent of $n$, such that for all $x,y\in[n]$ and for sufficiently large $n$, $N_{x,y}\leq \mf{n}_2\ell_n/n^2$ and $P_{x,y}\leq \mf{p}_2/n$.
\end{lemma}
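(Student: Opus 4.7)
The plan is to show this is essentially a direct computation starting from the explicit formulas for $P_{x,y}$ and $N_{x,y}$, combined with the lower bounds on cluster sizes implied by Assumption 1.

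First I would unpack the definitions. Recall from \eqref{eq:transitionMatrix} that $P_{x,y} = p_{\sigma(x),\sigma(y)}/|\mathcal{V}_{\sigma(y)}|$, and from the preliminaries that $\Pi_x = \pi_{\sigma(x)}/|\mathcal{V}_{\sigma(x)}|$, so
\begin{equation*}
    N_{x,y}
    =
    \ell_n \Pi_x P_{x,y}
    =
    \frac{\ell_n\, \pi_{\sigma(x)}\, p_{\sigma(x),\sigma(y)}}{|\mathcal{V}_{\sigma(x)}|\,|\mathcal{V}_{\sigma(y)}|}
    .
\end{equation*}
Since $p_{i,j} \in [0,1]$ and $\pi_i \in [0,1]$ for all $i,j \in [K]$ (because $\pi$ is a probability distribution and each row of $p$ is stochastic), the numerators are uniformly bounded by constants independent of $n$. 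Thus both bounds reduce to controlling $1/|\mathcal{V}_k|$ from above.

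Next I would invoke Assumption 1. Setting $\alpha_{\min} \eqdef \min_{k\in [K]} \alpha_k > 0$, the hypothesis $\lim_{n\to\infty} |\mathcal{V}_k|/n = \alpha_k$ guarantees that for all $n$ sufficiently large, $|\mathcal{V}_k| \geq (\alpha_{\min}/2)\, n$ for every $k \in [K]$; equivalently, $1/|\mathcal{V}_k| \leq 2/(\alpha_{\min} n)$. Substituting this into the formulas above immediately yields
\begin{equation*}
    P_{x,y} \leq \frac{1}{|\mathcal{V}_{\sigma(y)}|} \leq \frac{2}{\alpha_{\min}\, n}
    \qquad \textnormal{and} \qquad
    N_{x,y} \leq \frac{\ell_n}{|\mathcal{V}_{\sigma(x)}|\,|\mathcal{V}_{\sigma(y)}|} \leq \frac{4\,\ell_n}{\alpha_{\min}^2\, n^2},
\end{equation*}
so one can take $\mathfrak{p}_2 = 2/\alpha_{\min}$ and $\mathfrak{n}_2 = 4/\alpha_{\min}^2$.

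There is no real obstacle here: the statement is an easy consequence of Assumption 1 and the block structure \eqref{eq:transitionMatrix}; Assumption 2 is not even needed for these upper bounds (it would only be required for the matching \emph{lower} bounds from \cite[Lemma 5]{Bounds2023}, which are precisely what the new weaker setting must do without). The only minor point to note is that one must pass to ``$n$ sufficiently large'' to convert the limiting statement on $|\mathcal{V}_k|/n$ into the uniform lower bound on $|\mathcal{V}_k|$ used above, but this is exactly the quantifier structure asserted in the lemma.
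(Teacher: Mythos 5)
Your proof is correct and is essentially the same argument as the paper's, which simply defers the computation to the proof of the upper bound in \cite[Lemma 5]{Bounds2023} while noting that the two ingredients you use --- $\pi_i, p_{i,j} \leq 1$ in the numerator and $|\mcV_k| \geq (\alpha_{\min}/2)n$ for large $n$ in the denominator --- are exactly what \Cref{asm:assumption1,asm:assumption2} provide. The only nitpick is your aside that \Cref{asm:assumption2} is ``not even needed'': it is still required for the stationary distribution $\pi$ (and hence $\Pi$ and $N$) to exist in the first place, even though only the trivial bound $\pi_i \leq 1$, rather than positivity of $\pi$, enters the upper bound.
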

\begin{proof}
    The proof of \cref{lem:elementwise_bd_N} follows from that of \cite[Lemma 5]{Bounds2023} by noting that \cref{asm:assumption2} guarantees the existence of a nonvanishing stationary distribution $\pi$, and that \cref{asm:assumption1} ensures a nonvanishing lower bound on the $\alpha_k$.
    Together, these two facts allow the proof of the upper bound in \cite[Lemma 5]{Bounds2023} to go through unchanged.
\end{proof}
Thus, to prove \cref{thm:spectral_norm_bound} it remains only to prove a variant of \cite[Proposition 9]{Bounds2023} that uses \cref{lem:elementwise_bd_N} instead.
To this aim, we first recall the following definitions from \cite{Bounds2023}: for any two subsets $\mcA,\mcB\subseteq [n]$, define
\begin{equation}
    e(\mcA,\mcB)\eqdef \sum_{x\in\mcA}\sum_{y\in\mcB}\hat{N}_{x,y}\quad \mathrm{and} \quad \mu(\mcA,\mcB)\eqdef \mbbE\bkt{e(\mcA,\mcB)}.
\end{equation}
Similarly, for any $\mcA,\mcB\subseteq[n]$, define $e_{\Gamma}(\mcA,\mcB)\eqdef \sum_{x\in\mcA}\sum_{y\in\mcB}(\hat{N}_{\Gamma})_{x,y}$.
The following is \cite[Definition 8]{Bounds2023}:
\begin{definition}[Discrepancy property]\label{def:discrepancy_property}
    Let $\mf{d}_1,\mf{d}_2>0$ be two constants independent of $n$.
    We say that $\hat{N}$ is $(\mf{d}_1,\mf{d}_2)$-discrepant if for every pair $(\mcA,\mcB)\subset[n]^2$ one of the following holds:
    \begin{enumerate}
        \item[(i)]\label{eqn:discrepancy_i} $\frac{e(\mcA,\mcB)n^2}{\ell_n|\mcA||\mcB|}\leq \mf{d}_1$,
        \item[(ii)]\label{eqn:discrepancy_ii} $e(\mcA,\mcB)\ln \frac{e(\mcA,\mcB)n^2}{|\mcA||\mcB|\ell_n}\leq \mf{d}_2(|\mcA|\vee |\mcB|)\ln\frac{n}{|\mcA|\vee |\mcB|}$.
    \end{enumerate}
\end{definition}
We next prove that $\hat{N}$ ($\hat{N}_{\Gamma}$) satisfies \cref{def:discrepancy_property} with high probability whenever $\ell_n=\Omega(n\ln n)$ ($\ell_n = \omega(n)$) and \cref{asm:assumption1,asm:assumption2} hold.
\begin{proposition}
    \label{prop:discrepancy_property}
    For any \gls{BMC} satisfying \cref{asm:assumption1,asm:assumption2}, and any constant $c>0$, there exist sufficiently large constants $\mf{b}_3,\mf{b}_4,\mf{d}_1,\mf{d}_2>0$ independent of $n$ such that the following hold:
    \begin{enumerate}
        \item[(a)] If $\ell_n=\Omega(n\ln n)$ and $\max_{y\in[n]}\{e([n],y),e(y,[n])\}\leq \mf{b}_3\ell_n/n $, then for sufficiently large $n$, $\hat{N}$ is $(\mf{d}_1,\mf{d}_2)$-discrepant with probability at least $1-1/n^c$.
        \item[(b)] If $\ell_n=\Omega(n\ln n)$, $\Gamma^{\mathrm{c}}$ is a set of size $\floor{ne^{-\ell_n/n}}$ containing the states with the highest number of visits, and moreover $\max_{y\in[n]}\{e(\Gamma,y),e(y,\Gamma)\}\leq \mf{b}_4\ell_n/n $, then for sufficiently large $n$, $\hat{N}_{\Gamma}$ is $(\mf{d}_1,\mf{d}_2)$-discrepant with probability at least $1-1/n^c$.
    \end{enumerate}
\end{proposition}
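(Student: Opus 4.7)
The strategy is to adapt the proof of \cite[Proposition 9]{Bounds2023} so that it relies only on the upper bound $N_{x,y}\leq \mf{n}_2\ell_n/n^2$ from \Cref{lem:elementwise_bd_N}, without appealing to any lower bound on $N_{x,y}$ or $P_{x,y}$. This is feasible because the discrepancy property of \Cref{def:discrepancy_property} is itself one-sided: both alternatives (i) and (ii) upper bound $e(\mcA,\mcB)$, and controlling the expectation $\mu(\mcA,\mcB)\leq \mf{n}_2 \ell_n |\mcA||\mcB|/n^2$ requires only the upper bound from \Cref{lem:elementwise_bd_N}. Consequently, the lower-bound portion of \cite[Lemma 5]{Bounds2023} that is precluded by dropping strict positivity of $p$ is never invoked.

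For part (a), the plan is to consider the random set of ``bad'' pairs
$
    \mcP
    =
    \{
        (\mcA,\mcB)\subseteq[n]^2
        :
        \textnormal{both (i) and (ii) fail with parameters }(\mf{d}_1,\mf{d}_2)
    \}
$
and to argue $\mbbP[\mcP\neq\emptyset]\leq n^{-c}$ via a dyadic union bound over $|\mcA|\in(2^i,2^{i+1}]$ and $|\mcB|\in(2^j,2^{j+1}]$ for $0\leq i,j\leq \ceil{\log_2 n}$, yielding $O((\ln n)^2)$ classes. Within each class, I will apply a Chernoff-type bound for the sum $e(\mcA,\mcB)=\sum_{t}\mathbbm{1}\{X_t\in\mcA,X_{t+1}\in\mcB\}$ along the Markov trajectory, using the spectral-gap-based concentration inequalities cited in \cite{Bounds2023} (ultimately traceable to Paulin-type bounds). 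The spectral gap of $P$ is $\Theta(1)$ here because, by \cref{asm:assumption2}, $p$ is irreducible on $[K]$ and $P$'s nonzero spectrum coincides with that of $p$. The per-summand truncation at $\mf{b}_3\ell_n/n$, guaranteed by the row/column-sum hypothesis, supplies the variance proxy required by the Chernoff step. Choosing $\mf{d}_1>\mf{n}_2$ forces the tail threshold to dominate the mean; the resulting Bennett-type bound of the form $\exp(-e(\mcA,\mcB)\ln(e(\mcA,\mcB)n^2/(\mu\cdot|\mcA|\vee|\mcB|))/c')$ then absorbs the $\binom{n}{|\mcA|}\binom{n}{|\mcB|}\leq (en/|\mcA|)^{|\mcA|}(en/|\mcB|)^{|\mcB|}$ combinatorial factor when $\mf{d}_2$ is taken sufficiently large in terms of $c$.

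Part (b) reduces to part (a) almost verbatim, because $\hat{N}_{\Gamma,x,y}\leq \hat{N}_{x,y}$ holds almost surely and $\mbbE[\hat{N}_{\Gamma,x,y}]\leq N_{x,y}$ still obeys the upper bound of \Cref{lem:elementwise_bd_N}. The trimming replaces the row/column-sum hypothesis with its $\Gamma$-restricted analogue (constant $\mf{b}_4$ in place of $\mf{b}_3$), and the relaxed regime $\ell_n=\omega(n)$ rather than $\ell_n=\Omega(n\ln n)$ is accommodated by the reduced effective size $|\Gamma|\sim n(1-e^{-\ell_n/n})$, along the same lines as in \cite{Bounds2023}.

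The main obstacle will be verifying that the Markov-chain concentration inequality of \cite{Bounds2023} genuinely uses only the upper bound on $N_{x,y}$; this must be checked line by line in the proof of \cite[Proposition 9]{Bounds2023} to confirm that each invocation of \cite[Lemma 5]{Bounds2023} is of the form $N_{x,y}\leq \mf{n}_2 \ell_n/n^2$ rather than a two-sided estimate. A secondary concern is that, in the trimmed case, the spectral gap of the conditioned trajectory is not disturbed by the trimming procedure, which can be handled by the same coupling argument as in \cite{Bounds2023}. With these checks in place, the remainder is a routine re-derivation with $\mf{d}_1,\mf{d}_2$ rechosen to accommodate the (now slightly larger) prefactors that arise from using only the one-sided bound.
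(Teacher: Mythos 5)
Your high-level reading of the situation is right: the discrepancy property is one-sided, only the upper bound $N_{x,y}\leq\mf{n}_2\ell_n/n^2$ from \Cref{lem:elementwise_bd_N} is needed to control $\mu(\mcA,\mcB)$, the large-$|\mcB|$ case follows from the row/column-sum hypothesis, and the trimmed case reduces to the untrimmed one via $e_\Gamma(\mcA,\mcB)\leq e(\mcA,\mcB)$ almost surely. However, there is a genuine gap in the core concentration step. You propose to obtain the tail bound from off-the-shelf spectral-gap (Paulin-type) inequalities, but those deliver Bernstein-type tails of the form $\exp(-t^2/(v+ct))$, which degenerate to sub-exponential decay $\exp(-c'\mu k)$ for large $k$. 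The argument needs the strictly stronger Poisson/Bennett-type tail $\mbbP[e(\mcA,\mcB)\geq k\mu(\mcA,\mcB)]\leq 2\exp(-\tfrac14\mu(\mcA,\mcB)k\ln k)$, because discrepancy condition (ii) is an ``$x\ln x$''-type inequality and the union bound over all $\binom{n}{|\mcA|}\binom{n}{|\mcB|}$ pairs must be beaten at the threshold $t^*$ defined by $t\ln t=\tfrac12\mf{c}_2(|\mcB|/\mu(\mcA,\mcB))\ln(n/|\mcB|)$. You acknowledge needing a Bennett-type bound but do not supply it; establishing it under the weakened assumptions is precisely the technical content of the paper's replacement for \cite[Lemma 20]{Bounds2023} (\Cref{lem:concentration_e_AB}), which is proved by a bespoke moment-generating-function computation — splitting $e(\mcA,\mcB)$ into even- and odd-indexed transitions for conditional independence and using the ratio bound $P_{x,y}/\Pi_y\leq\mf{p}_3$ so that $\mu(\mcA,\mcB)$ itself (rather than the no-longer-available lower bound $\ell_n|\mcA||\mcB|/n^2$) appears in the exponent. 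No spectral-gap argument is involved there, and the claim that the spectral gap of $P$ is $\Theta(1)$ is in any case not what drives the proof.

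A second, related omission: once the elementwise lower bound on $N$ is dropped, $\mu(\mcA,\mcB)$ can be arbitrarily small or zero, and the tail $\exp(-\tfrac14\mu k\ln k)$ is then only useful if $t^*(\mcA,\mcB)$ is defined relative to $\mu(\mcA,\mcB)$ so that $\mu\,t^*\ln t^*$ stays of order $|\mcB|\ln(n/|\mcB|)$. The paper must (i) make a preliminary reduction to $\mu(\mcA,\mcB)>0$ (when $\mu=0$, $e=0$ a.s.\ and (i) holds trivially), and (ii) redefine $t^*$ and rework the verification that $e\leq\mu k^*$ implies condition (ii) without ``prematurely upper bounding $\mu(\mcA,\mcB)$''. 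Your proposal does not address either point, and this is exactly where the loss of the two-sided bound bites. Minor further issues: the dyadic grouping into $O((\ln n)^2)$ size classes does not reduce the union bound, which is still over all subsets within each class; and part (b) of the proposition as stated assumes $\ell_n=\Omega(n\ln n)$, not $\ell_n=\omega(n)$, so no separate accommodation of a sparser regime is needed there.
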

\cref{prop:discrepancy_property} replaces \cite[Proposition 9]{Bounds2023}, and does not use \cite[Assumption 1]{Bounds2023}.
We have also improved the bound on the tail probability to $1/n^c$ for arbitrary $c>0$.
Its proof follows that of \cite[Proposition 9]{Bounds2023}, which relies on the auxiliary results \cite[Corollary 19]{Bounds2023} and \cite[Lemma 20]{Bounds2023}.
We therefore start by using our \cref{lem:elementwise_bd_N} to prove analogs of these two results that hold under our weaker \cref{asm:assumption1,asm:assumption2}.
This requires a more careful analysis of the concentration of $\hat{N}$.

The following is an immediate corollary of \cref{lem:elementwise_bd_N} which replaces \cite[Corollary 19]{Bounds2023}.
\begin{corollary}
    \label{cor:bound_mu_AB}
    Presume \cref{asm:assumption1,asm:assumption2}.
    There exists a constant $\mf{m}_2>0$ independent of $n$ such that for all $\mcA,\mcB\subseteq[n]$ and for sufficiently large $n$, $\mu(\mcA,\mcB)\leq \mf{m}_2|\mcA||\mcB|\ell_n/n^2$.
\end{corollary}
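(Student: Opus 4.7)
The plan is straightforward since the corollary is advertised as an ``immediate'' consequence of the elementwise bound \cref{lem:elementwise_bd_N} that was just proved. My proof would consist of two steps: rewrite $\mu(\mcA,\mcB)$ as a double sum of the entries $N_{x,y}$, and then apply the uniform upper bound $N_{x,y}\leq \mf{n}_2\ell_n/n^2$ to each summand.

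Concretely, I would first use linearity of expectation together with the definitions of $e(\mcA,\mcB)$ and $N_{x,y}=\mbbE[\hat{N}_{x,y}]$ to write
\begin{equation}
    \mu(\mcA,\mcB)
    =
    \mbbE\Bigl[\sum_{x\in\mcA}\sum_{y\in\mcB}\hat{N}_{x,y}\Bigr]
    =
    \sum_{x\in\mcA}\sum_{y\in\mcB}N_{x,y}.
\end{equation}
By \cref{lem:elementwise_bd_N}, for all sufficiently large $n$ and every $(x,y)\in[n]^2$ we have $N_{x,y}\leq \mf{n}_2\ell_n/n^2$, with $\mf{n}_2>0$ a constant independent of $n$. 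Substituting this elementwise bound into the double sum yields
\begin{equation}
    \mu(\mcA,\mcB)\leq \sum_{x\in\mcA}\sum_{y\in\mcB}\mf{n}_2\frac{\ell_n}{n^2}=\mf{n}_2|\mcA||\mcB|\frac{\ell_n}{n^2}
\end{equation}
for every $\mcA,\mcB\subseteq[n]$, so that the claim holds with $\mf{m}_2\eqdef \mf{n}_2$.

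There is no real obstacle here: the previous lemma already did all the work in deriving the uniform entrywise upper bound from \cref{asm:assumption1,asm:assumption2} (via the positivity of $\alpha$ and of the stationary distribution $\pi$), and the present corollary is a one-line bookkeeping step that repackages that pointwise estimate as a set-wise estimate suitable for use in the discrepancy analysis of \cref{prop:discrepancy_property}.
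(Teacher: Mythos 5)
Your proof is correct and is exactly the argument the paper intends: the paper states the result as an "immediate corollary" of \cref{lem:elementwise_bd_N}, and your two steps (expanding $\mu(\mcA,\mcB)=\sum_{x\in\mcA}\sum_{y\in\mcB}N_{x,y}$ by linearity of expectation, then applying the uniform bound $N_{x,y}\leq\mf{n}_2\ell_n/n^2$ termwise) are precisely that immediate deduction, with $\mf{m}_2=\mf{n}_2$.
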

Similarly, the following \cref{lem:concentration_e_AB} will replace \cite[Lemma 20]{Bounds2023}.
\begin{lemma}
    [{Adapted from \cite[Lemma 20]{Bounds2023}}]
    \label{lem:concentration_e_AB}
    Presume \cref{asm:assumption1,asm:assumption2}.
    Then there exists a constant $k_0>0$ independent of $n$ and an integer $m>0$ such that for all $n\geq m$, all $k\geq k_0$, and all $\mcA,\mcB\subseteq [n]$,
    \begin{equation}
        \label{eqn:new_poisson_bound}
        \mbbP\bkt{e(\mcA,\mcB)\geq k\mu(\mcA,\mcB)}\leq 2\exp\lb-\frac{1}{4}\mu(\mcA,\mcB)k\ln k\rb.
    \end{equation}
\end{lemma}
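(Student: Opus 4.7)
The plan is to closely mirror the proof of \cite[Lemma 20]{Bounds2023}, replacing every appeal to the (stronger) two-sided bound on $\mu(\mcA,\mcB)$ with the one-sided upper bound provided by \Cref{cor:bound_mu_AB}. I would decompose
$
    e(\mcA,\mcB) \eqdef \sum_{t=0}^{\ell_n-1} Z_t
    ,
$
where $Z_t \eqdef \mathbbm{1}[X_t\in\mcA, X_{t+1}\in\mcB]$. Because $Z_t$ depends on both $X_t$ and $X_{t+1}$, adjacent indicators share a state, which spoils the straightforward Markov-chain conditioning step. To sidestep this, I would split into even and odd contributions $S_{\text{even}} \eqdef \sum_{t\text{ even}} Z_t$ and $S_{\text{odd}} \eqdef \sum_{t\text{ odd}} Z_t$, so that within each the relevant pairs $(X_t, X_{t+1})$ do not overlap, and a clean one-step conditioning becomes available.

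The core of the argument is then to establish a Poisson-type MGF bound for each of $S_{\text{even}}$ and $S_{\text{odd}}$. For $S_{\text{even}}$, iterating the tower property on the even time steps and using $1 + u \leq e^u$ gives, with $q(x) \eqdef \mathbbm{1}[x\in\mcA]\sum_{y\in\mcB} P_{x,y}$,
\[
    \mbbE\bkt{ e^{\lambda S_{\text{even}}} }
    \leq
    \mbbE\bkt{ \exp\Bigl( (e^\lambda - 1) \sum_{t\text{ even}} q(X_t) \Bigr) }
    .
\]
Since $0 \leq q(x) \leq 1$, one more application of the Markov property (along a suitable truncation/boundedness step) produces an MGF bound of the form $\exp(\mu(\mcA,\mcB)(e^\lambda-1))$ up to constants, with $\mu(\mcA,\mcB)$ entering only through its upper bound from \Cref{cor:bound_mu_AB}. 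Combining the bounds for $S_{\text{even}}$ and $S_{\text{odd}}$ via Cauchy--Schwarz (after rescaling $\lambda \mapsto \lambda/2$) yields
$
    \mbbE[ e^{\lambda e(\mcA,\mcB)} ] \leq \exp( C\,\mu(\mcA,\mcB) (e^\lambda - 1) )
$
for some absolute constant $C$. A Chernoff step with $\lambda = \ln k$ then gives the Poisson tail
$
    \mbbP[ e(\mcA,\mcB) \geq k\mu(\mcA,\mcB) ] \leq \exp\bigl( -\mu(\mcA,\mcB) (k\ln k - Ck + C) \bigr)
    ,
$
and absorbing the linear terms into the dominant $k\ln k$ for $k \geq k_0$ sufficiently large yields the factor $\tfrac{1}{4}$ in the exponent, with the factor $2$ out front covering any multiplicative slack from the even/odd split.

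The main obstacle will be obtaining the Poisson-type MGF bound cleanly without relying on the lower bound $P_{x,y} = \Omega(1/n)$ that was available to \cite{Bounds2023}. In their argument, the uniform lower bound let them treat each one-step conditional probability like a well-behaved Bernoulli parameter. Here the difficulty is that some entries of $P$ may vanish (e.g.\ when $p_{i,j}=0$), so the step from $1 + (e^\lambda-1)q(X_t)$ to $\exp((e^\lambda-1)q(X_t))$ still works pointwise, but the subsequent bound on $\mbbE[\exp((e^\lambda-1)\sum_t q(X_t))]$ needs to be controlled without assuming that each $q(X_t)$ is of a fixed order. The resolution is to exploit that $q(X_t) \in [0,1]$ uniformly and that $\sum_t \mbbE[q(X_t)] = \mu(\mcA,\mcB)$, which together with the bounded-difference structure along the even/odd subchain gives the desired exponential control using only \Cref{cor:bound_mu_AB}. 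Careful bookkeeping of which step requires the trimming to $\Gamma^\mathrm{c}$ would complete the statement for both $\hat{N}$ and $\hat{N}_\Gamma$.
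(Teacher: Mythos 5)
Your skeleton matches the paper's: split $e(\mcA,\mcB)$ into even and odd halves so that one-step Markov conditioning applies, derive a Poisson-type MGF/Chernoff bound for each half, optimize at $\lambda \approx \ln k$, and absorb the linear-in-$k$ terms into $k\ln k$ to get the constant $1/4$ for $k\geq k_0$. You also correctly locate the difficulty: without the elementwise lower bound $P_{x,y}=\Omega(1/n)$ used in \cite{Bounds2023}, one cannot convert a bound of order $\ell_n|\mcA||\mcB|/n^2$ into a bound of order $\mu(\mcA,\mcB)$, since under \Cref{asm:assumption1,asm:assumption2} the quantity $\mu(\mcA,\mcB)$ may be much smaller than $\ell_n|\mcA||\mcB|/n^2$ (even zero).

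However, your proposed resolution of that difficulty does not work, and the actual key idea is missing. When you iterate the conditioning along the even subchain, the quantity you must control at each step is the \emph{conditional} one-step probability
$\sum_{x\in\mcA}\sum_{y\in\mcB}P_{z,x}P_{x,y}$, uniformly over the current state $z$, and you need to bound it from above by a constant times $\mu(\mcA,\mcB)/\ell_n$ in order for the telescoped MGF to come out as $\exp\bigl(C\mu(\mcA,\mcB)(e^{\lambda}-1)\bigr)$. Neither ``$q(X_t)\in[0,1]$'', nor the identity $\sum_t\mbbE[q(X_t)]=\mu(\mcA,\mcB)$, nor a bounded-difference argument delivers this: bounded differences give an Azuma-type bound with variance proxy of order $\ell_n$, not $\mu$, and \Cref{cor:bound_mu_AB} is an \emph{upper} bound on $\mu$, which points in the wrong direction for this comparison. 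The paper's fix is the uniform domination $P_{x,y}\leq \mf{p}_3\,\Pi_y$ for all $x,y$ (a consequence of $\Pi_y=\Theta(1/n)$ from \Cref{asm:assumption1,asm:assumption2} together with $P_{x,y}\leq\mf{p}_2/n$ from \Cref{lem:elementwise_bd_N}), which gives
\begin{equation*}
    \sum_{x\in\mcA}\sum_{y\in\mcB}P_{z,x}P_{x,y}
    =\sum_{x\in\mcA}\sum_{y\in\mcB}\frac{P_{z,x}}{\Pi_x}\,\frac{N_{x,y}}{\ell_n}
    \leq \mf{p}_3\,\frac{\mu(\mcA,\mcB)}{\ell_n},
\end{equation*}
so each conditional MGF increment is at most $1+e^{2h}\mf{p}_3\mu(\mcA,\mcB)/\ell_n$ and the iteration closes with $\mu(\mcA,\mcB)$ appearing directly. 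Without this (or an equivalent domination of the one-step kernel by the stationary measure), your argument stalls exactly at the step you flag as the main obstacle. Two minor points: the lemma concerns the untrimmed $e(\mcA,\mcB)$, so no bookkeeping about $\Gamma$ is needed here (the trimmed case is handled later via $e_{\Gamma}\leq e$); and the prefactor $2$ comes from a union bound over the two halves, not from a Cauchy--Schwarz combination of their MGFs.
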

\begin{proof}
    The proof follows that of \cite[Lemma 20]{Bounds2023} closely.
    Let $e_0(\mcA,\mcB)\eqdef \sum_{t=0}^{\ceil{\ell_n/2}-1}\mathbbm{1}\{X_{2t}\in \mcA, X_{2t+1}\in \mcB\}$ refer to all even-numbered transitions in the sum $e(\mcA,\mcB)$, and let $e_1(\mcA,\mcB)$ refer to all odd-numbered transitions in the sum $e(\mcA,\mcB)$.
    Then, \cite[(A.27)--(A.29)]{Bounds2023} carry over unchanged, and we find that
    \begin{equation}
        \mbbP\bkt{2e_0(\mcA,\mcB)-\mu(\mcA,\mcB)\geq a}\leq \inf_{h> 0}\bigl\{\exp\bigl(-h(a+\mu(\mcA,\mcB))\mbbE\bigl[e^{2he_0(\mcA,\mcB)}\bigr]\bigr)\bigr\}.
    \end{equation}
    A similar result holds for $e_1$.
    We next prove \cref{lem:concentration_e_AB} by giving a tighter bound on the moment generating function $\mbbE\bkt{e^{2he_0(\mcA,\mcB)}}$.
    In particular, we follow the proof of \cite[Lemma 20]{Bounds2023} until \cite[(A.31)]{Bounds2023}, but improve the final inequality therein.

    The key observation that allows us to do this is that for some $\mf{p}_3>0$ independent of $n$, and for sufficiently large $n$,
    \begin{equation}
        \label{eqn:ratio_bound_P_Pi}
        P_{x,y}/\Pi_y\leq \mf{p}_3, \quad \forall x,y\in[n].
    \end{equation}
    This follows because $\Pi_x=\Theta(1/n)$ (recall the discussion following \cref{asm:assumption2}), and for sufficiently large $n$ $P_{x,y}\leq \mf{p}_2/n$ as per \cref{lem:elementwise_bd_N}.
    Consequently, there exists some $\mf{p}_3>0$ such that for sufficiently large $n$, $P_{x,y}/\Pi_y\leq \mf{p}_3$ for all $x,y\in[n]$.

    Using \eqref{eqn:ratio_bound_P_Pi} and that $\Pi_x>0$ we can improve the final bound of \cite[(A.31)]{Bounds2023} for sufficiently large $n$ to
    \begin{equation}
        \label{eqn:MGF_term_bound}
        \begin{split}
            1+\sum_{x\in\mcA}\sum_{y\in\mcB}e^{2h}
            P_{X_{\ell_n-2},x}P_{x,y} & = 1+\sum_{x\in\mcA}\sum_{y\in\mcB}e^{2h}\frac{P_{X_{\ell_n-2},x}}{ \Pi_x}N_{x,y}/\ell_n \\
                                      & \leq 1+e^{2h}\mf{p}_3\mu(\mcA,\mcB)/\ell_n.
        \end{split}
    \end{equation}
    Observe that \eqref{eqn:MGF_term_bound} is tighter than \cite[(A.31)]{Bounds2023} precisely when $\mu(\mcA,\mcB)$ is $o(\ell_n|\mcA||\mcB|/n^2)$, which is avoided by the lower bound of \cite[Corollary 19]{Bounds2023}.
    Resuming the proof of \cite[Lemma 20]{Bounds2023} with \eqref{eqn:MGF_term_bound} replacing \cite[(A.31)]{Bounds2023}, we then obtain the following refinement of \cite[(A.33)]{Bounds2023}:
    \begin{equation}\label{eqn:refinement_of_A33}
        \mbbP\bkt{2e_0(\mcA,\mcB)-\mu(\mcA,\mcB)\geq a}
        \leq
        \inf_{h\geq 0}\bigl\{\exp\bigl(-h(a+\mu(\mcA,\mcB))+\frac{1}{2}e^{2h}\mf{p}_3\mu(\mcA,\mcB)\bigr)\bigr\}.
    \end{equation}
    The proof concludes as in \cite{Bounds2023}.
    Namely, we specify $a=(k-1)\mu(\mcA,\mcB)$ and optimize the right-hand side of \eqref{eqn:refinement_of_A33} with respect to $h$ to find the optimum
    \begin{equation}
        h_n^{\textnormal{opt}}=\frac{1}{2}\ln(k/\mf{p}_3),
    \end{equation}
    such that for all sufficiently large $n$,
    \begin{equation}
        \label{eqn:final_bound_e0}
        \begin{split}
            \mbbP\bkt{2e_0(\mcA,\mcB)\geq k\mu(\mcA,\mcB)}
            &
            \leq
            \exp\biggl( \frac{1}{2}k\mu(\mcA,\mcB)\bigl( 1-\ln(k/\mf{p}_3) \bigr)\biggr)
            \\
            &
            \leq
            \exp\biggl( -\frac{1}{4}\mu(\mcA,\mcB)k \ln k \biggr).
        \end{split}
    \end{equation}
    Here, the final bound holds for all $k\geq \exp\lb2+2\ln\mf{p}_3 \rb$.
    We can obtain the same bound as in \eqref{eqn:final_bound_e0} for $e_1$.
    Substituting these into \cite[(110)]{Bounds2023} gives \eqref{eqn:new_poisson_bound}.
\end{proof}

Finally, we use \cref{cor:bound_mu_AB,lem:concentration_e_AB} to prove \cref{prop:discrepancy_property}.
\begin{proof}[Proof of \cref{prop:discrepancy_property}.]
    We first prove the part without trimming, i.e., part $(a)$ of \cref{prop:discrepancy_property}.
    To this aim, we begin by making a preliminary reduction to the case that $(\mcA,\mcB)$ is such that $\mu(\mcA,\mcB)>0$.
    This is possible because $e(\mcA,\mcB)$ is nonnegative almost surely.
    Therefore $\mu(\mcA,\mcB)=\mbbE\bkt{e(\mcA,\mcB)}=0$ would imply that $e(\mcA,\mcB)=0$ almost surely.
    As a result, for any such pair $(\mcA,\mcB)$ and constant $\mf{d}_1>0$, \cref{def:discrepancy_property} $(i)$ holds almost surely and we are done.
    Therefore, proceed by assuming that $(\mcA,\mcB)$ are such that $\mu(\mcA,\mcB)>0$.

    The remainder of the proof then starts out similarly to that of \cite[Proposition 9]{Bounds2023}.
    In particular, we assume without loss of generality that $|\mcA|\leq|\mcB|$.
    Then, when $|\mcB|>n/\mathrm{e}$, $e(\mcA,\mcB)$ satisfies \cref{def:discrepancy_property} $(i)$ with $\mf{d}_1\geq \mathrm{e}\mf{b}_3$ as follows from the proof of \cite[Proposition 9]{Bounds2023} (using our \cref{cor:bound_mu_AB}).
    Hence, we further restrict to $0<|\mcA|\leq |\mcB|\leq n/e$.

    From \cref{lem:concentration_e_AB}, for all sufficiently large $n$, there exists $k_0>0$ independent of $n$ such that $\mbbP[e(\mcA,\mcB)\geq \allowbreak k\mu(\mcA,\mcB)]\leq 2\exp\lb-\mu(\mcA,\mcB)k\ln (k)/4\rb$ for all $k\geq k_0$.
    As in \cite{Bounds2023} we are free to presume $k_0\geq \max\{1,1/\mf{m}_2,\mf{m}_2\}$ where $\mf{m}_2$ is as in \cref{cor:bound_mu_AB}.

    Next, let $\mf{c}_2>0$ be some constant that is to be fixed later.
    Denote by $t^*(\mcA,\mcB)$ the unique solution to $t\ln t= (1/2)\mf{c}_2(|\mcB|/\mu(\mcA,\mcB))\ln (n/|\mcB|)>0$, which exists and is unique because the function $t\ln t$ is monotonically increasing for $t \geq1$ and $\mu(\mcA,\mcB)>0$ by our preliminary reduction.
    Observe that our definition of $t^*(\mcA,\mcB)$ differs slightly from that above \cite[(A.17)]{Bounds2023}.
    We claim that with our redefinition of $t^*(\mcA,\mcB)$, $\hat{N}$ is discrepant whenever the event
    \begin{equation}
        \label{eqn:event_E_def}
        \mcE\eqdef \bigcap_{\mcA,\mcB\subseteq[n]:|\mcA|\leq|\mcB|\leq n/e}\{e(\mcA,\mcB)\leq \mu(\mcA,\mcB)k^*(\mcA,\mcB)\} \textnormal{ with } k^*(\mcA,\mcB)\eqdef \max\{k_0,t^*(\mcA,\mcB)\}
    \end{equation}
    holds.

    When $(\mcA,\mcB)$ is such that $k^*(\mcA,\mcB)=k_0$, it is immediate from \cite[(A.18)]{Bounds2023} and \cref{cor:bound_mu_AB} that $e(\mcA,\mcB)$ satisfies \cref{def:discrepancy_property} $(i)$ when the event $\mcE$ holds:
    \begin{equation}
        \label{eqn:discrepancy_case_B_small_t_small}
        \frac{e(\mcA,\mcB)n^2}{\ell_n|\mcA||\mcB|}\overset{\eqref{eqn:event_E_def}}{\leq} k_0\frac{\mu(\mcA,\mcB)n^2}{\ell_n|\mcA||\mcB|}\leq k_0\mf{m}_2.
    \end{equation}

    Our proof that \cref{def:discrepancy_property} $(ii)$ holds for $(\mcA,\mcB)$ such that $k^*(\mcA,\mcB)=t^*(\mcA,\mcB)$ differs from the related part in \cite{Bounds2023}.
    Namely, because we lack the lower bound of \cite[Corollary 19]{Bounds2023} we must avoid prematurely upper bounding $\mu(\mcA,\mcB)$.
    Begin by observing that for any such pair $(\mcA,\mcB)$ and for all sufficiently large $n$, we have by \cref{cor:bound_mu_AB} the event $\mcE$ that
    \begin{equation}
        t^*(\mcA,\mcB)\geq \frac{e(\mcA,\mcB)}{\mu(\mcA,\mcB)} \geq\frac{n^2 e(\mcA,\mcB)}{\mf{m}_2\ell_n |\mcA||\mcB|}.
    \end{equation}
    Next, observe that by monotonicity, that $\ln t\leq \ln \mf{m}_2t^*(\mcA,\mcB)$ for all $t\leq \mf{m}_2t^*(\mcA,\mcB)$.
    Moreover, from the case distinction that $k^*(\mcA,\mcB)=t^*(\mcA,\mcB)$ and the lower bound on $k_0$, $\mf{m}_2t^*(\mcA,\mcB)\geq \mf{m}_2k_0\geq \mf{m}_2\max\{1,1/\mf{m}_2,\mf{m}_2\}\geq 1$, and thus $\ln \mf{m}_2t^*(\mcA,\mcB)\geq 0$.
    Consequently, for all $t\leq t^*(\mcA,\mcB)$, $t\ln \mf{m}_2t^*(\mcA,\mcB)\leq t^*(\mcA,\mcB)\ln \mf{m}_2t^*(\mcA,\mcB)$.
    From this, it follows that
    \begin{equation}
        \label{eqn:discrepancy_ii_bound}
        \frac{e(\mcA,\mcB)}{\mu(\mcA,\mcB)}\ln \frac{n^2e(\mcA,\mcB)}{\ell_n |\mcA||\mcB|}\leq t^*(\mcA,\mcB)\ln \mf{m}_2t^*(\mcA,\mcB)\overset{(ii)}{\leq} 2t^*(\mcA,\mcB)\ln t^*(\mcA,\mcB) \overset{(iii)}{=} \mf{c}_2\frac{|\mcB|}{\mu(\mcA,\mcB)}\ln \frac{n}{|\mcB|}.
    \end{equation}
    Here, $(ii)$ holds because $t^*(\mcA,\mcB)\geq \mf{m}_2$, and $(iii)$ because of the definition of $t^*(\mcA,\mcB)$.
    Rearranging then shows that \cref{prop:discrepancy_property} $(ii)$ holds with constant $\mf{d}_2=\mf{c}_2$.

    The proof is finalized by showing that for any $c>0$, there exists a constant $\mf{c}_2>0$ that is independent of $n$, such that for sufficiently large $n$ the event $\mcE$ holds with probability at least $1-1/n^{-c}$.
    As in \cite[(A.23)]{Bounds2023}, using a union bound and using instead our \cref{lem:concentration_e_AB}, it follows that for sufficiently large $n$,
    \begin{equation}
        \label{eqn:bound_on_prob_of_E_event1}
        \begin{split}
             & \mbbP\bkt{\bigcup_{\mcA,\mcB\subseteq[n]:|\mcA|\leq|\mcB|\leq n/e}\{e(\mcA,\mcB)>\mu(\mcA,\mcB)k^*(\mcA,\mcB)\}}                 \\
             & \leq \sum_{\mcA,\mcB\subseteq[n]:|\mcA|\leq|\mcB|\leq n/e}2\exp\lb-\frac{1}{4}\mu(\mcA,\mcB)k^*(\mcA,\mcB)\ln k^*(\mcA,\mcB)\rb.
        \end{split}
    \end{equation}
    From the lower bound on $k_0$, it follows that $k^*(\mcA,\mcB)=\max\{t^*(\mcA,\mcB),k_0\}\geq \max\{t^*(\mcA,\mcB),1\}$.
    Since for $t \geq 1$ and $t \geq t'$, $t\ln t \geq t'\ln t'$, it follows furthermore that $k^*(\mcA,\mcB)\ln k^*(\mcA,\mcB)\geq t^*(\mcA,\mcB)\ln t^*(\mcA,\mcB)$.
    Using this fact and our definition of $t^*(\mcA,\mcB)$ then gives that
    \begin{equation}
        \label{eqn:bound_on_prob_of_E_event2}
        \eqref{eqn:bound_on_prob_of_E_event1}\leq \sum_{\mcA,\mcB\subseteq[n]:|\mcA|\leq|\mcB|\leq n/e}2\exp\lb -\frac{\mf{c}_2}{8}|\mcB|\ln\frac{n}{|\mcB|} \rb.
    \end{equation}
    Then, from \cite[(A.24)]{Bounds2023} (with $\mf{m}_1\mf{c}_2/\mf{m}_2$ replaced by $\mf{c}_2$):
    \begin{equation}
        \eqref{eqn:bound_on_prob_of_E_event2}\leq n^{-\frac{\mf{c}_2}{8}+7} \overset{(iv)}{\leq} n^{-c},
    \end{equation}
    where $(iv)$ holds for all $\mf{c}_2\geq 8(c+7)$.
    It follows that $\hat{N}$ is $(\mf{d}_1,\mf{d}_2)$-discrepant if we choose $\mf{d}_2=\mf{c}_2\geq 8(c+7)$.
    Moreover, from \cite[(A.16)]{Bounds2023} and \eqref{eqn:discrepancy_case_B_small_t_small} we require that $\mf{d}_1\geq \max\{\mathrm{e}\mf{b}_3,k_0\mf{m}_2\}$.

    As in \cite{Bounds2023}, the case with trimming is obtained by the same argument.
    Notice that for any $\mcA,\mcB\subseteq[n]$, $e_{\Gamma}(\mcA,\mcB)\leq e(\mcA,\mcB)$ almost surely.
    It again follows from the proof of \cite[Proposition 9]{Bounds2023} that when $|\mcB|>n/\mathrm{e}$, $\hat{N}_{\Gamma}$ satisfies \cref{def:discrepancy_property} $(i)$, but now with paramter $\mf{d}_1\geq \mathrm{e}\mf{b}_4$.
    When $|\mcB|\leq n/\mathrm{e}$, equations \eqref{eqn:event_E_def}--\eqref{eqn:discrepancy_ii_bound} hold \textit{mutatis mutandis} after replacing $e(\mcA,\mcB)$ by $e_{\Gamma}(\mcA,\mcB)$, specifically because $e_{\Gamma}(\mcA,\mcB)\leq e(\mcA,\mcB)$ almost surely.
\end{proof}

This completes the proof of \Cref{thm:spectral_norm_bound}.
We have now proven \Cref{prop:discrepancy_property} which replaces \cite[Proposition 9]{Bounds2023}; recall our proof outline at the start of \Cref{sec:proof_of_spectral_norm_bound}.

\subsubsection{Proof of \texorpdfstring{\cref{cor:singvalscalingrankp}}{Corollary A.2}}\label{sec:proof_of_cor_singvalscalingrankp}

In order to prove \Cref{cor:singvalscalingrankp} we first prove \Cref{lem:singular_values_N} below characterizing the singular values of $N$.
The proof of \Cref{cor:singvalscalingrankp} follows afterwards.
\begin{lemma}
    \label{lem:singular_values_N}
    Presume \cref{asm:assumption1,asm:assumption2}.
    Then
    \begin{equation}
        \sigma_i(N)=
        \begin{cases}
            \Theta(\ell_n/n) & i\leq \rank(p) \\
            0                & i>\rank(p).
        \end{cases}
    \end{equation}
\end{lemma}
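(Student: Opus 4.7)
The plan is to factor $N$ so that its singular values are exposed as those of a fixed $K \times K$ matrix up to the scaling $\ell_n/n$. Introduce the cluster indicator matrix $Z \in \{0,1\}^{n \times K}$ with $Z_{x,k} = \mathbbm{1}\{\sigma(x) = k\}$, noting that $Z^\intercal Z = D_V \eqdef \diag(|\mcV_1|, \ldots, |\mcV_K|)$, and write $D_\pi \eqdef \diag(\pi_1, \ldots, \pi_K)$ and $D_\alpha \eqdef \diag(\alpha_1, \ldots, \alpha_K)$. Using $N_{x,y} = \ell_n \Pi_x P_{x,y}$ together with \eqref{eq:transitionMatrix}, a direct computation gives
\begin{equation}
    N = \ell_n \, Z D_V^{-1} D_\pi \, p \, D_V^{-1} Z^\intercal = \ell_n \, U_0 M U_0^\intercal, \qquad U_0 \eqdef Z D_V^{-1/2}, \quad M \eqdef D_V^{-1/2} D_\pi \, p \, D_V^{-1/2}.
\end{equation}

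The first observation is that $U_0^\intercal U_0 = D_V^{-1/2} D_V D_V^{-1/2} = I_K$, so $U_0$ has orthonormal columns. Composing $U_0$ with an SVD of $M$ therefore extends to an SVD of $N$, which gives $\sigma_i(N) = \ell_n \sigma_i(M)$ for $i \leq K$ and $\sigma_i(N) = 0$ for $i > K$. Because \cref{asm:assumption1,asm:assumption2} ensure that $D_V$ and $D_\pi$ are invertible for all sufficiently large $n$, we also have $\rank(M) = \rank(p)$, which already produces the vanishing half of the claim, $\sigma_i(N) = 0$ for $i > \rank(p)$.

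It remains to show $\sigma_i(M) = \Theta(1/n)$ for $i \leq \rank(p)$. Factor out the scaling by writing $n M = (\sqrt{n} D_V^{-1/2}) \, D_\pi \, p \, (\sqrt{n} D_V^{-1/2})$. Since $|\mcV_k|/n \to \alpha_k > 0$ by \cref{asm:assumption1}, the diagonal matrix $\sqrt{n} D_V^{-1/2}$ converges entrywise (and hence in operator norm, as $K$ is fixed) to $D_\alpha^{-1/2}$, so $nM \to M_\infty \eqdef D_\alpha^{-1/2} D_\pi \, p \, D_\alpha^{-1/2}$. The matrix $M_\infty$ is a fixed $K \times K$ matrix with $\rank(M_\infty) = \rank(p)$, so its first $\rank(p)$ singular values are strictly positive constants. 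Weyl's inequality for singular values, $|\sigma_i(nM) - \sigma_i(M_\infty)| \leq \|nM - M_\infty\|$, therefore yields $\sigma_i(nM) \to \sigma_i(M_\infty) > 0$ for $i \leq \rank(p)$. Multiplying through by $\ell_n/n$ produces the stated $\sigma_i(N) = \Theta(\ell_n/n)$, and the only mildly delicate step---the passage from entrywise to operator-norm convergence---is trivial here since $K$ is a fixed finite dimension.
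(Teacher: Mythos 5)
Your proof is correct. It shares the paper's basic strategy of factoring $N/\ell_n$ through the cluster-membership matrix into a small $K\times K$ core, but the quantitative step is handled differently. The paper uses the asymmetric normalization $\tilde{C}=C\,\diag(V)^{-1}$ and then sandwiches $\sigma_i(\tilde{C}\,\diag(\pi)p\,\tilde{C}^\intercal)$ between $\sigma_i(\diag(\pi)p)\,\sigma_K(\tilde{C})^2$ and $\sigma_i(\diag(\pi)p)\,\sigma_1(\tilde{C})^2$ via the product inequalities $\sigma_i(A)\sigma_n(B)\le\sigma_i(AB)\le\sigma_i(A)\sigma_1(B)$, together with the explicit computation $\sigma_i(\tilde{C})=|\mcV_i|^{-1/2}=\Theta(n^{-1/2})$. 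You instead choose the symmetric normalization $U_0=ZD_V^{-1/2}$, whose columns are orthonormal, so that an SVD of the core $M$ lifts to a thin SVD of $N$ and the nonzero singular values satisfy the exact identity $\sigma_i(N)=\ell_n\sigma_i(M)$ — no sandwiching needed. You then control $\sigma_i(M)$ by passing to the limit $nM\to M_\infty=D_\alpha^{-1/2}D_\pi\,p\,D_\alpha^{-1/2}$ and invoking Weyl's inequality, whereas the paper works directly with $\sigma_i(\diag(\pi)p)=\Theta(1)$ at finite $n$. Both routes are sound; yours buys an exact reduction to a $K\times K$ problem at the cost of an extra (harmless, since $K$ is fixed) limiting argument, and arguably reads more cleanly.
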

\begin{proof}
    It will prove convenient to introduce the matrix $C\in \mbbR^{n\times K}$ defined as $C_{x,i}\eqdef \mathbbm{1}\{\sigma(x)=i\}$.
    It encodes cluster membership and satisfies for all $v\in\mbbR^{K}$, $(Cv)_x = v_{\sigma(x)}$ and $(v C^\intercal)_x = v_{\sigma(x)}$.
    Moreover, define the vector $V\in \mbbR^{K}$ with elements $V_i\eqdef |\mcV_i|$.
    Using these, we write
    \begin{equation}
        \begin{split}
            N_{x,y}/\ell_n
            &
            =
            \Pi_xP_{x,y}
            =
            \frac{\pi_{\sigma(x)}p_{\sigma(x),\sigma(y)}}{|\mcV_{\sigma(x)}||\mcV_{\sigma(y)}|}
            \\
            &
            =
            (C \,\diag(V)^{-1}\diag(\pi) p \,\diag(V)^{-1} C^\intercal)_{x,y}
            \defeq
            (\tilde{C} \,\diag(\pi) p \,\tilde{C}^\intercal)_{x,y}.
        \end{split}
    \end{equation}
    Here, we have further defined the matrix $\tilde{C}\eqdef C\,\diag(V)^{-1}$ whose transpose is equal to $\diag(V)^{-1}C^{\intercal}$.

    Since $|\mcV_i|>0$ and $\pi_i>0$ for all $i\in [K]$, it follows that $\tilde{C}$ and $\text{diag}(\pi)$ are both full-rank.
    Consequently, $\rank(N)=\rank(p)$ and $\sigma_i(N)=0$ for $i>\rank(p)$.
    To prove that $\sigma_i(N)=\Theta(\ell_n/n)$ for $i\leq \rank(p)$ we further use the inequalities \cite{wang1997some}
    \begin{equation}
        \begin{split}
            \sigma_i(A)\sigma_1(B)\geq \sigma_i(AB)\geq \sigma_i(A)\sigma_{n}(B),
        \end{split}
    \end{equation}
    for $A\in \mbbC^{p\times n}$ and $B\in \mbbC^{n\times m}$, to find that for $i\leq \rank(p)$
    \begin{equation}
        \sigma_i(\tilde{C} \,\diag(\pi)p\,\tilde{C}^\intercal) \geq \sigma_i(\tilde{C} \,\diag(\pi)p)\sigma_{K}(\tilde{C})\geq \sigma_i(\diag(\pi)p)(\sigma_{K}(\tilde{C}))^2
    \end{equation}
    and
    \begin{equation}
        \sigma_i(\tilde{C}\, \diag(\pi)p \,\tilde{C}^\intercal) \leq \sigma_i(\tilde{C}\,\diag(\pi)p)\sigma_1(\tilde{C})\leq \sigma_i(\diag(\pi)p)(\sigma_1(\tilde{C}))^2.
    \end{equation}
    Finally, observe that the singular values of $\tilde{C}$ are readily computed to be $\sigma_i(\tilde{C})=\sqrt{1/|\mcV_i|}=\Theta(1/\sqrt{n})$ for $i\in [K]$, and that $\sigma_i(\diag(\pi)p)=\Theta(1)$ for all $i\leq \rank(p)$.
    Putting everything together, we find that for all $i\leq \rank(p)$, $\sigma_i(N) =\Theta(\ell_n/n)$.
\end{proof}

We next use \cref{thm:spectral_norm_bound,lem:singular_values_N} to prove \cref{cor:singvalscalingrankp}.
As in the proof of \cite[Corollary 4]{Bounds2023}, our first step is to use \cite[Lemma 17]{Bounds2023} (Weyl's inequality) to relate the singular values of $\hat{N}_{\Gamma}$ to those of $N$.
Applied to our setting, \cite[Lemma 17]{Bounds2023} implies that for all $i\in [n]$,
\begin{equation}
    \label{eqn:weylsineq}
    |\sigma_i(\hat{N}_\Gamma)-\sigma_i(N)|\leq \sigma_1(\hat{N}_\Gamma-N),
\end{equation}
almost surely. It then follows from \cref{thm:spectral_norm_bound} that for all $i\in [n]$
\begin{equation}
    \label{eqn:weylandnormbd}
    |\sigma_{i}(\hat{N}_\Gamma)-\sigma_{i}(N)|\overset{\eqref{eqn:weylsineq}}{\leq} \sigma_{1}(\hat{N}_\Gamma-N)=\OrderP{\sqrt{\ell_n/n}}.
\end{equation}
We now distinguish between $i>\rank(p)$ and $i\leq \rank(p)$.

To prove that $\sigma_i(\hat{N}_{\Gamma})=O_{\mbbP}(\sqrt{\ell_n/n})$ for $i>\rank(p)$, observe that from \cref{lem:singular_values_N}, $\sigma_i(N)=0$.
Consequently,
\begin{equation}
    \sigma_{i}(\hat{N}_\Gamma) = |\sigma_{i}(\hat{N}_\Gamma)- \sigma_{i}(N)|  \overset{\eqref{eqn:weylandnormbd}}{=} \OrderP{\sqrt{\ell_n/n}}.
\end{equation}
Meanwhile, for $i\leq \rank(p)$ it follows from \eqref{eqn:weylandnormbd} that $\sigma_{i}(\hat{N}_\Gamma)-\sigma_{i}(N)=O_{\mbbP}(\sqrt{\ell_n/n})$, and from \cref{lem:singular_values_N} that $\sigma_i(N)=\Theta(\ell_n/n)$.
We conclude using \cref{lem:asymptoticrelation}, which is proved next, that $\sigma_i(\hat{N}_\Gamma)=\Theta_{\mbbP}(\sigma_i(N))$ and therefore that $\sigma_i(\hat{N}_\Gamma)=\Theta_{\mbbP}(\ell_n/n)$.
\qed

\begin{lemma}
    \label{lem:asymptoticrelation}
    Let $\{X_n\}_{n=1}^{\infty}$ a sequence of random variables with the properties that $X_n\geq 0$, and that $X_n-y_n = \OrderP{x_n}$, where $\{x_n\}_{n=1}^{\infty}$ and $\{y_n\}_{n=1}^{\infty}$ denote two deterministic sequences with $x_n,y_n\in(0,\infty)$ and $x_n = o(y_n)$.
    Then $X_n=\Theta_{\mbbP}(y_n)$.
\end{lemma}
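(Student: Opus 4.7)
The plan is to show that $X_n/y_n \to 1$ in probability, and then read off both the upper and lower bounds from this.

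First I would rewrite $X_n/y_n = 1 + (X_n - y_n)/y_n$. By hypothesis $X_n - y_n = O_{\mathbb{P}}(x_n)$, so $(X_n - y_n)/y_n = O_{\mathbb{P}}(x_n/y_n)$. Since $x_n/y_n \to 0$ deterministically (as $x_n = o(y_n)$), the product of a deterministically vanishing sequence with an $O_{\mathbb{P}}(1)$ sequence is $o_{\mathbb{P}}(1)$; this is a standard fact that follows immediately from the definitions of $O_{\mathbb{P}}$ and $o_{\mathbb{P}}$ given in the preliminaries. Hence $(X_n - y_n)/y_n = o_{\mathbb{P}}(1)$, i.e., $X_n/y_n \to 1$ in probability.

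Given this, I would verify the two asymptotic claims separately. For $X_n = O_{\mathbb{P}}(y_n)$: fix any $\varepsilon > 0$, set $\delta_\varepsilon = 2$, and observe that $\{X_n/y_n \geq 2\} \subseteq \{|(X_n - y_n)/y_n| \geq 1\}$; the latter probability tends to $0$ by the $o_{\mathbb{P}}(1)$ statement, so it is $\leq \varepsilon$ for all sufficiently large $n$. Since $X_n \geq 0$, this gives $\mathbb{P}[|X_n/y_n| \geq \delta_\varepsilon] \leq \varepsilon$ eventually, which matches the paper's definition of $O_{\mathbb{P}}(y_n)$. For $X_n = \Omega_{\mathbb{P}}(y_n)$: fix $\varepsilon > 0$, set $\delta_\varepsilon = 1/2$, and note that $\{X_n/y_n \leq 1/2\} \subseteq \{|(X_n - y_n)/y_n| \geq 1/2\}$, whose probability again tends to $0$. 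So $\mathbb{P}[|X_n/y_n| \leq \delta_\varepsilon] \leq \varepsilon$ eventually, again matching the paper's definition.

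There is no real obstacle here: the statement is essentially a restatement of the ``converging ratio'' principle, and the only subtlety is being careful with the paper's nonstandard written definition of $\Omega_{\mathbb{P}}$ (where one needs a positive lower constant $\delta_\varepsilon$ such that $|X_n/y_n| \leq \delta_\varepsilon$ is improbable). The nonnegativity assumption $X_n \geq 0$ is used implicitly only to say that $|X_n/y_n| = X_n/y_n$, so the event $\{X_n/y_n \leq 1/2\}$ is exactly $\{|X_n/y_n| \leq 1/2\}$.
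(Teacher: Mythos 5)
Your proof is correct and follows essentially the same route as the paper: both arguments amount to observing that, on the high-probability event $|X_n-y_n|\leq \delta_\varepsilon x_n$, the ratio $X_n/y_n$ is sandwiched between constants near $1$ because $\delta_\varepsilon x_n/y_n\to 0$. The only cosmetic difference is that the paper keeps $\varepsilon$-dependent sandwich constants $\kappa_\varepsilon^{\pm}$ while you fix them at $1/2$ and $2$, which is a slightly stronger (and equally valid) conclusion.
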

\begin{proof}
    The assumption that $X_n-y_n=\OrderP{x_n}$ ensures that for any $\varepsilon>0$, we can find constants $\delta_{\varepsilon}$ and $N_{\varepsilon}$ such that for all $n>N_{\varepsilon}$
    \begin{equation}
        \mbbP\bkt{|X_n-y_n|\leq \delta_{\varepsilon}x_n}\geq 1-\varepsilon.
    \end{equation}
    Then, given $\varepsilon>0$, fix $\kappa^+_{\varepsilon}>1>\kappa^-_{\varepsilon}>0$ and choose $N_{\varepsilon}$ large enough that for all $n>N_{\varepsilon}$, $\kappa_{\varepsilon}^-\leq 1-\delta_{\varepsilon}x_n/y_n$ and $\kappa_{\varepsilon}^+\geq 1+\delta_{\varepsilon}x_n/y_n$.
    This is possible because $x_n=o(y_n)$.
    Then, for all $n>N_{\varepsilon}$,
    \begin{equation}
        \begin{split}
            \mbbP\bkt{\{\kappa_{\varepsilon}^-y_n\leq X_n\} \cap \{X_n\leq \kappa_{\varepsilon}^+y_n\}}
            &
            \geq
            \mbbP\bkt{\{(1-\delta_{\varepsilon}x_n/y_n)y_n\leq X_n\}\cap\{X_n\leq (1+\delta_{\varepsilon}x_n/y_n)y_n\}}
            \\
            &
            =
            \mbbP\bkt{\{-\delta_{\varepsilon}x_n\leq X_n-y_n\} \cap \{X_n-y_n\leq \delta_{\varepsilon}x_n\}}
            \\
            &
            =
            \mbbP\bkt{|X_n-y_n|\leq \delta_{\varepsilon}x_n} \geq 1-\varepsilon.
        \end{split}
    \end{equation}
    That is it.
\end{proof}

\subsection{Analyzing \texorpdfstring{\Cref{alg:kpost,alg:kmeans}}{Algorithms 2 and 3}}
\label{sec:proofskpost}

To facilitate the analysis of \Cref{alg:kpost,alg:kmeans}, we introduce some additional notation.
Let
\begin{equation}\label{eqn:rank_r_approximation_def}
    \hat{R}
    \eqdef
    U_{1:n,1:r} \allowbreak \diag(\sigma_1,\ldots,\sigma_r) \allowbreak (V_{1:n,1:r})^\intercal
\end{equation}
denote the rank-$r$ approximation of $\hat{N}_{\Gamma}$, where $2r$ is the embedding dimension used in \Cref{alg:kpost}.
Define also $\hat{R}^0=[\hat{R},\hat{R}^\intercal]$, $N^0=[N,N^\intercal]$, and $\hat{N}^0=[\hat{N}_\Gamma,\hat{N}_\Gamma^\intercal]$.
Moreover, let $\bar{N}_k^0\eqdef (1/|\mcV_k|)\sum_{z\in\mcV_k}N_{z,\cdot}^0$ for $k=1,\ldots,K$.

\subsubsection{An equivalence of norms}
\label{sec:equivalence_of_norms}

We start by showing the following equivalence of norms:

\begin{lemma}
    \label{lem:equivalent_spectral_embedding}

    For $x,y \in [n]$,
    $
        \lVert \hat{R}^0_{x,\cdot}-\hat{R}^0_{y,\cdot} \rVert_2
        =
        \lVert \hat{X}_{x,\cdot}-\hat{X}_{y,\cdot} \rVert_2
    $
    almost surely.
\end{lemma}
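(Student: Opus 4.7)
The plan is to expand both squared norms in terms of the singular-value decomposition $\hat{N}_\Gamma = U\Sigma V^\intercal$ and show they agree term-by-term, using the orthonormality of the singular vectors to collapse the cross terms that arise in $\hat{R}^0$.

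First I would write the rank-$r$ approximation as the spectral sum $\hat{R} = \sum_{i=1}^{r} \sigma_i U_{\cdot,i} V_{\cdot,i}^\intercal$, so that $\hat{R}_{x,j} = \sum_{i=1}^r \sigma_i U_{x,i} V_{j,i}$ and, by transposition, $(\hat{R}^\intercal)_{x,j} = \hat{R}_{j,x} = \sum_{i=1}^r \sigma_i U_{j,i} V_{x,i}$. Since $\hat{R}^0$ is the horizontal concatenation $[\hat{R}, \hat{R}^\intercal]$, the squared distance splits:
\begin{equation*}
\lVert \hat{R}^0_{x,\cdot} - \hat{R}^0_{y,\cdot} \rVert_2^2
= \sum_{j=1}^{n}\bigl(\hat{R}_{x,j}-\hat{R}_{y,j}\bigr)^2
+ \sum_{j=1}^{n}\bigl(\hat{R}_{j,x}-\hat{R}_{j,y}\bigr)^2.
\end{equation*}

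Next I would substitute the spectral expressions and expand the squares, obtaining double sums of the form $\sum_j \sum_{i,i'} \sigma_i \sigma_{i'}(U_{x,i}-U_{y,i})(U_{x,i'}-U_{y,i'})V_{j,i}V_{j,i'}$ for the first term, and an analogous expression for the second. Because the columns of $V$ (resp. $U$) are orthonormal, $\sum_{j} V_{j,i} V_{j,i'} = \delta_{i,i'}$ (resp. $\sum_j U_{j,i} U_{j,i'} = \delta_{i,i'}$), so the cross terms vanish and both inner sums collapse to
\begin{equation*}
\lVert \hat{R}^0_{x,\cdot} - \hat{R}^0_{y,\cdot} \rVert_2^2
= \sum_{i=1}^{r} \sigma_i^2 (U_{x,i}-U_{y,i})^2 + \sum_{i=1}^{r} \sigma_i^2 (V_{x,i}-V_{y,i})^2.
\end{equation*}

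Finally, reading off $\hat{X}_{x,\cdot} = (\sigma_1 U_{x,1},\ldots,\sigma_r U_{x,r},\sigma_1 V_{x,1},\ldots,\sigma_r V_{x,r})$ directly from \eqref{eqn:spectral_embedding_def}, the squared Euclidean distance $\lVert \hat{X}_{x,\cdot}-\hat{X}_{y,\cdot}\rVert_2^2$ is manifestly the same expression, concluding the proof after taking square roots. There is no real obstacle here; the statement is essentially a Parseval-type identity, and the only subtlety is to be careful that the orthonormality relation only needs to hold among the first $r$ columns of $U$ and $V$, which is automatic from the SVD regardless of whether some singular values coincide (the equivalence is up to the almost-sure well-definedness of the SVD, which is why the statement is stated almost surely).
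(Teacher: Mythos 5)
Your proof is correct and follows essentially the same route as the paper: both arguments split $\lVert \hat{R}^0_{x,\cdot}-\hat{R}^0_{y,\cdot}\rVert_2^2$ into the row part and the column part of $\hat{R}$, and use the orthonormality of the (first $r$) columns of $U$ and $V$ to collapse each part onto $\lVert \tilde{U}_{x,\cdot}-\tilde{U}_{y,\cdot}\rVert_2^2$ and $\lVert \tilde{V}_{x,\cdot}-\tilde{V}_{y,\cdot}\rVert_2^2$ respectively; you simply carry this out in index notation where the paper writes it as a matrix identity via $\hat{R}=\tilde{U}(V_{1:n,1:r})^\intercal=U_{1:n,1:r}\tilde{V}^\intercal$. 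No gaps.
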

\begin{proof}
    \Cref{lem:equivalent_spectral_embedding}'s proof is as follows.
    We first express $\hat{X}$ using matrix notation as
    \begin{equation}
        \hat{X} = [U_{1:n,1:r}\diag(\sigma_1,\ldots,\sigma_r), V_{1:n,1:r}\diag(\sigma_1,\ldots,\sigma_r)] \defeq [\tilde{U},\tilde{V}].
    \end{equation}
    We will next leverage that
    \begin{equation}
        \hat{R}
        =
        \tilde{U}(V_{1:n,1:r})^\intercal
        =
        U_{1:n,1:r}\tilde{V}^\intercal
        \label{eqn:Representation-of-Rhat-in-U-and-V}
    \end{equation}
    almost surely, and that
    \begin{equation}
        U^\intercal U
        =
        \mathbbm{1}_n
        \quad
        \textnormal{and}
        \quad
        V^\intercal V
        =
        \mathbbm{1}_{n}
        \label{eqn:Left-and-right-singular-vectors-form-two-orthonormal-sets}
    \end{equation}
    almost surely.
    Here, $\mathbbm{1}_n$ denotes the $n$-dimensional identity matrix.
    Observe that \eqref{eqn:Left-and-right-singular-vectors-form-two-orthonormal-sets} says that the left and right singular vectors of an \gls{SVD} form two orthonormal sets.

    Specifically: for $x, y \in [n]$,
    \begin{align}
        \lVert
            \hat{R}_{x,\cdot} - \hat{R}_{y,\cdot}
        \rVert^2_2
        &
        \eqcom{eqn:Representation-of-Rhat-in-U-and-V}
        =
        \lVert
            (\tilde{U}_{x,\cdot}-\tilde{U}_{y,\cdot})(V_{1:n,1:r})^\intercal
        \rVert_2^2
        =
        (\tilde{U}_{x,\cdot}-\tilde{U}_{y,\cdot})(V_{1:n,1:r})^\intercal V_{1:n,1:r}(\tilde{U}_{x,\cdot}-\tilde{U}_{y,\cdot})^\intercal
        \nonumber \\ &
        \eqcom{eqn:Left-and-right-singular-vectors-form-two-orthonormal-sets}
        =
        (\tilde{U}_{x,\cdot}-\tilde{U}_{y,\cdot})\mathbbm{1}_r(\tilde{U}_{x,\cdot}-\tilde{U}_{y,\cdot})^\intercal
        =
        \lVert
            \tilde{U}_{x,\cdot}-\tilde{U}_{y,\cdot}
        \rVert_2^2
    \end{align}
    and
    \begin{align}
        \lVert\hat{R}_{\cdot,x}-\hat{R}_{\cdot,y}\rVert^2_2
        &
        \eqcom{eqn:Representation-of-Rhat-in-U-and-V}
        =
        \lVert U_{1:n,1:r}((\tilde{V}^\intercal)_{\cdot,x}-(\tilde{V}^\intercal)_{\cdot,y}) \rVert^2_2
        =
        \lVert (\tilde{V}_{x,\cdot}-\tilde{V}_{y,\cdot})(U_{1:n,1:r})^\intercal \rVert^2_2
        \nonumber \\ &
        =
        (\tilde{V}_{x,\cdot}-\tilde{V}_{y,\cdot})(U_{1:n,1:r})^\intercal U_{1:n,1:r}(\tilde{V}_{x,\cdot}-\tilde{V}_{y,\cdot})^\intercal
        \nonumber \\ &
        \eqcom{eqn:Left-and-right-singular-vectors-form-two-orthonormal-sets}
        =
        (\tilde{V}_{x,\cdot}-\tilde{V}_{y,\cdot})\mathbbm{1}_r(\tilde{V}_{x,\cdot}-\tilde{V}_{y,\cdot})^\intercal
        =
        \lVert \tilde{V}_{x,\cdot}-\tilde{V}_{y,\cdot}\rVert^2_2
    \end{align}
    almost surely.
    Consequently
    \begin{align}
        \lVert\hat{R}^0_{x,\cdot}-\hat{R}^0_{y,\cdot}\rVert^2_2
        &
        =
        \lVert\hat{R}_{x,\cdot}-\hat{R}_{y,\cdot}\rVert^2_2+\lVert\hat{R}_{\cdot,x}-\hat{R}_{\cdot,y}\rVert^2_2 \\ &
        =
        \lVert \tilde{U}_{x,\cdot}-\tilde{U}_{y,\cdot}\rVert_2^2+\lVert \tilde{V}_{x,\cdot}-\tilde{V}_{y,\cdot}\rVert_2^2
        =
        \lVert \hat{X}_{x,\cdot}-\hat{X}_{y,\cdot}\rVert^2_2
    \end{align}
    almost surely, which completes the proof.
\end{proof}

Because the output of \Cref{alg:kpost} only depends on the neighborhoods $\mcN_x$ calculated therein, any embedding that leads to the same neighborhoods also leads to the same estimator $\Khat$.
Consequently, \cref{lem:equivalent_spectral_embedding} shows that we can replace $\hat{X}$ by $\hat{R}^0$ in the subsequent analysis, which simplifies the proofs.

\subsubsection{Concentration of the low rank approximation}
\label{sec:concentration_of_low_rank_approximation}

Our next step is to prove a lemma characterizing the concentration of this rank-$\Ktilde$ approximation.
The following is a slight generalization of \cite[Lemma 5]{ClusterBMC2017}.

\begin{lemma}
    \label{lem:rank_Ktilde_frobenius_bound}
    Let $\Ktilde$ be the output of \cref{alg:kpre} run with threshold $\gamma_n$.
    Then,
    \begin{equation}
        \lVert\hat{R}^0-N^0\rVert_{\mathrm{F}} \leq \sqrt{2(\Ktilde + K)}\bigl( \lVert \hat{N}_{\Gamma}-N\rVert + \min\{\gamma_n, \lVert \hat{N}_{\Gamma}-N\rVert + \sigma_{\Ktilde +1}(N)\} \bigr),
        \label{eqn:frobenius-norm-bound-1}
    \end{equation}
    where $\lVert\cdot\rVert$ denotes the spectral norm.
    Under the assumptions of \Cref{prop:limit_kpre} we have in particular that
    \begin{equation}
        \lVert \hat{R}-N\rVert_{\mathrm{F}}=O_{\mbbP}(\sqrt{\ell_n/n}).
        \label{eqn:frobenius-norm-bound-2}
    \end{equation}
\end{lemma}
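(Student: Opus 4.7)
My plan is to prove the first bound \eqref{eqn:frobenius-norm-bound-1} by combining two standard facts about best low-rank approximations with a rank argument, and then to derive \eqref{eqn:frobenius-norm-bound-2} by plugging in the asymptotic scalings provided by \Cref{thm:spectral_norm_bound} and \Cref{prop:limit_kpre}. The first observation is that $\lVert \hat{R}^0-N^0 \rVert_{\mathrm{F}}^2 = 2\lVert \hat{R}-N \rVert_{\mathrm{F}}^2$, so the factor $\sqrt{2}$ in \eqref{eqn:frobenius-norm-bound-1} accounts for the concatenation and it suffices to bound $\lVert \hat{R}-N \rVert_{\mathrm{F}}$.

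Next, I would use the rank argument: $\hat{R}$ has rank at most $\Ktilde$ by construction, and $N$ has rank at most $\rank(p)\leq K$ since its row and column structures are inherited from the block structure of $P$. Hence $\hat{R}-N$ has rank at most $\Ktilde+K$, which gives
\begin{equation*}
    \lVert \hat{R}-N \rVert_{\mathrm{F}}
    \;\leq\;
    \sqrt{\Ktilde+K}\,\lVert \hat{R}-N \rVert.
\end{equation*}
A triangle inequality then splits the spectral norm as $\lVert \hat{R}-N\rVert \leq \lVert \hat{R}-\hat{N}_{\Gamma}\rVert + \lVert \hat{N}_{\Gamma}-N\rVert$. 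Since $\hat{R}$ is the best rank-$\Ktilde$ approximation to $\hat{N}_{\Gamma}$ in spectral norm (by the Eckart--Young theorem), $\lVert \hat{R}-\hat{N}_{\Gamma}\rVert = \sigma_{\Ktilde+1}(\hat{N}_{\Gamma})$. This singular value can be bounded in two ways: first, by the singular value thresholding step of \Cref{alg:kpre}, $\sigma_{\Ktilde+1}(\hat{N}_{\Gamma})<\gamma_n$; second, by Weyl's inequality applied to $\hat{N}_{\Gamma}=N+(\hat{N}_{\Gamma}-N)$, $\sigma_{\Ktilde+1}(\hat{N}_{\Gamma})\leq \sigma_{\Ktilde+1}(N)+\lVert\hat{N}_{\Gamma}-N\rVert$. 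Taking the minimum of these two bounds and combining with the triangle inequality yields \eqref{eqn:frobenius-norm-bound-1}.

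For \eqref{eqn:frobenius-norm-bound-2}, the plan is to plug in the asymptotics. Under the assumptions of \Cref{prop:limit_kpre}, \Cref{thm:spectral_norm_bound}(b) gives $\lVert\hat{N}_{\Gamma}-N\rVert = O_{\mathbb{P}}(\sqrt{\ell_n/n})$, while \Cref{prop:limit_kpre} gives $\Ktilde=\rank(p)$ with high probability, so both $\Ktilde+K$ is bounded by $2K$ w.h.p., and $\sigma_{\Ktilde+1}(N)=0$ w.h.p.\ by \Cref{lem:singular_values_N}. Using the second branch of the minimum in \eqref{eqn:frobenius-norm-bound-1} then bounds the whole expression by $2\sqrt{2(\Ktilde+K)}\,\lVert\hat{N}_{\Gamma}-N\rVert = O_{\mathbb{P}}(\sqrt{\ell_n/n})$, giving \eqref{eqn:frobenius-norm-bound-2}.

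No step here is really an obstacle: the main ingredients (Eckart--Young, Weyl, rank subadditivity) are textbook, and the asymptotic inputs are already established in the preceding sections. The only mild subtlety is keeping track of the high-probability conditioning when applying \Cref{prop:limit_kpre} to conclude $\Ktilde \leq K$ and $\sigma_{\Ktilde+1}(N)=0$ simultaneously with the spectral concentration of $\hat{N}_{\Gamma}-N$; this is handled by intersecting the relevant events, each of which has probability tending to $1$.
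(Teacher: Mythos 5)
Your proposal is correct and follows essentially the same route as the paper: the rank bound $\rank(\hat R - N)\le \Ktilde+\rank(p)\le \Ktilde+K$ combined with $\lVert A\rVert_{\mathrm F}\le\sqrt{\rank(A)}\lVert A\rVert$ and the factor $\sqrt 2$ from the concatenation, the triangle inequality together with Eckart--Young and the two bounds (thresholding and Weyl) on $\sigma_{\Ktilde+1}(\hat N_\Gamma)$, and then \Cref{prop:limit_kpre} plus \Cref{thm:spectral_norm_bound} for the asymptotic statement. The only cosmetic difference is that the paper invokes the combined Frobenius--rank--spectral inequality as a citation rather than deriving it from first principles.
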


\begin{proof}
    We first prove \eqref{eqn:frobenius-norm-bound-1}.
    From \cite[SM4.3 (69)]{ClusterBMC2017} and the observation that $\hat{R}-N$ is of rank at most $\Ktilde+\rank(p)$, it follows that
    \begin{equation}
        \label{eqn:frobenius_norm_bound}
        \lVert \hat{R}^0-N^0\rVert_{\mathrm{F}} \leq \sqrt{2(\Ktilde+\rank(p))}\big(\lVert\hat{R}-\hat{N}_{\Gamma}\rVert+\lVert\hat{N}_{\Gamma}-N\rVert \big)
    \end{equation}
    almost surely.
    Since $\hat{R}$ is a rank $\Ktilde$-approximation of $\hat{N}_{\Gamma}$, it follows that $\lVert\hat{R}-\hat{N}_{\Gamma}\rVert = \sigma_{\Ktilde+1}(\hat{N}_{\Gamma})$ almost surely.
    Now, by definition of $\Ktilde$, $\sigma_{\Ktilde+1}(\hat{N}_{\Gamma})< \gamma_n$ almost surely.
    Moreover, from \eqref{eqn:weylsineq} it follows that $\sigma_{\Ktilde+1}(\hat{N}_{\Gamma})\leq \sigma_{\Ktilde+1}(N)+\lVert\hat{N}_{\Gamma}-N\rVert$, which is finite almost surely.
    Together, these two bounds imply that
    \begin{equation}
        \label{eqn:low_rank_approx_bound}
        \lVert\hat{R}-\hat{N}_{\Gamma}\rVert \leq \min\{\gamma_n,\sigma_{\Ktilde+1}(N)+\lVert\hat{N}_{\Gamma}-N\rVert\}.
    \end{equation}
    Substituting \eqref{eqn:low_rank_approx_bound} into \eqref{eqn:frobenius_norm_bound} proves \eqref{eqn:frobenius-norm-bound-1}.

    To prove \eqref{eqn:frobenius-norm-bound-2}, note that by assumption \cref{prop:limit_kpre} applies such that $\Ktilde= \rank(p)\leq K$ with high probability.
    Consequently, with high probability also $\sigma_{\Ktilde+1}(N)=0$ so that \eqref{eqn:frobenius-norm-bound-1} implies that with high probability,
    \begin{equation}
        \lVert \hat{R}-N\rVert_{\mathrm{F}}\leq \sqrt{2K}\lb \lVert\hat{N}_{\Gamma}-N\rVert +\min\{\gamma_n,\lVert\hat{N}_{\Gamma}-N\rVert\}\rb.
    \end{equation}
    Moreover, recall from \cref{thm:spectral_norm_bound} that $\lVert \hat{N}_{\Gamma}-N\rVert=O_{\mbbP}(\sqrt{\ell_n/n})$.
    Together with the fact that $\min\{\gamma_n,\lVert \hat{N}_{\Gamma}-N\rVert\}\leq \lVert \hat{N}_{\Gamma}-N\rVert$ a.s., it follows that
    \begin{equation}
        \lVert \hat{R}-N\rVert_{\mathrm{F}}=O_{\mbbP}(\sqrt{\ell_n/n}).
        \label{eqn:frobenius-norm-bound}
    \end{equation}
\end{proof}

\subsubsection{Analyzing the geometry of the spectral embedding}
\label{sec:density-based_clustering_recovers_the_true_number_of_clusters}

\cref{lem:rank_Ktilde_frobenius_bound} will be used to control the concentration of the rows of $\hat{R}^0=[\hat{R},\hat{R}^\intercal]$ as points in $\mbbR^{2n}$.
We now show that if these rows are concentrated sufficiently tightly around their expectated value, the density-based clustering will recover the true number of clusters.
Moreover, the centers selected by \cref{alg:kpost} will lie close to the expected values $\bar{N}^0_k=(1/|\mcV_k|)\sum_{x\in \mcV_k}N_x^0$.
This is made precise by the following \cref{lem:cluster_centers_characterization}.

\begin{lemma}
    \label{lem:cluster_centers_characterization}
    Presume \cref{asm:assumption1,asm:assumption2}, $I(\alpha,p)>0$ and $\ell_n = \omega(n)$.
    Assume moreover that $\lVert \hat{R}-N\rVert_{\mathrm{F}}=O_{\mbbP}(f_n)$ for some sequence $f_n=o(\ell_n/n)$.
    If \cref{alg:kpost} is run with
    thresholds satisfying
    \begin{equation}
        \label{eqn:hnrhonconditions}
        \omega(f^2_n/n)=h^2_n=o(\ell_n^2/n^{3}),\quad \omega(f_n^2/h_n^2)=\rho_n=o(n),
    \end{equation}
    then the output $\Khat$ of \Cref{alg:kpost} satisfies $\Khat=K$ with high probability as $n\rightarrow\infty$.
    Moreover, the centers centers $z_1^*,\ldots,\allowbreak z_{\Khat}$ satisfy
    \begin{equation}
        \label{eqn:center_characterization}
        \lVert\hat{R}^0_{z_k^*,\cdot}-\bar{N}^0_{\gamma(k)}\rVert_2\leq bh_n \quad \textnormal{ for } k=1,\ldots,K
    \end{equation}
    for some permutation $\gamma$ of $1,\ldots,K$ and any $3/2<b$.
\end{lemma}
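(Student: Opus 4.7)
The plan is to analyze the geometry of the spectral embedding in $\mbbR^{2n}$, showing that the rows of $\hat R^0$ concentrate around their cluster means $\bar N^0_k$ outside a small exceptional set, that these means are well-separated by virtue of $I(\alpha,p) > 0$, and that the loop in \Cref{alg:kpost} consequently peels off one cluster at a time.

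First I would introduce the bad set $\mcB \eqdef \{x \in [n] : \lVert \hat R^0_{x,\cdot} - \bar N^0_{\sigma(x)}\rVert_2 > h_n/2\}$. Since $N_{x,y}$ depends only on $(\sigma(x),\sigma(y))$ via the block structure \eqref{eq:transitionMatrix}, the vector $N^0_{x,\cdot}$ depends only on $\sigma(x)$, so $N^0_{x,\cdot} = \bar N^0_{\sigma(x)}$. Consequently
\[
    \sum_{x \in [n]} \lVert \hat R^0_{x,\cdot} - \bar N^0_{\sigma(x)}\rVert_2^2
    =
    \lVert \hat R^0 - N^0\rVert_\mathrm{F}^2
    =
    2 \lVert \hat R - N\rVert_\mathrm{F}^2
    =
    \OrderP{f_n^2},
\]
and a Markov-type bound yields $|\mcB| = \OrderP{f_n^2/h_n^2} = \orderP{\rho_n}$ by \eqref{eqn:hnrhonconditions}. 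Invoking the separability result \Cref{lem:separability_property}, which under $I(\alpha,p) > 0$ provides $\min_{i \neq j}\lVert \bar N^0_i - \bar N^0_j\rVert_2 = \Omega(\ell_n/n^{3/2})$, and combining with $h_n = o(\ell_n/n^{3/2})$ from \eqref{eqn:hnrhonconditions}, the inter-cluster separation exceeds $4 h_n$ for all sufficiently large $n$. The triangle inequality then implies that two good states (i.e., states not in $\mcB$) in the same cluster lie within $h_n$ of each other, whereas good states in distinct clusters lie at distance greater than $3 h_n$.

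With these ingredients I would induct on $k = 1, \ldots, K$: suppose the previously assigned sets satisfy $\Vkhat_l \supseteq \mcV_{\gamma(l)}\setminus \mcB$ and $\Vkhat_l \cap \mcV_{\gamma(l')} = \emptyset$ for $l \neq l'$ with $l,l' \leq k-1$. By separability and the bad-set bound, any good state $x$ in an unclaimed cluster $k'$ has residual neighborhood (after removing $\bigcup_{l<k}\Vkhat_l$) of size $\geq |\mcV_{k'}| - |\mcB| = \Omega(n) \gg \rho_n$; conversely, any state whose residual neighborhood exceeds $\rho_n$ must contain at least one good state (since $|\mcB| = \orderP{\rho_n}$), and separability forces all good states in its neighborhood to belong to a single unclaimed cluster $\gamma(k)$. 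Hence $z_k^*$ picks off exactly $\mcV_{\gamma(k)}\setminus \mcB$ while excluding good states of other unclaimed clusters. After $K$ iterations, every good state has been assigned, so the remaining unclaimed states form a subset of $\mcB$ of size less than $\rho_n$ w.h.p., terminating the loop with $\Khat = K$. For the center bound, $z_k^*$'s residual neighborhood contains some $y \in \mcV_{\gamma(k)}\setminus \mcB$, giving
\[
    \lVert \hat R^0_{z_k^*,\cdot} - \bar N^0_{\gamma(k)}\rVert_2
    \leq
    \lVert \hat R^0_{z_k^*,\cdot} - \hat R^0_{y,\cdot}\rVert_2
    + \lVert \hat R^0_{y,\cdot} - \bar N^0_{\gamma(k)}\rVert_2
    \leq h_n + h_n/2 = (3/2) h_n < b h_n
\]
for any $b > 3/2$, and an appeal to \Cref{lem:equivalent_spectral_embedding} shows the same bound holds for the embedding $\hat X$ used in \Cref{alg:kpost}.

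The main obstacle will be making the inductive step fully rigorous: bad states are structurally unconstrained, so in principle they could aggregate with leftover pieces of already-claimed clusters and form a spurious dense region before all $K$ true clusters have been discovered. Ruling this out relies on the quantitative bound $|\mcB| = \orderP{\rho_n}$ combined with careful bookkeeping of residual neighborhoods $\mcN_{z} \setminus \bigcup_{l<k}\Vkhat_l$ across iterations, in particular tracking how many good states of each cluster remain at each step.
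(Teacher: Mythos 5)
Your proposal is correct and follows essentially the same route as the paper's proof: you bound the number of poorly concentrated rows via the Frobenius estimate and a Markov-type count (the complement of the paper's ``cores''), invoke \Cref{lem:separability_property} together with $h_n=o(\ell_n/n^{3/2})$ to separate cluster means at a scale $\gg h_n$, and then run the same induction showing the greedy loop extracts one cluster per iteration before terminating on a residual set of size $o_{\mbbP}(\rho_n)$. The inductive bookkeeping you flag as the main obstacle is precisely what the paper's properties (1)--(4) and its claim \eqref{eq:coreconstruction} formalize, and your direct triangle-inequality derivation of the $(3/2)h_n<bh_n$ center bound matches the paper's use of cores at radius $ah_n$ with $a<1/2$ and outliers at radius $bh_n$ with $b>1+a$.
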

The proof of \Cref{lem:cluster_centers_characterization} relies on the following separability property of $N^0$, which follows from \cite[Lemma 4]{ClusterBMC2017}.
\begin{lemma}[Adapted from {\cite[Lemma 4]{ClusterBMC2017}}]
    \label{lem:separability_property}
    Presume \cref{asm:assumption1,asm:assumption2} and $I(\alpha,p)>0$.
    Then, for $x,y\in[n]$ such that $\sigma(x)\neq\sigma(y)$, $\lVert N^0_{x,\cdot}-N^0_{x,\cdot}\rVert_2=\Omega(\ell_n/n^{3/2})$.
\end{lemma}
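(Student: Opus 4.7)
The plan is to compute $\|N^0_{x,\cdot} - N^0_{y,\cdot}\|_2^2$ explicitly in terms of the \gls{BMC} parameters $(\alpha,p,\pi)$, factor out the natural scaling $\ell_n^2/n^3$, and then reduce the remaining factor to a positive, $n$-independent constant by invoking the hypothesis $I(\alpha,p)>0$.

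Concretely, fix $x,y\in[n]$ with $\sigma(x)=i\neq j=\sigma(y)$. Using \eqref{eq:transitionMatrix} together with $N_{x,z}=\ell_n\Pi_xP_{x,z}$ and splitting by cluster membership of $z$, I would write
\begin{align}
    N_{x,z}-N_{y,z}
    &=
    \frac{\ell_n}{|\mcV_{\sigma(z)}|}\Bigl(\frac{\pi_ip_{i,\sigma(z)}}{|\mcV_i|}-\frac{\pi_jp_{j,\sigma(z)}}{|\mcV_j|}\Bigr),
    \\
    N_{z,x}-N_{z,y}
    &=
    \frac{\ell_n\pi_{\sigma(z)}}{|\mcV_{\sigma(z)}|}\Bigl(\frac{p_{\sigma(z),i}}{|\mcV_i|}-\frac{p_{\sigma(z),j}}{|\mcV_j|}\Bigr).
\end{align}
Since every row/column in these expressions depends on $z$ only through $\sigma(z)$, summing the squares over $z\in[n]$ collapses to a sum over cluster indices $k\in[K]$ with weight $|\mcV_k|=\alpha_kn+O(1)$. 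Using $|\mcV_k|=\Theta(n)$ under \Cref{asm:assumption1}, each term contributes $\Theta(\ell_n^2/n^3)$ (one factor of $|\mcV_k|$ from the number of summands against two factors of $|\mcV_k|^{-2}$ and one factor of $n^{-2}$ from $(\pi_\cdot/|\mcV_\cdot|)^2$). The end result will be
\begin{equation}
    \|N^0_{x,\cdot}-N^0_{y,\cdot}\|_2^2
    =
    \frac{\ell_n^2}{n^3}\bigl(C_{ij}(\alpha,p,\pi)+o(1)\bigr),
\end{equation}
where $C_{ij}(\alpha,p,\pi)$ is an explicit nonnegative quantity depending on $(\alpha,p,\pi)$ and the pair $(i,j)$, but not on $n$.

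It then remains to verify that $C_{ij}>0$ uniformly over pairs $i\neq j\in[K]$, which is where the assumption $I(\alpha,p)>0$ enters. Since $K$ is fixed, the minimum over the finite set of pairs is attained, so it suffices to show $C_{ij}=0$ for some $i\neq j$ implies $I(\alpha,p)=0$. If $C_{ij}=0$, then necessarily $\pi_ip_{i,k}/\alpha_i=\pi_jp_{j,k}/\alpha_j$ and $p_{k,i}/\alpha_i=p_{k,j}/\alpha_j$ for all $k\in[K]$, which (after substitution and rearrangement) forces the bracketed expression defining $I(\alpha,p)$ at the pair $(i,j)$ to vanish, contradicting $I(\alpha,p)>0$. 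This last reduction is essentially the content of \cite[Lemma 1]{ClusterBMC2017}, which I would invoke or reproduce. Taking square roots then yields $\|N^0_{x,\cdot}-N^0_{y,\cdot}\|_2=\Omega(\ell_n/n^{3/2})$.

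I expect the main obstacle to be the bookkeeping in step two, in particular carefully controlling the $O(1)$ corrections from $|\mcV_k|=\alpha_kn+O(1)$ so that they only contribute to the $o(1)$ term and do not destroy the leading-order separation. The rest is largely algebraic: the translation between the vanishing of $C_{ij}$ and the vanishing of $I(\alpha,p)$ at the pair $(i,j)$ is essentially the identifiability argument already established in \cite{ClusterBMC2017}, which we adapt rather than redo from scratch.
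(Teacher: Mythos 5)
Your proposal is correct, but it takes a more self-contained route than the paper. The paper's proof of this lemma is essentially a citation: it invokes \cite[Lemma 4]{ClusterBMC2017} together with the discussion in \cite[Section 6.2]{ClusterBMC2017}, where a quantity $D(\alpha,p)$ (playing the role of your $C_{ij}$) is shown to be a positive, $n$-independent constant whenever $I(\alpha,p)>0$; the only new content in the paper is the check that those arguments survive the weaker \Cref{asm:assumption1,asm:assumption2}. You instead rederive the separation from scratch: expanding $N_{x,z}-N_{y,z}$ and $N_{z,x}-N_{z,y}$ via the block structure, collapsing the sum over $z$ to a sum over clusters with weights $|\mcV_k|=\alpha_k n+O(1)$, extracting the $\ell_n^2/n^3$ scaling, and reducing positivity of the constant to the identifiability statement behind $I(\alpha,p)>0$. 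Your computation checks out: the vanishing of $C_{ij}$ forces $\pi_i p_{i,k}/\alpha_i=\pi_j p_{j,k}/\alpha_j$ and $p_{k,i}/\alpha_i=p_{k,j}/\alpha_j$ for all $k$, which (summing the first relation over $k$ to get $\pi_i/\alpha_i=\pi_j/\alpha_j$, hence $p_{i,k}=p_{j,k}$) annihilates every term in the $(i,j)$ summand of \eqref{eqn:I(a,p)} and thus contradicts $I(\alpha,p)>0$; finiteness of $K$ then gives uniformity over pairs. What your approach buys is transparency and independence from the external reference, at the cost of redoing bookkeeping the cited lemma already contains; what the paper's approach buys is brevity, at the cost of asking the reader to verify that the external argument tolerates the relaxed assumptions. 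Either is acceptable here.
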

\begin{proof}
    \Cref{lem:asymptoticrelation} follows from \cite[Lemma 4]{ClusterBMC2017} and the discussion in \cite[Section 6.2]{ClusterBMC2017}.
    In particular, the latter states that the quantity $D(\alpha,p)$ appearing in \cite[Lemma 4]{ClusterBMC2017} is a positive constant that is independent of $n$ whenever $I(\alpha,p)>0$.
    One checks carefully that their argument remains valid, even under our weaker \Cref{asm:assumption2}.
    One also checks that the proof of \cite[Lemma 4]{ClusterBMC2017} remains valid under \Cref{asm:assumption1,asm:assumption2}.
    Given these facts, the conclusion of \Cref{lem:equivalent_spectral_embedding} then follows from \cite[Lemma 4]{ClusterBMC2017}, provided that $I(\alpha,p)>0$.
\end{proof}
We now prove \Cref{lem:cluster_centers_characterization} using \Cref{lem:separability_property}.
\begin{proof}[Proof of \Cref{lem:cluster_centers_characterization}]
Recall from \Cref{sec:equivalence_of_norms} that the output of \Cref{alg:kpost} is the same if we replace the spectral embedding $\hat{X}$ by the matrix $\hat{R}^0$, which we will subsequently analyze.

Then, fix $0<a<1/2$ and $1+a<b<\infty$.
To prove \cref{lem:cluster_centers_characterization}, we first define the set of \textit{cores}
\begin{equation}
    C_k\eqdef \big\{x\in\mcV_k \,\big| \lVert \hat{R}^0_{x,\cdot}-\bar{N}_k^0\rVert_2\leq ah_n\big\} \textnormal{ for } k=1,\ldots,K,
\end{equation}
i.e., states from cluster $k$ for which $\hat{R}_{x,\cdot}^0$ is close to their expected value $\bar{N}_k^0$.
As we shall see, these states will make up the majority of all states and they will be well-behaved with respect to the density-based clustering algorithm we deploy on the rows of $\hat{R}^0$.
Moreover, we define the set of \textit{outliers}
\begin{equation}
    \mathcal{O}\eqdef\big\{x\in[n]\,\big|\lVert \hat{R}^0_{x,\cdot}-\bar{N}_k^0\rVert_2\geq bh_n\textnormal{ for all } k=1,\ldots,K,\big\}.
\end{equation}
These states are far from \textit{any} cluster's expected center.
Finally, recall also the definition of the neighborhood of a state $x\in [n]$ from \eqref{eqn:nbrhooddef}:
\begin{equation}
    \mcN_x= \big\{y\in [n]\,\big|\lVert\hat{R}^0_{x,\cdot}-\hat{R}^0_{y,\cdot}\rVert_2\leq h_n\big\}.
\end{equation}
Notice the following four properties of these sets.
\begin{enumerate}
    \item $\left|\left(\bigcup_{k=1}^K C_k\right)\cap \mathcal{N}_x\right|=0$ for all $x\in\mathcal{O}$.
            \begin{proof}
                Let $x\in\mathcal{O}$.
                Then by definition $\|\hat{R}_{x,\cdot}^0-\Bar{N}_k^0\|_2\geq bh_n$ for all $k\in[K]$.
                Moreover, it holds that $\|\hat{R}_{x,\cdot}^0-\hat{R}_{y,\cdot}^0\|_2\leq h_n$ for all $y\in\mathcal{N}_x$. It then follows that for $l\in[K]$,
                \begin{equation}
                    \lVert \hat{R}^0_{y,\cdot}-\bar{N}_l^0 \rVert_2\geq \lVert \hat{R}^0_{x,\cdot}-\bar{N}_l^0\rVert_2-\lVert\hat{R}^0_{y,\cdot}-\hat{R}^0_{x,\cdot}\rVert_2 \geq (b-1)h_n>ah_n
                \end{equation}
                and thus $y\notin C_l$.
            \end{proof}
    \item $\left|\left(\bigcup_{k=1}^KC_k\right)^{\mathrm{c}}\right|=o(f_n^2/h_n^2)\overset{\eqref{eqn:hnrhonconditions}}{=} o_{\mbbP}(n)$, and $\left|C_k^{\mathrm{c}}\cap\mcV_k\right|=o(f_n^2/h_n^2)\overset{\eqref{eqn:hnrhonconditions}}{=}\orderP{n}$ for $k\in[K]$.
            \begin{proof}
                We have that
                \begin{equation}
                    \label{eqn:frobenius_lower_bound_1}
                    \lVert\hat{R}^0-N^0\rVert_{\mathrm{F}}^2 = \sum_{x\in[n]}\lVert\hat{R}^0_{x,\cdot}-\bar{N}^0_{\sigma(x)}\rVert_2^2\geq \big|\left(\cup_{k=1}^K C_k\right)^{\mathrm c}\big|\min_{x\in\left(\cup_{k=1}^K C_k\right)^{\mathrm c}}\lVert\hat{R}^0_{x,\cdot}-\bar{N}^0_{\sigma(x)}\rVert_2^2
                \end{equation}
                almost surely.
                Then, observe that $x\notin C_k$ for all $k\in [K]$ implies that in particular, $\lVert\hat{R}^0_{x,\cdot}-\bar{N}^0_{\sigma(x)}\rVert_2^2>a^2h_n^2$ such that $\lVert\hat{R}^0-N^0\rVert_{\mathrm F}^2>a^2h_n^2 \left|\left(\cup_{k=1}^K C_k\right)^{\mathrm{c}}\right|$ almost surely.
                Rearranging and using that $\lVert\hat{R}^0-N^0\rVert_{\mathrm{F}}^2=O_{\mbbP}(f^2_n)$ gives the first statement.

                The second statement follows by almost the same argument: one replaces in \eqref{eqn:frobenius_lower_bound_1} the set $(\cup_{k=1}^K C_k)^{\mathrm{c}}$ by $C_k^{\mathrm{c}}\cap \mcV_k$ and uses that $\lVert\hat{R}^0_{x,\cdot}-\bar{N}_k^0\rVert^2_2>a^2h_n^2$ still holds for all $x\in C_k^{\mathrm{c}}\cap \mathcal{V}_k$.
            \end{proof}
    \item $C_{\sigma(x)}\subseteq \mcN_{x}$ for all $x\in\bigcup_{k=1}^K C_k$.
            \begin{proof}
                Let $x\in\bigcup_{k=1}^K C_k$.
                Since the $C_k$ are disjoint by construction, there exists a unique $l\in\{1,\ldots,K\}$ such that $x\in C_l$.
                Let $y\in C_l$.
                Then
                \begin{equation}
                    \lVert\hat{R}_{x,\cdot}^0-\hat{R}_{y,\cdot}^0\rVert_2 \leq \lVert\hat{R}_{x,\cdot}^0-\bar{N}_{l}^0\rVert_2+\lVert\hat{R}_{y,\cdot}^0-\bar{N}_{l}^0\rVert_2 \leq 2ah_n.
                \end{equation}
                Since we chose $a<1/2$, we have that $y\in\mathcal{N}_x$ because $\|\hat{R}_{x,\cdot}^0-\hat{R}_{y,\cdot}^0\|_2<h_n$.
            \end{proof}
    \item If $|\mathcal{N}_x\cap C_k|\geq 1$ for some $x\in[n]$ and $k\in [K]$, then $|\mathcal{N}_x\cap C_l|=0$ for any $l\neq k$.
            \begin{proof}
                Suppose that $|\mcN_x\cap C_k|\geq 1$ for some $x\in[n]$ and $k\in [K]$.
                Let $y\in \mcN_x$ and $l\neq k$.
                Then by propety (1)
                \begin{equation}
                    \lVert\hat{R}_{y,\cdot}^0-\bar{N}_k^0 \rVert_2\leq \lVert\hat{R}_{y,\cdot}^0 -\hat{R}_{x,\cdot}^0\rVert_2 + \lVert\hat{R}_{x,\cdot}^0-\bar{N}_k^0 \rVert_2\leq (a+b)h_n
                \end{equation}
                almost surely.
                Since $h_n=o(\ell_n/n^{3/2})$ by assumption, it follows that $\lVert \hat{R}_{y,\cdot}^0-\bar{N}_k^0 \rVert_2=o(\ell_n/n^{3/2})$.
                Together with \Cref{lem:separability_property}, this implies that
                \begin{equation}
                    \lVert\hat{R}_{y,\cdot}^0-\bar{N}_l^0\rVert_2 \geq \left|\lVert\bar{N}_l^0-\bar{N}_k^0\rVert_2 -\lVert\hat{R}_{y,\cdot}^0 -\bar{N}_k^0\rVert_2  \right|=\Omega\bigg(\frac{\ell_n}{n^{3/2}}-(a+b)h_n\bigg) = \Omega\bigg(\frac{\ell_n}{n^{3/2}}\bigg).
                \end{equation}
                Since $h_n=o(\ell_n/n^{3/2})$, $y$ does not belong to $C_l$.
            \end{proof}
\end{enumerate}
In the following, let $z_k^*$ be the $k$th center selected by \cref{alg:kpost}, and let $\hat{\mcV}_k$ be the $k$th cluster identified by \cref{alg:kpost}, i.e., $\hat{\mcV}_k=\mcN_{z_k^*}\setminus \cup_{l=1}^{k-1}\hat{\mcV}_l$.
Note that $\cup_{l=1}^k\hat{\mcV}_{l}=\cup_{l=1}^k\mcN_{z_l^*}$.
We will use properties (1)--(4) to show that for each $k=1,\ldots,K$,
\begin{equation}
    \label{eq:coreconstruction}
    \exists \,x\in \big(\cup_{k=1}^KC_k\big)\setminus\big(\cup_{l=1}^{k-1}\hat{\mcV}_l\big) :\quad \big|\mathcal{N}_x\setminus\cup_{l=1}^{k-1}\hat{\mcV}_l\big|\geq |C^{(k)}|=n\alpha^{(k)}(1-\orderP{1}),
\end{equation}
where $C^{(k)}$ is the $k$th largest core and $\alpha^{(k)}$ is the size of the $k$th largest cluster.
Using \eqref{eq:coreconstruction} we can prove \cref{lem:cluster_centers_characterization} as follows.

Firstly, observe that \eqref{eq:coreconstruction} implies that none of the $z_k^*$ are outliers.
Indeed, each of the centers will satisfy $(i)$ $|\mcN_{z_k^*}\cap \lb \cup_{l=1}^KC_l\rb|\geq 1$, since otherwise
\begin{equation}
    \label{eq:nonoutlier}
    \big|\mcN_{z_k^*}\setminus \cup_{l=1}^{k-1}\hat{\mcV}_l\big|\leq |\mcN_{z_k^*}|\overset{\neg (i)}{\leq }\big|\big(\cup_{k=1}^KC_k\big)^{\mathrm c}\big|\overset{(2)}{=}\orderP{n},
\end{equation}
and the algorithm would have selected a different center as guaranteed by \eqref{eq:coreconstruction}.
In particular, this means that $z_k^*$ is not an outlier by property (1), such that $z_k^*$ is at least $bh_n$ close to some core center $\bar{N}^0_{c_k}$ with $c_k\in1,\ldots,K$.
Moreover, it follows from property (4) that $(ii)$ $|\mcN_{z_k^*}\cap C_l|=0$ for $l\neq c_k$ such that the core $C_{c_k}$ is unique.

To prove \eqref{eqn:center_characterization} we also require that the $c_k$ are distinct.
In fact, we claim that $C_{c_k}=C^{(k)}$ with high probability.
This follows from the bound
\begin{equation}
    \label{eqn:neighborhood_size_bound}
    \begin{split}
        |\mcN_{z_k^*}|= |\mcN_{z_k^*}\cap (\cup_{k=1}^KC_k)|+|\mcN_{z_k^*}\cap (\cup_{k=1}^KC_k)^{\mathrm c}| & \overset{(ii)}{=}  |\mcN_{z_k^*}\cap C_{c_k}|+|\mcN_{z_k^*}\cap (\cup_{k=1}^KC_k)^{\mathrm c}|   \\
                                                                                                                & \leq |C_{c_k}|+|(\cup_{k=1}^KC_k)^{\mathrm c}| \overset{(iii)}{=} n\alpha_{c_k}(1+o_{\mbbP}(1)).
    \end{split}
\end{equation}
Here, $(iii)$ is a consequence of the fact that by construction, there are at most $|C_k|\leq |\mcV_k|=n\alpha_k(1+o(1))$ states in each core, and property (2).
Together, \eqref{eq:coreconstruction} and \eqref{eqn:neighborhood_size_bound} imply that the algorithm would have selected a different center with high probability unless $\alpha_{c_k}=\alpha^{(k)}$.
Since \eqref{eq:coreconstruction} further implies that with high probability the $k$th largest core corresponds to the $k$th largest cluster, it follows that $C_{c_k}=C^{(k)}$ with high probability.

Finally, \eqref{eq:coreconstruction} shows that \cref{alg:kpost} will select \textit{at least} $K$ centers, since for $k\leq K$, \eqref{eq:coreconstruction} ensures that there exists a center $z_k^*$ such that $|\mcN_{z_k^*}\setminus \cup_{l=1}^{k-1}\hat{\mcV}_l|=\Omega_{\mbbP}(n)=\omega_{\mbbP}(\rho_n)$.
Together, these facts prove \eqref{eqn:center_characterization}.
Moreover, since the $\hat{\mcV}_k$ are all disjoint by construction and so too are the $C_k$, it follows that after selecting $K$ centers there remain at most
\begin{equation}
    \left|\lb \cup_{l=1}^{K}\hat{\mcV}_l\rb^{\mathrm c}\right| \overset{(iv)}{=} n - \sum_{k=1}^K|\hat{\mcV}_k| \overset{\eqref{eq:coreconstruction}}{\leq } n-\sum_{k=1}^K|C_k| = |(\cup_{k=1}^K C_k)^c| \overset{(2)}{=} o_{\mbbP}(f_n^2/h_n^2)\overset{\eqref{eqn:hnrhonconditions}}{=}\orderP{\rho_n}.
\end{equation}
states.
This implies that $\Khat\leq K$ with high probability according to \cref{alg:kpost}'s prescription, which completes the proof. \qed

\paragraph{Proof of \eqref{eq:coreconstruction}}
We first prove a lower bound on the size of the cores.
By property (2), we have that for $k=1,\ldots,K$,
\begin{equation}
    |C_k|=|\mcV_k|-|C_k^c\cap\mcV_k|\overset{(2)}{=}n\alpha_k(1-\orderP{1}).
\end{equation}
It follows that with high probability
\begin{equation}
    \label{eq:coresizebd}
    |C^{(k)}|= n\alpha^{(k)}(1-\orderP{1}).
\end{equation}
In particular, \eqref{eq:coresizebd} implies that the cores $C_k$ are non empty with high probability.

We will now prove \eqref{eq:coreconstruction} by induction.
For $k=1$, property (3) implies that $|\mcN_x|\geq |C^{(1)}|$ for any $x\in C^{(1)}$.
\eqref{eq:coresizebd} then ensures that \eqref{eq:coreconstruction} holds for $k=1$.

Assume then that \eqref{eq:coreconstruction} holds for each $l\in [k-1]$ for some $k\leq K$.
It follows from \eqref{eq:nonoutlier} that $\big|\mcN_{z_l^*}\cap \allowbreak \lb \cup_{m=1}^KC_m\rb\big| \geq 1$ for each $l\in [k-1]$.
Given that the neighborhoods $\mcN_{z_l^*}$ of the centers $z_l^*$ non trivially intersect the cores, property (4) then ensures that each of these neighborhoods $\mcN_{z_l^*}$ intersect exactly one such core.
Let $m_l$ denote the index of this core for each $l\in [k-1]$.

We now distinguish two cases.
Suppose that the $C_{m_l}$ are such that $C^{(k)}\neq C_{m_l}$ for any $l\in [k-1]$.
Then
\begin{equation}
    \label{eqn:empty_core_intersection}
    |C^{(k)}\cap \cup_{l=1}^{k-1}\mcN_{z_l^*}|=0,
\end{equation}
since each $\mcN_{z_l^*}$ only intersects $C_{m_l}$.
Moreover, by property (3), $C^{(k)}\subseteq \mcN_x$ for any $x\in C^{(k)}$.
Together with \eqref{eqn:empty_core_intersection}, this implies that
\begin{equation}
    \label{eqn:neighborhood_lower_bound}
    |\mcN_x\setminus \cup_{l=1}^{k-1}\mcN_{z_l^*}|\geq |C^{(k)}|.
\end{equation}
Since $C^{(k)}$ is non empty, and \eqref{eqn:empty_core_intersection} implies that $x\in C^{(k)} \Leftrightarrow x\notin \cup_{l=1}^{k-1}\mcN_{z_l^*}$, \eqref{eqn:neighborhood_lower_bound} ensures that \eqref{eq:coreconstruction} holds at $k$.

If instead, the $k$th largest core $C^{(k)}$ is among the $C_{m_l}$ for $l\in [k-1]$, then by the pigeonhole principle there must be some larger core $C^{(k')}$ with $k'\leq k$ that is not.
Replacing $C^{(k)}$ by $C^{(k')}$ in the preceding argument and using that $|C^{(k')}|\geq |C^{(k)}|$ then proves that \eqref{eq:coreconstruction} still holds at $k$.
That is it.
\end{proof}

Finally, we combine \cref{lem:rank_Ktilde_frobenius_bound} and \cref{lem:cluster_centers_characterization} to prove \cref{thm:Khat=K,thm:kmeanserror}.

\subsubsection{Proof of \texorpdfstring{\cref{thm:Khat=K}}{Theorem 3.3}}
Given \Cref{lem:rank_Ktilde_frobenius_bound,lem:cluster_centers_characterization} the proof of \Cref{thm:Khat=K} is almost immediate.
Indeed, the assumptions of \Cref{thm:Khat=K} ensure that \Cref{prop:limit_kpre} and therefore \Cref{lem:rank_Ktilde_frobenius_bound} apply.
Consequenly, $\lVert \hat{R}-N\rVert_{\mathrm{F}}=O_{\mbbP}(f_n)$ with $f_n=\sqrt{\ell_n/n}$.
\Cref{thm:Khat=K} then follows if we can verify that \eqref{eqn:hnrhonconditions} holds with $f_n=\sqrt{\ell_n/n}$, which indeed is immediate from \eqref{eqn:parameter_conditions}. \qed

\subsubsection{Proof of \texorpdfstring{\cref{thm:kmeanserror}}{Theorem 3.4}}

The assumptions of \cref{thm:kmeanserror} include those of \cref{thm:Khat=K}, such that the proof of \cref{thm:Khat=K} therefore shows that \cref{lem:cluster_centers_characterization} applies.
Upon replacing \cite[(72)]{ClusterBMC2017} by \eqref{eqn:center_characterization}, the proof of \cite[Lemma 6]{ClusterBMC2017} shows that for misclassified states $x\in\mcE$, the corresponding rows $\hat{R}_{x,\cdot}^0$ are far from their true value $N^0_{x,\cdot}$:
\begin{equation}
    \label{eqn:R0x-N0x}
    \|\hat{R}_{x,\cdot}^0-N_{x,\cdot}^0\|_2=\Omega_{\mbbP}\bigg(\frac{\ell_n}{n^{3/2}}\bigg) \quad \textnormal{for all }x\in\mathcal{E}.
\end{equation}
Given \eqref{eqn:R0x-N0x}, the rest of the proof that $|\mathcal{E}|/n=\OrderP{n/\ell_n}$ proceeds as follows.

We start by writing
\begin{equation}
    \label{eqn:frobeniuserrorrel}
    \lVert \hat{R}^0-N^0\rVert^2_{\mathrm{F}}=\sum_{x\in[n]} \lVert\hat{R}_{x,\cdot}^0-N_{x,\cdot}^0\rVert_2^2 \geq \sum_{x\in\mathcal{E}}\lVert\hat{R}_{x,\cdot}^0-N_{x,\cdot}^0\rVert_2^2 = |\mcE|\min_{x\in \mcE}\lVert \hat{R}^0_{x,\cdot}-N^0_{x,\cdot}\rVert_2^2.
\end{equation}
Together with \eqref{eqn:R0x-N0x} this then implies that
\begin{equation}
    \lVert \hat{R}^0-N^0\rVert^2_{\mathrm{F}}/(|\mcE|/n)\geq n\min_{x\in \mcE}\lVert \hat{R}^0_{x,\cdot}-N^0_{x,\cdot}\rVert_2^2=\Omega_{\mbbP}\lb \ell_n^2/n^2\rb.
\end{equation}
Moreover, the proof of \cref{thm:Khat=K} shows that under the assumptions of \cref{thm:kmeanserror} $\lVert \hat{R}^0-N^0\rVert^2_{\mathrm{F}}=O_{\mbbP}(\ell_n/n)$.
The result of \cref{thm:kmeanserror} then follows by applying \cite[Lemma 22]{ClusterBMC2017} with $Y_n=|\mathcal{E}|/n$, $y_n=\ell_n^2/n^2$, and $x_n=\ell_n/n$.\qed

\section{Further experimental details}

\subsection{Key performance indices}
\label{sec:Numerical-key-performance-indices}

\subsubsection{Relative accuracy}

To evaluate the performance of an algorithm that estimates the number of clusters $K$, we typically report the \emph{relative accuracy}
\begin{equation}
    \Delta \hat{K}^{\mathrm{rel}}
    \eqdef
    \frac{\Khat-K}{K}
    .
\end{equation}
Here, $\hat{K}$ denotes the estimate of $K$ obtained from $X_1, \ldots, X_\ell$ using some algorithm.

Note that the relative accuracy can actually be calculated by us in \Cref{sec:numerical-experiments}, because the true value of $K$ is known in every experiment.

\subsubsection{Entropy of a partition}
\label{sec:Entropy-of-a-partition}
To characterize cluster size variability, we rely on the notion of the \emph{entropy of a partition}.
This is defined for any partition $\{ \mathcal{A}_k \}_{k=1}^K$ of $[n]$ by
\begin{equation}
    \text{H}( \{ \mathcal{A}_k \}_{k=1}^K )
    \eqdef
    -\sum_{k=1}^K
    \frac{ | \mathcal{A}_k | }{n} \ln{ \frac{ | \mathcal{A}_k | }{n} }
    ,
    \label{eqn:clustering_entropy_def}
\end{equation}
where it is understood that $0 \ln{0} \equiv 0$.

For the same reason we also rely on the notion of the \emph{normalized entropy of a partition}.
It is defined for any partition $\{ \mathcal{A}_k \}_{k=1}^K$ of $[n]$ by
\begin{equation}
    \bar{\text{H}}(\{ \mathcal{A}_k \}_{k=1}^K)
    \eqdef
    \frac{1}{\ln{K}}
    \text{H}(\{ \mathcal{A}_k \}_{k=1}^K)
    .
    \label{eqn:normalized_clustering_entropy_def}
\end{equation}
The normalized entropy of a partition has the useful properties that
(i) $\bar{\text{H}}(\{ \mathcal{A}_k \}_{k=1}^K)=0$ whenever $| \mathcal{A}_k | = n$ for some $k\in [K]$,
and
(ii) $\bar{\text{H}}(\{ \mathcal{A}_k \}_{k=1}^K)=1$ whenever $| \mathcal{A}_k | = n/K$ for all $k\in [K]$.

\subsubsection{Adjusted Mutual Information (AMI)}
\label{sec:Adjusted-Mutual-Information}

If an algorithm also estimates the clusters' constituents, then we sometimes also report the \emph{\gls{AMI} index}.
The \gls{AMI} index can indicate how close an approximate clustering is to the ground truth labeling $\sigma$, when available.
For further background, we refer to \cite{vinh2009information}.

The \gls{AMI} is derived from the \emph{\gls{MI} between two partitions}.
For any two partitions $\{ \mathcal{A}_k \}_{k=1}^{K_{\mathrm{a}}}, \allowbreak \{ \mathcal{B}_l \}_{l=1}^{K_{\mathrm{b}}}$ of $[n]$,
\begin{equation}
    \text{MI}
    \bigl(
        \{ \mathcal{A}_{ k } \}_{ k = 1 }^{ K_{\mathrm{a}} }
        ,
        \{ \mathcal{B}_{ l } \}_{ l = 1 }^{ K_{\mathrm{b}} }
    \bigr)
    \eqdef
    \sum_{ k = 1 }^{ K_{\mathrm{a}} }
    \sum_{ l = 1 }^{ K_{\mathrm{b}} }
    \frac{| \mathcal{A}_{ k } \cap \mathcal{B}_{ l }| }{n}
    \ln{
        \frac{n| \mathcal{A}_{ k } \cap \mathcal{B}_{ l }| }{| \mathcal{A}_{ k } || \mathcal{B}_{ l }| }
        }
\end{equation}
denotes their \gls{MI}.
Recall also the definition \eqref{eqn:clustering_entropy_def} of the entropy of a partition.
Then, for any two partitions $\{ \mathcal{A}_k \}_{k=1}^{K_{\mathrm{a}}}, \allowbreak \{ \mathcal{B}_l \}_{l=1}^{K_{\mathrm{b}}}$ of $[n]$,
\begin{equation}
    \textnormal{AMI}
    \bigl(
        \{ \mathcal{A}_k \}_{k=1}^{K_{\mathrm{a}}}
        ,
        \{ \mathcal{B}_l \}_{l=1}^{K_{\mathrm{b}}}
    \bigr)
    \eqdef
    \frac{
        \text{MI}
        \bigl(
            \{ \mathcal{A}_{k} \}_{k=1}^{K_{\mathrm{a}}}
            ,
            \{ \mathcal{B}_{l} \}_{l=1}^{K_{\mathrm{b}}}
        \bigr)
        -
        \mbbE
        \bigl[
            \text{MI}
            (
                \{ \mathcal{U}_{k}^{(a)} \}_{k=1}^{K_{\mathrm{a}}}
                ,
                \{ \mathcal{U}_{l}^{(b)} \}_{l=1}^{K_{\mathrm{b}}}
            )
        \bigr]
    }
    {
        \sqrt{
            \text{H}
            \bigl(
                \{ \mathcal{A}_{k} \}_{k=1}^{K_{\mathrm{a}}}
            \bigr)
            \text{H}
            \bigl(
                \{ \mathcal{B}_{l} \}_{l=1}^{K_{\mathrm{b}}}
            \bigr)
        }
        -
        \mbbE
        \bigl[
            \text{MI}
            (
                \{ \mathcal{U}_{k}^{(a)} \}_{k=1}^{K_{\mathrm{a}}}
                ,
                \{ \mathcal{U}_{l}^{(b)} \}_{l=1}^{K_{\mathrm{b}}}
            )
        \bigr]
    }
    \label{eqn:AMI_definition}
\end{equation}
denotes the \gls{AMI} of the two partitions.
Here, $\{ \mathcal{U}^{(a)}_{k} \}_{k = 1}^{K_a}, \allowbreak \{ \mathcal{U}^{(b)}_{l} \}_{l = 1}^{K_b}$ denote two independent random partitions of $[n]$ of sizes $K_a, K_b$ satisfying $|\mathcal{U}_k^{(a)}|=|\mcA_k|$ for $k\in[K_a]$ and $|\mathcal{U}_k^{(b)}|=|\mcB_k|$ for $k\in[K_b]$, respectively.
Each is chosen uniformly at random from the set of all possible such partitions.

With a slight abuse of notation, we will usually report $\mathrm{AMI}( \sigma, \hat{\sigma} )$ in our experiments.

\subsection{Characteristics of the \texorpdfstring{\gls{BMC}}{BMC} ensembles numerically studied}
\label{sec:Generating_BMCs_with_random_parameters}

For some of our experiments, we generate random \glspl{BMC}.
This section contains details about the ensembles used and their implementation.
We also characterize these ensembles by the average values of the information quantity $I(\alpha, p)$, cluster entropies $\bar{\mathrm{H}}(\sigma)$, and their mixing times $t_{\mix}$.
When calculating $\bar{\mathrm{H}}(\sigma)$ for \glspl{BMC} with randomly sampled cluster fractions $\alpha$, we reject samples such that $\min_k n\alpha_k<1$ and sample \glspl{BMC} with $n=1000$.
All averages are calculated using $1000$ independent samples from each ensembles.

\subsubsection{Uniformly sampled \texorpdfstring{\glspl{BMC}}{BMCs}}
\label{sec:uniform_BMCs}

Given $K \in \mathbb{N}_+$, we generate random \gls{BMC} parameters $(\alpha, p)$
that are distributed uniformly at random within the $n$-dimensional simplex
\begin{equation}
    \Delta_K
    :=
    \Bigl\{
    ( x_1, \ldots, x_K )
    \Big|
    \sum_{i=1}^K x_i = 1
    \Bigr\}
    \subset
    \mathbb{R}^K
    .
\end{equation}
To do so, we exploit the fact that if for $i = 1, \ldots, n$, $X_i \sim \mathrm{Exp}(1)$, then the random vector
\begin{equation}\label{eqn:dirichlet_sampling}
    \frac{ ( X_1, X_2, \ldots, X_K ) }{ \sum_{i=1}^K X_i }
    \sim
    \mathrm{Unif}( \Delta_K )
    .
\end{equation}

\begin{table}[ht]
	\centering
	\setlength{\tabcolsep}{4pt}
	{
		\small
		\begin{tabular}{l l c c c c}
			\toprule
			{Ensemble $p$} & {Ensemble $\alpha$} & $K$ & $I(\alpha, p)$ & $\bar{\mathrm{H}}(\sigma)$ & $t_{\mix}$  \\
			\midrule
			Uniform  & Uniform & $5$ & $0.63 \pm 0.02$ & $0.798 \pm 0.007$ & $2.10\pm 0.02$ \\
            Uniform  & Uniform & $10$ & $0.51 \pm 0.01$ & $0.840 \pm 0.004$ & $2.01 \pm 0.01$ \\
            \midrule
			Uniform & Const. $1/K$ & $5$ & $0.53 \pm 0.01$ & $1\pm 0$ & $2.00 \pm 0.02$ \\
            Uniform & Const. $1/K$ & $10$ & $0.60 \pm 0.01$ & $1 \pm 0$ & $2.00 \pm 0.01$ \\
            \bottomrule
			\vspace{0pt}
		\end{tabular}
	}
	\caption{%
    Parameters characterizing \gls{BMC} ensembles with uniformly sampled cluster transition matrices $p$.
    }
    \label{tab:ensemble_characterizations_uniform}
\end{table}

\subsubsection{Low rank \texorpdfstring{\glspl{BMC}}{BMCs}}
\label{sec:low_rank_BMCs}

Fix $d<K$.
We randomly sample $K$ vectors $v_i\in \mbbR_{\geq 0}^d$ for $i\in [K]$, and define the cluster transition matrix as in \cref{ex:dot_product_model}.
In our examples, we sample $v_i\in\mbbR^d$ for $i\in [K]$ according to $v_i\sim \textnormal{Dir}((1/d,\ldots,1/d))$ where Dir denotes the Dirichlet distribution.
The cluster transition matrix is then defined as in \cref{ex:dot_product_model}.

\begin{table}[ht]
	\centering
	\setlength{\tabcolsep}{4pt}
	{
		\small
		\begin{tabular}{l l c c c c}
			\toprule
			{Ensemble $p$} & {Ensemble $\alpha$} & $K$ & $I(\alpha, p)$ & $\bar{\mathrm{H}}(\sigma)$ & $t_{\mix}$  \\
			\midrule
			Low rank & Uniform & $5$ & $0.29\pm 0.02$ & $0.805 \pm 0.007$ & $1.34 \pm 0.03$  \\
            Low rank & Uniform & $10$ & $0.189\pm 0.009$ & $0.840 \pm 0.004$ & $1.27 \pm 0.02$  \\
			\bottomrule
			\vspace{0pt}
		\end{tabular}
	}
	\caption{%
    Parameters characterizing different \gls{BMC} ensembles with rank deficient cluster transition matrices $p$.
    }
    \label{tab:ensemble_characterizations_low_rank}
\end{table}

\subsubsection{Reversible \texorpdfstring{\glspl{BMC}}{BMCs}}
\label{sec:reversible_BMCs}

Given the number of clusters $K$, we first sample a symmetric random matrix $W\in \mbbR^{K\times K}$.
To maintain consistency with our previous ensemble, we sample $W$ by sampling each row uniformly at random from the simplex, and symmetrizing the resulting matrix via the transformation $W\rightarrow (W+W^\intercal)/2$.
For simplicity, we further normalize $W\rightarrow W/\sum_{i,j} W_{i,j}$ without loss of generality.
We then set $p_{i,j}= W_{i,j}/(\sum_{j \in [K]}W_{i,j})$.
One checks that $\pi_j=\sum_{j \in [K]}W_{i,j}$.

Ensembles of random matrices $p$ generated as above have previously been studied for large $\text{dim}(p)$ \cite{bordenave2008spectrum} when the entries of $W$ are independent and identically distributed.
Our primary interest here, however, is the case that $\text{dim}(p)=K$ is small.
Note also that $W$ can be viewed as the weight matrix of an undirected graph, in which case $p$ is the transition matrix of a random walk on this graph.

\begin{table}[ht]
	\centering
	\setlength{\tabcolsep}{4pt}
	{
		\small
		\begin{tabular}{l l c c c c}
			\toprule
			{Ensemble $p$} & {Ensemble $\alpha$} & $K$ & $I(\alpha, p)$ & $\bar{\mathrm{H}}(\sigma)$ & $t_{\mix}$  \\
			\midrule
            Reversible & Const. $\pi$ & $5$ & $0.26 \pm 0.01$ & $0.990 \pm 0.001$ & $1.87 \pm 0.03$ \\
            Reversible & Const. $\pi$ & $10$ & $0.259 \pm 0.006$ & $0.9956 \pm 0.0001$ & $1.99 \pm 0.01$ \\
			\bottomrule
			\vspace{0pt}
		\end{tabular}
	}
	\caption{%
    Parameters characterizing different \gls{BMC} ensembles with reversible cluster transition matrices $p$.
    }
    \label{tab:ensemble_characterizations_reversible}
\end{table}

\subsubsection{Assortative \texorpdfstring{\glspl{BMC}}{BMCs}}
\label{sec:assortative_BMCs}

Fix $p_0\in(1/2, 1)$.
To generate ``assortative'' \glspl{BMC}, we fix the diagonal values of $p_{i,i}=p_0$ for $i\in [K]$.
The off diagonal elements $p_{i,j}$ for $j\neq i$ are obtained by first sampling a vector $q=(q_j)_{j \in [K]\setminus\{i\}}$ uniformly at random from the $(K-1)$-dimensional simplex, and setting $p_{i,j}=(1-p_0)q_j$ for $j\neq i$.

\begin{table}[ht]
	\centering
	\setlength{\tabcolsep}{4pt}
	{
		\small
		\begin{tabular}{l l c c c c}
			\toprule
			{Ensemble $p$} & {Ensemble $\alpha$} & $K$ & $I(\alpha, p)$ & $\bar{\mathrm{H}}(\sigma)$ & $t_{\mix}$  \\
			\midrule
            Assortative & Const. $1/K$ & $5$ & $2.79\pm 0.03$ & $1 \pm 0$ & $8.8 \pm 0.1$ \\
            Assortative & Const. $1/K$ & $10$ & $3.40\pm 0.03$ & $1 \pm 0$ & $9.06 \pm 0.06$ \\
			\bottomrule
			\vspace{0pt}
		\end{tabular}
	}
	\caption{%
    Parameters characterizing different \gls{BMC} ensembles with assortative cluster transition matrices $p$.
    }
    \label{tab:ensemble_characterizations_assortative}
\end{table}

\subsection{Supporting numerical investigations}

\subsubsection{Visualization of data and clustering}

\Cref{fig:observation_matrix_visualization} visualizes the matrix $\hat{N}$ for the \gls{BMC} of \Cref{fig:Proof_of_concept__From_easy_to_hard}(c).
It highlights the difficulty of task of detecting the number of clusters.

\begin{figure}[hbtp]
    \centering
    \begin{subfigure}{0.48\linewidth}
        \includegraphics[width=\linewidth]{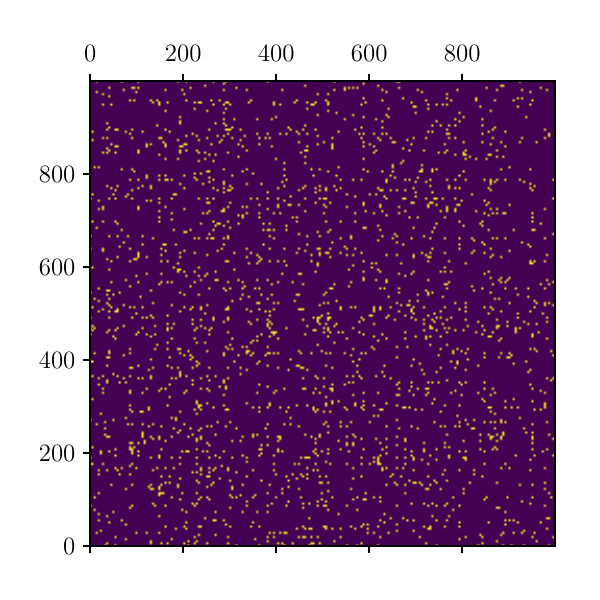}
        \caption{States sorted randomly}
        \label{fig:observation_matrix_visualization_random}
    \end{subfigure}%
    \hfill
    \begin{subfigure}{0.48\linewidth}
        \includegraphics[width=\linewidth]{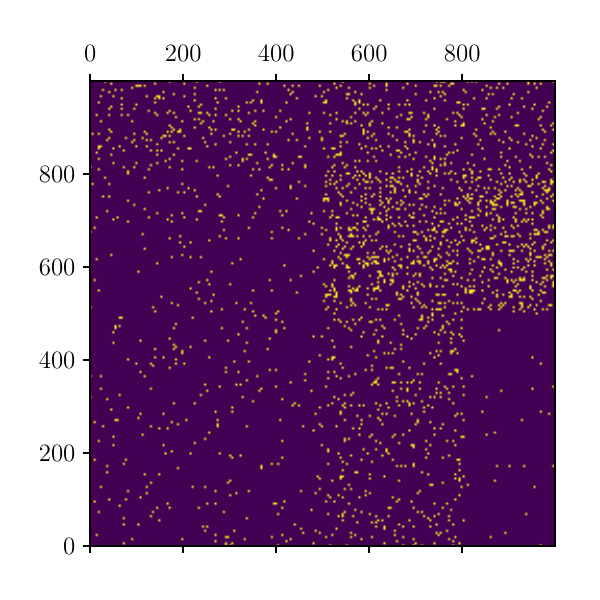}
        \caption{States sorted according to estimate clustering}
        \label{fig:observation_matrix_visualization_sorted}
    \end{subfigure}%
    \caption{%
    Heatmap of $\mathbbm{1}_{\hat{N}_{x,y}>0}$ for the \gls{BMC} from \Cref{fig:Proof_of_concept__From_easy_to_hard}(c) at $\ell_n/n=(\ln n)^2$ and $n=1000$.
    In this case, $\ell_n/n^2 \approx 0.0047$.
    }
    \label{fig:observation_matrix_visualization}
\end{figure}

\subsubsection{Visualization of the embedding \texorpdfstring{$X$}{X}}

\Cref{fig:embedding_visualization} visualizes the embedding $\hat{X}$ for a \gls{BMC} with parameters
\begin{equation}
    \label{eqn:pair_plot_BMC_def}
    p
    =
    \begin{pmatrix}
        0.1 & 0.3 & 0.6\\
        0.1 & 0.3 & 0.6\\
        0.8 & 0.1 & 0.1
    \end{pmatrix}
    ,
    \quad
    \alpha
    =
    \pi
    =
    (0.38 , 0.22 , 0.40)^\intercal
    .
\end{equation}
Observe that this \gls{BMC} has $\rank(p)=2<K$; which is why we created the embedding with $r=2$.
Note also that the clusters $1$ and $2$ have identical outgoing transition probabilities, and that clusters $2$ and $3$ have identical ingoing transition probabilities.
Consequently, we require both left and right singular vectors to distinguish all three clusters in the embedding.
This can be seen in the plot by considering the subplots corresponding to $(\tilde{U}_{\cdot,1},\tilde{U}_{\cdot,2})$, and $(\tilde{V}_{\cdot,1},\tilde{V}_{\cdot,2})$.

\input{images/TikZ__embedding_visualization.tex}

\subsubsection{Relative accuracy for different algorithm settings}
\label{sec:Relative_accuracy_vs_parameter_settings}

\Cref{alg:kpre,alg:kpost} have several parameters for which explicit choices have to be made when implementing it.
These include the radius threshold $h_n$, neighborhood size threshold $\rho_n$, and singular value threshold $\gamma_n$: recall \eqref{eqn:paramter_parameterization}.

We may wonder about the sensitivity of the algorithm to these settings, as well as whether certain choices of parameters are better than others.
In the following experiment, we therefore vary the exponents $a, b, c$ in \eqref{eqn:paramter_parameterization}.
We limit our investigation to ranges of $a, b, c$ for which correct operation of our algorithm is guaranteed in the limit $n \to \infty$.
The results are shown in \Cref{fig:Relative-accuracy-vs-singular-value-threshold-exponent}.

\input{images/TikZ__Relative_accuracy_vs_singular_value_threshold_exponent.tex}

\Cref{fig:Relative-accuracy-vs-singular-value-threshold-exponent}(a) suggests that any value $a \gtrsim 0.85$ works well for the random \glspl{BMC} considered in these experiments.
Fixing $a=0.9$, \Cref{fig:Relative-accuracy-vs-singular-value-threshold-exponent}(b) suggests a strictly decreasing relationship between the relative accuracy and the exponent $b$.
Fixing $b=0.1$, \Cref{fig:Relative-accuracy-vs-singular-value-threshold-exponent}(c) exhibits a clear peak of the relative accuracy as we vary the singular value threshold's exponent $c$ at around $0.75$.

\subsubsection{Effect of embedding dimension on cluster density}
\label{sec:cluster-density-vs-rank}

\Cref{fig:Cluster-density-vs-rank} shows a plot of the average intracluster distance $\bar{d}_{\text{intra}}$, and the average intercluster distance $\bar{d}_{\text{inter}}$, both as a function of the embedding rank $r$ for some \gls{BMC} sampled from the same ensemble as was used in \Cref{fig:Relative_accuracy_vs_rank_reversible}.
We define these distances as
\begin{equation}
    \bar{d}_{\textnormal{intra}}
    \eqdef
    \frac{1}{K}\sum_{k=1}^K\frac{1}{|\mcV_k|}\sum_{x\in\mcV_k}\lVert \hat{X}_{x,\cdot} - \mu_{k}\rVert_2
    ,
    \quad
    \bar{d}_{\textnormal{inter}}
    \eqdef
    \frac{1}{2K(K-1)}\sum_{k=1}^K\sum_{k'\neq k}\lVert\hat{\mu}_k - \hat{\mu}_{k'}\rVert_2
    ,
\end{equation}
where $\hat{\mu}_k\eqdef (1/|\mcV_k|)\sum_{x\in \mcV_k} \hat{X}_{x,\cdot}$ denote the cluster centroids, and $\lVert\cdot\rVert_2$ is the usual Euclidean norm.
Note that $\bar{d}_{\textnormal{inter}}$ is defined as half the average distance between cluster centroids.

Observe that for each $\beta$, the intra- and intercluster distances both increase as $r$ increases.
However, for $r \gtrsim K$, the $\bar{d}_{\textnormal{inter}}$ plateaus whereas $\bar{d}_{\textnormal{inter}}$ continues to increase.
Also, observe that $\bar{d}_{\textnormal{intra}}$ grows slower for $r \lesssim K$ than for $r \gtrsim K$.

\begin{figure}[htb]
    \centering
    \begin{subfigure}{0.32\linewidth}
        \centering
        \begin{tikzpicture}[scale = 0.95]
            \begin{axis}[
                    width=\linewidth,
                    xmin = 0.45, xmax = 21,
                    ymin = 0, ymax=2.5,
                    xlabel = {Embedding parameter $r$},
                    ylabel = {Distance (a.u.)},
                ]

                \addplot[smooth, mark=o] plot coordinates {
                    (1, 0.2534382735006543)	(2, 0.4365734007092351)	(3, 0.5882336157952974)	(4, 0.7097731501734378)	(5, 0.8316902708103857)	(6, 0.9728298097940083)	(7, 1.1259456638948788)	(8, 1.267241209043378)	(9, 1.3974681068156856)	(10, 1.513321842492302)	(11, 1.6197880931701438)	(12, 1.7178795415669317)	(13, 1.8146706089010063)	(14, 1.9028425630054628)	(15, 1.9891803418263645)	(16, 2.0715439322303943)	(17, 2.149216759737471)	(18, 2.2264601311798202)	(19, 2.298771242431418)	(20, 2.368968555205761)	
                    };

                \addplot[smooth, mark=square] plot coordinates {
                    (1, 0.04558253365255963)	(2, 0.4724229163100582)	(3, 0.7051319397533161)	(4, 0.8630220512917886)	(5, 0.9556396380085502)	(6, 0.9961246318520337)	(7, 1.008068973600641)	(8, 1.0152991950812786)	(9, 1.0179430710440633)	(10, 1.0217374311010663)	(11, 1.0267484201453616)	(12, 1.0331515128317146)	(13, 1.0342802581549793)	(14, 1.0385928897494119)	(15, 1.0416156354858102)	(16, 1.0435602756227358)	(17, 1.0473215731560874)	(18, 1.0480766783851092)	(19, 1.0496281647149064)	(20, 1.051696677512466)	
                    };

                \path [draw=black, dashed]
                (axis cs:10,0)
                --(axis cs:10,2.5);
                \addplot[dashed, domain = 0:21, samples = 2] {1.24377};
            \end{axis}
        \end{tikzpicture}
        \caption{$\beta=2$}
    \end{subfigure}
    \begin{subfigure}{0.32\linewidth}
        \centering
        \begin{tikzpicture}[scale = 0.95]
            \begin{axis}[
                    width=\linewidth,
                    reverse legend,
                    xmin = 0.0, xmax = 21,
                    ymin=0, ymax = 8.5,
                    xlabel = {Embedding parameter $r$},
                    legend style={at={(0.96, 0.4)}, anchor=south east}
                ]

                \addplot[smooth, mark=o] plot coordinates {
                    (1, 0.6537861038516688)	(2, 1.0479824788660554)	(3, 1.34463447794203)	(4, 1.588576606010708)	(5, 1.8129825311595298)	(6, 2.0145361475516586)	(7, 2.225224235624717)	(8, 2.453963672699128)	(9, 2.7707664702043617)	(10, 3.18099541137069)	(11, 3.550627716835095)	(12, 3.886236624856098)	(13, 4.18470551473881)	(14, 4.46262179341002)	(15, 4.726550695316772)	(16, 4.978634056549948)	(17, 5.2136309872892905)	(18, 5.4334315789506125)	(19, 5.646266231133554)	(20, 5.851950016472231)	
                    };

                \addplot[smooth, mark=square] plot coordinates {
                    (1, 0.12177997533222151)	(2, 3.004749001355959)	(3, 4.634195994349103)	(4, 5.621765293645451)	(5, 6.238266736893478)	(6, 6.606144587104451)	(7, 6.862694290281483)	(8, 7.003076010036935)	(9, 7.047163943263831)	(10, 7.051080225362808)	(11, 7.051849315481258)	(12, 7.052293627466049)	(13, 7.053233075759307)	(14, 7.054682163885493)	(15, 7.054973609994127)	(16, 7.055804686371432)	(17, 7.057469651760398)	(18, 7.059268438138523)	(19, 7.060510678548668)	(20, 7.060717937238043)	
                    };

                \path [draw=black, dashed]
                (axis cs:10,0)
                --(axis cs:10,8);
                \addplot[dashed, domain = 0:21, samples = 2] {7.8};
            \end{axis}
        \end{tikzpicture}
        \caption{$\beta=3$}
    \end{subfigure}
    \begin{subfigure}{0.32\linewidth}
        \centering
        \begin{tikzpicture}[scale = 0.95]
            \begin{axis}[
                    width=\linewidth,
                    reverse legend,
                    xmin = 0, xmax = 21,
                    ymin = 0, ymax = 55,
                    xlabel = {Embedding parameter $r$},
                    legend style={at={(0.95, 0.4)}, anchor=south east}
                ]
                \path [draw=black, dashed]
                (axis cs:10,0)
                --(axis cs:10,50);
                \addplot[dashed, domain = 0:21, samples = 2] {48.9193};

                \addplot[smooth, mark=o] plot coordinates {
                    (1, 1.7030841544496433)	(2, 2.7118565248009205)	(3, 3.4374575458483485)	(4, 4.028011841808668)	(5, 4.569354064375862)	(6, 5.030784561520017)	(7, 5.458405099031516)	(8, 5.875193819508323)	(9, 6.309490055774749)	(10, 6.960727405290628)	(11, 8.104779331822723)	(12, 9.089249541147767)	(13, 9.992374788305323)	(14, 10.811439466462286)	(15, 11.559193246004089)	(16, 12.26026276572195)	(17, 12.924888565745203)	(18, 13.556485802408352)	(19, 14.169194887554568)	(20, 14.734863875206239)	
                    };
                \addlegendentry{$\bar{d}_{\textnormal{intra}}$}

                \addplot[smooth, mark=square] plot coordinates {
                    (1, 0.8709501899150046)	(2, 20.45381429891016)	(3, 31.51722904900621)	(4, 38.19862075890427)	(5, 42.393432622767676)	(6, 44.84072572440697)	(7, 46.61348414844114)	(8, 47.51729144594294)	(9, 47.870051960180454)	(10, 47.9578423463667)	(11, 47.957999467619125)	(12, 47.958188075464996)	(13, 47.958397099477004)	(14, 47.958599657312305)	(15, 47.96036599878089)	(16, 47.96047892121786)	(17, 47.96053558313451)	(18, 47.9605937774653)	(19, 47.96063725494504)	(20, 47.961030610697826)	
                    };
                \addlegendentry{$\bar{d}_{\textnormal{inter}}$}

            \end{axis}
        \end{tikzpicture}
        \caption{$\beta=4$}
    \end{subfigure}

    \caption{%
        The average intra- and intercluster distances as a function of the embedding dimension $r$ for some \gls{BMC} sampled from the reversible ensemble described in \Cref{sec:reversible_BMCs}.
        Here, $K=10$ and $n=1000$ are fixed, and the pathlength $\ell_n/n=(\ln n)^{\beta}$ is varied with $\beta\in\{2,3,4\}$.
        The horizontal dashed line indicates the radius threshold $h_n$, while the vertical dashed line indicates the true number of clusters $K$, which here equals the rank of $p$.
    }
    \label{fig:Cluster-density-vs-rank}
\end{figure}
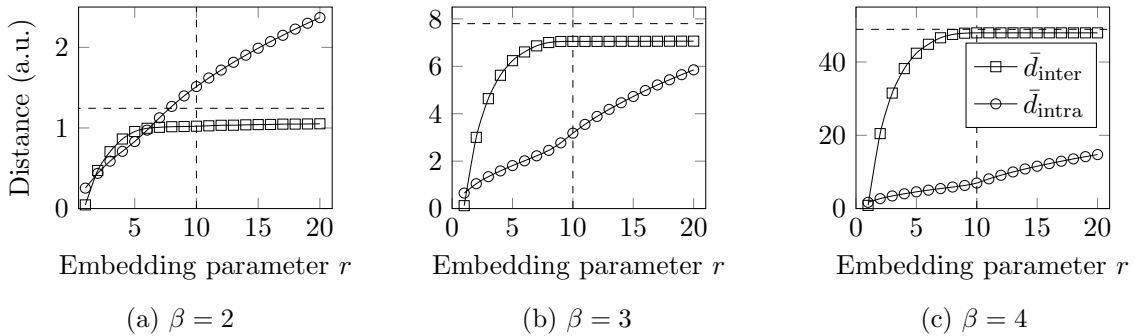

\subsubsection{Singular values in sparse regimes}\label{sec:singular-values-in-sparse-regimes}
\Cref{fig:Sparse-example-singular-values} shows how the singular values $\{ \sigma_i \}_{i=1, 2, 3}$ of $\hat{N}_{\Gamma}$ depend on the parameters $p_0$ and $\beta$ from \Cref{sec:sparse_example}.

\input{images/TikZ__Sparse_example_singular_values.tex}

Observe in \Cref{fig:Sparse-example-singular-values-vs-p0} that $\sigma_1$ and $\sigma_3$ are insensitive to $p_0$; while on the contrary, $\sigma_2$ depends approximately linearly on $p_0$.
A brief computation shows that the expectation $N=\mbbE[\hat{N}]$ has nonzero singular values $(n/\ell_n)\sigma_{1,2}(N)=(1,2p_0-1)$.
The proof of \Cref{cor:singvalscalingrankp} in \Cref{sec:proof_of_cor_singvalscalingrankp} furthermore shows that $(n/\ell_n)\sigma_{1,2}(\hat{N}_{\Gamma})$ converge in probability to $(n/\ell_n)\sigma_{1,2}(N)=(1,2p_0-1)$ as $n \rightarrow \infty$.
However, to understand this phenomena completely, a more detailed analysis would be needed to describe the concentration of $\sigma_{1,2}(\hat{N}_{\Gamma})$ at finite $n$.

This caveat is underscored by \Cref{fig:Sparse-example-singular-values-vs-beta,fig:Sparse-example-singular-values-vs-n}.
These figures demonstrate how $\sigma_{2}(\hat{N}_{\Gamma})$ deviates from the asymptotic scaling predicted by \eqref{eqn:singvalscaling}.
Specifically, since $\gamma_n=(\ell_n/n)^{0.95}=o(\ell_n/n)$, \eqref{eqn:singvalscaling} predicts that $\sigma_2(\hat{N}_{\Gamma})/\gamma_n=\omega_{\mbbP}(1)$; but on the contrary, observe that $\sigma_2(\hat{N}_{\Gamma})/\gamma_n$ is a decreasing function of $\ell_n$ and $n$.
The same is to a lesser extent true for $\sigma_1(\hat{N}_{\Gamma})$.

\subsubsection{Confidence intervals for \texorpdfstring{\Cref{tab:Performance-of-the-different-methods}}{Table 1}}
\label{sec:confidence-interval-table}

\Cref{tab:confidence-bounds-for-different-methods} reports the number of repetitions $N_{\text{rep}}$ and the width of the estimated $95\%$-condfidence interval for the sample means $\mu_{\hat{K}}$ reported in \Cref{tab:Performance-of-the-different-methods}.
Note that all but three confidence intervals are at most $0.15$.

\begin{table}[htpb]
    \centering
    \setlength{\tabcolsep}{5pt}
    {
        \small
        \begin{tabular}{c@{\hspace{3pt}}c@{\hspace{3pt}}c|c@{\hspace{3pt}}cc@{\hspace{3pt}}cc@{\hspace{3pt}}c|c@{\hspace{3pt}}cc@{\hspace{3pt}}cc@{\hspace{3pt}}c}
            \toprule
            $l_n$     & $K$ & {Test} & \multicolumn{2}{c}{ALG2} & \multicolumn{2}{c}{HDBS}          & \multicolumn{2}{c}{MEGH}      & \multicolumn{2}{c}{LLSC} & \multicolumn{2}{c}{LLCI} & \multicolumn{2}{c}{CAIC}\\
            &     &        & $N_{\text{rep}}$ & MoE & $N_{\text{rep}}$ & MoE & $N_{\text{rep}}$ & MoE & $N_{\text{rep}}$ & MoE & $N_{\text{rep}}$ & MoE & $N_{\text{rep}}$ & MoE\\
            \midrule

            \multirow{4}{*}{$n\ln(n)^2$} & \multirow{4}{*}{$3$}
              & 1 & $250$ & $ 0.00 $ & $250$ & $ 0.00 $ & $250$ & $ 0.04 $ & $700$ & $ 0.15 $ & $250$ & $ 0.00 $ & $350$ & $ 0.15 $ \\
            & & 2 & $300$ & $ 0.14 $ & $300$ & $ 0.14 $ & $250$ & $ 0.14 $ & $1550$ & $ 0.15 $ & $350$ & $ 0.15 $ & $350$ & $ 0.15 $ \\
            & & 3 & $250$ & $ 0.10 $ & $250$ & $ 0.07 $ & $250$ & $ 0.14 $ & $900$ & $ 0.14 $ & $450$ & $ 0.15 $ & $450$ & $ 0.15 $ \\
            & & 4 & $250$ & $ 0.14 $ & $250$ & $ 0.14 $ & $350$ & $ 0.14 $ & $750$ & $ 0.15 $ & $250$ & $ 0.12 $ & $350$ & $ 0.14 $ \\ [0.5em]

            \multirow{4}{*}{$n\ln(n)^2$} & \multirow{4}{*}{$6$}
              & 1 & $250$ & $ 0.04 $ & $250$ & $ 0.03 $ & $250$ & $ 0.15 $ & $4450$ & $ 0.15 $ & $850$ & $ 0.15 $ & $250$ & $ 0.14 $ \\
            & & 2 & $450$ & $ 0.15 $ & $400$ & $ 0.14 $ & $250$ & $ 0.08 $ & $2950$ & $ 0.15 $ & $900$ & $ 0.14 $ & $500$ & $ 0.15 $ \\
            & & 3 & $250$ & $ 0.13 $ & $250$ & $ 0.13 $ & $250$ & $ 0.13 $ & $1800$ & $ 0.15 $ & $1250$ & $ 0.15 $ & $500$ & $ 0.14 $ \\
            & & 4 & $300$ & $ 0.14 $ & $250$ & $ 0.13 $ & $250$ & $ 0.09 $ & $2500$ & $ 0.15 $ & $850$ & $ 0.15 $ & $500$ & $ 0.15 $ \\ [0.5em]

            \multirow{4}{*}{$n\ln(n)^2$} & \multirow{4}{*}{$10$}
              & 1 & $250$ & $ 0.03 $ & $250$ & $ 0.01 $ & $250$ & $ 0.07 $ & $1550$ & $ 0.15 $ & $1850$ & $ 0.15 $ & $250$ & $ 0.08 $ \\
            & & 2 & $250$ & $ 0.11 $ & $250$ & $ 0.07 $ & $250$ & $ 0.08 $ & $2500$ & $ 0.15 $ & $2550$ & $ 0.15 $ & $650$ & $ 0.14 $ \\
            & & 3 & $350$ & $ 0.14 $ & $300$ & $ 0.14 $ & $250$ & $ 0.14 $ & $3150$ & $ 0.15 $ & $1700$ & $ 0.15 $ & $950$ & $ 0.15 $ \\
            & & 4 & $250$ & $ 0.11 $ & $250$ & $ 0.10 $ & $250$ & $ 0.13 $ & $4000$ & $ 0.15 $ & $2500$ & $ 0.15 $ & $700$ & $ 0.15 $ \\ [0.5em]

            \midrule

            \multirow{4}{*}{$n^2$} & \multirow{4}{*}{$3$}
              & 1 & $250$ & $ 0.00 $ & $250$ & $ 0.01 $ & $250$ & $ 0.04 $ & $250$ & $ 0.00 $ & $250$ & $ 0.00 $ & $350$ & $ 0.14 $ \\
            & & 2 & $2600$ & $ 0.15 $ & $9850$ & $ 0.38^* $ & $250$ & $ 0.04 $ & $250$ & $ 0.10 $ & $250$ & $ 0.09 $ & $350$ & $ 0.15 $ \\
            & & 3 & $250$ & $ 0.07 $ & $9650$ & $ 0.47^* $ & $350$ & $ 0.14 $ & $250$ & $ 0.05 $ & $250$ & $ 0.01 $ & $450$ & $ 0.15 $ \\
            & & 4 & $2450$ & $ 0.15 $ & $9850$ & $ 0.91^* $ & $1750$ & $ 0.15 $ & $550$ & $ 0.15 $ & $250$ & $ 0.04 $ & $300$ & $ 0.14 $ \\ [0.5em]

            \multirow{4}{*}{$n^2$} & \multirow{4}{*}{$6$}
              & 1 & $250$ & $ 0.01 $ & $250$ & $ 0.01 $ & $250$ & $ 0.12 $ & $250$ & $ 0.01 $ & $250$ & $ 0.03 $ & $250$ & $ 0.10 $ \\
            & & 2 & $2450$ & $ 0.15 $ & $250$ & $ 0.04 $ & $250$ & $ 0.06 $ & $2000$ & $ 0.15 $ & $350$ & $ 0.14 $ & $450$ & $ 0.14 $ \\
            & & 3 & $250$ & $ 0.11 $ & $250$ & $ 0.01 $ & $250$ & $ 0.07 $ & $250$ & $ 0.01 $ & $250$ & $ 0.00 $ & $250$ & $ 0.10 $ \\
            & & 4 & $2250$ & $ 0.15 $ & $12000$ & $ 0.15 $ & $250$ & $ 0.08 $ & $2350$ & $ 0.15 $ & $400$ & $ 0.14 $ & $450$ & $ 0.15 $ \\ [0.5em]

            \multirow{4}{*}{$n^2$} & \multirow{4}{*}{$10$}
              & 1 & $250$ & $ 0.00 $ & $250$ & $ 0.01 $ & $250$ & $ 0.08 $ & $250$ & $ 0.13 $ & $350$ & $ 0.14 $ & $250$ & $ 0.08 $ \\
            & & 2 & $3000$ & $ 0.15 $ & $250$ & $ 0.06 $ & $250$ & $ 0.09 $ & $1150$ & $ 0.15 $ & $2650$ & $ 0.15 $ & $1200$ & $ 0.15 $ \\
            & & 3 & $250$ & $ 0.12 $ & $250$ & $ 0.00 $ & $250$ & $ 0.09 $ & $450$ & $ 0.15 $ & $250$ & $ 0.04 $ & $250$ & $ 0.05 $ \\
            & & 4 & $3150$ & $ 0.15 $ & $250$ & $ 0.11 $ & $250$ & $ 0.07 $ & $1600$ & $ 0.15 $ & $2450$ & $ 0.15 $ & $1350$ & $ 0.15 $ \\ [0.5em]
            \vspace{0pt}
        \end{tabular}
    }
    \caption{
        This table shows the margin of error (MoE) of the estimated $95\%$-confidence intervals for the sample means of the different estimators tested in \Cref{tab:Performance-of-the-different-methods}, as well as the number of replications $N_{\text{rep}}$ that were used in each case.
        }
    \label{tab:confidence-bounds-for-different-methods}
\end{table}

\section{Further algorithmic details}

\subsection{Singular value thresholding via Sylvester's law}
\label{sec:On-singular-value-thresholding-via-Sylvesters-law}

\Cref{alg:kpre} combines Sylvester's law of inertia with a shift property for eigenvalues of matrices to compute the number of singular values of $\hat{N}_\Gamma$ that lie above the threshold $\gamma_n$.

The argument as to why this works is as follows.
Recall first that for $A \in \mathbb{C}^{n \times n}$, $\gamma \in \mathbb{C}$, if $\sigma^2$ is an eigenvalue of $A^\intercal A$, then $\sigma^2 - \gamma^2$ is an eigenvalue of $A^\intercal A - \gamma^2 I_{n \times n}$.
Recall second that Sylvester's law of inertia says that for any nonsingular matrix $S$, the inertia of a symmetric matrix such as $B := \hat{N}_\Gamma^\intercal \hat{N}_\Gamma - \gamma_n^2 I_{n \times n}$ equals the inertia of $C := S B S^\intercal$.
Inertia, here, is defined as the triple of number of positive, zero, and negative eigenvalues.
Note finally that the number of positive eigenvalues of $C$ coincides with the number of singular values of $\hat{N}_\Gamma$ that are larger than $\gamma_n$.
Now, all of this is especially useful to us because the $LDL^\intercal$-decomposition happens to be such a candidate transformation.
Moreover, the inertia of the diagonal matrix in the $LDL^\intercal$-decomposition is easy to determine (one just needs to look at the signs of $D$'s diagonal entries), and the $LDL^\intercal$-decomposition is computationally easier than e.g.\ computing a full \gls{SVD}.

\subsection{Alternative algorithms, applied off--the--shelve, to try and estimate \texorpdfstring{$K$}{K}}
\label{sec:alternative_methods}

This section present details on alternative methods that may be used to try and detect the number of clusters $K$ of a \gls{BMC}.
These were studied numerically in \cref{sec:results_alternative_methods}.
Contrary to \cref{alg:kpre,alg:kpost}, these algorithms do not come with consistency guarantees when applied to a trajectory of a \gls{BMC}.

\subsubsection{Via HDBSCAN}

Given the embedding $\hat{X}$, with dimension set by \Cref{alg:kpre}, we also use the popular HDBSCAN algorithm to detect the number of clusters via a density-based clustering of the embedding $\hat{X}$.
For our simulations we use the scikit-learn package \cite{scikit-learn}, whose implementation of HDBSCAN is based on \cite{8215642}.
HDBSCAN is an extension of DBSCAN which we briefly describe here.

DBSCAN takes two parameterrs which are analogous to the parameters used in \Cref{alg:kpost}.
In particular, DBSCAN first identifies states with dense neighborhoods by calculating the distance to each state's $\rho_n$th nearest neighbor.
If this distance is less than $h_n$, this state is identified as a core point.
All non-core points are labeled as noise.
To highlight the connection with our method, the set of core points can equivalently be defined as
\begin{equation}
    \mcC(\rho_n,h_n)\eqdef \{x\in[n]: |\mcN_x|>\rho_n\},
\end{equation}
where $\mcN_x$ is defined as in \eqref{eqn:nbrhooddef}.
A graph is then constructed with vertices $\mcC(\rho_n,h_n)$, and edges between all pairs $x,y\in\mcC(\rho_n,h_n)$ such that $\lVert \hat{X}_{x,\cdot}-\hat{X}_{y,\cdot}\rVert_2 < h_n$.
Clusters are defined as the connected components of this graph.

HDBSCAN extends DBSCAN by performing a sweep over all possible values of $h_n$, and identifying clusters which are stable across the largest range of values of $h_n$.
Such a sweep allows to identify regions of different density, corresponding to clusters which are stable across different ranges of $h_n$.
In our experiments, the remaining parameter $\rho_n$, corresponding to the minimum cluster size, is set to the same value as in \Cref{alg:kpost} (see \eqref{eqn:paramter_parameterization}).
We refer to \cite{8215642} for further details, as well as a statistical interpretation of this method.

\begin{algorithm}[H]
    \caption{Pseudocode for HDBSCAN}\label{alg:hdbscan}
    \KwData{A transition count matrix $\hat{N}=(\hat{N}_{x,y})_{x,y\in[n]}$, and neighborhood size threshold $\rho_n$}
    \KwResult{An estimated number of clusters $\hat{K}_{\mathrm{HDB}}$}
    Calculate \gls{SVD} $U\Sigma V^\intercal$ of $\hat{N}_{\Gamma}$;\\
    Construct spectral embedding $\hat{X}\gets [\tilde{U},\tilde{V}]$;\\
    Estimate $\hat{K}_{\mathrm{HDB}}$ using HDBSCAN with data matrix $\hat{X}$ and parameter $\rho_n$;\\
    \Return $\hat{K}_{\mathrm{HDB}}$
\end{algorithm}

\subsubsection{Via the maximum eigengap}
\label{sec:max_eigengap}

The next method likewise computes eigenvalues of a matrix associated to $\hat{N}$.
A common heuristic for estimating the number of clusters in data is to choose $\hat{K}$ as the location of the maximum gap between successive eigenvalues of some appropriate similarity matrix of the data points.
We implement this idea following the approach of \cite{BudelGabriel2020}.

Define the ingoing and outgoing degrees of state $x\in[n]$ as $d_x^{\mathrm{in}}\eqdef\sum_{y\in[n]}\hat{N}_{y,x}$ and $d_x^{\mathrm{out}}\eqdef \sum_{y\in[n]}\hat{N}_{x,y}$, corresponding to the number of transitions going into and out of state $x$, respectively.
Note that $|d_x^{\mathrm{in}}-d_x^{\mathrm{out}}|\leq 1$ because each transition into a state also has to leave it, except for the first and last time steps.
Then, define the modularity matrix $M$ elementwise as
\begin{equation}
    \hat{M}_{x,y}\eqdef\hat{N}_{x,y}-\frac{d_x^{\mathrm{in}} d_y^{\mathrm{out}}}{\ell}.
\end{equation}
Intuitively, the elements of $\hat{M}_{x,y}$ compare the observed transitions between $x$ and $y$ to the expected number of transitions if each transition was sampled indpendently with the probability of observing a transition given by $d_x^{\mathrm{in}}d_y^{\mathrm{out}}/\ell^2$.
The latter gives an expected in-degree equal to $d_x^{\mathrm{in}}$ and an expected out-degree equal to $d_x^{\mathrm{out}}$.

After $\hat{M}$ has been calculated we can define the method to estimate the number of clusters in a BMC.
Let $\lambda_1(\hat{M}),\allowbreak\lambda_2(\hat{M}),\allowbreak\ldots,\allowbreak\lambda_n(\hat{M})\allowbreak\in\mbb{C}$ be the eigenvalues of $\hat{M}$ such that $|\lambda_i(\hat{M})|\geq|\lambda_{i+1}(\hat{M})|$ for all $i\in\{1,\ldots,n-1\}$.
Then we define the estimated number of clusters $\hat{K}_M$ as
\begin{equation}
    \hat{K}_M=\argmax_{i=2,\ldots,n}\{|\lambda_{i-1}(\hat{M})|-|\lambda_{i}(\hat{M})|\}
\end{equation}
\begin{algorithm}[H]
    \caption{Pseudocode for maximum eigengap}\label{alg:max_eigengap}
    \KwData{A transition count matrix $\hat{N}=(\hat{N}_{x,y})_{x,y\in[n]}$}
    \KwResult{An estimate number of clusters $\hat{K}_M$}
    Compute in- and out-degrees $d_x^{\mathrm{in}}\gets \sum_{y\in[n]}\hat{N}_{y,x}$, $d_x^{\mathrm{out}}\gets \sum_{y\in[n]}\hat{N}_{x,y}$ for $x\in[n]$;\\
    Build modularity matrix $\hat{M}_{x,y}\gets \hat{N}_{x,y}-d_x^{\mathrm{in}}d_x^{\mathrm{out}}/\ell_n$ for $x,y\in[n]$;\\
    Calculate sorted eigenvalues $|\lambda_1(\hat{M})|,|\lambda_2(\hat{M})|,\ldots,|\lambda_n(\hat{M})|$;\\
    Calculate maximum eigengap $\hat{K}_M\gets \argmax_{i=2,\ldots,n}\{|\lambda_{i-1}(\hat{M})-|\lambda_i(\hat{M})|\}$;\\
    \Return $\hat{K}_M$
\end{algorithm}

\subsubsection{Via a crossvalidated log-likelihood}
\label{sec:loglikelihood}

Stepping away from spectral methods, a natural estimator for the number of clusters is to use a crossvalidated log-likelihood.
We consider \glspl{BMC} with different numbers of clusters as competing model classes and try to select the model which best explains the data.
The procedure is as follows.

First, split the observed sequence $X_0,\ldots,X_{\ell}$ into two parts, $X_{\text{train}}\eqdef X_0,\ldots,X_{\floor{\ell/2}}$, and $X_{\text{val}}\eqdef X_{\floor{\ell/2}+1},\ldots,X_{\ell}$.
For each $k=1,2,\ldots,K_{\max}$, where $K_{\max}\in\mbbN_+$ is some threshold value, we use the training data $X_{\text{train}}$ to estimate a model \gls{BMC} for the data with $k$ clusters.
Specifically, we use the \gls{BMC} clustering algorithm of \cite{ClusterBMC2017} to estimate a cluster assignment.
Let $\hat{Y}^{(k)}_t\eqdef \hat{\sigma}^{(k)}(X_t)$ for $t=0,\ldots,\ell$ denote the sequence of clusters under the estimated cluster assignment $\hat{\sigma}^{(k)}$.
We subsequently compute the maximum-likelihood estimator $\hat{p}^{(k)}=\{\hat{p}^{(k)}_{i,j}\}_{i,j \in [k]}$ of the cluster transition probabilities for the given cluster assignment $\hat{\sigma}^{(k)}$ as
\begin{equation}
    \hat{p}^{(k)}_{i,j}\eqdef\frac{\sum_{t=0}^{\floor{\ell/2}-1}\mathbbm{1}\{\hat{Y}^{(k)}_t=i, \hat{Y}^{(k)}_{t+1}=j\}}{\sum_{t=0}^{\floor{\ell/2}-1}\mathbbm{1}\{\hat{Y}^{(k)}_t=i\}}.
\end{equation}
Recall that we consider two estimators for $\sigma$, namely the spectral clustering algorithm \cite[Algorithm 1]{ClusterBMC2017}, and the refined estimate obtained from \cite[Algorithm 2]{ClusterBMC2017}.
The improvement algorithm in particular uses a local optimization of a log-likelihood function, starting from the initial estimate provided by \cite[Algorithm 1]{ClusterBMC2017}.
The combined model $(\hat{\sigma}^{(k)},\hat{p}^{(k)})$ is therefore an approximation of the maximum-likelihood estimator among all \glspl{BMC} with $k$ clusters.

To determine the appropriate value of $k=1,\ldots,K_{\max}$ we estimate for each candidate model $(\hat{\sigma}^{(k)},\hat{p}^{(k)})$ the maximum log-likelihood using the validation data $X_{\text{val}}$ as follows
\begin{equation}
    \hat{\mcL}^{(k)}\eqdef \frac{1}{\floor{\ell/2}}\sum_{t=\floor{\ell/2}+1}^{\ell-1}\ln \frac{\hat{p}^{(k)}_{\hat{Y}^{(k)}_t,\hat{Y}^{(k)}_{t+1}}}{|\hat{\mcV}^{(k)}_{\hat{Y}^{(k)}_{t+1}}|},
\end{equation}
Here, $\hat{\mcV}^{(k)}_{i}\eqdef \lb \hat{\sigma}^{(k)}\rb^{-1}(i)\subset[n]$ denotes the $i$'th cluster according to the estimate assignment $\hat{\sigma}^{(k)}$.
We then define the estimator $\hat{K}_{\mcL}$ the value of $k$ that maximizes log-likelihood ratio:
\begin{equation}
    \hat{K}_{\mcL}\eqdef\argmax_{k=1,\ldots,K_{\max}}\hat{\mcL}^{(k)}.
\end{equation}
By evaluating $\hat{\mcL}^{(k)}$ on a (nearly) independent set of validation data, we limit the bias of $\hat{K}_{\mcL}$, and reduce overfitting.

\begin{algorithm}[H]
    \caption{Pseudocode for crossvalidated log-likelihood}\label{alg:loglikelihood}
    \KwData{Training data $X_{\mathrm{train}}=X_0,\ldots,X_{\floor{\ell/2}}$, validation data $X_{\mathrm{val}}=X_{\floor{\ell/2}+1},\ldots,X_{\ell}$, maximum number of clusters $K_{\max}$}
    \KwResult{Estimated number of clusters $\hat{K}_{\mcL}$}
    \For{$k\in 1,\ldots,K_{\max}$}{
        Estimate \gls{BMC} $(\hat{\sigma}^{(k)},\hat{p}^{(k)})$ by running \cite[Algorithms 1 and 2]{ClusterBMC2017} with $X_{\mathrm{train}}$;\\
        Estimate cluster sequence $\hat{Y}_t^{(k)}\gets \hat{\sigma}^{(k)}(X_t)$ for $t=\floor{\ell/2}+1,\ldots,\ell$;\\
        Estimate log-likelihood $\hat{\mcL}^{(k)}\gets (1/\floor{\ell/2})\sum_{t=\floor{\ell/2}+1}^{\ell-1}\ln\bigl( \hat{p}^{(k)}_{\hat{Y}^{(k)}_t,\hat{Y}^{(k)}_{t+1}}/|\hat{\mcV}^{(k)}_{\hat{Y}^{(k)}_{t+1}}|\bigr)$;\\
    }
    Select maximum log-likelihood $\hat{K}_{\mcL}\gets \argmax_{k=1,\ldots,K_{\max}}\hat{\mcL}^{(k)}$;\\
    \Return $\hat{K}_{\mcL}$
\end{algorithm}

\subsubsection{Via a \texorpdfstring{\gls{CAIC}}{CAIC}}
\label{sec:AIC}

The holdout method used to minimize the bias of $\hat{K}_{\mcL}$ means that we can only use half of our data to cluster.
Information criterion methods attempt to avoid this issue by adding a correction term to the objective function, which penalizes model complexity and reduces overfitting.
Here we use the \gls{CAIC} \cite{bozdogan1987model}.
Let $(\hat{\sigma}^{(k)},\hat{p}^{(k)})$ be as in \cref{sec:loglikelihood}, except that we now use all of the data $X_0,\ldots,X_{\ell}$.
Let also $\hat{Y}_t^{(k)}=\hat{\sigma}^{(k)}(X_t)$.
Then, the \gls{CAIC} is defined as
\begin{equation}
    \text{CAIC}(k)\eqdef -2\sum_{t=0}^{\ell-1}\ln \frac{\hat{p}^{(k)}_{\hat{Y}_t^{(k)},\hat{Y}_{t+1}^{(k)}}}{|\hat{\mcV}^{(k)}_{\hat{Y}_{t+1}^{(k)}}|} + \text{DF}(k)(\ln {\ell} - 1),
\end{equation}
where $\text{DF}(k)$ denotes the number of degrees of freedom of the model with $k$ clusters.
Here, $\text{DF}(k) = DF(\sigma) + \text{DF}(p)=n + k(k-1)$, with $\text{DF}(\sigma)$ the degrees of freedom of the cluster assignment $\sigma$, and $\text{DF}(p)$ the degrees of freedom of the cluster transition matrix $p$.
Note that $p$ has only $k(k-1)$ degrees of freedom, because each row is constrained to sum to one.
We finally define $\hat{K}_{\mathrm{CAIC}}$ as the value that minimizes the CAIC
\begin{equation}
    \hat{K}_{\mathrm{CAIC}}\eqdef \argmin_{k=1,\ldots,K_{\max}}\mathrm{CAIC}(k).
\end{equation}
\begin{algorithm}[H]
    \caption{Pseudocode for CAIC}\label{alg:caic}
    \KwData{A trajectory $X=X_0,X_1,\ldots,X_{\ell}$, and a maximum number of clusters $K_{\max}$}
    \KwResult{An estimate number of clusters $\hat{K}_{\mcL}$}
    \For{$k\in 1,\ldots,K_{\max}$}{
        Estimate \gls{BMC} $(\hat{\sigma}^{(k)},\hat{p}^{(k)})$ by running \cite[Algorithms 1 and 2]{ClusterBMC2017} with $X$;\\
        Estimate cluster sequence $\hat{Y}_t^{(k)}\gets \hat{\sigma}^{(k)}(X_t)$ for $t=0,\ldots,\ell$;\\
        Calculate $\mathrm{CAIC}(k)\gets -2\sum_{t=0}^{\ell-1}\ln \bigl(\hat{p}^{(k)}_{\hat{Y}_t^{(k)},\hat{Y}_{t+1}^{(k)}}/|\hat{\mcV}^{(k)}_{\hat{Y}_{t+1}^{(k)}}|\bigr) + \text{DF}(k)(\ln {\ell} - 1)$;\\
    }
    Select minimum \gls{CAIC} $\hat{K}_{\mathrm{CAIC}}\gets \argmin_{k=1,\ldots,K_{\max}}\mathrm{CAIC}(k)$;\\
    \Return $\hat{K}_{\mcL}$
\end{algorithm}

\end{document}